\newcommand{\algname}{\textsc{Briee}\xspace}
\newcommand{\repucb}{\textsc{Rep-UCB}\xspace}
\newcommand{\ouralg}{\textsc{RepTransfer}\xspace}
\newcommand{\ouralgo}{O-\textsc{RepTransfer}\xspace}
\newcommand{\ouralgg}{G-\textsc{RepTransfer}\xspace}
\newcommand{\expAlg}{\textsc{EPS}\xspace}
\definecolor{Gray}{gray}{0.9}
\newcommand{\targ}{\text{target}}
\newcommand{\ie}{\textit{i.e.}}
\newcommand{\eg}{\textit{e.g.}}
\title{Provable Benefits of Representational Transfer in\\ Reinforcement Learning\footnote{alphabetical order.\protect\\}}
\author[1]{Alekh Agarwal\footnote{alekhagarwal@google.com}}
\author[2]{Yuda Song\footnote{yudas@andrew.cmu.edu}}
\author[3]{Wen Sun$^\text{\textsection}$}
\author[3]{Kaiwen Wang\footnote{\{ws455,kw437\}@cornell.edu}}
\author[4]{Mengdi Wang$^\text{\textparagraph}$}
\author[4]{Xuezhou Zhang\footnote{\{mengdiw,xz7392\}@princeton.edu}}
\affil[1]{Google Research}
\affil[2]{Carnegie Mellon University}
\affil[3]{Cornell University}
\affil[4]{Princeton University}
\date{}
\begin{document}

\maketitle

\begin{abstract}%
  We study the problem of representational transfer in RL, where an agent first pretrains in a number of \emph{source tasks} to discover a shared representation, which is subsequently used to learn a good policy in a \emph{target task}. We propose a new notion of task relatedness between source and target tasks, and develop a novel approach for representational transfer under this assumption. Concretely, we show that given a generative access to source tasks, we can discover a representation, using which subsequent linear RL techniques quickly converge to a near-optimal policy in the target task.
  The sample complexity is close to knowing the ground truth features in the target task, and comparable to prior representation learning results in the source tasks. We complement our positive results with lower bounds without generative access, and validate our findings with empirical evaluation on rich observation MDPs that require deep exploration. In our experiments, we observe speed up in learning in the target by pre-training, and also validate the need for generative access in source tasks.
\end{abstract}

\section{Introduction}

Leveraging historical experiences acquired in learning past skills to accelerate the learning of a new skill is a hallmark of intelligent behavior. In this paper, we study this question in the context of reinforcement learning (RL). Specifically, we consider a setting where the learner is exposed to multiple tasks and ask the following question:
\vspace{-0.25cm}
\begin{center}\emph{Can we accelerate RL by sharing representations across multiple related tasks?}\end{center}
\vspace{-0.25cm}
There is rich empirical literature which studies multiple approaches to this question and various paradigms for instantiating it. For instance, in a multi-task learning scenario, the learner has simultaneous access to different tasks and tries to improve the sample complexity by sharing data across them~\citep{caruana1997multitask}. Other works study a transfer learning setting, where the learner has access to multiple source tasks during a \emph{pre-training} phase, followed by a target task~\citep{pan2009survey}. The goal is to learn features and/or a policy which can be quickly adapted to succeed in the target task. More generally, the paradigms of meta-learning~\citep{finn2017model}, lifelong learning~\citep{parisi2019continual} and curriculum learning~\citep{bengio2009curriculum} also consider related questions. \looseness=-1

On the theoretical side, questions of representation learning have received an increased recent emphasis owing to their practical significance, both in supervised learning and RL settings. In RL, a limited form of transfer learning across multiple downstream reward functions is enabled by several recent reward-free representation learning approaches~\citep{jin2020reward, zhang2020task, wang2020reward, du2019provably, misra2020kinematic, Agarwal2020_flambe, modi2021model}. Inspired by recent treatments of representation transfer in supervised~\citep{maurer2016benefit,du2020few} and imitation learning~\citep{arora2020provable}, some works also study more general task collections in bandits~\citep{hu2021near,yang2020impact,yang2022nearly} and RL~\citep{hu2021near,lu2021power}. Almost all these works study settings where the representation is \emph{frozen} after pre-training in the source tasks, and a linear policy or optimal value function approximation is trained in the target task using these learned features. This setting, which we call \emph{representational transfer}, is the main focus of our paper.

A crucial question in formalizing representational transfer settings is the notion of similarity between source and target tasks. Prior works in supervised learning make the stringent assumption that the covariates $x$ follow the same underlying distribution in all the tasks, and only the conditional $P(y|x)$ can vary across tasks~\citep{du2020few}.
This assumption does not nicely generalize to RL settings, where state distributions are typically policy dependent, and prior extensions to RL~\citep{lu2021power,cheng2022provable} resulted in strong assumptions during the learning setup.

\noindent With this context, we summarize our \textbf{main contributions} below.
\begin{itemize}[leftmargin=*,noitemsep,nosep]
\item \textbf{Task relatedness:} We propose a new \emph{state-dependent} linear span assumption of task relatedness, which generalizes all prior settings for representational transfer in RL, \eg, \citet{cheng2022provable}. We give examples captured by this assumption, and it generalizes all prior settings for representational transfer in RL. We do not make any linearity assumptions on our feature maps.
\item \textbf{Generative access to source tasks:} We propose \ouralg and prove that it successfully pre-trains a representation for downstream \emph{online learning in any target task} satisfying the linear span assumption. Our algorithm uses a novel cross-sampling procedure made possible by generative access in the source tasks. Our key result is that the regret in the target task, up to a task-relatedness constant, matches that of learning in a linear MDP equipped with \emph{known} ground truth features, which is the strongest possible benchmark to compete with. A key technical contribution is a novel analysis of \LSVIUCB{} that attains regret under an average-case misspecified linear MDP.
\item \textbf{Lower bound without generative access:} We further show a counter-example where representational transfer fails without generative access under our assumptions.
As a partial remedy, we posit that every \emph{observed state} is reachable in each source task, and show a modification of \ouralg is still sufficient for transfer learning with only online access. While strong, this observational reachability assumption still generalizes prior results in transfer RL, \eg, \citet{cheng2022provable}.  \looseness=-1
\item \textbf{Empirical validation:} We empirically validate \ouralg on a challenging benchmark \citep{misra2020kinematic}, and show that \ouralg saves an order of magnitude of target samples compared to training from scratch using the SOTA Block MDP algorithm BRIEE.
\end{itemize}

\noindent
Our intermediate results may also be of independent interest: (1) to pre-train a representation, we developed an oracle-efficient reward-free exploration algorithm for low-rank MDPs, (2) to transfer the pre-trained representation to the target task, we develop a new analysis for linear MDP under an average case model misspecification extending prior work which relies on a much stronger $\ell_\infty$ style model misspecification \citep{jin2020provably}.

\subsection{Related Work}\label{sec:related_work}
\begin{table}[]
    \centering
    \begin{tabular}{>{\centering}m{2.25cm}|>{\centering}m{3.75cm}|>{\centering}m{2cm}|>{\centering}m{3.5cm}|>{\centering\arraybackslash}m{2.5cm}}
        \hline ~& Task-relatedness & Access type (source/target) & Reachability & Generalization (assumption)  \\ \hline
         \citet{lu2021power}& Full rank LSVI weight matrix from source & Gen/Gen & Distribution $q$ (\textbf{given}) covers observations & $\text{err}(s,a) \leq C \EE_q[\text{err}] \forall s,a$\\ \hline
         \citet{cheng2022provable}& $P^\star_\targ(s'|s,a) = \sum_{i=1}^{K} \alpha_i P^\star_i(s'|s,a)$ & On/On & Observational reachability & $\text{err}(s,a) \leq C \EE_{\text{Unif}}[\text{err}] \forall s,a$\\ \hline
         Theorem~\ref{thm:generative-end-to-end} (this paper) & $P^\star_\targ(s'|s,a) = \sum_{i=1}^{K} \textcolor{red}{\alpha_i(s')} P^\star_i(s'|s,a)$ & Gen/On & Feature reachability ($\phi^\star$) & None\\ \hline
         Theorem~\ref{thm:online} (this paper) & $P^\star_\targ(s'|s,a) = \sum_{i=1}^{K} \textcolor{red}{\alpha_i(s')} P^\star_i(s'|s,a)$ & On/On & Observational reachability & None\\ \hline
         Theorem~\ref{thm:imp} (lower bound) & $P^\star_\targ(s'|s,a) = \sum_{i=1}^{K} \textcolor{red}{\alpha_i(s')} P^\star_i(s'|s,a)$ & On/On & Feature reachability ($\phi^\star$) & None\\ \hline
    \end{tabular}
    \caption{Assumptions for representational transfer in low-rank MDPs. ``Gen'' and ``On'' refer to generative or online access to source and target tasks.
    Feature reachability means that each source task a policy with a full rank covariance under the features $\phi^\star$ (\pref{ass:reachability}). Observational reachability requires each high-dimensional raw observation to be reachability with some lower bounded probability (\pref{ass:reachability_raw}).
    The last row is a lower bound which precludes learning under the assumptions of \pref{thm:generative-end-to-end} without generative access in the source tasks. }
    \label{tab:comparisons-to-transfer-work}
\end{table}

\textbf{Transfer Learning in Low-rank MDPs.}
The closest work to ours is \citet{cheng2022provable}, which also performs reward-free exploration in the source tasks for representation learning, and use the learned representation in the target task to perform online learning.
\citet{cheng2022provable} proposed a linear span assumption with globally fixed coefficients, which is generalized by our \emph{state-dependent} linear span assumption.
However, despite the more stringent relatedness condition, their work still makes stronger assumptions to enable transfer.
First, their Assumption 5.3 sidesteps the need to handle generalization by assuming point-wise error is bounded by average-case error (this allows them to directly use the result from \citet{jin2020provably}), whereas our analysis only relies on standard in-distribution generalization and indeed one of our key technical contributions is showing that \LSVIUCB{} succeeds even with average-case misspecification. Our \pref{thm:online} generalizes the result of \citet{cheng2022provable}.
Second, their Assumption 5.1 assumes reachability in the high-dimensional observation space, whereas we show that with our novel cross-sampling, it is possible to require the much more realistic spectral coverage in the ground truth (unknown) feature space (\pref{ass:reachability}).

Another work on transfer learning in low-rank MDPs is \citet{lu2021power}, which also makes much stronger assumptions than our work. First, they require generative access in \emph{both} source and target tasks. Second, they require the covariance of \emph{any pair} of features (in feature class $\Phi$) to be full rank, while we only require this reachability condition for the true feature $\phi^\star$. Third, they assume a given distribution $q$ on which the learned representation can extrapolate, whereas we explicitly construct such a data distribution using the novel cross-sampling procedure.
In sum, compared to the prior works \citet{cheng2022provable,lu2021power}, we leverage a novel cross-sampling procedure to enable transfer under significantly weaker assumptions. Furthermore, we prove a lower bound showing that generative access is necessary unless stronger assumptions, \eg, those in \citet{cheng2022provable}, are made. We summarize the comparision to these prior works in \pref{tab:comparisons-to-transfer-work}.

\textbf{Transfer Learning in Bandit and small-size MDPs.}
\cite{lazaric2013sequential} study spectral techniques for online sequential transfer learning in multi-arm bandits.
\cite{brunskill2014pac} study transfer in semi-MDPs by learning options. \cite{lecarpentier2021lipschitz} consider lifelong learning in Lipschitz MDP. All these works consider tabular models while we focus on large-state MDPs.

\textbf{Multi-task learning. }
While the multi-task setting also deals with multiple tasks, it is different from the transfer learning setting in its objective.
The goal of multi-task learning is to perform well over all tasks (typically the average performance of tasks), while transfer learning cares exclusively about performance in the target task. Thus, the results from multi-task learning are not directly comparable to the transfer learning results that we focus on in this paper. We survey some multi-task literature for completeness.
For multi-task learning in low-rank MDPs, \citet{huang2023provably} only assumed $\phi^\star$ to be shared ($\mu^\star$ can be arbitrarily different between tasks), and showed that the sample complexity of a multi-task variant of BiLin-UCB \citep{du2021bilinear} does not scale as $K|\Phi|$ but only as $|\Phi|$. However, like BiLin-UCB, the algorithm is not computationally efficient. Several recent works study multi-task linear bandits with linear representations ($\phi(s) = A\,s$ with unknown $A$) \citep{hu2021near,yang2020impact,yang2022nearly}. The techniques developed in these works crucially rely on the linear structure and can not be applied to nonlinear function classes.

For a discussion of the empirical transfer literature, as well as more detailed comparisons to related works, please see \pref{sec:compare}.

\section{Preliminaries}\label{sec:prelim}
In this paper, we study transfer learning in finite-horizon, episodic Markov Decision Processes (MDPs), $\Mcal=\langle H, \Scal, \Acal, \{P_h^\star\}_{0:H-1},\{r_h\}_{0:H-1}, d_0 \rangle $,
specified by the episode length $H$, state space $\Scal$, discrete action space $\Acal$ of size $A$,
\emph{unknown} transition dynamics $P_h^\star:\Scal \times \Acal\to \Delta(\Scal)$, \emph{known} reward functions $r_h:\Scal \times \Acal \to [0,1]$, and a \emph{known} initial distribution $d_0\in \Delta(\Scal)$.
We now define the value, $Q$ functions and visitation distribution, where we make the dependence on the transition dynamics $P^\star=\{P_h^\star\}_{0:H-1}$ and reward functions $r = \{r_h\}_{0:H-1}$ explicit.
For any Markov policy $\pi: \Scal\to\Delta(\Acal)$, let $\Eb[\pi,P^\star]{\cdot}$ denote the expectation under the trajectory distribution of
executing $\pi$ in an MDP with transitions $P^\star$, \ie, start at an initial state $s_0\sim d_0$, then for all $h\in[0:H-1]$\footnote{For $1\leq a\leq b$, we denote $[a:b]=a,a+1,\dots,b-1,b$, and $[b]=[1:b]$.}, $a_h\sim\pi_h(s_h),s_{h+1}\sim P_h^\star(s_h,a_h)$.
If $P^\star$ is clear from context, we use $\Eb[\pi]{\cdot}$ instead.
The value function is the expected reward-to-go of $\pi$ starting at state $s$ in step $h$, \ie,$V^{\pi}_{P^\star,r;h}(s) = \Eb[\pi,P^\star]{ \sum_{\tau=h}^{H-1} r_\tau(s_\tau,a_\tau) \mid s_h = s }$.
The $Q$ function is $Q^{\pi}_{P^\star,r;h}(s,a) := r_h(s,a) + \EE_{s'\sim P_h^\star(\cdot | s,a)} V^{\pi}_{P^\star,r;h+1}(s')$.
We denote the expected total reward of a policy $\pi$ as $V^{\pi}_{P,r} := \EE_{s_0\sim d_0} V^{\pi}_{P,r;0}(s_0)$.
We define the state-action occupancy distribution $d^{\pi}_{P^\star;h}(s,a)$ as the probability of $\pi$ visiting $(s,a)$ at time step $h$.

The \emph{transfer learning} problem consists of two phases:
(1) the \emph{pre-training phase} where the agent interacts with $K$ source tasks with transition dynamics $\braces*{P_k^\star}_{k\in[K]}$, and
(2) the \emph{deployment phase} where the agent is deployed into the target task with transition dynamics $P^\star_\targ$ and can no longer access the source tasks.
The performance of a transfer learning algorithm is measured by (1) the sample complexity in the source tasks during pre-training, and (2) the regret in the target task during deployment, which is defined as
$
    \op{Regret}(T)=\sum_{t=1}^T V_{\text{target}}^\star - V^{\pi^t}_{P^\star_\targ, r},
$
where $V_{\text{target}}^\star$ is the optimal value that can be obtained in the target task, and $\pi^t$ is the policy played in the $t$-th episode of deployment. For notation, we let $d^\pi_{k;h}$ be short-hand for $d^\pi_{P^\star_k;h}$.

We begin formalizing our problem with the low-rank MDP model.
\begin{definition}[Low-rank MDP~\citep{jiang2017contextual,Agarwal2020_flambe}]
A transition model $P_h^{\star}:\Scal\times \Acal \to \Delta(\Scal)$ is low rank with rank $d\in \NN$
if there exist two unknown embedding functions $\phi_h^{\star}: \Scal\times\Acal \mapsto \RR^{d}$,
$\mu_h^{\star}:\Scal \mapsto \RR^{d}$ such that
$\forall s,s'\in \Scal, a\in \Acal: P_h^{\star}(s'\mid s,a)=\mu_h^{\star}(s')^{\top}{\phi_h^{\star}(s,a)}$,
where $\|\phi_h^\star(s,a)\|_2\leq 1$ for all $(s,a)$ and for any function $g:\Scal \to [0,1]$, $\nm{\int g(s)\diff\mu_h^{\star}(s)}_2\leq \sqrt{d}$.
An MDP is a low rank MDP if $P_h^{\star}$ admits such a low rank decomposition for all $h\in[0:H-1]$.
\end{definition}
Low-rank MDPs capture the latent variable model \citep{Agarwal2020_flambe} where $\phi^\star(s,a)$ is a distribution over a discrete latent state space $\mathcal{Z}$,
and the block-MDP model \citep{du2019provably} where $\phi^\star(s,a)$ is a one-hot encoding vector.
Note that $\Phi$ can be a non-linear, flexible function class, so the low-rank framework generalizes prior works with linear representations \citep{hu2021near,yang2020impact,yang2022nearly}.
Next, we define what it means for a policy to be exploratory in a low-rank MDP.
\begin{definition}[Feature coverage]\label{def:exploratory-policies}
For $\alpha\in\RR_+$, a policy $\pi$ is $\alpha$-\emph{exploratory} in an MDP with transition dynamics $P^\star$ if for all $h\in[0:H-1]$, we have $\lambda_{\min}\prns*{\Eb[\pi,P^\star]{ \phi_h^\star(s_h,a_h)\phi_h^{\star\top}(s_h,a_h)^\top }}\geq\alpha$.
\end{definition}
An exploratory $\pi$ intuitively ensures that the whole $\RR^d$ feature space is well-explored in a spectral sense.
Note this generalizes the notion of ``Policy Cover'' in Block MDPs from \citet{misra2019kinematic}.

We now make two mild structural assumptions on the tasks to enable representational transfer.
\begin{assum}[Common $\phi^\star$]\label{ass:shared_phi}
All tasks are low-rank MDPs with a shared representation $\phi_h^\star(s,a)$.
\end{assum}
\begin{assum}[Point-wise Linear span]\label{ass:span}
For any $h\in[0:H-1]$ and $s'\in\Scal_{h+1}$, there is a vector $\alpha_h(s') \in \RR^{K}$
such that $\mu^\star_{\targ;h}(s') = \sum_{k=1}^{K} \alpha_{k;h}(s')\mu^\star_{k;h}(s')$.
\label{ass:linear-span}
\end{assum}
The motivation for \pref{ass:span} is: if $s'$ is reachable from an $(s,a)$ pair in the target task,
then it must be reachable from the same $(s,a)$ pair in at least one of the source tasks.
Intuitively, this is necessary for transfer learning to succeed, as $s'$ could be a high rewarding state in the target.
Based on \pref{ass:span}, we define,
$\alpha_{\max}=\max_{h;k,s'\in\Scal} |\alpha_{k;h}(s')|$ and
$\bar\alpha=\max_h \sum_{k=1}^{K} \max_{s'\in\Scal} |\alpha_{k;h}(s')|$.
Note that $\alpha_{\max}\leq\bar\alpha$, which we assume is bounded.
We conclude the section with a couple of examples where these assumptions are satisfied.

\begin{example}[Mixture of source tasks]\label{ex:mixture}
The mixture model posits that the target task's transition dynamics is a mixture of the source tasks, \ie,  $P^\star_\targ(s'|s,a) = \sum_{k=1}^{K} \alpha_k P^\star_k(s'|s,a)$.
This maps to \pref{ass:span} with $\alpha_{k;h}(s') = p_k$ where $\{p_k\}_{k\in[K]}$ is a probability distribution, so $\bar\alpha = 1$.
These mixture models have been studied in the context of known source models \citep{modi2020sample,ayoub2020model},
and, corresponding to our \pref{ass:shared_phi}, unknown low-rank source models with the same $\phi^\star$ \citep{cheng2022provable}.
Our linear span \pref{ass:span} strictly generalizes the mixture model by allowing linear span coefficients to flexibly depend on $s'$,
which is more realistic in practice.
\end{example}
In \pref{ex:mixture}, $\bar\alpha$ (and hence $\alpha_{\max}$) was nicely bounded by $1$.
However, if the target task largely focuses on observations quite rare under the source tasks,
then $\bar\alpha$ can grow large. \looseness=-1

\begin{example}[Block MDPs with shared latent dynamics]
Here, each MDP $P_k^\star$ is a Block MDP \citep{du2019provably} with a shared latent space $\Zcal$ and a shared decoder $\psi^\star~:~\Scal\to\Zcal$. In a block MDP, given state action pair $(s,a)$, the decoder $\psi^\star$ maps $s$ to a latent state $z$, the next latent state is sampled from the latent transition $z'\sim P(\cdot | z, a)$, and the next state is generated from an emission distribution $s'\sim o(\cdot | z')$. Recall that $o(s'|z') > 0$ at only one $z'\in\Zcal$ for any $s'\in\Scal$ for a block MDP. We assume that the latent transition model $P(z'|z,a)$ is shared across all the tasks, but the emission process differs across the MDPs.
For instance, in a typical navigation example used to motivate Block MDPs, the latent dynamics might correspond to navigating in a shared 2-D map, while emission distributions capture different wall colors or lighting conditions across multiple rooms.  
Then \pref{def:exploratory-policies} posits that the agent can visit the entire 2-D map, while \pref{ass:linear-span} requires that the color/lighting conditions of the target task resemble that of at least one source task. The coefficients $\alpha$ for any $s'$ are non-zero on the source tasks which can generate that observation. 
\label{ex:block-mdp}
\end{example}

\section{Representational Transfer with Generative Access in Source Tasks}
In this section, we study transfer learning assuming \emph{generative access} to the source tasks.
\begin{assum}[Generative access in the source tasks]\label{ass:generative_model}
For any $k\in[K],h\in[0:H-1]$ and $s,a\in\Scal\times\Acal$,
we can query independent samples from $P^\star_{k;h}(s,a)$.
\end{assum}
The generative model access is not unrealistic, especially in applications where a
high-quality simulation environment is available.
Perhaps surprisingly, we will also show (in \pref{sec:online}) that generative access in source tasks
is \emph{necessary} assuming only feature coverage, as in ~\pref{def:exploratory-policies}.

\subsection{The Algorithm}

\begin{algorithm}[t]
	\caption{Exploratory Policy Search (\expAlg)}
	\label{alg:reward_free}
	\begin{algorithmic}[1]
	\STATE \textbf{Input: } MDP $\Mcal$ with online access, num. \LSVIUCB{} episodes $N_\LSVIUCB$, num. model-learning episodes $N_{\textsc{RewardFree}}$, failure probability $\delta$.
	\STATE Learn model $\{\wh P_h = (\wh\phi_h, \wh \mu_h) \}_{h=0}^{H-1}$ by running \textsc{RewardFree Rep-UCB} (\pref{alg:reward_free_repucb}) in $\Mcal$ for $N_{\textsc{RewardFree}}$ episodes.
	\STATE Set $\beta = dH\sqrt{\log(dHN_{\LSVIUCB}/\delta)}$.
	\STATE Return $\rho = \LSVIUCB\prns{ \{\wh\phi_h\}_{h=0}^{H-1}, r=0, N_{\LSVIUCB}, \beta, \textsc{UniformActions}=\textsc{True} }$ by simulating in the learned model $\wh P$ (\pref{alg:LSVI}). Note this step requires no samples from $P^\star$.
	\end{algorithmic}
\end{algorithm}

\begin{algorithm}[!t]
	\caption{Transfer learning with generative access (\ouralg)}\label{alg:main}
	\begin{flushleft}
		\textsc{\textbf{Pre-Training Phase}}\\
		\textbf{Input:} exploratory policies in the source tasks $\{\pi_k\}_{1:K}$, function classes $\Phi,\braces{\Upsilon_k}_{1:K}$, size of cross-sampled datasets $n$, failure probability $\delta$.
	\end{flushleft}
	\begin{algorithmic}[1]
		\FOR[\textcolor{blue}{cross sampling procedure}]{\emph{task pairs} $i,j$, \ie, for all $i,j\in[K]$}
		    \STATE  For each $h\in[H-1]$, sample dataset $\Dcal_{ij;h}$ containing $n$ \textit{i.i.d.} $(s,a,s')$ tuples sampled as:
            \begin{align}
                (\tilde s,\tilde a)\sim d^{\pi_i}_{i;h-1}, s\sim P^\star_{j;h-1}(\cdot|\tilde s,\tilde a), a \sim\op{unif}(\Acal), s'\sim P^\star_{i;h}(\cdot|s, a).\label{eq:data_sample}
            \end{align}
            For $h=0$, use $s\sim P^\star_{j;h-1}(\cdot|\tilde s,\tilde a), a \sim\op{unif}(\Acal), s'\sim P^\star_{i;h}(\cdot|s, a)$.
		\ENDFOR
		\STATE For each $h\in[0:H-1]$, learn features with MLE, \ie, ``Multi-task \textsc{RepLearn}'',
            \begin{align}
                \wh\phi_h,\wh\mu_{1:K} = \argmax_{\phi\in\Phi,\mu_k\in\Upsilon_k}
                \sum_{i,j\in[K]}\mathbb{E}_{\Dcal_{ij;h}}\left[\log \phi(s,a)^\top\mu_k(s')\right]. \label{eq:multi-task-replearn}
            \end{align}
	\end{algorithmic}
	\vspace{-0.5cm}
	\begin{flushleft}
		\textsc{\textbf{Deployment Phase}}\\
        \textbf{Additional Input:} number of deployment episodes $T$.
	\end{flushleft}
	\begin{algorithmic}[1]
	\STATE Set $\beta = H\sqrt{d} + \bar\alpha dH\sqrt{\log(dHT/\delta)}$.
	\STATE Run $\LSVIUCB\prns{\{ \wh\phi_h\}_{h=0}^{H-1}, r=r_\targ, T, \beta}$ in the target task $P_\targ^\star$ (\pref{alg:LSVI}).
	\end{algorithmic}
\end{algorithm}

We first describe the helper algorithm \textbf{E}xploratory \textbf{P}olicy \textbf{S}earch (\expAlg) (\pref{alg:reward_free}) to discover exploratory policies in low-rank MDPs. \expAlg has two steps. First, it runs a reward-free variant of \textsc{Rep-UCB}\citep{uehara2021representation} in each source task $k$, to learn a linear MDP which approximates the true low-rank MDP $P_k^\star$. Then, an exploratory policy is learned via reward-free exploration in the learned linear MDP (e.g., using \LSVIUCB{} with zero reward), which involves no further interactions with the true environment. Intuitively, the policy $\pi_k$ is trained to fit \pref{def:exploratory-policies} in the source task $k$.

We now present our main algorithm \ouralg (\pref{alg:main}), which takes as input exploratory policies in each source task that can be obtained from \expAlg.
During the pre-training phase, \ouralg collects a dataset via a novel \emph{cross-sampling} procedure across all pairs of source tasks.
Note this step is only possible due to generative access in the source tasks.
Specifically, given exploratory policies $\pi_i,\pi_j$ in source tasks $i,j$,
for any $h\in[H-1]$ we first sample from the visitation distribution of $\pi_i$ in task $i$, \ie, $s_{h-1}, a_{h-1} \sim d^{\pi_i}_{i; h-1}$.
Then, in the simulator of task $j$, we \emph{reset} to $(s_{h-1},a_{h-1})$ and
perform a transition step to $s_h$, \ie, $s_h\sim P_{j;h-1}(s_{h-1},a_{h-1})$.
Next, we uniformly sample an action $a_h$, reset the simulator of task $k$ to state $s_h,a_h$,
and transition to $s_{h+1}$, \ie, $s_{h+1}\sim P^\star_k(s_h,a_h)$.
We then perform Maximum Likelihood Estimation (MLE) representation learning in Eq.~\pref{eq:multi-task-replearn} using the union of the cross-sampled datasets across all pairs of source tasks.
In sum, \ouralg learns a \emph{single} representation $\wh\phi$ in the pre-training phase using MLE on the cross-sampled datasets from exploratory policies across tasks.
In the deployment phase, \ouralg runs optimistic least squares value iteration (\LSVIUCB{}) in the target task with the learned representation.
First proposed by \citet{jin2020provably}, \LSVIUCB{} is displayed in \pref{alg:LSVI}, which at a high level is as follows.
Given any dataset, $\{s,a,r,s'\}$  feature $\phi$, and reward $r$, LSVI learns a $Q$ function backward,
\ie, at step $h$ via $\hat w_h = \arg\min_{w} \sum_{s,a,s'} (w^\top \phi(s,a) - \hat V_{h+1}(s'))^2+\lambda\|w\|^2$
and sets $\hat V_h(s) = \max_a ( r(s,a) + \hat w_h^\top \phi(s,a)),\forall s$.
UCB, short for Upper Confidence Bound, refers to an exploration bonus added to basic LSVI.

\subsection{Main Result}
In this section, we prove our main transfer learning result, which shows that \ouralg achieves near optimal regret in the target task with nice pre-training sample complexity in the source tasks.
Our main result requires two assumptions. 
First, we need to ensure that \expAlg can successfully discover an exploratory policy in the source tasks, \ie, there should exist a policy that non-trivially reaches the whole $\RR^d$ in the feature space. Without exploratory policies in the source tasks, it may be possible that the optimal target policy visits subspaces unexplorable in any source task, in which case, pre-training will not have any benefits.
\begin{assum}[Reachability in source tasks]\label{ass:reachability}
There exists a $\psi\in\RR_+$ such that for all $k\in[K],h\in[0:H-1]$, there exists a policy $\pi$ such that
$\lambda_{\min}\prns{\EE_{\pi,P_k^\star}[\phi^\star_h(s_h,a_h)\phi^\star_h(s_h,a_h)^\top]}\geq\psi$.
\end{assum}
Note that this low-rank reachability assumption generalizes the reachability assumption in latent variable and block MDPs, e.g. \citep{modi2021model,misra2020kinematic}.

Second, For the MLE in Eq.~\pref{eq:multi-task-replearn} to succeed, we need to assume the standard realizability assumption, which is made in almost all prior works in low-rank MDPs.
\begin{assum}
[Realizability]\label{ass:fcn_class}
For any source task $k\in[K]$ and any $h\in[H]$, $\phi^\star_h\in\Phi$ and $\mu^\star_{k;h}\in \Upsilon_{k}$.
For normalization, we assume that for all $\phi\in\Phi,\mu\in\Upsilon_{k},g:\Scal \to [0,1]$,
we have $\|\phi(s,a)\|_2\leq 1$ and $\nm{\int g(s)\diff\mu_h^{\star}(s)}_2\leq \sqrt{d}$.
\end{assum}
This leads to our main theorem.
\begin{restatable}[Regret under generative source access]{theorem}{generativeCor}\label{thm:generative-end-to-end}
Suppose Assumptions~\ref{ass:shared_phi},\ref{ass:span},\ref{ass:generative_model},\ref{ass:reachability},\ref{ass:fcn_class} hold, and fix any $\delta\in(0,1)$.
Then, running \ouralg with policies from \expAlg (parameters set as in \pref{lem:coverage}) has regret in the target task of
$\wt\Ocal\prns{\bar\alpha H^2d^{1.5}\sqrt{T\log(1/\delta)}}$,
with at most \\
$\wt\Ocal\prns{ A^4\alpha_{\max}^3d^5 H^7 K^2 T \psi^{-2} \prns{ \log(|\Phi|/\delta)+K\log|\Upsilon| } }$ generative accesses per source task.
\end{restatable}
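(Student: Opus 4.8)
The plan is to decompose the argument along the two phases of \ouralg: a \emph{pre-training} analysis that produces the sample-complexity bound, and a \emph{deployment} analysis that produces the regret bound. The pre-training analysis itself splits into (i) showing \expAlg returns an $\Omega(\psi)$-exploratory policy $\pi_k$ in each source task, and (ii) bounding the error of the representation $\wh\phi$ learned by MLE on the cross-sampled data. The deployment analysis is a regret bound for \LSVIUCB{} run with $\wh\phi$ in the target, and it is here that the real work lies, because the learned model is only \emph{average-case} misspecified rather than $\ell_\infty$-misspecified as required by the classical analysis of \citet{jin2020provably}.

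For the pre-training sample complexity, I would first invoke the guarantee of \expAlg as quantified in \pref{lem:coverage}: under the reachability \pref{ass:reachability}, running the reward-free \textsc{Rep-UCB} variant followed by reward-free \LSVIUCB{} in the learned model returns a policy $\pi_k$ that is $\Omega(\psi)$-exploratory in source task $k$; this step contributes the $\psi^{-2}$ and model-learning factors to the sample count. Next I would analyze the MLE step in Eq.~\pref{eq:multi-task-replearn}. Under realizability (\pref{ass:fcn_class}), a standard MLE generalization bound for low-rank MDPs yields that the total-variation error between the learned source model $\wh\phi(s,a)^\top\wh\mu_{k}(s')$ and the true source transition $P^\star_{k;h}(s'\mid s,a)$ is $\wt\Ocal(\sqrt{(\log|\Phi| + K\log|\Upsilon|)/n})$ in expectation over the cross-sampled data. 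Choosing $n$ large enough to drive this error down to $\Ocal(1/\sqrt{T})$, up to the task-relatedness constants, is precisely what forces the linear-in-$T$ dependence in the stated sample complexity.

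The bridge from source to target is the point-wise linear span (\pref{ass:span}). Setting $\wh\mu_{\targ;h}(s') := \sum_{k}\alpha_{k;h}(s')\wh\mu_{k;h}(s')$, the target dynamics satisfies $P^\star_{\targ;h}(s'\mid s,a) = \sum_k \alpha_{k;h}(s') P^\star_{k;h}(s'\mid s,a) \approx \wh\phi(s,a)^\top \wh\mu_{\targ;h}(s')$, so the target is an \emph{approximate} linear MDP in the learned feature $\wh\phi$. Its misspecification aggregates the $K$ per-task MLE errors weighted by the span coefficients, producing a factor of $\bar\alpha$ in the regret and $\alpha_{\max}$ in the sample complexity, and crucially this error is controlled only \emph{in expectation} under the cross-sampled distributions. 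What makes average-case control sufficient is the design of the cross-sampling: by sweeping over all pairs of tasks, the state distribution at step $h$ is generated by first reaching a state-action pair with an exploratory policy and then taking a transition in a second task, so via the linear span this mixture covers exactly the states the target can reach. Hence the training measure dominates the visitation of any policy \LSVIUCB{} might play, which is the change-of-measure we need.

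The main obstacle is re-deriving the \LSVIUCB{} regret under average-case rather than $\ell_\infty$ misspecification. The classical analysis controls the per-step Bellman error pointwise and folds it into the bonus, whereas here I can only bound it in expectation under the played policy's visitation. The plan is to (a) track the misspecification through the backward value-iteration recursion and express its contribution to regret as $\sum_{t,h}\EE_{d^{\pi^t}_{\targ;h}}[\text{Bellman error}]$, then (b) use the coverage property above—relating $d^{\pi^t}_{\targ;h}$ to the cross-sampled training measure through the $\psi$-exploratory covariance and a change-of-measure that pays the $\bar\alpha$ factor—to bound this sum by the average MLE error times $T$, which our choice of $n$ renders lower order. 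The remaining well-specified part of the regret is handled by the usual optimism plus elliptical-potential argument, with the bonus $\beta = H\sqrt d + \bar\alpha dH\sqrt{\log(dHT/\delta)}$ absorbing both statistical fluctuation and residual misspecification; summing the elliptical potentials over the $H$ steps and $T$ episodes yields the $\wt\Ocal(\bar\alpha H^2 d^{1.5}\sqrt{T\log(1/\delta)})$ regret. Combining the two phases and propagating the failure probabilities by a union bound completes the argument.
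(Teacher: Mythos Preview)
Your proposal is correct and follows essentially the same route as the paper: invoke \pref{lem:coverage} for the exploratory policies from \expAlg, use the one-step-back plus linear-span argument (\pref{lem:mle_target}) to transfer the MLE error from the cross-sampled source data to any target visitation, and then run the new average-case-misspecified \LSVIUCB{} analysis (\pref{thm:deployment_regret}) with $\eps_{ms}\lesssim 1/\sqrt{T}$ so that the misspecification term becomes lower order. The one technical wrinkle you gloss over is how optimism is established when the Bellman error is controlled only in expectation and the value estimates are clipped at $M_V$; the paper handles this with a trajectory-wise indicator $\zeta_h(\tau_h)$ in \pref{lem:lsvi-optimism}, so be prepared to reproduce that device rather than the pointwise argument from \citet{jin2020provably}.
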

\noindent Remarkably, \pref{thm:main} shows that with the pre-trained features, we achieve the same regret bound on the target task to the setting of linear MDP with known $\phi^\star$ \citep{jin2020provably}, up to the additional $\bar\alpha$ factor that depends on the linear span coefficients and captures the intrinsic hardness of transfer learning.
For special cases such as convex combination, i.e., $\alpha$ is state-independent and $\alpha_h \in \Delta(K)$, then $\bar\alpha = 1$.
In the worst-case, some dependence on the scale of $\alpha$ seems unavoidable as we can have a state $s'$ such that $\mu_{\targ}(s') = 1$ and $\mu_k(s') \ll 1$ with $\alpha_k(s') \gg 1$.
This corresponds to a rarely observed state for the source task encountered often in the target, and our estimates of transitions involving this state can be highly unreliable if it is not seen in any other source,
roughly scaling the error between target and source tasks as $|\alpha_k(s')|$.
Obtaining formal lower bounds that capture a matching dependence on structural properties of $\alpha$ is an interesting question for future research.

\subsection{Proof Sketch}
The proof can be broken down into three parts.
First, under reachability, we show in \pref{lem:coverage} that \expAlg can indeed identify an exploratory policy.
Second, we show in \pref{lem:mle_target} that our novel cross-sampling procedure with MLE can learn a representation that linearly approximates $P^\star_\targ$, in an average-case sense.
Third, we prove that even under average-case misspecification, LSVI-UCB succeeds with low regret.
We start by showing that \expAlg can identify an exploratory policy.
\begin{restatable}[Source task exploration]{lemma}{rewardFreeExploreCoverage}\label{lem:coverage}
Suppose Assumptions~\ref{ass:reachability},\ref{ass:fcn_class} hold.
Then, for any $\delta\in(0,1)$, w.p. $1-\delta$, running \expAlg in any source task with $N_{\LSVIUCB} = \wt\Theta\prns{A^3d^6H^8\psi^{-2}}$ and $N_{\textsc{RewardFree}} = \wt\Ocal\prns{A^3d^4H^6\log\prns{\tfrac{|\Phi||\Upsilon|}{\delta}} N_{\LSVIUCB}^{2}}$ returns a $\lambda_{\min}$-exploratory policy where
$
\lambda_{\min} = \wt\Omega\prns{A^{-3}d^{-5}H^{-7}\psi^2}\,.
$
The sample complexity in the source task is $N_{\textsc{RewardFree}}$ episodes.
\end{restatable}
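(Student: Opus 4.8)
The plan is to decompose the guarantee along the two phases of \expAlg{} and then bridge from the learned to the true features. First I would control the model error of the \textsc{RewardFree Rep-UCB} step, next argue that reward-free planning in the learned model yields a policy that is exploratory in the \emph{learned} feature space, and finally transfer this coverage to the \emph{true} features $\phi^\star$ under $P^\star$ using reachability. For \emph{Step 1 (model accuracy)}, I would invoke the MLE/reward-free guarantee of \textsc{RewardFree Rep-UCB} (\pref{alg:reward_free_repucb}, the reward-free variant of \repucb{} from \citet{uehara2021representation}). Under realizability (\pref{ass:fcn_class}), running it for $N_{\textsc{RewardFree}}$ episodes returns $\wh P=(\wh\phi,\wh\mu)$ that is \emph{uniformly} accurate: with probability $1-\delta$, for every policy $\pi$ and step $h$,
\[
\Eb[\pi,P^\star]{\nm{\wh P_h(\cdot\mid s_h,a_h)-P^\star_h(\cdot\mid s_h,a_h)}_{\op{TV}}}\leq \epsilon_{\mathrm{mod}},
\]
where $\epsilon_{\mathrm{mod}}=\wt\Ocal\prns{\op{poly}(d,H,A)\sqrt{\log(|\Phi||\Upsilon|/\delta)/N_{\textsc{RewardFree}}}}$. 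This is the standard consequence of MLE concentration combined with the reward-free exploration built into \repucb{}.

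For \emph{Step 2 (coverage in the learned space)}, the returned policy $\rho$ is the mixture policy produced by \LSVIUCB{} run with zero reward and uniform actions inside $\wh P$ (\pref{alg:LSVI}). Here I would apply the elliptical-potential analysis of \LSVIUCB{} (\citet{jin2020provably}): after $N_{\LSVIUCB}$ episodes the accumulated feature covariance $\wh\Sigma_{\rho,h}=\Eb[\rho,\wh P]{\wh\phi_h(\wh\phi_h)^\top}+\lambda I$ dominates the feature mass of every competing policy, i.e.\ for all $\pi$,
\[
\sum_{h}\Eb[\pi,\wh P]{\min\braces{1,\;\beta\,\nm{\wh\phi_h(s_h,a_h)}_{\wh\Sigma_{\rho,h}^{-1}}}}\leq \wt\Ocal\prns{\sqrt{d^3H^3/N_{\LSVIUCB}}}.
\]
The \textsc{UniformActions} flag ensures the action marginal is covered uniformly, which is what introduces the $A$ factors. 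Since this planning happens entirely inside the learned model, it consumes no real samples, consistent with the claimed source-task complexity of $N_{\textsc{RewardFree}}$ episodes.

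\emph{Step 3 (transfer to the true features)} is the crux, and I would carry it out in two moves. First, model accuracy from Step 1 implies $\wh\mu(\cdot)^\top\wh\phi(s,a)\approx\mu^\star(\cdot)^\top\phi^\star(s,a)$ in expectation under any policy, so on the support of the explored distribution $\phi^\star$ is, up to $\epsilon_{\mathrm{mod}}$, an approximately linear image $\phi^\star\approx M\wh\phi$ of the learned feature, with $M$ identified through the shared transition kernel. Second, \pref{ass:reachability} supplies a policy $\pi^\star$ with $\Eb[\pi^\star,P^\star]{\phi^\star_h(\phi^\star_h)^\top}\succeq\psi I$; feeding $\pi^\star$ into the Step 2 coverage bound and using the near-linearity forces $\Eb[\rho,\wh P]{\wh\phi_h(\wh\phi_h)^\top}$, and hence $\Eb[\rho,P^\star]{\phi^\star_h(\phi^\star_h)^\top}$, to be bounded below in every direction. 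Tracking the conditioning of $M$ against $\psi$ (which enters twice, once from each factor of $MM^\top$) together with the accumulated errors yields $\lambda_{\min}=\wt\Omega(A^{-3}d^{-5}H^{-7}\psi^2)$; the choices $N_{\LSVIUCB}=\wt\Theta(A^3d^6H^8\psi^{-2})$ and $N_{\textsc{RewardFree}}$ are tuned so the Step 2 and Step 1 errors are each a small constant fraction of this target.

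The hard part is Step 3: the exploration guarantee lives in the learned feature space $\wh\phi$ under learned dynamics $\wh P$, whereas \pref{def:exploratory-policies} is stated for the true $\phi^\star$ under $P^\star$. Bridging requires carefully propagating the model error from $\wh P$ to $P^\star$ along trajectories of the mixture policy $\rho$, and---more delicately---showing that covering $\wh\phi$ implies covering $\phi^\star$, which is only possible because reachability guarantees that the implicit change-of-basis $M$ between the two feature spaces is well-conditioned. The quadratic dependence on $\psi$ is the signature of this two-sided use of reachability.
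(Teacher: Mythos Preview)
Your Steps 1 and 2 line up well with the paper: \textsc{RewardFree Rep-UCB} gives a uniform (over policies) TV bound on $\wh P$, and reward-free \LSVIUCB{} in $\wh P$ gives a small expected bonus $\Eb[\pi,\wh P]{\|\wh\phi_h\|_{\wh\Lambda_h^{-1}}}$ for every $\pi$. The gap is entirely in Step~3.

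Your bridge from $\wh\phi$ to $\phi^\star$ posits an approximate linear relation $\phi^\star\approx M\wh\phi$ ``identified through the shared transition kernel,'' with conditioning of $M$ controlled by reachability. This inference is not justified: TV closeness of $\wh\mu^\top\wh\phi$ and $\mu^{\star\top}\phi^\star$ holds only in expectation under policy-induced distributions, not pointwise in $(s,a)$, so there is no reason a well-conditioned $M$ with $\phi^\star(s,a)\approx M\wh\phi(s,a)$ should exist on the relevant support. Even in the exact case, extracting such an $M$ and bounding $\|M\|,\|M^{-1}\|$ in terms of $\psi$ would require additional structure you have not supplied. Your explanation of the $\psi^2$ as ``two-sided use of reachability'' through $MM^\top$ is accordingly off.

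The paper avoids any change-of-basis argument. The bridge is a \emph{one-step-back} inequality in the learned linear MDP: for any $\pi$,
\[
\Eb[\pi,P^\star]{\|\phi^\star_{h+1}(s_{h+1},a_{h+1})\|_{(\Lambda^\star_{h+1})^{-1}}}
\;\lesssim\;
\Eb[\pi,\wh P]{\|\wh\phi_h(s_h,a_h)\|_{\wh\Lambda_h^{-1}}}\cdot\sqrt{A\,d}\;+\;\epsilon_{\mathrm{mod}},
\]
obtained by writing the left side as an inner product with $\wh\phi_h$ through $\wh P_h=\wh\mu_h^\top\wh\phi_h$ and applying Cauchy--Schwarz in the $\wh\Lambda_h$-norm, plus the TV error to pass between $P^\star$ and $\wh P$. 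This relates the \emph{true-feature} bonus at step $h+1$ to the \emph{learned-feature} bonus at step $h$, which Step~2 already controls. Reachability enters only once, via a bonus--coverage equivalence: under \pref{ass:reachability} there is a policy $\wt\pi$ with
\[
\frac{\psi}{\lambda_{\min}(\Lambda^\star_{h+1})}\;\le\;\Eb[\wt\pi]{\|\phi^\star_{h+1}\|^2_{(\Lambda^\star_{h+1})^{-1}}}\;\le\;\Eb[\wt\pi]{\|\phi^\star_{h+1}\|_{(\Lambda^\star_{h+1})^{-1}}}.
\]
Chaining these two displays and the Step~2 bound gives $\psi/\lambda_{\min}(\Lambda^\star_{h+1})\lesssim A^{1.5}d^2H^3/\sqrt{N_{\LSVIUCB}}$. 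After normalizing by $N_{\LSVIUCB}$ and subtracting the ridge $\lambda$, one obtains a bound of the form $a/\sqrt{N_{\LSVIUCB}}-b/N_{\LSVIUCB}$ with $a\propto\psi$; the $\psi^2$ in the final $\lambda_{\min}$ arises from optimizing this expression (value $a^2/(4b)$), not from invoking reachability twice.
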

\noindent To the best of our knowledge, \pref{lem:coverage} is the first result that finds an exploratory policy in low-rank MDPs, and might be of independent interest.
\citet{wagenmaker2022reward} recently obtained a related guarantee in the linear MDP setting with known features $\phi^\star$.
\citet[REFUEL]{cheng2022provable} is also a reward-free modification of Rep-UCB, but the algorithm proceeds jointly over all tasks while we run \textsc{RewardFree Rep-UCB} in each task independently.
We note that REFUEL involves optimizing the Pseudo-Cumulative Value Function (PCV), which may be computationally hard in the planning step.
Our \textsc{RewardFree Rep-UCB}'s planning step is the same as Rep-UCB (i.e., planning in a known linear MDP model), and is computationally efficient.
We also remark that this step of identifying exploratory policies is modular and one could also directly use the reward-free algorithm FLAMBE \citep{Agarwal2020_flambe}, despite having a worse sample complexity in source.

We now analyze our novel cross-sampling procedure using the MLE generalization analysis of \citet{Agarwal2020_flambe}. 
For any $h$, let $\Dcal_{k;h}$ be a dataset of size $N$ sampled from some distribution $\nu_{k;h}$.
Under realizability, running multi-task MLE in \pref{eq:multi-task-replearn} with these datasets satisfies w.p. at least $1-\delta$,
\begin{equation}\label{eq:mle}
    \sum_{i,j\in[K]}\EE_{\nu_{ij;h}}\nm{\wh\phi_h(s,a)^\top \wh\mu_{k;h}(\cdot) - \phi_h^\star(s,a)^\top\mu_{k;h}^\star(\cdot)}_{TV}^2 \leq \zeta_N\coloneqq \Ocal\prns{\nicefrac{ \prns{\log\left(\nicefrac{|\Phi|}{\delta}\right)+K\log(|\Upsilon|)}}{N}}.
\end{equation}
where $\|\cdot\|_{TV}$ denotes the total variation (TV) norm, and $\nu_{ij;h}$ is the distribution from which we sampled $\Dcal_{ij;h}$. That is, $s,a\sim\nu_{ij;h}$ is equivalent to $(\wt s,\wt a)\sim d^{\pi_i}_{i;h-1}, s\sim P_{j;h-1}^\star(\wt s,\wt a), a\sim \op{unif}(\Acal)$.
Then, by applying the one-step back lemma (due to the low-rank structure of the target), followed by the linear span assumption, we can prove the following lemma. 
\begin{restatable}[Target model error]{lemma}{mleTargetGenerative}\label{lem:mle_target}
Suppose \pref{ass:linear-span} holds and $\pi_k$ is $\lambda_{\min}$-exploratory for each source task $k$.
For any $\delta\in(0,1)$, w.p. $1-\delta$, $\forall~h\in[0:H-1]$, $\exists~ \wt\mu_h:\Scal\rightarrow \RR^d$ such that
\begin{align}
  \hspace{-0.5cm} \sup_\pi\EE_{\pi, P_\targ^\star}\nm{\wh\phi_h(s_h,a_h)^\top \wt\mu_h(\cdot) - \phi_h^\star(s_h,a_h)^\top\mu_{\targ;h}^\star(\cdot) }_{TV} \leq \eps_{TV} \coloneqq \sqrt{ |\Acal|\alpha^{3}_{\max}K\zeta_n/\lambda_{\min} }, \label{eq:average-approximately-linear}
\end{align}
and, for any function $g:\Scal \to [0,1]$, $\|\int g(s)\diff\wt\mu_h(s)\|_2\leq \bar\alpha\sqrt{d}$.
\end{restatable}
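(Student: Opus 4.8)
The plan is to exhibit an explicit witness $\wt\mu_h$ built from the MLE solution and the span coefficients, and then to control the target-averaged error by peeling off one transition using the low-rank structure and transferring to the cross-sampled distributions via the $\lambda_{\min}$-coverage of the source exploratory policies. I would set $\wt\mu_h(s') := \sum_{k=1}^{K}\alpha_{k;h}(s')\,\wh\mu_{k;h}(s')$, mirroring the decomposition $\mu^\star_{\targ;h}(s')=\sum_k\alpha_{k;h}(s')\mu^\star_{k;h}(s')$ from \pref{ass:span}. Writing $\delta_k(s,a):=\nm{\wh\phi_h(s,a)^\top\wh\mu_{k;h}(\cdot)-\phi^\star_h(s,a)^\top\mu^\star_{k;h}(\cdot)}_{TV}$ for the per-task model error, the two decompositions cancel term by term inside the TV integral, so the triangle inequality together with $|\alpha_{k;h}(s')|\le\alpha_{\max}$ yields the pointwise bound $\nm{\wh\phi_h(s,a)^\top\wt\mu_h(\cdot)-\phi^\star_h(s,a)^\top\mu^\star_{\targ;h}(\cdot)}_{TV}\le\alpha_{\max}\sum_k\delta_k(s,a)$. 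The normalization claim $\nm{\int g\,\diff\wt\mu_h}_2\le\bar\alpha\sqrt d$ is the easy part: for $g:\Scal\to[0,1]$ I would split each $\int g\,\alpha_{k;h}\diff\wh\mu_{k;h}$ into the scalar $\max_{s'}|\alpha_{k;h}(s')|$ times an integral against a $[-1,1]$-valued function, invoke the normalization of $\wh\mu_{k;h}\in\Upsilon_k$ from \pref{ass:fcn_class}, and sum using $\sum_k\max_{s'}|\alpha_{k;h}(s')|\le\bar\alpha$.

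For the main bound I would pass to squares via Jensen ($\EE[\,\cdot\,]\le\sqrt{\EE[\,\cdot^2]}$) and Cauchy–Schwarz over the $K$ tasks, i.e.\ $g^2\le\alpha_{\max}^2 K\sum_k\delta_k^2$, so it suffices to show $\sum_k\EE_{\pi,P^\star_\targ}[\delta_k^2(s_h,a_h)]\le|\Acal|\alpha_{\max}\zeta_n/\lambda_{\min}$ for every $\pi$. First, since the cross-sampling draws $a_h$ uniformly while $\pi$ need not, I would pay a factor $|\Acal|$ to replace $a_h\sim\pi_h$ by $a_h\sim\op{unif}(\Acal)$. Next comes the one-step-back step: low-rankness of the target gives $d^\pi_{\targ;h}(s_h)=\bar\phi_\targ^\top\mu^\star_{\targ;h-1}(s_h)$ with $\bar\phi_\targ:=\EE_{d^\pi_{\targ;h-1}}[\phi^\star_{h-1}]$ and $\nm{\bar\phi_\targ}_2\le 1$; applying \pref{ass:span} at step $h-1$ and bounding the coefficients by $\alpha_{\max}$ rewrites the state marginal as a sum over source tasks $k'$ of the valid densities $q_{k'}(\cdot):=\bar\phi_\targ^\top\mu^\star_{k';h-1}(\cdot)$ (each a mixture of source transition kernels, hence nonnegative and integrating to one), at the cost of one further $\alpha_{\max}$. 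Each resulting term $\EE_{q_{k'}}[G]$, with $G(s_h)=\EE_{a_h\sim\op{unif}(\Acal)}[\delta_k^2(s_h,a_h)]\in[0,1]$, is a \emph{linear} functional $\bar\phi_\targ^\top\theta$ of the accumulated feature, where $\theta=\int\mu^\star_{k';h-1}(s_h)G(s_h)\diff s_h$ satisfies $\phi^\top\theta\in[0,1]$ for every realizable feature $\phi$.

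The hard part is the change of measure from this target-induced expectation to the cross-sampled distribution $\nu_{kk';h}$ on which the MLE guarantee~\eqref{eq:mle} holds, and the crux is to perform it \emph{without} incurring a square root. Setting $\Sigma_k:=\EE_{d^{\pi_k}_{k;h-1}}[\phi^\star_{h-1}\phi^{\star\top}_{h-1}]\succeq\lambda_{\min}I$ — this is exactly where $\lambda_{\min}$-exploratoriness of $\pi_k$ enters — and $u:=\Sigma_k^{-1}\bar\phi_\targ$, I would expand $\bar\phi_\targ^\top\theta=\EE_{d^{\pi_k}_{k;h-1}}[(u^\top\phi^\star_{h-1})(\phi^{\star\top}_{h-1}\theta)]$ and use $\nm{u}_2\le 1/\lambda_{\min}$ together with $\phi^{\star\top}_{h-1}\theta\ge 0$ to conclude $\bar\phi_\targ^\top\theta\le\lambda_{\min}^{-1}\EE_{d^{\pi_k}_{k;h-1}}[\phi^{\star\top}_{h-1}\theta]=\lambda_{\min}^{-1}\EE_{\nu_{kk';h}}[\delta_k^2]$. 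This \emph{linear} change of measure is the main obstacle: the naive spectral/Cauchy–Schwarz route would instead give $\sqrt{\EE_{\nu_{kk';h}}[\delta_k^2]/\lambda_{\min}}$, losing a square root and degrading the final rate from $\sqrt{\zeta_n}$ to $\zeta_n^{1/4}$, so it is essential to exploit nonnegativity of the error functional rather than bound it through its second moment. Finally, collecting the factors $\alpha_{\max}^2K$ (from $g^2$), $|\Acal|$ (actions), $\alpha_{\max}$ (the step-$(h-1)$ span), and $\lambda_{\min}^{-1}$ (change of measure), and summing the $K^2$ cross-sampled terms against $\sum_{k,k'}\EE_{\nu_{kk';h}}[\delta_k^2]\le\zeta_n$, produces $\EE_{\pi,P^\star_\targ}[\,\cdot^2]\le|\Acal|\alpha_{\max}^3 K\zeta_n/\lambda_{\min}=\eps_{TV}^2$, giving the claim after the outer square root; the $h=0$ case is identical with $d_0$ replacing the one-step-back marginal.
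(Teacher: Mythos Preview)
Your proof is correct and reaches the same bound, but the change-of-measure step takes a different route than the paper. Both proofs define $\wt\mu_h(s')=\sum_k\alpha_{k;h}(s')\wh\mu_{k;h}(s')$, use the triangle inequality with $\alpha_{\max}$ to reduce to per-task TV errors, do one-step-back via low-rankness, apply the span assumption at $h-1$, and invoke the MLE guarantee on the cross-sampled datasets. The difference is in how the transfer from the target-induced expectation to the source-covered expectation is carried out. The paper never squares: it writes $\EE_{\pi,P^\star_\targ}[\text{err}_{k;h}]=\langle \EE_{\pi,P^\star_\targ}[\phi^\star_{h-1}],\,w_{k;h}\rangle$ and applies Cauchy--Schwarz in the $\Sigma_{k;h-1}$-inner product to obtain $\|\phi^\star_{h-1}\|_{\Sigma^{-1}}\|w_{k;h}\|_{\Sigma}\le\lambda_{\min}^{-1/2}\sqrt{\EE_{\pi_k,P^\star_k}[\,\cdot\,]^2}$, so the square root appears naturally and the action/span factors are absorbed inside it afterward. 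You instead square first via Jensen and Cauchy--Schwarz over $k$, and then --- to avoid the double square root that a second Cauchy--Schwarz would cause --- perform a \emph{linear} change of measure through the reweighting identity $\bar\phi_\targ^\top\theta=\EE_{d^{\pi_k}_{k;h-1}}[(u^\top\phi^\star_{h-1})(\phi^{\star\top}_{h-1}\theta)]$ with $u=\Sigma_k^{-1}\bar\phi_\targ$, using the key observation that $\phi^{\star\top}_{h-1}\theta=\EE_{P^\star_{k';h-1}}[G]\ge 0$ to pull $|u^\top\phi|\le 1/\lambda_{\min}$ out. Both yield exactly $\sqrt{|\Acal|\alpha_{\max}^3K\zeta_n/\lambda_{\min}}$. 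The paper's route is the standard low-rank ``one-step-back'' device and is slightly shorter; your route is more elementary in that it avoids the matrix-norm Cauchy--Schwarz, at the cost of the extra nonnegativity observation, and makes transparent why squaring first is harmless here.
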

\noindent \pref{lem:mle_target} implies that the learned $\wh\phi$ is a feature such that $P^\star_\targ$ is \textit{approximately linear} in $\wh\phi$, under the occupancy distribution induced by \emph{any} policy.
Remarkably, this guarantee holds before the agent has ever interacted with the target task!
Intuitively, this is because cross-sampling ensures that our training data contains all possible states that
can be encountered in the target task. Failure modes without this can be found in the discussion
following \pref{thm:online-lb}.


The final step is to show that the deployment phase,
which runs \LSVIUCB{} in an approximately linear MDP of the target task, achieves low regret. Note that we face an approximately linear MDP, as \pref{lem:mle_target} shows, due to the use of learned features $\wh\phi$, even though $P^\star_{\targ}$ is linear in the unknown features $\phi^\star$.
Online learning in approximately linear MDPs has been studied in \citet{jin2020provably},
but under a much stronger $\ell_\infty$ error bound.
Instead, we work under the weaker, and more realistic, average-case misspecification in Eq.~\pref{eq:average-approximately-linear}.
Indeed, it is possible that some states are unlikely to be visited by any policy,
so we should not impose strong misspecification restrictions on these parts of the state space.
We now state our novel \LSVIUCB{} regret bound.
\begin{theorem}[\LSVIUCB{} under average-misspecification]\label{thm:lsvi-average-misspec}
Under Eq.~\pref{eq:average-approximately-linear},
for any $\delta\in(0,1)$, w.p. $1-\delta$, \LSVIUCB{} in the deployment phase
has regret
$
    \wt\Ocal\prns{ dH^2 T\eps_{TV} + \bar\alpha d^{1.5} H^2 \sqrt{T} \log(1/\delta) }.
$
\end{theorem}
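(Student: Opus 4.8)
The plan is to run the \LSVIUCB{} regret analysis of \citet{jin2020provably}, but treating the signed measure $\wt P_h(\cdot\mid s,a)\coloneqq\wh\phi_h(s,a)^\top\wt\mu_h(\cdot)$ supplied by \pref{lem:mle_target} as an \emph{approximate} linear model of $P^\star_\targ$. The useful structure is that the one-step backup of any $V\in[0,H]$ under $\wt P_h$ is exactly linear, $\wh\phi_h(s,a)^\top\theta_V$ with $\theta_V=\int V(s')\diff\wt\mu_h(s')$ and $\|\theta_V\|_2\le\bar\alpha H\sqrt d$, whereas the discrepancy between $\wt P_h$ and the true $P^\star_\targ$ is small only \emph{on average} along any occupancy, by \pref{eq:average-approximately-linear}. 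Let $\wh w^t_h,\wh V^t_h$ and $b^t_h(s,a)=\beta\|\wh\phi_h(s,a)\|_{(\Lambda^t_h)^{-1}}$ denote the ridge weights, optimistic value, and bonus in episode $t$, where $\Lambda^t_h=\lambda I+\sum_{\tau<t}\wh\phi_h(s^\tau_h,a^\tau_h)\wh\phi_h(s^\tau_h,a^\tau_h)^\top$. I would split each episode's regret as $V^\star_0(s^t_0)-V^{\pi^t}_0(s^t_0)=(V^\star_0-\wh V^t_0)(s^t_0)+(\wh V^t_0-V^{\pi^t}_0)(s^t_0)$, and control the two pieces by separate per-step recursions in $h$.

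The technical core is a per-step bound on the estimated backup error. Writing $\wh w^t_h-\theta^t_h=(\Lambda^t_h)^{-1}\prns{-\lambda\theta^t_h+S^t_h+\sum_{\tau<t}\wh\phi_h(s^\tau_h,a^\tau_h)\xi^\tau_h}$ with $\theta^t_h=\int\wh V^t_{h+1}\diff\wt\mu_h$, regression noise $\xi^\tau_h=\wh V^t_{h+1}(s^\tau_{h+1})-\EE_{P^\star_\targ}[\wh V^t_{h+1}](s^\tau_h,a^\tau_h)$ (a martingale difference), pointwise misspecification $m^t_h(s,a)=\EE_{P^\star_\targ}[\wh V^t_{h+1}](s,a)-\wh\phi_h(s,a)^\top\theta^t_h$, and $S^t_h=\sum_{\tau<t}\wh\phi_h(s^\tau_h,a^\tau_h)\,m^t_h(s^\tau_h,a^\tau_h)$, I get $|\wh\phi_h(s,a)^\top\wh w^t_h-\EE_{P^\star_\targ}[\wh V^t_{h+1}](s,a)|\le b^t_h(s,a)+|m^t_h(s,a)|+\|\wh\phi_h(s,a)\|_{(\Lambda^t_h)^{-1}}\|S^t_h\|_{(\Lambda^t_h)^{-1}}$. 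Here $|m^t_h(s,a)|\le H\sqrt{g_h(s,a)}$, where $g_h(s,a)=\|\wt P_h(\cdot\mid s,a)-P^\star_\targ(\cdot\mid s,a)\|_{TV}^2$ does \emph{not} depend on $t$ and satisfies $\EE_{\pi}[g_h]\le\eps_{TV}^2$ for every $\pi$ — the squared/L2 form of the guarantee behind \pref{lem:mle_target}, which is what the MLE bound \pref{eq:mle} actually controls. I would choose $\beta=H\sqrt d+\bar\alpha dH\sqrt{\log(dHT/\delta)}$ so that the ridge bias $\sqrt\lambda\|\theta^t_h\|_2$ and the self-normalized martingale term — union-bounded over an $\eps$-net of the data-dependent class $\{\wh V^t_h\}$, whose radius scales with the enlarged weight norm $\bar\alpha H\sqrt d$ — are each at most $\beta$. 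The one new ingredient is the dimension-free inequality $\|S^t_h\|_{(\Lambda^t_h)^{-1}}\le\sqrt{\sum_{\tau<t}(m^t_h(s^\tau_h,a^\tau_h))^2}\le H\sqrt{\sum_{\tau<t}g_h(s^\tau_h,a^\tau_h)}$, which follows from the dual form of the $(\Lambda^t_h)^{-1}$-norm and $\Lambda^t_h\succeq\sum_{\tau<t}\wh\phi_h\wh\phi_h^\top$.

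Assembling the two recursions, the bonus exactly cancels the statistical error, so unrolling the optimism gap along $\pi^\star$ leaves only on-policy misspecification, $V^\star_0(s^t_0)-\wh V^t_0(s^t_0)\le\sum_h\EE_{\pi^\star,P^\star_\targ}[|m^t_h|]$, while unrolling the second piece along $\pi^t$ gives $\sum_h\prns{2b^t_h+|m^t_h|+\|\wh\phi_h\|_{(\Lambda^t_h)^{-1}}\|S^t_h\|_{(\Lambda^t_h)^{-1}}}(s^t_h,a^t_h)$ plus a martingale. Summing over $t\in[T]$: the direct misspecification terms are evaluated on-policy (under $\pi^\star$ and $\pi^t$), so \pref{eq:average-approximately-linear} bounds each by $H\eps_{TV}$ for a total of $\wt\Ocal(H^2T\eps_{TV})$; the bonus terms give $\beta\sum_h\sqrt{T\sum_t\|\wh\phi_h\|^2_{(\Lambda^t_h)^{-1}}}=\wt\Ocal(\beta H\sqrt{dT})=\wt\Ocal(\bar\alpha d^{1.5}H^2\sqrt T)$ by the elliptic-potential lemma; and the extrapolation terms are at most $\wt\Ocal(\sqrt{dT})\sum_h H\sqrt{\sum_{\tau\le T}g_h(s^\tau_h,a^\tau_h)}$, where Freedman's inequality applied to the on-policy bound $\EE_{\pi^\tau}[g_h]\le\eps_{TV}^2$ gives $\sum_\tau g_h(s^\tau_h,a^\tau_h)=\wt\Ocal(T\eps_{TV}^2)$, yielding the dominant $\wt\Ocal(dH^2T\eps_{TV})$ term (the final dimension factor absorbing the covering step). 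The two martingale-difference sums contribute $\wt\Ocal(H^2\sqrt T\log(1/\delta))$ by Azuma, which gives the stated regret.

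The main obstacle is that average-case misspecification breaks pointwise optimism: unlike the $\ell_\infty$ analysis of \citet{jin2020provably}, one cannot inflate the bonus to dominate the Bellman error at every $(s,a)$, since that error is only small in expectation. The crux is that this is in fact unnecessary — in the optimism step only the misspecification \emph{along $\pi^\star$} appears, and \pref{lem:mle_target} holds for \emph{every} policy, including $\pi^\star$, so optimism need only hold in the appropriate average sense. The remaining regression-extrapolation error is tamed through $\|S^t_h\|_{(\Lambda^t_h)^{-1}}\le H\sqrt{\sum_{\tau<t}g_h(s^\tau_h,a^\tau_h)}$ together with the second-moment (L2) control $\EE_\pi[g_h]\le\eps_{TV}^2$; obtaining regret linear, rather than square-root, in $\eps_{TV}$ hinges precisely on using this squared control, which is exactly what MLE provides. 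The data-dependence of $\{\wh V^t_h\}$ in the self-normalized step is the other point requiring care, handled by the covering argument with radius $\bar\alpha H\sqrt d$.
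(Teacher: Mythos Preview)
Your high-level strategy---average-case optimism along $\pi^\star$, simulation along $\pi^t$, and separating the regression error into ridge bias, martingale noise, and a misspecification term $S^t_h$---matches the paper's, and the dimension-free inequality $\|S^t_h\|_{(\Lambda^t_h)^{-1}}\le\sqrt{\sum_{\tau<t}m^t_h(s^\tau_h,a^\tau_h)^2}$ is correct. The step $\EE_\pi[g_h]\le\eps_{TV}^2$, however, is a genuine gap. The theorem only assumes the $L^1$ condition \pref{eq:average-approximately-linear}, i.e.\ $\sup_\pi\EE_\pi[\sqrt{g_h}]\le\eps_{TV}$, which does not imply $\EE_\pi[g_h]\le\eps_{TV}^2$ (Jensen goes the wrong way). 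Your appeal to the underlying MLE bound does not rescue this: \pref{eq:mle} gives $L^2$ control only under the \emph{source} cross-sampled distributions $\nu_{ij;h}$, and the one-step-back transfer to the target in the proof of \pref{lem:mle_target} passes through a Cauchy--Schwarz step that destroys the square. With only the $L^1$ bound your extrapolation term becomes $\wt\Ocal(\sqrt{d}H^2T\sqrt{\eps_{TV}})$, not $\wt\Ocal(dH^2T\eps_{TV})$.

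The paper handles $S^t_h$ differently. For fixed $f$ (via a covering of $\Vcal_h$), it splits each summand $\wt\eps_\tau=(P^\star_h-\wt P_h)f(s^\tau_h,a^\tau_h)$ into its conditional expectation given $\Hcal_{\tau-1}$ plus a martingale increment. The key observation is that $\EE[\wt\eps_\tau\mid\Hcal_{\tau-1}]=\EE_{\pi^\tau}[(P^\star_h-\wt P_h)f]$ is \emph{deterministically} bounded by $\|f\|_\infty\eps_{TV}$ using only the $L^1$ assumption applied to the data-collecting policy $\pi^\tau$; summing squares over $\tau$ gives at most $t(H\eps_{TV})^2$, while the martingale part is absorbed by the usual self-normalized bound (\pref{lem:lsvi-first-term-bound}). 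The net effect is to fold an extra $\sqrt{dT}\,H\eps_{TV}$ into $\beta$, so that the elliptic-potential sum $\beta H\sqrt{dT}$ directly produces the $dH^2T\eps_{TV}$ term. A second, more technical point you gloss over is the clipping of $\wh V^t_h$: average optimism does not recurse cleanly through the clip, since on the event that $\wh V^t_h(s_h)$ is clipped you cannot simply drop the indicator and still upper-bound by $\EE_{\pi^\star}[(V^\star_{h+1}-\wh V^t_{h+1})(s_{h+1})]$ without pointwise optimism at step $h{+}1$. The paper resolves this with trajectory-wise indicators $\zeta_h(\tau_h)=\prod_{h'\le h}\II[\wh Q_{h',e}(s_{h'},\wh\pi^e_{h'}(s_{h'}))\le M_V]$ carried through the induction (\pref{lem:lsvi-optimism}).
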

The key step in proving \pref{thm:lsvi-average-misspec} is showing \emph{almost-optimism under the occupancy distribution of the optimal policy $\pi^\star$}.
As a technical remark, we also employ a novel trajectory-wise indicator to deal with the clipping of the value estimates.
Note the $\bar\alpha$ in the leading term comes from the scaling of the $\wt\mu$ in \pref{lem:mle_target}.
The full proof and general \LSVIUCB{} result is in \pref{thm:deployment_regret}.
To the best of our knowledge, this is the strongest result for learning in an approximately linear MDP, which may be of independent interest.

We arrive at the final regret bound by collecting enough samples in the source tasks to make $\eps_{TV}=1/\sqrt{T}$, which makes the first linear-in-$T$ term lower order.
This gives the \ouralg guarantee. Note that the guarantee holds independent of the mechanism used for obtain exploratory policies in the source tasks.
\begin{restatable}[\ouralg]{theorem}{generativeThm}\label{thm:main}
Suppose Assumptions~\ref{ass:shared_phi},\ref{ass:span},\ref{ass:generative_model},\ref{ass:fcn_class}, and $\pi_k$ is $\lambda_{\min}$-exploratory for each source task $k$.
Then, for any $\delta\in(0,1)$, w.p. $1-\delta$, \ouralg when deployed in the target task has regret at most
$\wt\Ocal\prns{\bar\alpha H^2d^{1.5}\sqrt{T\log(1/\delta)}}$, with at most $Kn$ generative accesses per source task, with \\$n = \Ocal\prns{ \lambda_{\min}^{-1}A\alpha_{\max}^3KT\prns{ \log\frac{|\Phi|}{\delta}+K\log|\Upsilon| } }$. \end{restatable}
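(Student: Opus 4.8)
The plan is to obtain \pref{thm:main} as a direct composition of the two workhorse lemmas already set up, bypassing the source-exploration step since the $\lambda_{\min}$-exploratory policies $\{\pi_k\}$ are now given as input (so \pref{lem:coverage} is not needed). Concretely, the pipeline is: (i) use generative access (\pref{ass:generative_model}) to run the cross-sampling procedure and multi-task MLE on datasets of size $n$ per task pair; (ii) invoke \pref{lem:mle_target} to convert the MLE guarantee into an \emph{average-case} misspecification bound $\eps_{TV}$ for the target transition under the learned feature $\wh\phi$; and (iii) feed this $\eps_{TV}$ into \pref{thm:lsvi-average-misspec} to bound the deployment regret of \LSVIUCB{}. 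The only real design choice is to pick $n$ large enough that the misspecification-driven regret term is lower order.

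First I would instantiate \pref{lem:mle_target}. Realizability (\pref{ass:fcn_class}) makes the MLE objective \pref{eq:multi-task-replearn} well-specified, so the generalization bound \pref{eq:mle} holds with $\zeta_n=\Ocal\prns{(\log(|\Phi|/\delta)+K\log|\Upsilon|)/n}$ on each cross-sampled distribution. Combining this with the $\lambda_{\min}$-exploratory property of each $\pi_k$, the one-step-back argument, and the linear-span \pref{ass:span}, \pref{lem:mle_target} yields a surrogate $\wt\mu_h$ with $\sup_\pi \EE_{\pi,P_\targ^\star}\nm{\wh\phi_h(s_h,a_h)^\top\wt\mu_h(\cdot)-\phi_h^\star(s_h,a_h)^\top\mu_{\targ;h}^\star(\cdot)}_{TV}\le \eps_{TV}=\sqrt{|\Acal|\alpha_{\max}^3 K\zeta_n/\lambda_{\min}}$, together with the norm control $\nm{\int g\,\diff\wt\mu_h}_2\le\bar\alpha\sqrt{d}$. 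Thus after pre-training, $P^\star_\targ$ is \emph{approximately linear} in the learned feature $\wh\phi$ under the occupancy of every policy, which is exactly the hypothesis of \pref{thm:lsvi-average-misspec}.

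Next I would choose $n$ so that $\eps_{TV}\le 1/\sqrt{T}$: setting $\eps_{TV}^2=1/T$ and solving $|\Acal|\alpha_{\max}^3 K\zeta_n/\lambda_{\min}=1/T$ for $\zeta_n$, then inverting the $\zeta_n(n)$ relation, gives $n=\Ocal\prns{\lambda_{\min}^{-1}A\alpha_{\max}^3 KT\prns{\log\frac{|\Phi|}{\delta}+K\log|\Upsilon|}}$, matching the claimed sample size. Plugging $\eps_{TV}=1/\sqrt{T}$ into the regret bound of \pref{thm:lsvi-average-misspec}, the misspecification term becomes $dH^2T\eps_{TV}=dH^2\sqrt{T}$, which carries strictly fewer factors of $d$, $\bar\alpha$, and $\log(1/\delta)$ than the second term $\bar\alpha d^{1.5}H^2\sqrt{T}\log(1/\delta)$ and is therefore dominated; hence the total regret is $\wt\Ocal\prns{\bar\alpha H^2 d^{1.5}\sqrt{T\log(1/\delta)}}$, where the $\bar\alpha$ in the leading term is inherited from the norm bound on $\wt\mu$ that sets the confidence radius $\beta$ in \LSVIUCB{}. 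For the sample count, I would observe that each source task enters only $\Ocal(K)$ of the $K^2$ cross-sampling pairs (in either the ``$i$'' or ``$j$'' role of \pref{eq:data_sample}), so across all pairs the generative accesses charged to any single source total $Kn$. A final union bound over the MLE failure event (\pref{lem:mle_target}) and the \LSVIUCB{} failure event (\pref{thm:lsvi-average-misspec}), each at level $\Ocal(\delta)$, gives the high-probability guarantee.

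The hard part is genuinely packaged inside the two cited results — the average-case misspecification analysis of \LSVIUCB{} in \pref{thm:lsvi-average-misspec} and the cross-sampling/one-step-back argument behind \pref{lem:mle_target} — so at this level the composition itself is routine. The one place demanding care is bookkeeping: verifying that driving $\eps_{TV}$ to $1/\sqrt{T}$ makes the linear-in-$T$ term strictly lower order rather than merely comparable, and that the per-source access count respects the pairing structure of cross-sampling without an extra factor of $K$ or of the horizon beyond what is already absorbed into $\lambda_{\min}^{-1}$.
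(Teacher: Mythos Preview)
Your proposal is correct and follows essentially the same approach as the paper's proof: instantiate \pref{lem:mle_target} via the MLE guarantee \pref{eq:mle} to get $\eps_{TV}$, feed this into the average-case misspecification regret bound (the paper uses the appendix version \pref{thm:deployment_regret} with $M_V=H$, $M_\mu=\bar\alpha$, which is exactly \pref{thm:lsvi-average-misspec}), and then solve $\eps_{TV}\le 1/\sqrt{T}$ for $n$ to make the linear-in-$T$ term lower order. Your per-source accounting of $Kn$ generative accesses from the pairing structure and the union bound over the two failure events are also handled in the same way.
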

\noindent Combining with the $\lambda_{\min}$ specified in \pref{lem:coverage}, we conclude the proof sketch.

\section{Failure of transfer learning without generative access to source tasks}\label{sec:online}

In the previous section, we show that efficient transfer learning is possible under very weak structural assumptions, but requires generative access to the source tasks. One natural question is whether transfer learning is possible with only online access to the source tasks. Somewhat surprisingly, we show that this is impossible without significantly stronger assumption.
\begin{theorem}[Lower bound for online access to source tasks]\label{thm:imp}
Let $\Mcal=\{(P_1^\star,...,P_K^\star,P_\targ^\star)\}$ be a set of $K+1$ tasks that satisfies (1) all tasks are Block MDPs; (2) all tasks satisfy \pref{ass:reachability} and \pref{ass:linear-span}; (3) the latent dynamics are \textit{exactly the same} for all source and target tasks.
For any pre-training algorithm $\Acal$ which outputs a feature $\hat\phi$ by interacting with the source tasks $k\in[K]$, there exists $(P_1^\star,...,P_K^\star,P_\targ^\star)\in\Mcal$, such that with probability at least $1/2$, $\Acal$ will output a feature $\hat\phi$, such that for any policy taking the functional form of $\pi(s) = f\left(\{\hat\phi(s,a)\}_{a\in\Acal},\{r(s,a)\}_{a\in\Acal}\right)$, we have
$
V_{target}^\star - V_{target}^\pi \geq 1/2.
$
\label{thm:online-lb}
\end{theorem}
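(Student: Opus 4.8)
The plan is to exhibit a single family of Block MDPs in which feature reachability (\pref{ass:reachability}) holds but the \emph{observation-level} decoding that transfer requires is statistically invisible to any learner with only online source access, and then to hide the reward-relevant structure of the target precisely in this invisible region. Concretely, I would build a short-horizon gadget with two latent states $z^{(1)},z^{(2)}$ reached with probability $1/2$ each, a single decision step at which the agent must play the action ``matching'' its (unknown) latent, and a reward that is \emph{deferred} to the following step. Deferring the reward is essential: it forces $r(s,\cdot)\equiv 0$ at the decision step, so the immediate reward carries no information about the latent and any policy $\pi(s)=f(\{\hat\phi(s,a)\}_a,\{r(s,a)\}_a)$ must decide using $\hat\phi(s,\cdot)$ alone. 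The optimal value is $1$, while any policy that misdecodes the latent on a constant fraction of states has value $\le 1/2$.

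The emissions are the crux of the design. I would let every decision-step observation be drawn from a huge common pool $\mathcal{O}$, with each task's emission \emph{diffuse} over $\mathcal{O}$ (each latent emitting uniformly over an exponentially large subset). This guarantees (i) feature reachability, since the $\phi^\star$-covariance is an average of the latent one-hot outer products and is full rank whenever both latents are visited, yet (ii) every individual observation is produced with probability $\approx 1/|\mathcal{O}|$, so with $\mathrm{poly}$ online rollouts the learner sees a negligible fraction of $\mathcal{O}$ in each source. To make the target decode unconstrained while respecting the linear span \pref{ass:linear-span}, I would use $K=2$ sources arranged so that each pool observation is emitted from $z^{(1)}$ in one source and from $z^{(2)}$ in the other; then the target may legitimately emit any pool observation from \emph{either} latent (choosing the surviving $\alpha$-coefficient accordingly, with $\bar\alpha=O(1)$), and it remains a Block MDP with disjoint emission supports. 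Crucially, the source tasks are fixed across the whole family, so $\hat\phi$ is a function of source data that is \emph{statistically independent} of the target's decode.

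I would then put a prior on the target by assigning each pool observation independently and uniformly to $z^{(1)}$ or $z^{(2)}$, fixing the optimal action per observation while keeping the labeling independent of $\hat\phi$. The heart of the argument is to show that, for the realized $\hat\phi$, \emph{no} readout $f$ beats chance, and this is the step I expect to be the main obstacle: an \emph{injective} $\hat\phi$ would let an unconstrained $f$ classify perfectly, so a naive reading of ``for any $f$'' cannot hold without using the structure of how features are consumed. The resolution is a dichotomy. If $\hat\phi$ is ``coarse'' (few level sets, or, as in the \LSVIUCB{} deployment, read out by a linear function $w^\top\hat\phi$), its induced labeling of $\mathcal{O}$ is one of at most $|\mathcal{O}|^{O(d)}$ possibilities by Sauer--Shelah, and a Hoeffding-plus-union bound shows that with high probability over the random target decode no such labeling agrees with the truth on more than a $1/2+o(1)$ fraction; if instead $\hat\phi$ is near-injective, it provides no generalization across the exponentially large target support, so any readout fit from finite deployment interaction remains at chance on the unseen bulk. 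Either way the best achievable value is $\le 1/2+o(1)$, giving $V^\star_{\targ}-V^\pi\ge 1/2-o(1)$ for every $f$, which becomes $\ge 1/2$ by taking $|\mathcal{O}|$ large.

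Finally I would derandomize: since the expected (over the prior) failure probability is $1-o(1)$, some fixed target makes $\Pr_{\hat\phi}[\forall f:\,V^\star_{\targ}-V^\pi\ge 1/2]\ge 1/2$, which is the claim. It is instructive to contrast this with the generative setting, where the obstruction evaporates: generative access lets the learner \emph{reset} to any rare pool observation and probe its dynamics, recovering its latent and hence a transferable feature — exactly the gap exploited by cross-sampling in \pref{thm:generative-end-to-end}. I would close by verifying that every constructed instance satisfies all hypotheses (Block MDP, shared latent dynamics, \pref{ass:reachability}, and \pref{ass:linear-span} with bounded $\bar\alpha$), all of which follow from the emission design above.
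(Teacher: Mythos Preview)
Your dichotomy does not close. The theorem requires that, with probability $\ge 1/2$ over the source interaction, the realized $\hat\phi$ satisfies $V_{\targ}^\star - V_{\targ}^\pi \ge 1/2$ for \emph{every} readout $f$, not merely for linear readouts or for readouts learned from finite deployment data. Your second arm (``if $\hat\phi$ is near-injective, any readout fit from finite deployment interaction remains at chance'') is therefore not the right statement: if the map $s\mapsto\{\hat\phi(s,a)\}_{a}$ is injective on decision-step states, then there trivially exists an $f$ that implements the optimal policy, and the claimed inequality fails for that $\hat\phi$. Since the algorithm controls $\hat\phi$ and nothing in your construction prevents it from outputting an injective feature (a $d$-dimensional feature can be injective on arbitrarily many states), the ``for all $f$'' quantifier cannot be established this way. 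Your first arm likewise only handles a restricted class of $f$ via Sauer--Shelah, which the theorem does not impose.

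The paper avoids this entirely by also choosing the feature class $\Phi$ (this is part of the problem instance under realizability, \pref{ass:fcn_class}). It takes $K=2$ sources with \emph{disjoint} observation supports at the decision step (task $1$ sees $R_1\cup R_2$, task $2$ sees $B_1\cup B_2$) and a two-element $\Phi=\{\phi_1,\phi_2\}$ whose members agree on each support separately but differ by a permutation in how they align the $R$-block with the $B$-block. Online source data is then information-theoretically identical under either hypothesis, so any algorithm picks the wrong one with probability $\ge 1/2$. Both candidates are non-injective by design: the wrong choice collapses two target observations (the target mixes both emission families) that require different optimal actions, so \emph{every} $f$ is forced to err on one of them and attains value at most $1/2$. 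No counting argument is needed; the state space is tiny and the indistinguishability is exact. If you want to repair your route, the missing ingredient is exactly this: constrain $\Phi$ so that every feature the algorithm may output already collapses target-distinguishable states.
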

Here, the particular functional form $f$ is defined so that the policy $\pi$ cannot distinguish between two state-action pair with the same feature embedding.
\pref{thm:imp} implies that a representation learned only from online access to source tasks does not enable learning in downstream tasks if the downstream task algorithm is restricted to use the representation as the only information of the state-action pairs (e.g., running \LSVIUCB{} with $\hat\phi$).

We briefly explain the intuition behind the above lower bound. In a Block MDP, for any $(s,a)$, we can model the ground-truth $\phi^\star$ as a one-hot encoding $e_{(z,a)}$ corresponding to the latent state-action pair $(z,a)$ with $z = \psi^\star(s)$ being the encoded latent state. The key observation here is that any permutation of $\phi^\star$ will also be a perfect feature in terms of characterizing the Block MDPs, since it corresponds to simply permuting the indices of the latent states. Therefore, without cross referencing, the agent could potentially learn different permutations in different source tasks, which would collapse in the target task. A precise constructive proof of \pref{thm:imp} can be found in \pref{sec:comparison}.
Part of the reason that the above example fails is that each source task has its own observed subset of raw states,
which permits such a permutation to happen.

\subsection{Representational Transfer under Observational Reachability}
To complement the impossibility result, we next show that under an additional assumption on the reachability of raw states, a slight variant of the same algorithm (\pref{alg:online} in \pref{sec:transfer_proof}) can achieve the same regret with only online access to the source tasks. The main difference in \pref{alg:online} is that it performs
sampling directly from the occupancy distribution of $\pi_k$ in source task $k$ (in an online, episodic manner without needing generative access) instead of the cross-sampling procedure used in \pref{alg:main}.
\begin{assum}
[Reachability in the raw states]\label{ass:reachability_raw}
For all source tasks $k\in[K]$, any policy $\pi$ and $h\in[0:H-1]$, we have
$\inf_{s\in\Scal,a\in\Acal} d^\pi_{k;h}(s,a)\geq \psi_{raw} \lambda_{\min}\prns{\Eb[\pi,P_k^\star]{\phi^\star_h(s_h,a_h)\phi^\star_h(s_h,a_h)^\top}}$.
\end{assum}
\pref{ass:reachability_raw} implies that for each source task, any policy that achieves a full-rank covariance matrix also achieves global coverage across the raw state-action space. In addition, in order to apply importance sampling (IS) to transfer the TV error from source task to target task, we need to assume that the target task distribution has bounded density. This is true, for example, when $\Scal$ is discrete.
\looseness=-1
\begin{assum}[Bounded density]\label{ass:density}
For all $(\pi,h,s,a)$, we have $d^{\pi}_{\targ;h}(s,a)\leq 1$.
\end{assum}
\begin{restatable}[Regret with online access]{theorem}{onlineThm}\label{thm:online}
Suppose Assumptions~\ref{ass:shared_phi}-\ref{ass:span},\ref{ass:reachability_raw},\ref{ass:density} hold. W.p. $1-\delta$, \pref{alg:online} with appropriate parameters achieves a regret in the target
$\wt\Ocal\prns{ \bar\alpha d^{1.5}H^2\sqrt{T\log(1/\delta)} }$,
with \\$\op{poly}\prns{A,\alpha_{\max},d,H,K,T,\psi^{-1},\psi_{raw}^{-1},\log(|\Phi||\Upsilon|/\delta)}$ online queries in the source tasks. \looseness=-1
\end{restatable}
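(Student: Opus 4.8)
The plan is to mirror the three-part structure of the generative-access proof of \pref{thm:main}, since generative access enters \emph{only} through the cross-sampling step of \pref{alg:main}. In \pref{alg:online} the cross-sampling is replaced by direct online sampling from the source occupancies, and the automatic target coverage that cross-sampling provided is recovered by an importance-sampling change of measure justified by \pref{ass:reachability_raw} and \pref{ass:density}. Concretely I would (i) obtain $\lambda_{\min}$-exploratory policies $\{\pi_k\}_{k\in[K]}$ in each source task, (ii) prove an online analogue of \pref{lem:mle_target} establishing an average-case misspecification $\eps_{TV}$ of the learned $\wh\phi$ against $P^\star_\targ$, and (iii) invoke \pref{thm:lsvi-average-misspec} to turn this misspecification into the stated regret.

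For step (i), \expAlg (\pref{alg:reward_free}) interacts with an MDP only through online rollouts, so \pref{lem:coverage} applies unchanged and returns a $\lambda_{\min}$-exploratory policy $\pi_k$ in each source task with $\lambda_{\min}=\wt\Omega\prns{A^{-3}d^{-5}H^{-7}\psi^2}$ at the sample cost stated there. For step (ii), \pref{alg:online} collects, for each $k$ and $h$, tuples with $(s_h,a_h)\sim d^{\pi_k}_{k;h}$ (recall $\pi_k$ from \expAlg plays uniform actions, so this occupancy has full action support), followed by $s_{h+1}\sim P^\star_{k;h}(s_h,a_h)$; all of this needs only episodic access. Running the same multi-task MLE of \pref{eq:multi-task-replearn} on these datasets, the generalization bound \pref{eq:mle} gives, for each $h$, $\sum_{k\in[K]}\EE_{d^{\pi_k}_{k;h}}\nm{\wh\phi_h(s,a)^\top\wh\mu_{k;h}(\cdot)-\phi^\star_h(s,a)^\top\mu^\star_{k;h}(\cdot)}_{TV}^2\leq\zeta_N$.

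The crux, and the main obstacle, is the change of measure in step (ii): unlike cross-sampling, sampling from $d^{\pi_k}_{k;h}$ does not by construction cover the states the target policies reach, so the source-occupancy error above must be transported to the target occupancy. Here I would use that $\pi_k$ is $\lambda_{\min}$-exploratory, so \pref{ass:reachability_raw} forces $\inf_{s,a}d^{\pi_k}_{k;h}(s,a)\geq\psi_{raw}\lambda_{\min}$, while \pref{ass:density} gives $d^{\pi}_{\targ;h}(s,a)\leq 1$ for every target policy $\pi$. The density ratio is thus bounded by $(\psi_{raw}\lambda_{\min})^{-1}$, so $\EE_{d^\pi_{\targ;h}}[\op{err}]\leq(\psi_{raw}\lambda_{\min})^{-1}\,\EE_{d^{\pi_k}_{k;h}}[\op{err}]$ for any $\op{err}\geq 0$ and any target policy $\pi$. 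Setting $\wt\mu_h=\sum_{k\in[K]}\alpha_{k;h}\wh\mu_{k;h}$ and applying the linear span of \pref{ass:linear-span} exactly as in the proof of \pref{lem:mle_target}, this yields a uniform-over-policies average-case misspecification $\eps_{TV}\coloneqq\sqrt{A\alpha_{\max}^3 K\zeta_N/(\psi_{raw}\lambda_{\min})}$ together with the norm bound $\nm{\int g(s)\diff\wt\mu_h(s)}_2\leq\bar\alpha\sqrt{d}$. The delicacy is that the importance weights must remain bounded, which is exactly what the density lower bound from observational reachability and the density upper bound from \pref{ass:density} jointly guarantee; it is this ratio that injects the extra $\psi_{raw}^{-1}$ into the source sample complexity, and it is precisely what cross-sampling sidesteps under generative access.

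Finally, feeding $\eps_{TV}$ into \pref{thm:lsvi-average-misspec} gives regret $\wt\Ocal\prns{dH^2 T\eps_{TV}+\bar\alpha d^{1.5}H^2\sqrt{T}\log(1/\delta)}$. Choosing the number of source episodes $N$ large enough that $\eps_{TV}=1/\sqrt{T}$ — which, since $\zeta_N=\Ocal\prns{N^{-1}\prns{\log(|\Phi|/\delta)+K\log|\Upsilon|}}$, requires $N=\op{poly}\prns{A,\alpha_{\max},d,H,K,T,\psi^{-1},\psi_{raw}^{-1},\log(|\Phi||\Upsilon|/\delta)}$ per source task — renders the first, linear-in-$T$, term lower order and leaves the claimed regret $\wt\Ocal\prns{\bar\alpha d^{1.5}H^2\sqrt{T\log(1/\delta)}}$. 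Adding the per-source cost of \pref{lem:coverage} for producing $\{\pi_k\}$ gives the total online-query bound.
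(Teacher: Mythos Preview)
Your proposal is correct and follows essentially the same three-step approach as the paper: (i) run \expAlg (online-only) to obtain $\lambda_{\min}$-exploratory policies, (ii) bound the density ratio $d^\pi_{\targ;h}/d^{\pi_k}_{k;h}\leq(\psi_{raw}\lambda_{\min})^{-1}$ via \pref{ass:reachability_raw} and \pref{ass:density} to transport the source-occupancy MLE error to target occupancies, then apply the linear span to assemble $\wt\mu_h$ with the $\bar\alpha\sqrt{d}$ norm bound, and (iii) invoke \pref{thm:lsvi-average-misspec} with $\eps_{TV}=1/\sqrt{T}$. The only (inessential) discrepancy is your quoted $\eps_{TV}=\sqrt{A\alpha_{\max}^3K\zeta_N/(\psi_{raw}\lambda_{\min})}$, which carries over the $A$ and extra $\alpha_{\max}$ from the one-step-back route of \pref{lem:mle_target}; the paper's direct importance-sampling argument (\pref{lem:mle_target_online}) avoids one-step-back entirely and yields the slightly tighter $\eps_{TV}=\alpha_{\max}K^{1/2}\zeta_n^{1/2}/(\psi_{raw}\lambda_{\min})^{1/2}$, but since the theorem only asserts a polynomial source-sample bound this difference is immaterial.
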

\pref{ass:reachability_raw} is satisfied in a Block MDP, when, for example, the emission function $o(s|z)$ satisfies that $\forall s, \exists z, \text{ s.t., } o( s| z) \geq c$. That is, for any source task, any state in the  state space can be generated by at least one latent state. However, we believe such a covering condition is generally too strong to hold in practice. Furthermore, the parameter $\psi_{raw}$ will typically scale with the number of observed states, which we expect to be prohibitively large in most interesting problems, and view the this result has mainly to quantify the degree of applicability of \pref{thm:imp}.

\section{Experiments}

\begin{figure}[!h]
\begin{tabular}{cc}
\begin{minipage}{0.3\textwidth}
\centering
\includegraphics[width=\textwidth]{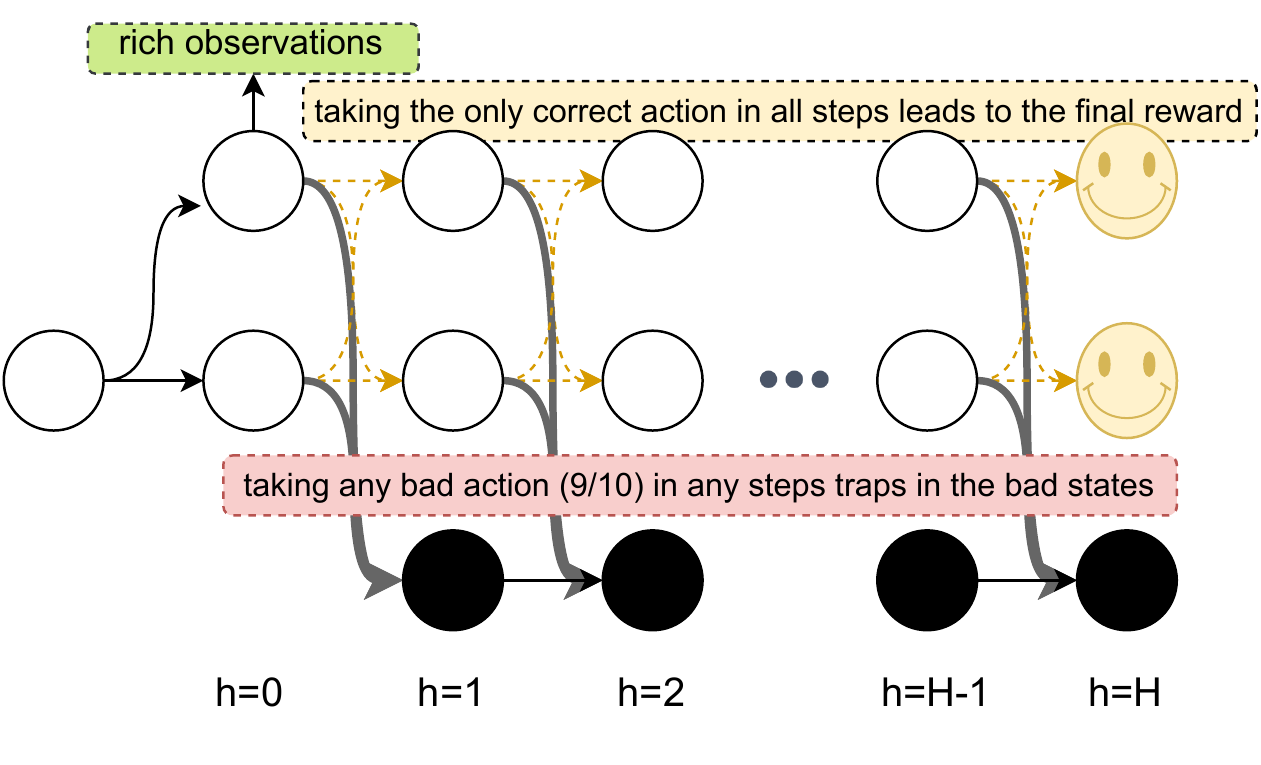}
(a)
\end{minipage}
&\;
\begin{minipage}{0.64\textwidth}
\renewcommand{\arraystretch}{2}
\begin{center}
\resizebox{\columnwidth}{!}{
\begin{tabular}{lccccc}
\toprule
                            &\; Source      &\; \ouralgo                    &\; \ouralgg                  &\; Oracle                            &\; Target                         \\
\hline
\makecell{Observational \\and Feature\\Coverage}                    &\; $\infty$    &\; \makecell{8006\\(294.7)}    &\; \makecell{7790\\(267.6)} &\;  \makecell{7048\\(164.8)}         &\; \makecell{181450\\(147600.2)}  \\
\hline
\makecell{Feature\\Coverage\\Only}     &\; $\infty$    &\; $\infty$                    &\; \makecell{7764\\(145.7)} &\;  \makecell{7336\\(270.7)}         &\; \makecell{85000 \\(14469.8)}    \\
\toprule
\end{tabular}
}
\\~\\
(b)
\end{center}
\end{minipage}
\end{tabular}
\caption {\textbf{(a):} A visualization of the rich observation comblock environment (see \pref{app:comblock} for details).
\textbf{(b) Top:} Number of episodes required to solve the target environment under the observational coverage setting. \textbf{(b) Bottom:} Number of episodes required to solve the target environment under the feature coverage setting. An algorithm solves the target task if it can achieve the optimal return (i.e., 1) for 5 consecutive iterations with 50 evaluation runs each. We include the mean and standard deviation (in the brackets) for 5 random seeds. $\infty$ denotes that an algorithm can not solve the target task within a fixed sample budget. The sample efficiency of \ouralg under feature and observational coverage verifies the benefit of representational transfer, and the failure of \ouralgo without observational coverage suggests the necessity of generative assumption during representational transfer. In \pref{fig:vis} in the appendix, we further provide the visualization of the representations that is learned in both settings. \looseness=-1}
\label{exp:comblock_and_table}
\end{figure}

In this section we empirically study the following questions: i) the effectiveness of pretraining with \ouralg, under the linear span and feature/observation coverage assumptions.  ii) the hardness of representational transfer under the linear span assumption without the generative model access. \looseness=-1
Our experiments are under the Block MDP setting, with the challenging Rich Observation Combination Lock (comblock) benchmark (Fig.~\ref{exp:comblock_and_table}(a)). We design two sets of experiments to investigate the above questions respectively.
We defer the details of the experiments in Appendix~\ref{app:sec:exp}.

\paragraph{Baselines.} We denote \textbf{Source} as the smallest sample complexity of \LSVIUCB{} using learned features from any of the source tasks; \textbf{\ouralgo} as \ouralg with only online access to the source tasks;  \textbf{\ouralgg} as \ouralg with generative access to the source tasks; \textbf{Oracle} as learning in the target task with ground truth features; and \textbf{Target} as running \algname \citep{zhang2022efficient} --- the SOTA Block MDP algorithm, in the target task with no pretraining.

\paragraph{Effectiveness of \ouralg.}
We first analyze the how representational transfer benefits where both feature coverage (\pref{thm:generative-end-to-end}) and observational coverage (\pref{thm:online}) assumptions are met. We use 5 source tasks (horizon $H=25$), with different latent dynamics. To ensure linear span assumption (\pref{ass:linear-span}), for each timestep $h$, we make the target latent transition dynamics from one of the sources uniformly at random. For the coverage assumptions, note that for comblock, the feature coverage assumption (\pref{ass:reachability}) is always satisfied. We also guarantee the observational coverage (\pref{ass:reachability_raw}) by equipping all environments with the same emission distribution on a compact observational space.
We record the number of episodes in the target environment that each method takes to solve the target environment in Table.~\ref{exp:comblock_and_table}(b). We first observe that \ouralg with either online or generative access can solve the target task (since. \pref{ass:reachability_raw} holds). 
Second, we observe that directly applying the learned feature from any \emph{single source task} does not suffice to solve the target environment. This is because the representation learned from a single source task may collapse two latent states into a single one during encoding (e.g., if two latent states at the same time step have exactly identical latent transitions). 
Third, the result shows that \ouralg saves order of magnitude of target samples compared with training in the target environment from scratch using the SOTA Block MDPs algorithm BRIEE. \textbf{This set of results verifies the empirical benefits of representation learning from multiple tasks, i.e., resolves ambiguity and speeds up downstream task learning}.

\paragraph{Hardness without the generative access.}
In this section, following the intuition of our lower bound  (\pref{thm:imp}), we construct a setting where the supports of the emission distributions from each task are completely disjoint, while the emission distribution in the target task is a mixture of all source emissions and the latent dynamics are identical across tasks. Hence the latent coverage (\pref{ass:linear-span}) holds while observational coverage (\pref{ass:reachability_raw}) fails. So we expect that an algorithm without generative access to source tasks will fail based on \pref{thm:online-lb}. 
We record the number of target episodes for each method to solve the target task in Table.~\ref{exp:comblock_and_table}(b). We observe that indeed the online version fails while the generative version still succeeds.
\textbf{This ablation verifies that source generative model access is needed without the observational coverage}.

\section{Conclusion}

We study representational transfer among low rank MDPs which share the same unknown representation. Under a reasonably flexible linear span task relatedness assumption, we propose an algorithm that provably transfers the representation learned from source tasks to the target task. The regret in target task matches the bound obtained with oracle access to the true representation, using only polynomial number of samples from source tasks. Our approach relies on the generative model access in source tasks, which we prove is not avoidable in the worst case under the linear span assumption. To complement the lower bound, we propose a stronger assumption on the conditions of the reachability in raw states, under which online access to source tasks suffices for provably efficient representation transfer. Finding modalities other than generative access which avoid the lower bound, and a more extensive empirical evaluation beyond the proof-of-concept experiments here are important directions for future research

\bibliography{ref}

\newpage

\appendix

\setlength{\parindent}{0pt}
\setlength{\parskip}{\baselineskip}

\begin{center}\LARGE
\textbf{Appendices}
\end{center}

\section{Notations}

{\renewcommand{\arraystretch}{1.2}
\begin{table}[h!]
    \centering
      \caption{List of Notations} \vspace{0.3cm}
    \begin{tabular}{l|l}
    $\Scal, \Acal, A$ & State and action spaces, and $A = |\Acal|$. \\
    $\Delta(S)$ & The set of distributions supported by $S$. \\
    $\lambda_{\min}(A)$ & Smallest eigenvalue of matrix $A$. \\
    $e_j$ & One-hot encoding of $j$, i.e. $0$ at each index except the one corresponding to $j$. \\
        & Length of vector implied from context. \\
    $(x)_{\leq y}$ & $\min\{x,y\}$. \\
    $H$ & Episode length of MDPs, a.k.a. time horizon. We index steps as $h=0,1,...,H-1$. \\
    $K$ & The number of source tasks. \\
    $d$ & dimension of the low-rank MDP, i.e. dimension of $\phi^\star$. \\
    $P^\star_{k;h}$ & Ground truth transition at time $h$ for source task $k$. \\
    $P^\star_{\targ;h}$ & Ground truth transition at time $h$ for target task. \\
    $r_{\targ;h}$ & Reward function of the target task. \\
    $\Eb[\pi,P]{\cdot}$ & Expectation under the distribution of trajectories when $\pi$ is executed in $P$. \\
                        & We sometimes omit $P$ when the MDP is clear from context. \\
    $d^\pi_{P;h}$ & Occupancy distribution of $\pi$ under transitions $P$ at time $h$. \\
    $d^\pi_{k;h}$ & Occupancy distribution for the $k$-th task, i.e. $d^\pi_{P^\star_{k};h}$. \\
    $\phi_h^\star(s,a)$  & Embedding function for $(s,a)$ at time $h$. \\
    $\Phi_h$  & Realizable function class for $\phi_h^\star$. \\
    $|\Phi|$  & Defined as $\max_h |\Phi_h|$. \\
    $\mu_{k;h}^\star(s')$ & Emission embedding function for $s'$ at time $h$ for environment $k$. \\
    $\Upsilon_{k;h}$ & Realizable function class for $\mu_{k;h}^\star$. \\
    $|\Upsilon|$ & Defined as $\max_{k;h} |\Upsilon_{k,h}|$. \\
    $\alpha_{\max}$ & $\max_{h;k,s'\in\Scal} |\alpha_{k;h}(s')|$ (based on \pref{ass:span}). \\
    $\bar\alpha$ & $\max_h \sum_{k=1}^{K} \max_{s'\in\Scal} |\alpha_{k;h}(s')|$ (based on \pref{ass:span}). \\
    $\psi$ & Feature reachability in the source task (\pref{ass:reachability}). \\
    $\psi_{raw}$ & Raw states reachability parameter (\pref{ass:reachability_raw}) \\
    \end{tabular}
    \label{tab:notation}
\end{table}
}

\newpage
\section{Omitted Algorithms}

\begin{algorithm}
	\caption{\textsc{RewardFree Rep-UCB}}
	\label{alg:reward_free_repucb}
	\begin{algorithmic}[1]
	\STATE \textbf{Input: } Regularizer $\lambda_n$, bonus scaling $\alpha_n$, model class $\Mcal = \Phi\times\Upsilon$, number of episodes $N$.
	\STATE Initialize $\wh\pi_0$ as random and $\Dcal_{h,0},\Dcal_{h,0}' = \emptyset$.
	\FOR{episode $n = 1, 2, ..., N$}
	    \STATE Data collection from $\wh\pi_{n-1}$: for $h=1,2,...,H-1$,
	    \begin{align*}
	        &s\sim d^{\wh\pi_{n-1}}_h, a\sim\op{Unif}(\Acal), s'\sim P^\star_h(s,a);
	        \\&\wt s\sim d^{\wh\pi_{n-1}}_{h-1}, \wt a\sim\op{Unif}(\Acal), \wt s'\sim P^\star_{h-1}(\wt s,\wt a), \wt a'\sim\op{Unif}(\Acal), \wt s''\sim P^\star_h(\wt s',\wt a');
	        \\&\Dcal_{h,n} = \Dcal_{h,n-1}\cup\braces{(s,a,s')}, \Dcal_{h,n}' = \Dcal_{h,n-1}'\cup\braces{(\wt s',\wt a',\wt s'')}.
	    \end{align*}
	    For $h=0$, only collect $\Dcal_{0,n}$.
	    \STATE Learn model via MLE: for all $h=0,1,...,H-1$,
	    \begin{align*}
	        \wh P_{h,n} = (\wh\phi_{h,n},\wh\mu_{h,n}) = \argmax_{\phi_h,\mu_h\in\Mcal_h}\Eb[\Dcal_{h,n}\cup\Dcal_{h,n}']{\log \phi_h(s,a)^T\mu_h(s')}.
	    \end{align*}
	    \STATE Update exploration bonus: for all $h=0,1,...,H-1$,
	    \begin{align*}
	        &\wh b_{h,n}(s,a) = \alpha_n\nm{\wh\phi_{h,n}(s,a)}_{\wh\Sigma_{h,n}^{-1}}
	        \\&\wh\Sigma_{h,n} = \sum_{(s,a,\_)\in\Dcal_{h,n}}\wh\phi_{h,n}(s,a)\wh\phi_{h,n}(s,a)^T + \lambda_n I.
	    \end{align*}
	    \STATE Learn policy $\wh\pi_n = \argmax_\pi V^\pi_{\wh P_n,\wh b_n}$ and let $\wh V_n$ be its value. \label{line:rep_ucb_planning_step}
	\ENDFOR
	\STATE Let $\wh n = \argmin_{n\geq N/2} \wh V_n$.
	\STATE \textbf{Output: } $\wh n, \wh P_{\wh n}$.
	\end{algorithmic}
\end{algorithm}

\newpage
\begin{algorithm}
	\caption{Transfer learning with online access}\label{alg:online}
	\begin{flushleft}
	    \textsc{\textbf{Pre-Training Phase}}\\
	    \textbf{Input:} num. \LSVIUCB{} episodes $N_\LSVIUCB$, num. model-learning episodes $N_{\textsc{RewardFree}}$, size of cross-sampled datasets $n$, failure probability $\delta$.
	\end{flushleft}
	\begin{algorithmic}[1]
		\FOR{source task $k=1,...,K-1$}
		    \STATE Find policy cover $\pi_k=$\textsc{RewardFree}$(P_k^\star, N_\LSVIUCB, N_{\textsc{RewardFree}}, \delta)$. (\pref{alg:reward_free})
		\ENDFOR
		\FOR{source task $k=1,...,K-1$}
		    \STATE For each $h\in[0:H-1]$, sample $\Dcal_{k}$ as $n$ i.i.d. $(s_h,a_h,s_{h+1})$ tuples from $\pi_{k}$.
		\ENDFOR
        \STATE For each $h\in[0:H-1]$, learn features with MLE,
        \begin{align*}
            \wh\phi_h,\wh\mu_{1:K} = \argmax_{\phi\in\Phi,\mu_k\in\Upsilon_k}\sum_{k\in[K]}\EE_{\Dcal_{k;h}}\bracks{ \log\phi(s,a)^\top\mu_k(s') }.
        \end{align*}
	\end{algorithmic}
	\begin{flushleft}
		\textsc{\textbf{Deployment Phase}} \\
		\textbf{Additional Input:} number of deployment episodes $T$.
	\end{flushleft}
	\begin{algorithmic}[1]
	\STATE Set $\beta = H\sqrt{d} + \bar\alpha dH\sqrt{\log(dHT/\delta)}$.
	\STATE Run $\LSVIUCB\prns{\{ \wh\phi_h\}_{h=0}^{H-1}, r=r_{K}, T, \beta}$ in the target task $P_\targ^\star$ (\pref{alg:LSVI}).
	\end{algorithmic}
\end{algorithm}

\newpage
Let $(x)_{\leq y}$ refer to the clamping operator, i.e. $(x)_{\leq y} = \min\{x,y\}$.
Let $M_V$ be the maximum possible value in the MDP with the given reward function.
\begin{algorithm}[!h]
	\caption{\LSVIUCB}
	\label{alg:LSVI}
	\begin{algorithmic}[1]
	    \STATE \textbf{Input: } Features $\{\wh\phi_h\}_{h=0,1,...,H-1}$, reward $\{r_h\}_{h=0,1,...,H-1}$, number of episodes $N$, bonus scaling parameter $\beta$, \textsc{UniformActions} = \textsc{False}.
	    \FOR{\text{episode } $e=1,2,...,N$}
	    	\STATE Initialize $\wh V_{H,e}(s) = 0, \forall s$
	        \FOR{\text{step } $h=H-1,H-2...,0$}
	            \STATE Learn best predictor for $\wh V_{h+1}^e$,
	            \begin{align*}
	                &\Lambda_{h,e} = \sum_{k=1}^{e-1} \wh\phi_h(s_h^k,a_h^k)\wh\phi_h(s_h^k,a_h^k)^\top + I,
	                \\&\wh w_{h,e} = \Lambda_{h,e}^{-1} \sum_{k=1}^{e-1} \wh\phi_h(s_h^k,a_h^k)\wh V_{h+1,e}(s^k_{h+1}).
	            \end{align*}
	            \STATE Set bonus and value functions,
	            \begin{align*}
	                &b_{h,e}(s,a) = \nm{\wh\phi_h(s,a)}_{\Lambda_{h,e}^{-1}},
	                \\&\wh Q_{h,e}(s,a) = \wh w_{h,e}^\top \wh\phi_h(s,a) + r_h(s,a) + \beta b_{h,e}(s,a),
	                \\&\wh V_{h,e}(s) = \prns{ \max_a \braces{\wh Q_{h,e}(s,a)} }_{\leq M_V}.
	            \end{align*}
	            \STATE Set $\pi_h^e(s) = \argmax_{a} \wh Q_{h,e}(s,a)$.
	        \ENDFOR
	        \STATE Execute $\pi^e$ to collect a trajectory $(s^e_h, a_h^e)_{h=0}^{H-1}$.
	        \STATE If \textsc{UniformActions} = \textsc{True}, discard $a_h^e$ and draw freshly sampled uniform actions independently for all $h$, i.e. $a_h^e \sim \op{Unif}(\Acal)$. \label{line:unif-actions-lsvi-collect-data}
	    \ENDFOR
	    \STATE\textbf{Return: } uniform mixture $\rho = \text{Uniform}(\{\pi^e\}_{e=1}^{N})$.
	\end{algorithmic}
\end{algorithm}


\newpage
\section{ More discussion on related works }\label{sec:compare}
\subsection{Empirical works in transfer learning}\label{sec:compare-empirics}
The idea of learning transferable representation has been extensively explored in the empirical literature. Here we don't intend to provide a comprehensive survey of all existing works on this topic. Instead, we discuss a few representative approach that may be of interest.
\\
Towards transfer learning across different environments, progressive neural network \citep{rusu2016progressive} is among the first neural-based attempt to learning a transferable representation for a sequence of downstream tasks that tries to overcome the challenge of catastrophic forgetting. It maintains the learned neural models for all previous tasks and introduce additional connections between the network of the current tasks to those of prior tasks to allow information reuse. However, a drawback common to such an approach is that the network size grows linearly with the number of tasks. Other approaches include directly learning a multi-task policy that can perform well on a set of source tasks, with the hope that it will generalize to future tasks \citep{parisotto2015actor}. Such an approach requires the tasks to be similar in their optimal policy, which is a much stronger assumption than ours.
\\
Slightly off-topic are the works about ``transfer learning'' inside the same environment but across different reward functions, which is more restricted than the setting considered in this paper.
Several prior works design representation learning algorithms that aim to learn a representation that generalize across multiple reward function/goals \citep{dayan1993improving, barreto2017successor,touati2021learning,blier2021learning}. These are related to the \textsc{RewardFree Rep-UCB} we developped in \pref{sec:rf_repucb}. The key difference is that we concern representation learning along with efficient exploration to derive an end-to-end polynomial sample complexity bound. These prior works do not consider exploration and do not come with provable sample complexity bounds.
We refer interested readers to a recent survey \citep{zhu2020transfer} for a comprehensive discussion of other empirical approaches.

\subsection{Comparison to \citet{lu2021power}}
In the prior work of \cite{lu2021power}, which also studies transfer learning in low-rank MDPs with nonlinear function approximations, they need to make the following assumptions:
\begin{enumerate}[leftmargin=*,itemsep=0cm,topsep=0.01cm]
    \item shared representation (identical to our \pref{ass:shared_phi}).
    \item task diversity (similar to our \pref{ass:linear-span}).
    \item generative model access to both the source and the target tasks. In contrast, we only require generative model access to the source tasks and allow online learning in the target task.
    \item a somewhat strong coverage assumption saying that the data covariance matrix (under the generative data distribution) between arbitrary pairs of features $\phi,\phi'\in\Phi$ must be full rank. In contrast, our analysis only requires coverage in the true feature $\phi^\star$ in the source tasks.
    \item the existence of an ideal distribution q on which the learned representation can extrapolate. We do not require an assumption of a similar nature. Instead, we show that the data collected from our strategic reward-free exploration phase suffices for successful transfer.
    \item the uniqueness for each $\phi$ in the sense of linear-transform equivalence. Two representation functions $\phi$ and $\phi'$ can yield similar estimation result if and only if they differ by just an invertible linear transformation. In contrast, we do not make any additional structural assumptions on the function class $\Phi$ beyond realizability.
\end{enumerate}
In summary, our work present a theoretical framework that permits successful representation transfer based on significantly weaker assumptions. We believe that this is a solid step towards understanding transfer learning in RL.

\subsection{Comparison to \cite{cheng2022provable}}
\cite{cheng2022provable} studies representational transfer in low-rank MDPs, with not only a weaker notion of task relatedness (with global coefficients in the linear span) but also stronger assumptions.
Particularly, we restate the following strong assumption from \citet[Assumption 5.3]{cheng2022provable}.
\begin{assum}
For any two different models in the model class $\Phi\times\Psi$, say $P^1(s'|s,a)=\langle \phi^1(s,a),\mu^1(s')\rangle$ and $P^2(s'|s,a)=\langle \phi^2(s,a),\mu^2(s')\rangle$, there exists a constant $C_R$ such that for all $(s,a)\in \Scal\times\Acal$ and $h\in[H]$,
\begin{align*}
    \|P^1(\cdot|s,a)-P^2(\cdot|s,a)\|_{TV}\leq C_R \EE_{(s,a)\sim \Ucal(\Scal,\Acal)}\|P^1(\cdot|s,a)-P^2(\cdot|s,a)\|_{TV}
\end{align*}
\end{assum}
This assumption ensures that the point-wise TV error is bounded, as long as the population-level TV error is bounded. \citet{cheng2022provable} used this to transfer the MLE error from the source tasks to the target task. This type of assumption is strong in the sense that we typically expect $C_R$ to scale with $|\Scal|$.
In contrast, our analysis (\pref{lem:mle_target_online}) shows that this assumption is in fact not necessary, even assuming online access only to source tasks. The generative access to source task studied here, which enables transfer under weaker reachability assumptions is not studied in their work.

It is worth noting that \cite{cheng2022provable} also study offline RL in the target task which we do not cover, while we mainly focus on the setting of generative models in the source tasks and demonstrating a more complete picture by proving generative model access in source tasks is needed without additional assumptions. Comparing to \citep{cheng2022provable}, we also further implement and perform experimental evaluations of our algorithm.

\subsection{Works in multi-task learning}
\textbf{Multi-task and Transfer Learning in Supervised Learning.} The theoretical benefit of representation learning are well studied under conditions such as the i.i.d. task assumption \citep{maurer2016benefit} and the diversity assumption \citep{du2020few, tripuraneni2020theory}. Many works below successfully adopt the frameworks and assumptions to sequential decision making problems.

\textbf{Multi-task and Transfer Learning in Bandit and small-size MDPs.}
Several recent works study multi-task linear bandits with linear representations ($\phi(s) = A\,s$ with unknown $A$) \citep{hu2021near,yang2020impact,yang2022nearly}. The techniques developed in these works crucially rely on the linear structure and can not be applied to nonlinear function classes. \cite{lazaric2013sequential} study spectral techniques for online sequential transfer learning.
\cite{brunskill2013sample} study multi-task RL under a fixed distribution over finitely many MDPs, while
\cite{brunskill2014pac} consider transfer in semi-MDPs by learning options. \cite{lecarpentier2021lipschitz} consider lifelong learning in Lipschitz MDP. All these works consider small size tabular models while we focus on large-scale MDPs.  \looseness=-1

\textbf{Multi-task and Transfer Learning in RL via representation learning.} Beyond tabular MDPs, \cite{arora2020provable} and \cite{d2019sharing} show benefits of  representation learning in imitation learning and planning, but do not address exploration.
\cite{lu2021power} study transfer learning in low-rank MDPs with general nonlinear representations, but make a generative model assumption on both the source tasks and the target task, along with other distributional and structural assumptions. We do not require generative access to the target task and make much weaker structural assumptions on the source-target relatedness.
Recently and independently, \cite{cheng2022provable} also studied transfer learning in low-rank MDPs in the online learning setting, identical to the setting we study in \pref{sec:online}. However, their analysis relies on an additional assumption that bounds the point-wise TV error with the population TV error, which we show is in fact not necessary.

\textbf{Efficient Representation Learning in RL.} Even in the single task setting, efficient representation learning is an active area witnessing recent advances with exploration~\citep{Agarwal2020_flambe,modi2021model,uehara2021representation,zhang2022efficient} or without \citep{ren2021free}. Other papers study feature selection~\citep[e.g.][]{farahmand2011model,jiang2015abstraction, pacchiano2020regret, cutkosky2021dynamic,lee2021online,zhang2021provably} or sparse models \citep{hao2021sparse,hao2021online}.

\newpage
\section{Impossibility Results}\label{sec:comparison}

Here, we present an interesting result showing that the above assumptions we make are so weak that they do not even permit efficient transfer in supervised learning:

\begin{theorem}[Counter-example in supervised learning]\label{thm:counter_example_SL}
Assume that we want to perform conditional density estimation, where $P^\star_k(y|x) = \phi^\star(x)^\top \mu_k^\star(y)$. Under \pref{ass:shared_phi} (shared representation) and \pref{ass:linear-span} (linear span), and assume that in each source task, one have access to a data generating distribution $\rho_k(x)$ such that $\lambda_{\min} \EE_{\rho_k}[\phi^\star(x)\phi^\star(x)^\top]\geq\psi$ (reachability). No algorithm can consistently achieve $\EE_{\rho_K}[\|\hat P^\star_\targ(y|x)- P^\star_\targ(y|x)\|_{TV} ]\leq 1/2$ on the target task using the feature learned from the source tasks with probability more than $1/2$.
\end{theorem}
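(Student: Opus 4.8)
The plan is to mirror the permutation-based impossibility argument behind \pref{thm:imp}, but in the simpler conditional-density setting, by exhibiting two problem instances that are statistically \emph{indistinguishable} from the source data yet demand opposite predictions on the target. I would take $K=2$ (larger $K$ follows by padding with dummy source tasks on fresh disjoint supports), latent dimension $d=2$, binary labels $y\in\{0,1\}$, and a four-point covariate space $\{x_1,x_2,x_3,x_4\}$, with $\phi^\star$ a one-hot encoding of a latent label in $\{1,2\}$. Source task $1$ uses $\rho_1=\op{Unif}\{x_1,x_2\}$ with $\phi^\star(x_1)=e_1,\phi^\star(x_2)=e_2$ and a conditional that perfectly separates the latents (latent $1$ always emits $y=1$, latent $2$ always emits $y=0$), i.e.\ $\mu^\star_1(1)=(1,0),\mu^\star_1(0)=(0,1)$. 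Source task $K{=}2$ uses $\rho_2=\op{Unif}\{x_3,x_4\}$ but with a \emph{label-uninformative} conditional $\mu^\star_2(y)=(\tfrac12,\tfrac12)$ for both $y$, so each of $x_3,x_4$ emits $y$ uniformly regardless of its latent. The two instances $\mathcal P^A,\mathcal P^B$ differ only in the latent labeling of task $2$'s support: in $A$ we set $\phi^\star(x_3)=e_1,\phi^\star(x_4)=e_2$, whereas in $B$ these are swapped. The target reuses task $2$'s covariates ($\rho_K=\rho_2$) with $\mu^\star_\targ=\mu^\star_1$, which trivially satisfies \pref{ass:linear-span} with $\alpha_1\equiv1,\alpha_2\equiv0$ (so $\bar\alpha=1$).

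Next I would verify the three hypotheses and the indistinguishability. Reachability holds with $\psi=\tfrac12$ since $\EE_{\rho_k}[\phi^\star\phi^{\star\top}]=\tfrac12 I$ in each task; $\phi^\star$ is a single shared one-hot map within each instance (\pref{ass:shared_phi}); and the normalization of \pref{ass:fcn_class} is immediate for these small $\mu^\star$'s. The crucial point is that the source data distributions over $(x,y)$ are \emph{identical} in $\mathcal P^A$ and $\mathcal P^B$: task $1$ is untouched, and because task $2$'s conditional is label-uninformative, swapping the latents of $x_3,x_4$ leaves the observed conditional ($y$ uniform) unchanged. Hence any (possibly randomized) algorithm that only queries the source tasks produces a target predictor $\hat P_\targ(\cdot\mid\cdot)$ — a function of the source-learned feature $\hat\phi$ — whose distribution is identical under $A$ and $B$.

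The payoff comes from the target, where the two instances are maximally separated: on each of $x_3,x_4$ the true target conditional is a point mass on $y=1$ in one instance and on $y=0$ in the other, so $\|P^A_\targ(\cdot\mid x)-P^B_\targ(\cdot\mid x)\|_{TV}=1$ for $x\in\{x_3,x_4\}$. For \emph{any} realized predictor $\hat P_\targ$, the triangle inequality yields, with $\rho_K=\op{Unif}\{x_3,x_4\}$,
\[
\mathrm{err}_A+\mathrm{err}_B=\tfrac12\!\!\sum_{x\in\{x_3,x_4\}}\!\!\big(\|\hat P_\targ(\cdot\mid x)-P^A_\targ(\cdot\mid x)\|_{TV}+\|\hat P_\targ(\cdot\mid x)-P^B_\targ(\cdot\mid x)\|_{TV}\big)\ge 1 ,
\]
so at least one of $\mathrm{err}_A,\mathrm{err}_B$ is $\ge\tfrac12$ for every outcome. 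Since the distribution of $\hat P_\targ$ is identical under $A,B$, the events $\{\mathrm{err}_A\ge\tfrac12\}$ and $\{\mathrm{err}_B\ge\tfrac12\}$ live on a common probability space and their union has probability $1$; a union bound then forces $\max\{\Pr_A[\mathrm{err}_A\ge\tfrac12],\Pr_B[\mathrm{err}_B\ge\tfrac12]\}\ge\tfrac12$, i.e.\ the adversary can pick an instance on which the learner fails with probability at least $1/2$ (a tiny bias in the target point masses converts the boundary ``$\ge$'' into the strict ``$>\tfrac12$'' in the statement). The main obstacle I anticipate is interpretive rather than computational: I must pin down exactly what ``using the feature learned from the source tasks'' licenses — concretely, that the target prediction's dependence on $x$ factors through $\hat\phi(x)$ and uses no target labels that could disambiguate the $A/B$ permutation, paralleling the functional-form restriction $\pi(s)=f(\{\hat\phi(s,a)\}_a,\{r(s,a)\}_a)$ in \pref{thm:imp}. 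This restriction is precisely what the construction exploits (if labeled target data were available, a separating $\hat\phi$ would break the symmetry), so stating it cleanly is the only delicate step; the remaining verifications are routine.
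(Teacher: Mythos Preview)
Your proof is correct but takes a different route from the paper's. The paper places the two source supports and the target support on three \emph{disjoint} balls in $\RR^2$, uses the \emph{same} $\mu^\star$ across all tasks (and assumes it known), and exhibits a two-element feature class $\Phi=\{\phi_1,\phi_2\}$ whose members coincide on both source balls but disagree on the target ball; since the source data cannot tell $\phi_1$ from $\phi_2$, no learner picks the right one with probability exceeding $1/2$, and a wrong choice incurs expected TV error $1/2$. You instead let the target support coincide with source task $2$'s support and hide the latent permutation behind an uninformative $\mu^\star_2\equiv(\tfrac12,\tfrac12)$, then run a Le Cam two-point argument on the two resulting instances $A,B$. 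The paper's version more transparently isolates the feature-selection failure (it is literally a two-element $\Phi$ the learner must choose from). Your version buys something slightly stronger: because the full source $(x,y)$-distributions are identical under $A$ and $B$, the impossibility holds for \emph{any} predictor that is a function of the source data, not only those whose dependence on $x$ factors through a learned $\hat\phi$. In particular, the interpretive worry you raise at the end---pinning down what ``using the feature learned from the source tasks'' permits---is not actually needed for your argument to go through.
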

\begin{proof}[Proof of \pref{thm:counter_example_SL}]
Consider the following example. $X = \RR^2$ and we have the following 3 sets.
\begin{align*}
    \Scal_1 =&\; B_{1/2}((-1,-1))\\
    \Scal_2 =&\; B_{1/2}((-2,-2))\\
    \Scal_3 =&\; B_{1/2}((0,1))
\end{align*}
where $B_{a}((x,y))$ stands for the ball with radius $a$ centered at $(x,y)$. These will be the support of 3 tasks: task 1 and 2 are two source tasks, task 3 is the target task. Let's assume that $P^\star_k(x)$ are uniform distribution on $\Scal_k$.

Suppose that the feature class $\Phi$ only contains two functions:
\begin{align*}
        \phi_1: \{x_1\leq 0 \And x_2\geq x_1\}\rightarrow (1,0) \text{, and $(0,1)$ otherwise}\\
    \phi_2: \{x_2\leq 0 \And x_1\geq x_2\}\rightarrow (0,1) \text{, and $(1,0)$ otherwise}
\end{align*}
That is, the feature maps from $\RR^2$ to the set of binary encoding of dimension 2, i.e. $\{(1,0),(0,1)\}$. We further assume that $\mu^\star_k = (p_1(y),p_2(y))$ for some distributions $p_1,p_2$, which is identical for all task $k$, where $\|p_1,p_2\|_{TV}=1$. We also assume that $\mu^\star_k$ is known to the learner a prior, i.e. $\Upsilon_k=\{\mu^\star_k\}$ for all $k\in[K]$, so all the learner needs to do is to pick the correct $\phi$ out of two candidates.

Given the above setup, it's easy to verify that both \pref{ass:shared_phi} and \pref{ass:linear-span} are satisfied, because the decision boundary of both $\phi_1$ and $\phi_2$ passes through the support of the source tasks, and all $\mu^\star_k$'s are identical. However, $\phi_1$ and $\phi_2$ are equivalent in $\Scal_1$ and $\Scal_2$ in terms of their representation power, therefore no algorithm can always pick the correct feature function with probability more than $1/2$, regardless of the number of samples. Suppose $\phi_1$ is the true feature and the algorithm incorrectly chooses $\phi_2$. Then, for $x\in S_3\bigcap \{x_1\geq 0\}$ which has probability mass $1/2$, $\hat P_3(y|x)=p_2$ whereas $P^\star_3(y|x)=p_1$. Thus, the expected total variation distance between $\hat P_\targ$ and $P^\star_\targ$ is $1/2$.
\end{proof}

The above construction shows that our assumption are not sufficient to permit reliable representation transfer, even in the supervised learning setting.
Yet, surprisingly, these assumptions are sufficient in the RL setting, implying somehow that transfer learning in RL in easier than transfer learning in SL. To understand this phenomenon, observe that in RL, the marginal distribution on $(s,a)$ is not independent from the conditional density $P(s'|s,a)$ we desire to estimate. In particular, if one collects data in the source tasks in an online fashion via running a policy, $\rho(s,a)$ is structurally restricted to be an occupancy distribution generated by the ground-truth transition $P^\star(s'|s,a)$. Such a connection can only exist in Markov chains, and our analysis elegantly utilizes this additional structure to establish the soundness of the learned representation. Also note, crucially, that we never learn a representation to capture $d_0$, which would suffer from similar issues as the supervised learning setting, but is not necessary for sample-efficient RL.

Next, we prove the impossibility result in \pref{thm:imp}, restated below as \pref{thm:imp2}. This result shows that one can not achieve online learning in the source tasks without significantly stronger assumptions such as \pref{ass:reachability_raw}. Before that, we provide a preliminary version, showing that the learned $\hat\phi$ is not sufficient to fit the transition model in the target task, which motivates the construction in \pref{thm:imp2}.
\begin{theorem}[Impossibility Result: Model Learning]\label{thm:imp1}
Let $\Mcal=\{(P_1,...,P_K,P_\targ)\}$ be a set of $K+1$ tasks that satisfies
\begin{enumerate}[leftmargin=*,itemsep=0pt]
    \item all tasks are Block MDPs;
    \item all tasks satisfy \pref{ass:reachability} and \pref{ass:linear-span};
    \item the latent dynamics are \textit{exactly the same} for all source and target tasks.
\end{enumerate}
For any pre-training algorithm $\Acal$, there exists $(P_1,...,P_K,P_\targ)\in\Mcal$ and an occupancy distribution $\rho_\targ$ on the target task, such that with probability at least $1/2$, $\Acal$ will output a feature $\hat\phi$ and for any $\mu$
\begin{align*}
    \EE_{\rho_\targ}\|\hat\phi(s,a)^\top\mu(\cdot)-P^\star_\targ(\cdot|s,a)\|_{TV}\geq 1/2.
\end{align*}
\end{theorem}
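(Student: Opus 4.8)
The plan is to prove \pref{thm:imp1} by a two-point (Le Cam style) argument that lifts the supervised-learning counter-example of \pref{thm:counter_example_SL} to the RL setting while respecting the transition structure. I would construct two Block MDP instances $\Mcal_A$ and $\Mcal_B$ that (i) induce \emph{identical} trajectory laws in every source task under online interaction, so that any pre-training algorithm produces $\hat\phi$ with the same distribution under both, and yet (ii) have target transition models that demand incompatible latent labelings, so that a single $\hat\phi$ cannot fit both. Mirroring the hardness experiment, the source tasks have pairwise disjoint emission supports and the target emission is the uniform mixture of the source emissions; this immediately guarantees \pref{ass:linear-span}, since every target-reachable observation is emitted by some source and the target $\mu^\star_\targ$ is a (mixture) combination of the source $\mu^\star_k$.

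The heart of the construction is a relabeling symmetry of the shared latent dynamics. I would place, at some step, two latents $z_1,z_2$ whose \emph{forward} dynamics differ and lead (in the target) to observations with disjoint supports, so that the two target next-state distributions $F_{z_1},F_{z_2}$ satisfy $\|F_{z_1}-F_{z_2}\|_{TV}=1$. Let $k_0$ be one source task. Instance $\Mcal_B$ is obtained from $\Mcal_A$ by applying an order-two automorphism of the latent dynamics (swapping $z_1\leftrightarrow z_2$ and their descendants) to the \emph{decoder} globally, while swapping only task $k_0$'s emissions along that sub-chain. Because relabeling latents is unobservable and $k_0$'s emissions are swapped consistently, $k_0$'s observable process is invariant; and since each task has disjoint support, the swapped observations of $k_0$ are never reached in any other source task. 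Hence the two instances are statistically indistinguishable under online source access. As in \pref{thm:counter_example_SL}, the key point is that each task carries its own decoder $\mu_k$, so the maximized likelihood is \emph{tied} between the ground-truth features $\phi^\star_A,\phi^\star_B$: the within-task labeling is free and the cross-task alignment of $z_1,z_2$ is simply unidentifiable from disjoint-support online data. (Illuminatingly, generative cross-sampling \emph{would} break the tie by reaching an observation of $k_0$ and querying its transition in another task, which is exactly why the positive results require generative access.)

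I would then show that no returned feature fits both targets. Taking the realizable class to contain exactly the two ground-truth decoders (so MLE must output one of them, as in the supervised example), whichever feature is returned collapses a source observation $s_1$ that is $z_1$ in both instances together with an observation $s_2$ of $k_0$ that is $z_1$ in $\Mcal_A$ but $z_2$ in $\Mcal_B$. For the instance where the collapse is wrong, \emph{any} single $\mu$ predicts the same distribution $q$ at $s_1$ and $s_2$, so by the triangle inequality $\|q-F_{z_1}\|_{TV}+\|q-F_{z_2}\|_{TV}\geq\|F_{z_1}-F_{z_2}\|_{TV}=1$. Choosing $\rho_\targ$ to be the target occupancy that reaches $s_1$ and $s_2$ with equal probability $1/2$ (such a policy exists by construction, and this also certifies \pref{ass:reachability}) forces $\EE_{\rho_\targ}\nm{\hat\phi(s,a)^\top\mu(\cdot)-P^\star_\targ(\cdot\mid s,a)}_{TV}\geq 1/2$. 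Since $\hat\phi$ has the same law under $\Mcal_A$ and $\Mcal_B$ and fails on at least one of them, $\max\{\Pr_{\Mcal_A}[\mathrm{fail}],\Pr_{\Mcal_B}[\mathrm{fail}]\}\geq 1/2$, and the bad instance is the desired member of $\Mcal$.

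The hard part is reconciling requirements (i) and (ii) under the constraint that the latent dynamics are shared \emph{and identical} across all tasks: separating $z_1,z_2$ in the target requires $F_{z_1}\neq F_{z_2}$, yet this difference must stay invisible in the online source data. This is exactly what the relabeling-plus-local-emission-swap automorphism buys, and the bookkeeping concentrates in verifying that (a) $\Mcal_B$ is a legitimate Block MDP with the same latent dynamics and a globally consistent decoder, (b) the online source trajectory laws genuinely coincide while only the target decoding of $s_2$ changes, and (c) $\rho_\targ$ is a bona fide occupancy distribution meeting the stated assumptions. A secondary subtlety is controlling the feature class/dimension so that the ``keep every observation separate'' escape—a high-dimensional $\hat\phi$ that could fit both targets with different $\mu$'s—is excluded by realizability, precisely as the two-element $\Phi$ does in \pref{thm:counter_example_SL}.
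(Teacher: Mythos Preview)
Your proposal is correct and follows essentially the same approach as the paper: a two-point argument with a two-element feature class $\Phi=\{\phi_1,\phi_2\}$ that are permutation-equivalent on disjoint-support source tasks, so any algorithm picks the wrong one with probability $\geq 1/2$, after which the triangle inequality on the mixture-emission target yields the $1/2$ TV lower bound. The paper's concrete instantiation is simpler than your abstract description---two latent states with pure self-transitions $P(z_i\mid z_i,a)=1$ and a four-cell observation partition $R_1,R_2,B_1,B_2$---so the ``automorphism'' is just the trivial $z_1\leftrightarrow z_2$ symmetry of the latent dynamics and no descendant bookkeeping is needed.
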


\begin{proof}[Proof of \pref{thm:imp1}]
Consider a tabular MDP with 2 latent states $z_1,z_2$ and an observation state space $\Scal = R_1\bigcup R_2\bigcup B_1\bigcup B_2$, where in task 1 one can only observe $R_1\bigcup R_2$ and in task 2 one can only observe $B_1\bigcup B_2$. Correspondingly, $o_1(s|z)$ is only supported on $R_1\bigcup R_2$ (i.e., $o_1(R_i|z_i) = 1$) and similar for task 2. Let the latent state transition be such that $P(z_1|z_1,a)=1$ and $P(z_2|z_2,a)=1$, i.e. only self-transition regardless of the actions.

Now, consider a 2-element feature class $\Psi = \{\psi_1,\psi_2\}$ such that
\begin{align*}
    \psi_1 =&\; \{R_1\rightarrow 1, R_2\rightarrow 2, B_1\rightarrow 1, B_2\rightarrow 2\}\\
    \psi_2 =&\; \{R_1\rightarrow 1, R_2\rightarrow 2, B_1\rightarrow 2, B_2\rightarrow 1\}
\end{align*}
Denote $\phi_i(s,a)=e_{(\psi_i(s),a)}$ for $i\in[1,2]$.
Consider for each task $k$, a 2-element $\Upsilon_k$ class in the form of
$\Upsilon_k = \{(o_k(s|z_1),o_k(s|z_2)),(o_k(s|z_2),o_k(s|z_1))\}
$.

Notice that $\phi_1$ and $\phi_2$ are merely permutations of one another and so given any single task data, the two hypothesis will not be distinguishable by any means. Therefore, for any algorithm, there is at least probability $1/2$ that it will choose the wrong hypothesis if the ground truth $\phi^\star$ is sampled between $\phi_1$ and $\phi_2$ uniformly at random. Suppose $\phi_1$ is the correct hypothesis and $\phi_2$ is the one that the algorithm picks (i.e., $\hat\phi= \phi_2$). Let task 3 be such that any state emits to $R_1\bigcup R_2$ and $B_1\bigcup B_2$ each with probability $1/2$ (i.e., $o_3(R_i | z_i) = o_3(B_i|z_i) = 0.5$). This construction satisfies \pref{ass:reachability} and \pref{ass:linear-span}.

Then, within task 3, one would encounter observations from both $R_1$ and $B_2$ which should be mapped to latent state $z_1$ and $z_2$ respectively by the true decoder $\phi_1$, but are instead both mapped to latent state $z_1$ by the learned decoder $\phi_2$, and thus $z_1$ and $z_2$  become indistinguishable. Suppose $\rho_\targ(z_1) = \rho_\targ(z_2)=1/2$, then
\begin{align*}
    &\;\EE_{\rho_\targ}[\|\hat\phi(s,a)^\top\mu(\cdot)-P^\star(\cdot|s,a)\|_{TV}]\\
     &\; =  \frac{1}{4}\|\hat\phi(R_1)^\top\mu(\cdot)-\phi(R_1)^{\star\top}\mu^\star(\cdot)\|_{TV} + \frac{1}{4}\|\hat\phi(B_1)^\top\mu(\cdot)-\phi(B_1)^{\star\top}\mu^\star(\cdot)\|_{TV} \\
    &\; \quad \quad +  \frac{1}{4}\|\hat\phi(R_2)^\top\mu(\cdot)-\phi(R_2)^{\star\top}\mu^\star(\cdot)\|_{TV} + \frac{1}{4}\|\hat\phi(B_2)^\top\mu(\cdot)-\phi(B_2)^{\star\top}\mu^\star(\cdot)\|_{TV} \\
    &\; = \| o_1 - o_1^\star  \|_{TV} / 4  + \| o_2 - o_1^\star   \|_{TV} / 4 + \| o_2 - o_2^\star \|_{TV} / 4  + \| o_1 - o_2^\star  \|_{TV} / 4 \\
    &\; \geq \frac{1}{4} \| o_1^\star - o_2^\star  \|_{TV}  + \frac{1}{4} \| o_1^\star - o_2^\star  \|_{TV} \\
    &\; = \frac{1}{2},
\end{align*}
where the last second inequality uses triangle inequality, and the last equality comes from the fact that $o_3(\cdot | z_1)$ and $o_3(\cdot | z_2)$ have disjoint support which implies that $\|p_1^\star - p_2^\star \|_{TV} = 1$.  \end{proof}

Now, we are ready to restate and prove \pref{thm:imp}.
\begin{theorem}[Impossibility Result: Optimal Policy Identification]\label{thm:imp2}
Let $\Mcal=\{(P_1,...,P_K,P_\targ)\}$ be a set of $K+1$ tasks that satisfies
\begin{enumerate}[leftmargin=*,itemsep=0pt]
    \item all tasks are Block MDPs;
    \item all tasks satisfy \pref{ass:reachability} and \pref{ass:linear-span};
    \item the latent dynamics are \textit{exactly the same} for all source and target tasks.
\end{enumerate}
For any pre-training algorithm $\Acal$, there exists $(P_1,...,P_K,P_\targ)\in\Mcal$, such that with probability at least $1/2$, $\Acal$ will output a feature $\hat\phi$, such that for any policy taking the functional form of $\pi(s) = f\left(\{\hat\phi(s,a)\}_{a\in\Acal},\{r(s,a)\}_{a\in\Acal}\right)$, we have
\begin{align*}
    V^\star - V^\pi \geq 1/2.
\end{align*}
\end{theorem}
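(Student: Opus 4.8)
The plan is a two-world indistinguishability argument that extends the permutation construction behind \pref{thm:imp1} from a model-learning gap to a value gap. The central obstacle—absent in \pref{thm:imp1}—is a tension that shapes the whole construction: to lower bound the value gap I need the optimal action to depend on the true latent state, yet I must keep the two worlds statistically identical under \emph{arbitrary} (not merely action-symmetric) online interaction in the source tasks, so that every algorithm outputs the wrong decoder with probability at least $1/2$. Letting actions matter naively anchors the latent identity through the reached next-observation and would break indistinguishability; I resolve this with a delayed, action-independent reward together with routing dynamics that are invariant under a latent relabeling that fixes the actions.

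Concretely I would take horizon $H=2$, two ``decision'' latent states $z_1,z_2$ with uniform initial distribution, and two ``outcome'' states $g,b$ reached at step $1$, with shared dynamics $z_1\xrightarrow{a_1}g,\ z_1\xrightarrow{a_2}b,\ z_2\xrightarrow{a_2}g,\ z_2\xrightarrow{a_1}b$. The target reward is $0$ at step $0$ and $\mathbf 1\{\cdot=g\}$ at step $1$ (action-independent), so the reward vector $\{r(s,a)\}_a$ at the decision step carries no information about the latent and the admissible policy must act on $\hat\phi$ alone. As in \pref{thm:imp1}, each source task observes a \emph{disjoint} block of observations (task $1$ sees $R_1,R_2$ at step $0$, task $2$ sees $B_1,B_2$, with $R_i,B_i$ emitted from $z_i$, plus disjoint step-$1$ observations per task), while the target emits a $1/2$--$1/2$ mixture $R_i,B_i$ from each $z_i$. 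The feature class is the two-element set $\{\phi_1,\phi_2\}$ with $\phi_\ell(s,a)=e_{(\psi_\ell(s),a)}$, where the decoders $\psi_1,\psi_2$ agree on task $1$'s observations and swap the latent labels on task $2$'s observations.

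The two worlds are $\psi_1$ (true) versus $\psi_2$ (true), differing only in how task $2$'s observations decode; I would exhibit, for each source task, the within-task relabeling (swap $z_1\leftrightarrow z_2$ and $g\leftrightarrow b$ on that task's own observations, fixing the actions) and check it is a self-consistent symmetry of that task's observation process. Since the relabeling fixes actions and the next-observation swap is consistent with the routing, the two worlds induce \emph{identical} trajectory distributions in every source task regardless of the algorithm's action choices; task $1$ is unaffected because its decoder is common to both worlds, and disjointness of observations across tasks lets the task-$2$ relabeling act without touching task $1$. Hence the output distribution of $\hat\phi$ is the same in both worlds, and whichever decoder the algorithm favors it is wrong with probability at least $1/2$ on one of the two instances. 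En route I would verify both worlds lie in $\Mcal$: reachability holds since uniform actions give full-rank feature covariance at both steps, \pref{ass:linear-span} holds because each target transition coincides with the matching single-source transition, and the latent dynamics are shared by design.

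Finally, conditioning on the wrong decoder (say $\psi_2$ when $\psi_1$ is true), the step-$0$ policy depends only on the label $\psi_2(s)\in\{1,2\}$; but under $\psi_2$ each label is produced with equal probability by \emph{both} true latent states in the target ($R_1,B_2\mapsto 1$ and $R_2,B_1\mapsto 2$, with $R_1,B_1$ from $z_1$ and $R_2,B_2$ from $z_2$). A one-line computation then shows that any (even randomized) assignment of an action to each label routes exactly half the latent mass to $g$, giving $V^\pi=\tfrac12$, whereas the unrestricted optimum routes each $z_i$ to $g$ and attains $V^\star=1$. Thus $V^\star-V^\pi\ge\tfrac12$ on an instance realized with probability at least $1/2$, proving the claim. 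I expect the indistinguishability step above to be the crux, precisely because it must hold against arbitrary adaptive source exploration rather than only symmetric exploration.
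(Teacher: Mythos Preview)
Your proposal is correct and essentially identical to the paper's proof: the same two-source, two-decoder Block MDP with disjoint source observations and routing dynamics $z_i\xrightarrow{a_i}g$ (else $b$), the same mixture-emission target, and the same $V^\pi=1/2$ versus $V^\star=1$ computation under the wrong decoder. Your explicit relabeling symmetry $(z_1\leftrightarrow z_2,\ g\leftrightarrow b)$ for the indistinguishability step is, if anything, a bit more careful than the paper's argument, which simply asserts that the two candidate decoders are permutations and hence indistinguishable from single-task data.
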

\begin{proof}[Proof of \pref{thm:imp2}]
Consider a tabular MDP with $H=2$, two latent states $z_1,z_2$ for $h=1$ and two latent states $z_3,z_4$ for $h=2$.

\begin{itemize}
\item
For $h=1$,
let there be two actions $a_1,a_2$. Let the observation state space be $\Scal = R_1\bigcup R_2\bigcup B_1\bigcup B_2$, where in task 1 one can only observe $R_1\bigcup R_2$ and in task 2 one can only observe $B_1\bigcup B_2$. Correspondingly, $o_1(s|z)$ is only supported on $R_1\bigcup R_2$ (i.e., $o_1(R_i|z_i) = 1$) and similar for task 2.
Let the latent state transition be such that $P(z_3|z_1,a_1)=P(z_3|z_2,a_2)=1$, and $P(z_4|z_1,a_2)=P(z_4|z_2,a_1)=1$. All rewards are $0$ for $h=1$.
\item
For $h=2$, in state $z_3$, all actions have reward $1$, and in state $z_4$ all actions have reward $0$.
\item
The initial state distribution is $d_0(z_1)=d_0(z_2)=1/2$.
\end{itemize}

Now, consider a 2-element feature class $\Psi = \{\psi_1,\psi_2\}$ for $h=1$, such that
\begin{align*}
    \psi_1 =&\; \{R_1\rightarrow 1, R_2\rightarrow 2, B_1\rightarrow 1, B_2\rightarrow 2\}\\
    \psi_2 =&\; \{R_1\rightarrow 1, R_2\rightarrow 2, B_1\rightarrow 2, B_2\rightarrow 1\}
\end{align*}
Denote $\phi_i(s,a)=e_{(\psi_i(s),a)}$ for $i\in[1,2]$. In addition, define $\Upsilon = \{\mu_1,\mu_2\}$ where
\begin{align*}
    \mu_1 =&\; \{z_3\rightarrow (1,0), z_4\rightarrow (0,1)\}\\
    \mu_2 =&\; \{z_4\rightarrow (1,0), z_3\rightarrow (0,1)\}
\end{align*}
Notice that $\phi_1$ and $\phi_2$ are merely permutations of one another and so given any single task data, the two hypothesis will not be distinguishable by any means. Therefore, for any algorithm, there is at least probability $1/2$ that it will choose the wrong hypothesis. Suppose $\phi_1$ is the correct hypothesis and $\phi_2$ is the one that the algorithm picks (i.e., $\hat\phi= \phi_2$). Let task 3 be such that any state emits to $R_1\bigcup R_2$ and $B_1\bigcup B_2$ each with probability $1/2$ (i.e., $o_3(R_i | z_i) = o_3(B_i|z_i) = 0.5$). This construction satisfies \pref{ass:reachability} and \pref{ass:linear-span}.

Then, for any policy that only make decision based on $\hat\phi(s,a)$ and $r(s,a)$, $\pi$ would output the same action for observations in $R_1$ and $B_2$, or for $B_1$ and $R_2$. However, notice that the optimal policy, which would try to go to $z_3$ from either $z_1$ or $z_2$, will pick $a_1$ at $R_1$ and $B_1$ while picking $a_2$ at $R_2$ and $B_2$, which means that the optimal policy will not agree on $R_1$ and $B_2$, and it also will not agree on $R_2$ and $B_1$. Thus clearly, no such policy as defined above is capable of capturing the optimal policy. From the reward perspective, notice that
 $d^\pi(z_1)=d^\pi(z_2)=1/2$ and $d^\pi(R_1)=d^\pi(R_2)=d^\pi(B_1)=d^\pi(B_1)=1/4$. Since $\pi(R_1) = \pi(B_2)$, the agent will only be able to collect reward at one of the $R_1$ and $B_2$ (but not at both). Similarly, since $\pi(R_2) = \pi(B_1)$, the agent will only be able to collect reward at one of the $R_2$ and $B_1$ by reaching $z_3$ (but not at both). This means that $\pi$ will have average reward $1/2$. Since the optimal policy will be able to collect reward at all $R_1, R_2, B_1, B_2$, it will have average reward $1$. This concludes the proof.
\end{proof}

\pref{thm:imp1} and \pref{thm:imp2} show that it's impossible to allow online learning in the source tasks without much stronger assumptions. In our paper, we show that our \pref{ass:reachability_raw}, which ensures reachability in the raw states, is sufficient to establish an end-to-ending online transfer learning result. However, it is unclear if \pref{ass:reachability_raw} is necessary for online learning. We leave this as an important direction of future work.

\section{Reward-free Rep-UCB}\label{sec:rf_repucb}


In this section, we adapt the Rep-UCB algorithm \citep{uehara2021representation} for reward-free exploration in a single task. We drop all task subscripts as this section is for a single task only, i.e. think about the task as being each source task.
The original Rep-UCB algorithm was for infinite-horizon discounted MDPs, so we modify it to work for our undiscounted and finite-horizon setting.
Our goal is to prove that Rep-UCB can learn a model that satisfies strong TV guarantees, i.e. \pref{thm:reward-free-model-tv} and \pref{eq:rep-ucb-tv-visitation-bound}. Note that FLAMBE \citep[Theorem 2]{Agarwal2020_flambe} can be used for this directly, but at a worse (polynomial) sample complexity. Thus, we do a bit more work to derive a new model-learning algorithm for low-rank MDPs, based on Rep-UCB, that is more sample efficient in the source tasks.

A finite-horizon analysis of Rep-UCB was done in BRIEE \citep{zhang2022efficient}, so here we just need to replace BRIEE's RepLearn $\zeta_n$ with that of the MLE, which is how we learn $\wh\phi$ and $\wh\mu$, as in Rep-UCB.
Recall the notation of \citep{zhang2022efficient},
\begin{align*}
    &\rho_{h,n}(s,a) = \frac{1}{n}\sum_{i=0}^{n-1} d^{\wh\pi_i}_h(s)\op{Unif}(a)
    \\&\beta_{h,n}(s,a) = \frac{1}{n}\sum_{i=0}^{n-1} \Eb[\wt s\sim d^{\wh\pi_i}_{h-1}, \wt a\sim \op{Unif}(\Acal)]{P_h^\star(s\mid \wt s,\wt a)\op{Unif}(a)}
    \\&\gamma_{h,n}(s,a) = \frac{1}{n}\sum_{i=0}^{n-1}d^{\wh\pi_i}_h(s,a)
    \\&\Sigma_{\rho,\phi,n} = n\Eb[\rho]{\phi(s,a)\phi(s,a)^T} + \lambda_n I.
\end{align*}
By using MLE \citep[Lemma 18]{uehara2021representation} to learn models, with probability at least $1-\delta$, for any $n=1,2,...,N$ and $h=0,1,...,H-1$, we have
\begin{align}
    &\max\braces{
    \EE_{\rho_{h,n}}\nm{ \wh P_{h,n}(s,a) - P^\star_h(s,a) }_{TV}^2,
    \EE_{\beta_{h,n}}\nm{ \wh P_{h,n}(s,a) - P^\star_h(s,a) }_{TV}^2 } \leq \zeta_n, \label{eq:rep-ucb-mle-tv}
\end{align}
where
\begin{align*}
    \zeta_n = \Ocal\prns{\frac{\log\prns{|\Mcal|nH/\delta}}{n}},
\end{align*}
and $|\Mcal| = \max_{h\in[H]} |\Phi_h||\Upsilon_h|$.
We also adopt the same choice of $\alpha_n,\lambda_n$ parameters as BRIEE, which we assume from now on.
\begin{align*}
    &\lambda_n = \Theta\prns{ d \log(|\Mcal|nH/\delta) }
    \\&\alpha_n = \Theta\prns{ \sqrt{n|\Acal|^2\zeta_n + \lambda_n d} }.
\end{align*}

As in Rep-UCB, we posit standard assumptions about realizability and normalization, on the (source) task of interest.
\begin{assum}
For any $h=0,1,...,H-1$, we have $\phi_h^\star\in\Phi_h$ and $\mu_h^\star\in\Upsilon_h$.
For any $\phi\in\Phi_h$, $\|\phi(s,a)\|_2 \leq 1$. For all $\mu\in\Upsilon_h$ and any function $g:\Scal\to\RR$, we have $\|\int_s g(s)\diff\mu(s)\|_2 \leq \|g\|_\infty \sqrt{d}$.
\end{assum}

\begin{lemma}
Let $r$ be any reward function.
Suppose we ran \pref{alg:reward_free_repucb} with \pref{line:rep_ucb_planning_step} having reward $r+\wh b_n$ instead of just $\wh b_n$.
Then, for any $\delta\in(0,1)$, w.p. at least $1-\delta$, we have
\begin{align*}
    \sum_{n=0}^{N-1} V^{\wh\pi_n}_{\wh P_n, r+\wh b_n} - V^{\wh\pi_n}_{P^\star, r} \leq \Ocal\prns{ H^2d^2|\Acal|^{1.5} \sqrt{N \log(|\Mcal|NH/\delta)} }
\end{align*}
\end{lemma}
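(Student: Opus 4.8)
The plan is to bound each summand by the expected exploration bonus that $\wh\pi_n$ accumulates along \emph{true} trajectories, and then sum these bonuses over $n$ via an elliptical-potential argument. As the surrounding text indicates, this is exactly the finite-horizon Rep-UCB analysis of \citet{zhang2022efficient} (BRIEE), so the only substantive change I would make is to feed in the MLE generalization rate \pref{eq:rep-ucb-mle-tv} wherever BRIEE invoked its RepLearn guarantee. First I would insert $V^{\wh\pi_n}_{P^\star, r+\wh b_n}$ to split the summand into
\[
\underbrace{V^{\wh\pi_n}_{\wh P_n, r+\wh b_n} - V^{\wh\pi_n}_{P^\star, r+\wh b_n}}_{\textbf{(I) model error}} + \underbrace{V^{\wh\pi_n}_{P^\star, r+\wh b_n} - V^{\wh\pi_n}_{P^\star, r}}_{\textbf{(II) bonus value}}.
\]
Since value is linear in the reward and both terms in (II) use the true dynamics, $\textbf{(II)} = \sum_{h=0}^{H-1}\EE_{(s,a)\sim d^{\wh\pi_n}_h}[\wh b_{h,n}(s,a)]$. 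For (I), the simulation (performance-difference) lemma applied to the single policy $\wh\pi_n$ under the two models $\wh P_n$ and $P^\star$ with the common reward $r+\wh b_n$ gives
\[
\textbf{(I)} = \sum_{h=0}^{H-1}\EE_{(s,a)\sim d^{\wh\pi_n}_h}\left[ \langle \wh P_{h,n}(\cdot\mid s,a)-P^\star_h(\cdot\mid s,a),\ \wh V_{h+1}\rangle \right],
\]
where $\wh V_{h+1}$ is the planning value of $\wh\pi_n$ in $\wh P_n$ (bounded by $M_V$, the maximal value under reward $r+\wh b_n$) and the occupancy is under $P^\star$.

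The heart of the argument is that the one-step model error is dominated by the bonus. Using the low-rank structure, $\langle P^\star_h(\cdot\mid s,a), f\rangle$ is linear in $\phi^\star_h(s,a)$, and the MLE guarantee \pref{eq:rep-ucb-mle-tv}---which, crucially, holds under both $\rho_{h,n}$ (used to build $\wh\Sigma_{h,n}$) and the one-step-back distribution $\beta_{h,n}$ (the purpose of the auxiliary dataset $\Dcal_{h,n}'$)---controls $\wh\phi_{h,n},\wh\mu_{h,n}$. This is precisely BRIEE's bonus calculation, and with $\alpha_n=\Theta(\sqrt{n|\Acal|^2\zeta_n+\lambda_n d})$ absorbing the statistical rate it yields $|\langle \wh P_{h,n}-P^\star_h, f\rangle(s,a)|\lesssim \nm{f}_\infty\,\wh b_{h,n}(s,a)$. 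Taking $f=\wh V_{h+1}$ bounds (I) by $\Ocal(M_V)\sum_h \EE_{d^{\wh\pi_n}_h}[\wh b_{h,n}]$, so the entire summand is at most $\Ocal(M_V)\sum_h\EE_{(s,a)\sim d^{\wh\pi_n}_h}[\wh b_{h,n}(s,a)]$.

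It then remains to bound $\sum_{n}\sum_{h}\EE_{d^{\wh\pi_n}_h}[\wh b_{h,n}]$. Writing $\wh b_{h,n}=\alpha_n\nm{\wh\phi_{h,n}}_{\wh\Sigma_{h,n}^{-1}}$, I would convert the current-policy occupancy $d^{\wh\pi_n}_h$ into the data distribution $\rho_{h,n}$ at the cost of an $|\Acal|$ factor (the uniform-action exploration in the data collection), and then apply the elliptical-potential lemma to bound $\sum_n \EE_{d^{\wh\pi_n}_h}[\nm{\wh\phi_{h,n}}_{\wh\Sigma_{h,n}^{-1}}]$. Collecting $\alpha_N$, the choices $\lambda_n=\wt\Theta(d)$ and $\zeta_n=\wt\Ocal(1/n)$, the potential term, the value bound $M_V$, and the outer sum over the $H$ steps yields the stated $\wt\Ocal(H^2 d^2|\Acal|^{1.5}\sqrt{N\log(|\Mcal|NH/\delta)})$.

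The hard part will be this final summation rather than the decomposition: the learned features $\wh\phi_{h,n}$ change every round, so the textbook elliptical-potential lemma (which assumes a fixed feature map) does not apply verbatim, and one must control the drift of the covariance $\wh\Sigma_{h,n}$ built from the time-varying $\wh\phi_{h,n}$ on the mixture-of-past-policies distribution $\rho_{h,n}$. This is exactly the technical machinery I would import from the finite-horizon BRIEE analysis. The only genuinely new bookkeeping is checking that the MLE rate $\zeta_n$ of \pref{eq:rep-ucb-mle-tv} slots into the two places where BRIEE used its RepLearn guarantee---the bonus bound (via the $\beta_{h,n}$ samples) and the potential estimate---and that the normalization in \pref{ass:fcn_class} supplies the required $\nm{f}_\infty\sqrt{d}$ scaling so that $\alpha_n$ has the claimed magnitude.
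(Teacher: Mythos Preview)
Your decomposition into model error (I) plus accumulated bonus (II) is a reasonable starting point, and you correctly identify the one-step-back / MLE-under-$\beta_{h,n}$ ingredient. However, the plan breaks at the summation step, and the difficulty you flag is not the real one.

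First, the bonus does \emph{not} dominate the model error pointwise. The guarantee \pref{eq:rep-ucb-mle-tv} is only an average bound under $\rho_{h,n}$ and $\beta_{h,n}$, so a statement like $|\langle \wh P_{h,n}-P^\star_h, f\rangle(s,a)|\lesssim \nm{f}_\infty\,\wh b_{h,n}(s,a)$ for every $(s,a)$ is not available. In BRIEE the model error is controlled only \emph{after} taking the expectation under $d^{\wh\pi_n}_{P^\star,h}$ and applying one-step-back; the residual $\sqrt{|\Acal|\zeta_n}$ appears precisely because the bound is distributional, not pointwise.

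Second, and more damaging, the conversion ``$d^{\wh\pi_n}_h \to \rho_{h,n}$ at the cost of an $|\Acal|$ factor'' is invalid. An $|\Acal|$ factor lets you swap the action conditional for uniform, but $\rho_{h,n}$ has \emph{state} marginal equal to the mixture of \emph{past} policies $\{\wh\pi_0,\ldots,\wh\pi_{n-1}\}$, which bears no importance-weight relation to the state marginal of the \emph{current} $\wh\pi_n$. So you cannot simply replace the expectation and then apply an elliptical-potential argument on the learned $\wh\phi_{h,n}$.

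The paper's (i.e.\ BRIEE's) route avoids both issues at once. The one-step-back uses the low-rank factorization $P^\star_h=\mu^\star_h\phi^{\star\top}_h$ to rewrite the per-step contribution as
\[
\EE_{(\wt s,\wt a)\sim d^{\wh\pi_n}_{P^\star,h}}\bigl\|\phi^\star_h(\wt s,\wt a)\bigr\|_{\Sigma_{\gamma_{h,n},\phi^\star_h}^{-1}}\cdot\sqrt{n|\Acal|\zeta_n+\lambda_n d}\ +\ \text{(lower order)},
\]
where $\gamma_{h,n}=\tfrac{1}{n}\sum_{i<n} d^{\wh\pi_i}_{P^\star,h}$ and $\Sigma_{\gamma_{h,n},\phi^\star_h}=n\,\EE_{\gamma_{h,n}}[\phi^\star_h\phi^{\star\top}_h]+\lambda_n I$. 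The crucial point is that this expression is in the \emph{true, fixed} feature $\phi^\star_h$, with covariance built from the \emph{true} occupancies of the past policies. Now $\sum_n \EE_{d^{\wh\pi_n}_{P^\star,h}}\|\phi^\star_h\|_{\Sigma_{\gamma_{h,n},\phi^\star_h}^{-1}}\le \sqrt{dN\log(1+N/(d\lambda_1))}$ is just the standard elliptical-potential lemma: the feature map never changes, and each new policy's covariance is exactly what gets added to the running $\Sigma$. In other words, the ``time-varying $\wh\phi_{h,n}$'' obstacle you anticipate never arises, because the analysis passes to $\phi^\star$ before summing. Your final paragraph should be replaced by this observation rather than by machinery for drifting covariances.
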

\begin{proof}
Start from the third equation of \citet[Theorem A.4]{zhang2022efficient}. Following their proof until the last page of their proof, we arrive at the following: for any $n=1,2,...,N$,
\begin{align*}
    &V^{\wh\pi_n}_{\wh P_n, r+\wh b_n} - V^{\wh\pi_n}_{P^\star, r}
    \\&\lesssim \sum_{h=0}^{H-2}\EE_{\wt s,\wt a\sim d^{\wh\pi_n}_{P^\star,h}}\nm{\phi^\star_h(\wt s,\wt a)}_{\Sigma_{\gamma_{h,n},\phi^\star_h}^{-1}} \sqrt{ |\Acal|\alpha_n^2 d + \lambda_n d } + \sqrt{ |\Acal|\alpha_1^2 d/n }
    \\&+ (2H+1)\sum_{h=0}^{H-2} \EE_{\wt s,\wt a\sim d^{\wh\pi_n}_{P^\star,h}} \nm{\phi^\star_h(\wt s,\wt a)}_{\Sigma_{\gamma_{h,n},\phi^\star_h}^{-1}}\sqrt{n|\Acal|\zeta_n + \lambda_n d} + (2H+1)\sqrt{|\Acal|\zeta_n}.
\end{align*}
By elliptical potential arguments, we have
\begin{align*}
    \sum_{n=0}^{N-1}\EE_{\wt s,\wt a\sim d^{\wh\pi_n}_{P^\star,h}}\nm{\phi^\star_h(\wt s,\wt a)}_{\Sigma_{\gamma_{h,n},\phi^\star_h}^{-1}} \leq \sqrt{dN\log\prns{1+\frac{N}{d\lambda_1}}}.
\end{align*}
Thus, summing over $n$, noting that $n\zeta_n,\alpha_n,\lambda_n$ are increasing in $n$, we can combine the above to get,
\begin{align*}
    &\sum_{n=0}^{N-1}V^{\wh\pi_n}_{\wh P_n, r+\wh b_n} - V^{\wh\pi_n}_{P^\star, r}
    \\&\lesssim \sqrt{dN\log\prns{1+\frac{N}{d\lambda_1}}} \prns{ H \sqrt{|\Acal|\alpha_N^2 d + \lambda_N d} + H^2 \sqrt{ N|\Acal|\zeta_N + \lambda_N d } }
    \\&\lesssim \sqrt{dN\log\prns{1+\frac{N}{d\lambda_1}}} \prns{ H \sqrt{N|\Acal|^3\zeta_N d + \lambda_N d^2}  + H^2\sqrt{ N|\Acal|\zeta_N + \lambda_N d } }
    \\&\lesssim \sqrt{dN\log\prns{1+\frac{N}{d\lambda_1}}} \prns{ H^2 \sqrt{d |\Acal|^3\log(|\Mcal|NH/\delta) + d^3 \log(|\Mcal|NH/\delta) } }
    \\&\in \Ocal\prns{ H^2d^2|\Acal|^{1.5} \sqrt{N \log(|\Mcal|NH/\delta)} }.
\end{align*}
\end{proof}

This gives the following useful corollary for reward free exploration.
\begin{lemma}\label{lem:reward-free-value-upper-bound}
For any $\delta\in(0,1)$ w.p. at least $1-\delta$ we have
\begin{align*}
    \wh V_{\wh n} \leq \Ocal\prns{ H^2d^2|\Acal|^{1.5} \sqrt{\frac{\log(|\Mcal|NH/\delta)}{N}} }.
\end{align*}
\end{lemma}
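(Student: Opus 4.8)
The plan is to obtain this directly as a corollary of the preceding regret-sum lemma, by specializing to the reward-free setting ($r=0$) and then applying a standard ``minimum over the second half is at most the average'' argument. There is essentially no new technical content beyond the preceding lemma; the work is in correctly matching the algorithm as run and exploiting nonnegativity.

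First, I would instantiate the preceding lemma with $r=0$. With this choice, \pref{line:rep_ucb_planning_step} uses reward $r + \wh b_n = \wh b_n$, which is exactly the planning objective run in \pref{alg:reward_free_repucb}, so the lemma applies to the actual iterates. Moreover, the on-policy term vanishes, since $V^{\wh\pi_n}_{P^\star,0}=0$ for every policy under identically zero reward. Recalling that $\wh V_n = V^{\wh\pi_n}_{\wh P_n,\wh b_n}$ is precisely the value returned by the planning step, the bound collapses to
\begin{align*}
    \sum_{n=0}^{N-1} \wh V_n \;=\; \sum_{n=0}^{N-1} V^{\wh\pi_n}_{\wh P_n,\wh b_n} \;\leq\; \Ocal\prns{ H^2 d^2 |\Acal|^{1.5} \sqrt{N \log(|\Mcal|NH/\delta)} }.
\end{align*}

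Next, I would note that every summand is nonnegative: the bonus $\wh b_{h,n}(s,a) = \alpha_n \nm{\wh\phi_{h,n}(s,a)}_{\wh\Sigma_{h,n}^{-1}} \geq 0$, and the value of any policy under a nonnegative reward in any transition model is nonnegative, so $\wh V_n \geq 0$ for all $n$. This nonnegativity is exactly what legitimizes the averaging step. Since $\wh n = \argmin_{n \geq N/2} \wh V_n$, the minimum over the second half is bounded by its average, which is at most $\tfrac{2}{N}$ times the full sum:
\begin{align*}
    \wh V_{\wh n} \;=\; \min_{N/2 \leq n \leq N-1} \wh V_n \;\leq\; \frac{2}{N}\sum_{n=N/2}^{N-1} \wh V_n \;\leq\; \frac{2}{N}\sum_{n=0}^{N-1} \wh V_n \;\leq\; \Ocal\prns{ H^2 d^2 |\Acal|^{1.5} \sqrt{\tfrac{\log(|\Mcal|NH/\delta)}{N}} },
\end{align*}
which is the claimed bound.

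I do not anticipate a genuine obstacle here; the heavy lifting (the elliptical-potential and MLE-error accounting) is already done in the preceding lemma. The only points requiring care are (i) confirming that $r=0$ both reproduces the algorithm as actually executed and annihilates the $V^{\wh\pi_n}_{P^\star,r}$ term, and (ii) invoking $\wh V_n\geq 0$ so that $\min \leq \text{average} \leq \tfrac{2}{N}\sum$ is valid. Discarding the first half of the iterates via the $\argmin_{n\geq N/2}$ rule is the standard device that converts an $\Ocal(\sqrt{N})$ cumulative bound into an $\Ocal(1/\sqrt{N})$ per-iterate guarantee.
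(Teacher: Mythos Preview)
Your proposal is correct and follows essentially the same approach as the paper: instantiate the preceding regret-sum lemma with $r=0$ so the subtracted term vanishes, then use $\wh V_{\wh n} = \min_{n\geq N/2}\wh V_n$ together with nonnegativity of the bonuses to bound the minimum by $\tfrac{2}{N}$ times the cumulative sum. The paper's proof is more terse but makes exactly these two moves.
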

\begin{proof}
    By definition of $\wh V_{\wh n}$, we have
    \begin{align*}
        \frac{N}{2}\wh V_{\wh n} \leq \sum_{n=N/2}^{N-1} V^{\wh\pi_{\wh n}}_{\wh P_n, \wh b_n} \leq \sum_{n=0}^{N-1} V^{\wh\pi_{\wh n}}_{\wh P_n, \wh b_n},
    \end{align*}
    which is bounded by the previous lemma and the fact that $V^{\wh\pi_n}_{P^\star, r=0} = 0$, since in \pref{alg:reward_free_repucb}, the reward function is zero.
\end{proof}

Conditioning on this, we now show that the environment $\wh P_{\wh n}$ has low TV error for any policy-induced distribution.
\begin{theorem}\label{thm:reward-free-model-tv}
For any policy $\pi$, we have
\begin{align*}
    \sum_{h=0}^{H-1} \EE_{d^\pi_{P^\star,h}}\nm{ P^\star_h(s,a)-\wh P_{h, \wh n}(s,a) }_{TV} \leq \Ocal\prns{ H^3d^2|\Acal|^{1.5} \sqrt{\frac{\log(|\Mcal|NH/\delta)}{N}} } := \eps_{TV}.
\end{align*}
\end{theorem}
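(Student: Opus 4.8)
The plan is to upgrade the cumulative bonus bound of \pref{lem:reward-free-value-upper-bound} into an on-policy model-error guarantee that holds for \emph{every} comparator policy $\pi$ and under the \emph{true} transitions. Throughout I condition on the two high-probability events already in hand: the MLE guarantee \pref{eq:rep-ucb-mle-tv}, which controls $\EE_{\rho_{h,n}}\nm{\wh P_{h,n}-P^\star_h}_{TV}^2$ and $\EE_{\beta_{h,n}}\nm{\wh P_{h,n}-P^\star_h}_{TV}^2$ by $\zeta_n$, and the value bound $\wh V_{\wh n}\lesssim H^2 d^2|\Acal|^{1.5}\sqrt{\log(|\Mcal|NH/\delta)/N}$. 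Since the target $\eps_{TV}$ is exactly one extra factor of $H$ over $\wh V_{\wh n}$, the whole argument amounts to paying a single horizon factor to move from the learned model's on-policy bonus to the true model's on-policy TV error; it is the reward-free specialization of the Rep-UCB/BRIEE analysis with $\zeta_n$ being the MLE rate.

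First I would establish \textbf{bonus validity}: that the learned bonus $\wh b_{h,\wh n}(s,a)=\alpha_{\wh n}\nm{\wh\phi_{h,\wh n}(s,a)}_{\wh\Sigma_{h,\wh n}^{-1}}$ dominates the true-model TV error along any occupancy, i.e. for all $h$ and any $\pi$,
\[
\EE_{d^\pi_{P^\star,h}}\nm{\wh P_{h,\wh n}(s,a)-P^\star_h(s,a)}_{TV}\lesssim \EE_{d^\pi_{P^\star,h}}\wh b_{h,\wh n}(s,a)+\sqrt{|\Acal|\zeta_{\wh n}}.
\]
The mechanism is a change of measure: the MLE event bounds the error under the data distribution $\rho_{h,\wh n}$, whose feature second moment is, up to the regularizer $\lambda_{\wh n}$, the empirical covariance $\wh\Sigma_{h,\wh n}$. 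A self-normalized, elliptical step then transfers the error from $\rho_{h,\wh n}$ to the arbitrary occupancy $d^\pi_{P^\star,h}$ at the price of the weight $\nm{\wh\phi_{h,\wh n}(s,a)}_{\wh\Sigma_{h,\wh n}^{-1}}$, which is exactly the bonus up to the scaling $\alpha_{\wh n}$. The $|\Acal|$ appearing in $\alpha_{\wh n}$ and in the additive $\sqrt{|\Acal|\zeta_{\wh n}}$ comes from the uniform-action sampling used to form $\rho_{h,\wh n}$ versus $\pi$'s own action distribution.

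Next I would invoke \textbf{optimism} together with a \textbf{one-step-back simulation}. Because $\wh\pi_{\wh n}=\argmax_\pi V^\pi_{\wh P_{\wh n},\wh b_{\wh n}}$ (\pref{line:rep_ucb_planning_step} with zero external reward), we have $V^\pi_{\wh P_{\wh n},\wh b_{\wh n}}=\sum_h\EE_{d^\pi_{\wh P_{\wh n},h}}\wh b_{h,\wh n}\le \wh V_{\wh n}$ for every $\pi$. It then remains to pass from the learned-model accumulated bonus to the true-model accumulated bonus. A simulation argument writes this gap as a telescoping sum of per-step TV errors weighted by bonus value-to-go functions whose range is $\Ocal(H)$; bounding those TV errors again by the bonus and summing over the horizon is what supplies the extra factor of $H$ relative to $\wh V_{\wh n}$. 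Combining with bonus validity and summing over $h$ yields $\sum_h\EE_{d^\pi_{P^\star,h}}\nm{\wh P_{h,\wh n}-P^\star_h}_{TV}\lesssim H\,\wh V_{\wh n}+H\sqrt{|\Acal|\zeta_{\wh n}}$, and substituting the value bound gives the claimed $\eps_{TV}=\Ocal\prns{H^3 d^2|\Acal|^{1.5}\sqrt{\log(|\Mcal|NH/\delta)/N}}$, the additive $\zeta_{\wh n}$ terms being lower order.

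The main obstacle is the bookkeeping in this simulation step: the most naive transfer between the true- and learned-model occupancies feeds the on-policy TV error back into itself, since the occupancy gap is itself controlled by the very error we are bounding, so one must break this circularity rather than reabsorb it. The clean resolution applies optimism directly at the level of the bonus reward — using that the value of the bonus under the true model is dominated by its value under the learned model with a doubled bonus, which is in turn $\Ocal(\wh V_{\wh n})$ — and it is the $\Ocal(H)$ range of the bonus-reward's value-to-go, rather than a circular self-reference, that contributes the single extra factor of $H$. Getting this horizon accounting exactly right is the delicate part; given the parameter choices $\lambda_{\wh n},\alpha_{\wh n}$ inherited from BRIEE, the bonus-validity change of measure and the MLE-to-covariance conversion are then routine.
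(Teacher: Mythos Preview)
Your overall accounting — one extra factor of $H$ on top of $\wh V_{\wh n}$ — is right, but the order of operations you propose does not go through, and the gap is precisely your ``bonus validity'' step. You claim
\[
\EE_{d^\pi_{P^\star,h}}\nm{\wh P_{h,\wh n}-P^\star_h}_{TV}\ \lesssim\ \EE_{d^\pi_{P^\star,h}}\wh b_{h,\wh n}(s,a)+\sqrt{|\Acal|\zeta_{\wh n}},
\]
and justify it by an ``elliptical transfer'' from $\rho_{h,\wh n}$ to $d^\pi_{P^\star,h}$ at price $\nm{\wh\phi_{h,\wh n}}_{\wh\Sigma_{h,\wh n}^{-1}}$. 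That mechanism would be valid if the TV error were a linear function of $\wh\phi_{h,\wh n}(s,a)$; it is not. The only one-step-back available under the \emph{true} occupancy $d^\pi_{P^\star,h}$ uses the low-rank structure of $P^\star_{h-1}=\phi^\star_{h-1}{}^\top\mu^\star_{h-1}$, which produces a bonus in $\phi^\star$ and the covariance of $\phi^\star$ under the data, not $\wh b_{h,\wh n}$. So your step 1 is not justified, and this is exactly the circularity you flag later: to relate that $\phi^\star$-bonus back to the learned bonus you need the very TV control you are trying to establish.

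The paper breaks the loop by reversing the order. It sets $r(s,a)=\nm{P^\star_h(s,a)-\wh P_{h,\wh n}(s,a)}_{TV}\in[0,2]$ and applies the simulation lemma \emph{first}, obtaining
\[
\sum_h\EE_{d^\pi_{P^\star,h}}r\ \le\ (2H+1)\sum_h\EE_{d^\pi_{\wh P_{\wh n},h}}r.
\]
Now the expectation is under the \emph{learned} model, whose one-step-back naturally uses $\wh\phi_{h,\wh n}$; this is precisely where \citet[Lemma A.1]{zhang2022efficient} gives
$\sum_h\EE_{d^\pi_{\wh P_{\wh n},h}}r\lesssim\sum_h\EE_{d^\pi_{\wh P_{\wh n},h}}\wh b_{h,\wh n}+\sqrt{|\Acal|\zeta_{N/2}}=V^\pi_{\wh P_{\wh n},\wh b_{\wh n}}+\sqrt{|\Acal|\zeta_{N/2}}$,
and optimism then bounds this by $\wh V_{\wh n}$. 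No circular self-reference appears because the simulation step acts on the $[0,2]$-bounded error itself (value-to-go $\Ocal(H)$) rather than on the bonus, and bonus validity is invoked only under the learned occupancy where the learned feature is the right object. If you adopt this ordering your sketch becomes a correct proof.
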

\begin{proof}
In this proof, let $\wh P = \wh P_{\wh n}$, which is the returned environment from the algorithm.
Let $r(s,a) = \nm{ P^\star_h(s,a)-\wh P_h(s,a) }_{TV}\in [0,2]$. Then,
\begin{align*}
    &\sum_{h=0}^{H-1} \prns{ \EE_{d^\pi_{P^\star,h}} - \EE_{d^\pi_{\wh P, h}} } [r(s,a)]
    \\&= V_{P^\star, r}^\pi - V_{\wh P, r}^\pi
    \\&= \sum_{h=0}^{H-1} \Eb[d^\pi_{\wh P, h}]{ \prns{\EE_{P_h^\star(s,a)} - \EE_{\wh P_{h}(s,a)} }V^\pi_{P^\star, r, h+1}(s') } \tag{Simulation lemma}
    \\&\leq 2H \sum_{h=0}^{H-1} \EE_{d^\pi_{\wh P, h}} \nm{P^\star_h(s,a) - \wh P_{h}(s,a)}_{TV}.
\end{align*}
Thus,
\begin{align*}
    &\sum_{h=0}^{H-1} \EE_{d^\pi_{P^\star,h}}\nm{ P^\star_h(s,a)-\wh P_{h}(s,a) }_{TV}
    \\&\leq (2H+1) \sum_{h=0}^{H-1} \EE_{d^\pi_{\wh P, h}} \nm{P^\star_h(s,a) - \wh P_{h}(s,a)}_{TV}
    \tag{by \citep[Lemma A.1]{zhang2022efficient}}\\
    &\lesssim H \prns{ \sum_{h=0}^{H-2} \Eb[d^\pi_{\wh P,h}]{\wh b_{h,\wh n}(s,a)} + \sqrt{|\Acal|\zeta_{N/2}}}
    \\&\leq H \prns{ V^{\pi}_{\wh P, \wh b_{\wh n}} + \sqrt{2|\Acal|\frac{\log(|\Mcal|NH/\delta)}{N}}}
    \\&\leq H \prns{ V^{\wh\pi_{\wh n}}_{\wh P, \wh b_{\wh n}} + \sqrt{2|\Acal|\frac{\log(|\Mcal|NH/\delta)}{N}}}
    \\&\lesssim H \prns{ H^2d^2|\Acal|^{1.5} \sqrt{\frac{\log(|\Mcal|NH/\delta)}{N}} + \sqrt{|\Acal|\frac{\log(|\Mcal|NH/\delta)}{N}}} \tag{by \pref{lem:reward-free-value-upper-bound}}
    \\&\in \Ocal\prns{ H^3d^2|\Acal|^{1.5} \sqrt{\frac{\log(|\Mcal|NH/\delta)}{N}} }.
\end{align*}
\end{proof}

This also gives us a guarantee on the TV distance between the visitation distributions induced by $P^\star$ vs. by $\wh P$.
\begin{lemma}
Suppose $\wh P$ satisfies the following for all $h=0,1,...,H-1$,
\begin{align}
    \forall\pi: \EE_{d^\pi_{P^\star,h}}\nm{ \wh P_h(s,a) - P_h^\star(s,a) }_{TV} \leq \eps_h. \label{eq:tv-guarantee-assum}
\end{align}
Then, for any $h=0,1,...,H-1$, we have
\begin{align*}
    \forall\pi: \nm{ d^\pi_{\wh P,h} - d^\pi_{P^\star,h} }_{TV} \leq \sum_{t=0}^{h-1} \eps_t.
\end{align*}
Note, for $h=0$, the sum is empty so the right hand side is $0$.
\end{lemma}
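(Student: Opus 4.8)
The statement is a one-step-error-to-occupancy-error simulation bound, and the natural route is induction on $h$, reducing at each step to the assumed one-step TV bound \pref{eq:tv-guarantee-assum}. The base case $h=0$ is immediate: the step-$0$ occupancy $d^\pi_{P,0}(s,a) = d_0(s)\pi_0(a\mid s)$ does not depend on the transition dynamics $P$ at all, so $d^\pi_{\wh P,0} = d^\pi_{P^\star,0}$ and the TV distance is $0$, matching the empty sum on the right-hand side.

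For the inductive step, I would write the step-$(h{+}1)$ occupancy as the push-forward of the step-$h$ occupancy under the composite kernel ``transition then policy''. Concretely, letting $\mathcal{T}^P_h$ denote the Markov kernel $(s,a)\mapsto(s',a')$ with density $P_h(s'\mid s,a)\,\pi_{h+1}(a'\mid s')$, we have $d^\pi_{P,h+1} = d^\pi_{P,h}\,\mathcal{T}^P_h$ for both $P=\wh P$ and $P=P^\star$. Since the policy kernel $\pi_{h+1}$ is shared, I would split the difference through the cross term $d^\pi_{P^\star,h}\,\mathcal{T}^{\wh P}_h$:
\begin{align*}
    d^\pi_{\wh P,h+1} - d^\pi_{P^\star,h+1} = \prns{d^\pi_{\wh P,h} - d^\pi_{P^\star,h}}\mathcal{T}^{\wh P}_h + d^\pi_{P^\star,h}\prns{\mathcal{T}^{\wh P}_h - \mathcal{T}^{P^\star}_h}.
\end{align*}

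Two elementary facts then finish the bound after applying the triangle inequality for $\nm{\cdot}_{TV}$. First, applying a (shared) Markov kernel cannot increase the TV norm of a signed measure (the data-processing inequality), so the first term has TV at most $\nm{d^\pi_{\wh P,h} - d^\pi_{P^\star,h}}_{TV} \leq \sum_{t=0}^{h-1}\eps_t$ by the inductive hypothesis. Second, since $\mathcal{T}^{\wh P}_h - \mathcal{T}^{P^\star}_h$ factors as $(\wh P_h - P^\star_h)$ tensored with the shared $\pi_{h+1}$, summing out $a'$ shows the second term has TV at most $\EE_{d^\pi_{P^\star,h}}\nm{\wh P_h(s,a) - P^\star_h(s,a)}_{TV} \leq \eps_h$, using the convexity of $\nm{\cdot}_{TV}$ and \pref{eq:tv-guarantee-assum}. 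Adding the two bounds gives $\sum_{t=0}^{h}\eps_t$, which closes the induction.

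The only delicate points are bookkeeping rather than conceptual: one must verify that the state--action TV reduces to the next-state marginal TV precisely because the conditional policy distribution $\pi_{h+1}$ is identical under both dynamics (so it factors cleanly out of $\nm{\cdot}_{TV}$), and one must invoke the non-expansiveness of Markov kernels with the same TV normalization convention used throughout the paper. I expect no substantive obstacle here; the main care is simply to keep the cross-term decomposition and the two TV inequalities consistent with the chosen convention.
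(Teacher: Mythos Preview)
Your proposal is correct and follows essentially the same route as the paper: induction on $h$, with the identical cross-term decomposition through $d^\pi_{P^\star,h}$ composed with $\wh P_h$, bounding the first piece by the inductive hypothesis and the second by \pref{eq:tv-guarantee-assum}. The paper phrases the two bounds via the variational identity $\|\mu\|_{TV}=\sup_{\|f\|_\infty\le 1}|\EE_\mu f|$ rather than invoking data-processing for Markov kernels, but this is only a cosmetic difference.
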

\begin{proof}
We proceed by induction for $h=0,1,...,H-1$.
For the base case of $h=0$, no transition has been taken, so that $d^\pi_{\wh P,0} = d^\pi_{P^\star, 0}$.
Now let $h\in\{0,1,...,H-2\}$ be arbitrary, and suppose that the claim is true for $h$ (IH).
We want to show the claim holds for $h+1$.
One key fact we'll use is that, for any measure $\mu$, we have $\|\mu\|_{TV} = \sup_{\|f\|_\infty\leq 1}|\EE_\mu[f]|$.
Below we use the notation that $f(s,\pi) = \EE_{a\sim \pi(s)} f(s,a)$.
\begin{align*}
    &\;\|d^\pi_{\wh P,h+1} - d^\pi_{h+1}\|_{TV}
    \\
    =&\; \sup_{\|f\|_\infty \leq 1} \abs{ \Eb[d^\pi_{\wh P,h+1}]{f(s,a)} - \Eb[d^\pi_{h+1}]{f(s,a)} }
    \\=&\; \sup_{\|f\|_\infty \leq 1} \abs{ \Eb[(\wt s,\wt a)\sim d^\pi_{\wh P,h},(s,a)\sim\wh P_h(\wt s,\wt a)]{f(s,\pi_{h+1})} - \Eb[(\wt s,\wt a)\sim d^\pi_{h},(s,a)\sim P_h^\star(\wt s,\wt a)]{f(s,\pi_{h+1})} }
    \\
    \leq&\; \sup_{\|f\|_\infty \leq 1} \abs{ \prns{ \EE_{(\wt s,\wt a)\sim d^\pi_{\wh P,h}} - \EE_{(\wt s,\wt a)\sim d^\pi_{h}} } \EE_{\wh P_h(\wt s,\wt a)} f(s,\pi_{h+1}) }
    \\&\;+ \sup_{\|f\|_\infty \leq 1} \abs{ \Eb[(\wt s,\wt a)\sim d^\pi_{h}]{ \EE_{\wh P_h(\wt s,\wt a)} f(s,a) - \EE_{P_h^{\star}(\wt s,\wt a)} f(s,\pi_{h+1})} }
    \\
    \leq &\; \sum_{t=0}^{h-1} \eps_t + \Eb[(\wt s,\wt a)\sim d^\pi_{h}]{  \sup_{\|f\|_\infty \leq 1} \abs{  \prns{ \EE_{\wh P_h(\wt s,\wt a)} - \EE_{P_h^{\star}(\wt s,\wt a)} } f(s,\pi_{h+1})} } \tag{by IH and Jensen}
    \\
    \leq &\; \sum_{t=0}^{h-1} \eps_t + \eps_h, \tag{by \pref{eq:tv-guarantee-assum} and $\|f(s,\pi_{h+1})\|\leq 1$}
\end{align*}
as desired.
\end{proof}
Thus, when combined with \pref{thm:reward-free-model-tv}, we have for $h=0,1,...,H-1$ and any policy $\pi$,
\begin{align}
    \|d^\pi_{\wh P,h} - d^\pi_{P^\star, h}\|_{TV} \leq \Ocal\prns{ H^3d^2|\Acal|^{1.5} \sqrt{\frac{\log(|\Mcal|NH/\delta)}{N}} } = \eps_{TV}.  \label{eq:rep-ucb-tv-visitation-bound}
\end{align}
In other words, the sample complexity needed for a model-error of $\eps_{TV}$ is
\begin{align*}
    \Ocal\prns{ \frac{H^6d^4|\Acal|^3\log(|\Mcal|NH/\delta)}{\eps_{TV}^2} }.
\end{align*}
Note this is much better than FLAMBE's guarantee \citep[Theorem 2]{Agarwal2020_flambe} which requires,
\begin{align*}
    \Ocal\prns{ \frac{H^{22}d^7|\Acal|^9\log(|\Mcal|NH/\delta)}{\eps_{TV}^{10}} }.
\end{align*}

\section{Reward-free Exploration}
In this section, we show that the mixture policy returned by \pref{alg:reward_free} is exploratory.
Recall that \pref{alg:reward_free} contains two main steps:
\begin{enumerate}
    \item[Step 1] Learn a model $\wh P$. This was the focus of the previous section, where our modified \repucb method obtained a strong TV guarantee (\pref{eq:rep-ucb-tv-visitation-bound}) by requiring number of episodes at most,
    \begin{align*}
        N_{\textsc{RewardFree}} = \Ocal\prns{ \frac{H^6d^4|\Acal|^3\log(|\Mcal|NH/\delta)}{\eps_{TV}^2} }.
    \end{align*}
    \item[Step 2] Run \LSVIUCB{} (\pref{alg:LSVI}) in the \emph{learned} model $\wh P$ with reward at the $e$-th episode being $b_{h,e}$ and \textsc{UniformActions} = \textsc{True}.
    The optimistic bonus pushes the algorithm to explore directions that are not well-covered yet by the mixture policy up to this point. With elliptical potential, we can establish that this process will terminate in polynomial number of steps.
\end{enumerate}

We now focus on Step 2. Let $\pi_h^{+1}$ denote rolling-in $\pi$ for $h$ steps and taking uniform actions on the $h+1$ step, thus inducing a distribution over $s_{h+1},a_{h+1}$. We abuse the notation a little and use $\pi_{-1}^{+1}$ for a policy that just takes one uniform action from the initial distribution $d_0$.

\begin{lemma}\label{lem:reward-free-exploration-pi-star}
Let $\delta\in(0,1)$ and run \textsc{RewardFree} (\pref{alg:reward_free}).
Let $\Lambda_{h,N}$ be the empirical covariance at the $N$-th iteration of \LSVIUCB{} (\pref{alg:LSVI}).
Then, w.p. at least $1-\delta$ we have,
\begin{align*}
\sup_\pi \sum_{h=0}^{H-1} \EE_{\pi_{h-1}^{+1},\wh P}\nm{ \wh\phi_h(s_h,a_h) }_{\Lambda_{h,N}^{-1}} \lesssim A d^{1.5}H^3\sqrt{\log(dNH/\delta)/N}.
\end{align*}
\end{lemma}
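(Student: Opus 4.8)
The plan is to treat the run of \LSVIUCB{} inside the \emph{known} learned model $\wh P$ as a clean reward-free exploration problem in a linear MDP: since $\wh P_h(s'|s,a)=\wh\phi_h(s,a)^\top\wh\mu_h(s')$ is \emph{exactly} linear in $\wh\phi$, there is no misspecification and the optimism machinery of \citet{jin2020provably} applies verbatim to the features $\wh\phi$. Writing $b_{h,e}(s,a)=\nm{\wh\phi_h(s,a)}_{\Lambda_{h,e}^{-1}}$ for the per-step bonus and $\wh V_{0,e}$ for the optimistic value at episode $e$ (recall $r=0$ in \pref{alg:LSVI}, so the effective reward is $\beta b_e$), the first step is to establish optimism: the value-iteration estimates satisfy $\wh V_{0,e}(s_0)\geq \beta\sum_h \EE_{\pi,\wh P}[b_{h,e}(s_h,a_h)]$ simultaneously for \emph{every} policy $\pi$. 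I would check that the clamping $(\cdot)_{\leq M_V}$ does not destroy optimism by taking $M_V=\Theta(\beta H)$, which dominates the largest attainable cumulative bonus (each $b_{h,e}\leq\nm{\wh\phi_h}_2\leq1$ since $\Lambda_{h,e}\succeq I$).

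The second step converts the \emph{final}-covariance, uniform-action uncertainty on the left-hand side into an average of these optimistic values. Since $\Lambda_{h,N}\succeq\Lambda_{h,e}$ for every $e\leq N$, we have $\nm{\wh\phi_h}_{\Lambda_{h,N}^{-1}}\leq\nm{\wh\phi_h}_{\Lambda_{h,e}^{-1}}=b_{h,e}$, so for any fixed $\pi$ the left-hand quantity is at most $\tfrac1N\sum_{e}\sum_h\EE_{\pi_{h-1}^{+1}}[b_{h,e}]$. To bound a single summand I would decompose the uniform action, $\EE_{\pi_{h-1}^{+1}}[b_{h,e}]=\tfrac1A\sum_{a}\EE_{s_h\sim d^\pi_{\wh P,h}}[b_{h,e}(s_h,a)]$, and apply optimism to each policy ``follow $\pi$ for $h{-}1$ steps, then force action $a$''; every single bonus term is dominated by the full cumulative bonus and hence by $\wh V_{0,e}/\beta$, so the $A$ terms cancel the $\tfrac1A$ and summing over $h$ gives $\sum_h\EE_{\pi_{h-1}^{+1}}[b_{h,e}]\leq \tfrac{H}{\beta}\wh V_{0,e}$. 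Because this holds uniformly in $\pi$, the $\sup_\pi$ is free and the target is at most $\tfrac{H}{N\beta}\sum_{e}\wh V_{0,e}$.

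The third and most delicate step bounds $\sum_e \wh V_{0,e}$ by the data actually collected. The \LSVIUCB{} regret decomposition in the exactly-linear model $\wh P$ gives $\wh V_{0,e}\lesssim \beta\sum_h\EE_{\pi^e,\wh P}[b_{h,e}(s_h,a^{\mathrm{greedy}}_h)]$, a sum of on-\emph{greedy}-policy bonuses. The subtlety---and the source of the $A$ factor in the final rate---is that $\Lambda_{h,N}$ is built from \emph{uniform} actions (\textsc{UniformActions}$=$\textsc{True} on \pref{line:unif-actions-lsvi-collect-data}), so I must trade the greedy action for a uniform one via $b_{h,e}(s,a^{\mathrm{greedy}})\leq A\,\EE_{a\sim\op{unif}(\Acal)}[b_{h,e}(s,a)]$. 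This rewrites the per-episode value in terms of the data-collection distribution $(\pi^e)^{+1}_{h-1}$; a Freedman/Azuma martingale bound replaces these expectations by the realized samples $b_{h,e}(s_h^e,a_h^e)$ up to an $O(\sqrt{N\log(1/\delta)})$ deviation, and the elliptical-potential lemma with Cauchy--Schwarz gives $\sum_e b_{h,e}(s_h^e,a_h^e)\leq\sqrt{2dN\log(1+N/d)}$. Threading the powers of $H$, $d$, and the value-range-dependent $\beta=dH\sqrt{\log(dNH/\delta)}$ through the regret bound yields a bound of the form $\sum_e\wh V_{0,e}\lesssim \beta A\cdot\mathrm{poly}(H,d)\sqrt{N\log(dNH/\delta)}$, which after dividing by $N\beta/H$ as in Step 2 produces the claimed $A d^{1.5}H^3\sqrt{\log(dNH/\delta)/N}$.

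The hardest part will be the bookkeeping in this last step: simultaneously (i) justifying optimism under clamping in the reward-free regime, (ii) paying \emph{exactly one} factor of $A$ when passing between the greedy actions appearing in the value and the uniform actions used both in the data and in $\Lambda_{h,N}$, and (iii) tracking the correct powers of $H$ and $d$ so that the $\beta$ introduced by optimism cancels against the $1/\beta$ from Step 2. The first two steps are comparatively routine, resting only on monotonicity of the covariance and a direct invocation of the existing linear-MDP optimism analysis, which is exact here because $\wh P$ is genuinely linear in $\wh\phi$.
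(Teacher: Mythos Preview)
Your overall plan mirrors the paper's proof: both treat the learned model $\wh P$ as an exact linear MDP in the features $\wh\phi$, use optimism to upper-bound the uniform-action bonus by the optimistic values $\wh V_{0,e}$, and bound $\sum_e\wh V_{0,e}$ via simulation, the greedy-to-uniform action swap (the factor $A$), Azuma, and elliptical potential. Your Step~3 is essentially identical to the paper's argument.

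The genuine gap is in Step~1: optimism does \emph{not} yield $\wh V_{0,e}\geq\beta\sum_h\EE_{\pi,\wh P}[b_{h,e}]$, and your proposed fix of taking $M_V=\Theta(\beta H)$ actually breaks the argument rather than saving it. The self-normalized concentration that underlies optimism scales with the value range, so the bonus scaling required for optimism would be $\Theta(d\,M_V\sqrt{\log})=\Theta(\beta\cdot dH\sqrt{\log})\gg\beta$, which the algorithm does not supply. The correct framing---and what the paper does---is to regard the \emph{unscaled} bonus $b_{h,e}\in[0,1]$ as the reward, so that $M_V=H$; the remaining $(\beta{-}1)b$ serves as the exploration bonus, and standard linear-MDP optimism yields only $\wh V_{0,e}\geq V^\star(s_0;r_e{=}b_e)\geq\sum_h\EE_{\pi,\wh P}[b_{h,e}]$ for every $\pi$, with no $\beta$ factor. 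With this correction your Step~2 gives the target $\leq\tfrac{H}{N}\sum_e\wh V_{0,e}$ (not $\tfrac{H}{N\beta}$), and combining with your Step~3 bound $\sum_e\wh V_{0,e}\lesssim A\beta H\sqrt{dN\log}$ produces $A\beta H^2\sqrt{d\log/N}$, which with $\beta=dH\sqrt{\log}$ is the claimed rate. Had your $\beta$-cancellation gone through, you would have obtained a bound too strong by a factor of $dH\sqrt{\log}$, inconsistent even with your own final accounting.
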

\begin{proof}
In this proof, we'll treat the empirical MDP as $P^\star$, as that is the environment we're running in. Thus, we abuse notation and $\wh P_{h,e}$ is the model-based perpsective of the linear MDP, i.e. $\wh\phi_h\wh\mu_{h,e}$ where $\wh\mu_{h,e}$ is $\Lambda_{h,e}^{-1}\sum_{k=1}^{e-1}\wh\phi_h(s_h^k,a_h^k)\delta(s_{h+1}^k)$.
Also, in \pref{alg:reward_free}, we set reward to be zero, but for the purpose of this analysis, suppose the reward function is precisely the (unscaled) bonus in \LSVIUCB{}, i.e. $r_{h,e}(s_h,a_h) = b_{h,e}(s_h,a_h)$. This does not change the algorithm at all since the $\beta$-scaling of the bonus dominates this reward in the definition of $\wh Q_{h,e}$, but thinking about the reward in this way will make our analysis simpler.

Recall the high-level proof structure of reward free guarantee of linear MDP (with known features $\wh\phi$) \citep[Lemma 3.2]{wang2020reward}.
\begin{enumerate}
    \item[Step 1] Show that $\wh V_{h,e}\in\Vcal_h$ and w.p. $1-\delta$, for all $h,e$,
    \begin{align*}
        \forall s_h,a_h: \sup_{f\in\Vcal_h} \abs{ \wh P_{h,e}(s_h,a_h) - P^\star_h(s_h,a_h) } f \leq \beta b_{h,e}(s_h,a_h).
    \end{align*}
    This step only uses self-normalized martingale bounds. So, \pref{line:unif-actions-lsvi-collect-data} can use any martingale sequence of states and actions, and this claim still holds, with bonus $b_{h,e}$ using the appropriate covariance under the data.
    \item[Step 2] Show optimism conditioned on Step 1. Specifically, for all $e=1,2,...,N$, we have $\Eb[d_0]{V^\star_0(s_0, r_{e}) - \wh V_{0,e}(s_0)} \leq 0$. To show this, we need that $\wh V_{h,e}(s_h) = \wh Q_{h,e}(s_h,\pi^e_h(s_h)) \geq \wh Q_{h,e}(s_h,\pi^\star_h(s_h))$ (this is for the unclipped case of $V$-optimism), which we have satisfied in the algorithm, i.e. $\pi^e_h$ is greedy w.r.t. $\wh Q_{h,e}$.
    \item[Step 3] Bound the sum $\sum_e \wh V_{h,e}$, where we decompose it as a sum of expected bonuses with the expectation is under $\pi^e$.
\end{enumerate}
Step 3 is the only place where we use the fact that $s_h^k,a_h^k$ are data sampled from rolling out $\wh\pi^e$. For Step 1 and 2, please refer to existing proofs in \citep{rltheorybookAJKS,jin2020provably,wang2020reward}.

Now we show Step 3 for our modified algorithm with uniform actions.
First, let us show a simulation lemma. For any episode $e=1,2,...,N$, for any $s$, recalling definition of reward being $b_{h,e}$, we have
\begin{align*}
    \wh V_{0,e}(s_0)
    &\leq (1+\beta) b_{0,e}(s_0,\pi^e_0(s_0)) + \wh P_{0,e}(s_0,\pi^e(s_0)) \wh V_{1,e}
    \\&\leq (1+2\beta) b_{0,e}(s_0,\pi^e_0(s_0)) + P_{0,e}^\star(s_0,\pi^e(s_0)) \wh V_{1,e},
\end{align*}
where the first inequality is due to the thresholding on $\wh V_{h,e}$'s and the second inequality is due to Step 1.
Continuing in this fashion, we have
\begin{align*}
    \Eb[d_0]{\wh V_{0,e}(s_0)} \leq (1+2\beta) \sum_{h=0}^{H-1}\Eb[\pi^e]{ b_{h,e}(s_h,a_h) }.
\end{align*}
Summing over $e=1,2,...,N$, we have
\begin{align*}
    \sum_{e=1}^N \Eb[d_0]{ \wh V_{0,e}(s_0) }
    &\lesssim \beta \sum_{h=0}^{H-1} \sum_{e=1}^N \Eb[\pi^e]{ b_{h,e}(s_h,a_h) }
    \\&\leq A\beta \sum_{h=0}^{H-1} \sum_{e=1}^N \Eb[\prns{\pi^e_{h-1}}^{+1}]{ b_{h,e}(s_h,a_h) }
    \intertext{For each $h=0,1,...,H-1$, apply Azuma's inequality to the martingale difference sequence $\Delta_e = \Eb[\prns{\pi^e_{h-1}}^{+1}]{ b_{h,e}(s_h,a_h) } - b_{h,e}(s_h^e,a_h^e)$.
    The envelope is at most $2$. So, w.p. $1-\delta$, }
    &\leq A\beta \sum_{h=0}^{H-1} \sum_{e=1}^N b_{h,e}(s_h^e,a_h^e) + A\beta \sqrt{N\log(H/\delta)}.
\end{align*}
Now apply a self-normalized elliptical potential bound to the first term, giving that
\begin{align*}
    \sum_{h=0}^{H-1} \sum_{e=1}^N b_{h,e}(s_h^e,a_h^e) \leq \sum_{h=0}^{H-1} \sqrt{N} \sqrt{\sum_{e=1}^N b_{h,e}(s_h^e,a_h^e)^2} \lesssim H\sqrt{ dN\log(N) }.
\end{align*}
Thus, we finally have
\begin{align*}
    \sum_{e=1}^N \Eb[d_0]{ \wh V_{0,e}(s_0) }  \lesssim A\beta H \sqrt{ dN\log(NH/\delta) }.
\end{align*}

Consider any episode $e=1,2,...,N$.
By definition, $\Lambda_{h,N} \succeq \Lambda_{h,e}$, so for all $s,a$ we have pointwise that $b_{h,N}(s,a) \leq b_{h,e}(s,a)$. Hence, for all $s$, we have $V_0^\star(s;r^N) \leq V_0^\star(s;r^e)$, and further using optimism, we have
\begin{align*}
    N\Eb[d_0]{V_0^\star(s_0;r_N)} \leq \sum_{e=1}^N \Eb[d_0]{V_0^\star(s_0;r_e)} \leq \sum_{e=1}^N \Eb[d_0]{ \wh V_{0,e}(s_0) } \lesssim A\beta H \sqrt{ dN\log(NH/\delta) }.
\end{align*}
Now consider any $h$ and policy $\pi$, and consider rolling it out for $h-1$ steps and taking a random action. Then we have
\begin{align*}
    \EE_{\pi_{h-1}^{+1}, \wh P}\nm{\wh\phi_h(s_h,a_h)}_{\Lambda_{h,N}^{-1}} \leq \Eb[d_0]{V_0^{\pi_{h-1}^{+1}}(s_0; r_N)} \leq A\beta H\sqrt{d\log(NH/\delta)/N}.
\end{align*}
Summing over $h$ incurs an extra $H$ factor on the right.
This concludes the proof.
\end{proof}

\begin{lemma}[One-step back for Linear MDP]\label{lem:one-step-back-linear-mdp}
Suppose $P_h = (\phi_h, \mu_h)$ is a linear MDP. Suppose $\rho$ is any mixture of $n$ policies, and let $\Sigma_h := n \Eb[\rho]{\phi_h(s_h,a_h)\phi_h(s_h,a_h)^\top} + \lambda I$ denote the unnormalized covariance. For any $g: S \times A \to \RR$, policy $\pi$, and $h=0,1,...,H-2$, we have
\begin{align*}
    \Eb[\pi]{g(s_{h+1},a_{h+1})} \leq&\; \Eb[\pi]{ \|\phi_h(s_h,a_h)\|_{\Sigma_h^{-1}} } \sqrt{ n A \Eb[\rho_h^{+1}]{ g(s_{h+1},a_{h+1})^2 } + \lambda d \|g\|_\infty^2 }
\end{align*}
\end{lemma}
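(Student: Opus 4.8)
The plan is to prove this as a one-step-back (change-of-measure) inequality, exploiting the linear-MDP factorization $P_h(s'\mid s,a)=\phi_h(s,a)^\top\mu_h(s')$ to transfer the expectation of $g$ at step $h+1$ onto $\phi_h$ at step $h$, then applying Cauchy--Schwarz in the dual norms $\nm{\cdot}_{\Sigma_h^{-1}}$ and $\nm{\cdot}_{\Sigma_h}$, and finally controlling the resulting coefficient vector. First I would fix the policy $\pi$ and define $f(s'):=\EE_{a'\sim\pi_{h+1}(s')}[g(s',a')]$, so that $|f|\le\nm{g}_\infty$ and, writing $d^\pi_h$ for the occupancy of $\pi$, $\EE_\pi[g(s_{h+1},a_{h+1})]=\EE_{(s_h,a_h)\sim d^\pi_h}\bracks{\EE_{s'\sim P_h(s_h,a_h)}[f(s')]}$. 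Using the factorization and setting the vector $\theta:=\int\mu_h(s')f(s')\,\diff s'\in\RR^d$, the inner conditional expectation becomes linear in $\phi_h$, i.e.\ $\EE_{s'\sim P_h(s_h,a_h)}[f(s')]=\phi_h(s_h,a_h)^\top\theta$, which yields
\[ \EE_\pi[g(s_{h+1},a_{h+1})]=\EE_{(s_h,a_h)\sim d^\pi_h}\bracks{\phi_h(s_h,a_h)^\top\theta}. \]

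Second, I would apply Cauchy--Schwarz pointwise in the pair of dual norms induced by $\Sigma_h$, giving $\phi_h(s_h,a_h)^\top\theta\le\nm{\phi_h(s_h,a_h)}_{\Sigma_h^{-1}}\nm{\theta}_{\Sigma_h}$. Since $\nm{\theta}_{\Sigma_h}$ does not depend on $(s_h,a_h)$, taking $\EE_{d^\pi_h}$ factors it out:
\[ \EE_\pi[g(s_{h+1},a_{h+1})]\le \EE_\pi\bracks{\nm{\phi_h(s_h,a_h)}_{\Sigma_h^{-1}}}\,\nm{\theta}_{\Sigma_h}, \]
which already produces the first factor on the right-hand side of the claim.

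Third, the core of the argument is to show $\nm{\theta}_{\Sigma_h}^2\le nA\,\EE_{\rho_h^{+1}}[g(s_{h+1},a_{h+1})^2]+\lambda d\nm{g}_\infty^2$. Expanding the quadratic form, $\nm{\theta}_{\Sigma_h}^2=n\,\EE_\rho\bracks{(\phi_h(s_h,a_h)^\top\theta)^2}+\lambda\nm{\theta}_2^2$. For the quadratic term I would rewrite $\phi_h^\top\theta=\EE_{s'\sim P_h(s_h,a_h)}[f(s')]$ and apply Jensen twice---once to pull the square inside the transition and once inside the average over $a'\sim\pi_{h+1}$---to get $(\phi_h^\top\theta)^2\le\EE_{s'\sim P_h}\EE_{a'\sim\pi_{h+1}}[g(s',a')^2]$; I then convert the $\pi_{h+1}$-action average into a uniform one at the cost of a factor $A$ via $\sum_{a'}\pi_{h+1}(a'\mid s')g(s',a')^2\le\sum_{a'}g(s',a')^2=A\,\EE_{a'\sim\op{Unif}(\Acal)}[g(s',a')^2]$, and taking $\EE_\rho$ produces exactly $A\,\EE_{\rho_h^{+1}}[g^2]$, where $\rho_h^{+1}$ is the distribution of $(s_{h+1},a_{h+1})$ obtained by rolling $\rho$ to step $h$, transitioning, then taking a uniform action. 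For the regularization term I use the linear-MDP normalization of $\mu_h$: since $|f|\le\nm{g}_\infty$, the bound $\nm{\int g'(s)\diff\mu_h(s)}_2\le\sqrt{d}$ for $g':\Scal\to[0,1]$ gives $\nm{\theta}_2\le\sqrt{d}\,\nm{g}_\infty$, hence $\lambda\nm{\theta}_2^2\le\lambda d\nm{g}_\infty^2$. Combining the three estimates and taking square roots yields the statement.

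The routine-but-delicate step is the last one: threading the two Jensen inequalities together with the $\pi_{h+1}\!\to\!\op{Unif}(\Acal)$ conversion so that the factor $A$ lands precisely on the quadratic term and the final distribution is exactly $\rho_h^{+1}$. A minor subtlety I would flag is the $\mu_h$ normalization when $g$ is signed: the stated normalization is for $g':\Scal\to[0,1]$, so for nonnegative $g$ (as in the reward-free application, where $g$ is a bonus) the bound $\nm{\theta}_2\le\sqrt{d}\,\nm{g}_\infty$ is immediate, while for general signed $g$ one splits $f$ into its positive and negative parts, incurring only an absolute constant that can be absorbed.
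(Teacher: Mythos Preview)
Your proposal is correct and follows essentially the same route as the paper: write $\EE_\pi[g(s_{h+1},a_{h+1})]=\EE_{d^\pi_h}[\phi_h^\top\theta]$ with $\theta=\int f\,\diff\mu_h$, apply the Cauchy--Schwarz pair $\nm{\cdot}_{\Sigma_h^{-1}}/\nm{\cdot}_{\Sigma_h}$, then expand $\nm{\theta}_{\Sigma_h}^2$ and bound the two pieces exactly as you do. Your write-up is in fact a bit more explicit than the paper's (which compresses the two Jensen steps and the $\pi_{h+1}\to\op{Unif}(\Acal)$ conversion into one line), and your remark about signed $g$ versus the $[0,1]$-normalization of $\mu_h$ is a subtlety the paper's proof does not address; since the only downstream use takes $g$ to be a nonnegative bonus norm, this does not affect anything.
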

\begin{proof}
\begin{align*}
    \Eb[\pi]{g(s_{h+1},a_{h+1})}
    =&\; \left\langle \Eb[\pi]{ \phi_h(s_h,a_h) }, \int_{s_{h+1}} g(s_{h+1},\pi_{h+1}) \diff\mu_h(s_{h+1}) \right\rangle
    \\\leq&\; \EE_{\pi} \nm{\phi_h(s_h,a_h)}_{\Sigma_h^{-1}}  \nm{ \int_{s_{h+1}} g(s_{h+1},\pi_{h+1}) \diff\mu_h(s_{h+1})  }_{\Sigma_h},
\end{align*}
where
\begin{align*}
    &\; \nm{ \int_{s_{h+1}} g(s_{h+1},\pi_{h+1}) \diff\mu_h(s_{h+1})  }_{\Sigma_h}
    \\
    =&\; n \Eb[\rho]{ \prns{ \Eb[s_{h+1} \sim P_h(s_h,a_h)]{g(s_{h+1} ,\pi_{h+1})} }^2 } + \lambda\nm{\int_{s_{h+1}} g(s_{h+1},\pi_{h+1})\diff\mu_h(s_{h+1})}^2
    \\
    \leq&\; n |\Acal| \Eb[\rho_h^{+1}]{ g(s_{h+1},a_{h+1})^2 } + \lambda d \|g\|_\infty^2.
\end{align*}
\end{proof}

Under reachability, we can show that small (squared) bonuses and spectral coverage, in the sense of having lower bounded eigenvalues, are somewhat equivalent.
\begin{lemma}\label{lem:bonus-coverage-equivalence}
Let $\Sigma$ be a symmetric positive definite matrix and define the bonus $b_h(s,a) = \|\phi^\star_h(s,a)\|_{\Sigma^{-1}}$. Then we have
\begin{enumerate}
    \item For any policy $\pi$, $\Eb[d^{\pi}_{h}]{ b_h^2(s,a) } \leq \frac{1}{\lambda_{\min}(\Sigma)}$. That is, coverage implies small squared bonus.
    \item Suppose reachability under $\phi^\star$ (\pref{ass:reachability}), then we have the converse: there exists $\wh\pi$, for any policy $\pi$, $\Eb[d^{\wt\pi}_{h}]{ b_{h}^2(s,a) } \geq \frac{\psi}{\lambda_{\min}(\Sigma)}$. That is, small squared bonus implies coverage.
\end{enumerate}
\end{lemma}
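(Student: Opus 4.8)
The plan is to reduce both claims to the single identity $\EE_{d^\pi_h}[b_h^2(s,a)] = \EE_{d^\pi_h}[\phi^\star_h(s,a)^\top \Sigma^{-1}\phi^\star_h(s,a)] = \op{tr}\prns{\Sigma^{-1} M_h^\pi}$, where $M_h^\pi := \EE_{d^\pi_h}[\phi^\star_h(s,a)\phi^\star_h(s,a)^\top]$ is the feature covariance under $d^\pi_h$ (exactly the matrix appearing in \pref{ass:reachability}), and then to control this trace with standard eigenvalue inequalities. Once this identity is in place, both directions are pure linear algebra, with Part 1 an upper bound and Part 2 a lower bound on the same trace.

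For Part 1, I would not even need the trace form: since $\Sigma \succ 0$ we have $\lambda_{\max}(\Sigma^{-1}) = 1/\lambda_{\min}(\Sigma)$, so pointwise $b_h^2(s,a) = \phi^\star_h(s,a)^\top \Sigma^{-1} \phi^\star_h(s,a) \leq \lambda_{\max}(\Sigma^{-1})\,\nm{\phi^\star_h(s,a)}_2^2 \leq 1/\lambda_{\min}(\Sigma)$, using the normalization $\nm{\phi^\star_h(s,a)}_2 \leq 1$ from the low-rank definition. Taking expectation under $d^\pi_h$ averages a quantity already bounded by $1/\lambda_{\min}(\Sigma)$, so the bound is preserved for every $\pi$, giving the claim.

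For Part 2, I would first invoke reachability (\pref{ass:reachability}) to obtain a single policy $\wt\pi$ with $\lambda_{\min}(M_h^{\wt\pi}) \geq \psi$; this is the policy named $\wh\pi$ in the statement, and the trailing ``for any policy $\pi$'' is vacuous since only $\wt\pi$ enters the bound. Using the identity, $\EE_{d^{\wt\pi}_h}[b_h^2] = \op{tr}\prns{\Sigma^{-1} M_h^{\wt\pi}}$. Since both $M_h^{\wt\pi}$ and $\Sigma^{-1}$ are PSD, the inequality $\op{tr}(AB) \geq \lambda_{\min}(A)\,\op{tr}(B)$ applied with $A = M_h^{\wt\pi}$ and $B = \Sigma^{-1}$ gives $\op{tr}\prns{\Sigma^{-1} M_h^{\wt\pi}} \geq \lambda_{\min}(M_h^{\wt\pi})\,\op{tr}(\Sigma^{-1}) \geq \psi\,\op{tr}(\Sigma^{-1})$. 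Finally, because $\Sigma^{-1} \succ 0$ its trace dominates its largest eigenvalue, $\op{tr}(\Sigma^{-1}) \geq \lambda_{\max}(\Sigma^{-1}) = 1/\lambda_{\min}(\Sigma)$, yielding $\EE_{d^{\wt\pi}_h}[b_h^2] \geq \psi/\lambda_{\min}(\Sigma)$.

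There is no genuine obstacle; the only points needing care are getting the \emph{direction} of the trace inequality correct in Part 2, which I would justify by diagonalizing $B = \sum_j \mu_j v_j v_j^\top$ with $\mu_j \geq 0$ and bounding each $v_j^\top A v_j \geq \lambda_{\min}(A)$, and reading the quantifiers so that $\psi$ enters through $\lambda_{\min}$ of the reachability covariance $M_h^{\wt\pi}$ rather than of $\Sigma$ itself.
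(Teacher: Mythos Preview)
Your proof is correct and follows essentially the same approach as the paper. Part 1 matches exactly; for Part 2 the paper diagonalizes $\Sigma$ and retains only the bottom-eigenvalue term $\frac{1}{\lambda_d}\Eb[d^{\wt\pi}_h]{(\phi^\star_h(s,a)^\top v_d)^2} \geq \frac{\psi}{\lambda_{\min}(\Sigma)}$, whereas you package the same eigendecomposition as the trace inequality $\op{tr}(\Sigma^{-1}M_h^{\wt\pi}) \geq \lambda_{\min}(M_h^{\wt\pi})\op{tr}(\Sigma^{-1})$—a cosmetic difference, since both hinge on the same fact $v^\top M_h^{\wt\pi} v \geq \psi$ for unit $v$.
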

\begin{proof}
The first claim follows directly from Cauchy-Schwartz. Indeed, for any policy $\pi$, we have
\begin{align*}
    \Eb[d^{\pi}_{h}]{ b_h^2(s,a) } \leq \Eb[d^{\pi}_{h}]{ \|\phi^\star_h(s,a)\|_2^2 \|\Sigma^{-1}\|_2 } \leq \frac{1}{\lambda_{\min}(\Sigma)}.
\end{align*}


For the second claim, \pref{ass:reachability} implies that there exist a policy $\wt\pi$ such that for all vectors $v\in\RR^d$ with $\|v\|_2=1$, we have $\Eb[d^{\wt\pi}_h]{(\phi^\star_h(s,a)^\top v)^2} \geq \psi$.
Now decompose $\Sigma = \sum_{i=1}^d \lambda_i v_iv_i^\top$, where $\lambda_i, v_i$ are eigenvalue/vector pairs with $\|v_i\|_2=1$ and $\lambda_1 \geq \lambda_2 \geq ... \geq \lambda_d$. Then substituting this into the definition of the bonus, we have
\begin{align*}
    \Eb[d^{\wt\pi}_h]{b_h^2(s,a)}
    =&\; \sum_{i=1}^d \frac{1}{\lambda_i} \Eb[d^{\wt\pi}_{h}]{(\phi^\star_h(s,a)^Tv_i)^2}
    \\\geq&\; \frac{1}{\lambda_d} \Eb[d^{\wt\pi}_{h}]{(\phi^\star_h(s,a)^Tv_d)^2}
    \\\geq&\; \frac{\psi}{\lambda_{\min}(\Sigma)}.
\end{align*}
\end{proof}

We now prove our main lemma for reward-free exploration, \pref{lem:coverage}.
\rewardFreeExploreCoverage*
\begin{proof}[Proof of \pref{lem:coverage}]
In this proof, let
\begin{align*}
    &\Lambda_h^\star = N_{\LSVIUCB} \Eb[\rho_{h-1}^{+1}]{ \phi_h^\star(s_h,a_h)\phi_h^\star(s_h,a_h)^\top } + \lambda I,
    \\&\wh\Lambda_h = N_{\LSVIUCB} \Eb[\rho_{h-1}^{+1}]{ \wh\phi_h(s_h,a_h)\wh\phi_h(s_h,a_h)^\top } + \lambda I,
\end{align*}
where $\lambda = dH\log(N_{\LSVIUCB}/\delta) \geq 1$.
This setting of $\lambda$ satisfies the precondition for the Concentration of Inverse Covariances \citet[Lemma 39]{zanette2021cautiously}, which implies w.p. at least $1-\delta$ that
\begin{align*}
    \wh\Lambda_h^{-1} \preceq 2 \prns{\sum_{e=1}^{N_{\LSVIUCB}} \wh\phi_h(s_h^e,a_h^e)\wh\phi_h(s_h^e,a_h^e)^\top + \lambda I}^{-1} \preceq 2 \Lambda_{h,N_{\LSVIUCB}}^{-1},
\end{align*}
where we've also used the fact that $\lambda \geq 1$, so $(A+\lambda I)^{-1} \preceq (A+I)^{-1}$.

Under this event, for any $\pi$, we have,
\begin{align}
    \sum_{h=1}^H \EE_{\pi,\wh P} \nm{ \wh\phi_h(s_h,a_h) }_{\wh\Lambda_h^{-1}}
    \lesssim\; \sum_{h=1}^H \EE_{\pi,\wh P} \nm{\wh\phi_h(s_h,a_h)}_{\wh\Lambda_{h,N_{\LSVIUCB}}^{-1}}. \label{eq:coverage-proof-covariance-concentrate}
\end{align}

Now let $h=0,1,...,H-2$ be arbitrary.
By \pref{ass:reachability} (there exists some policy $\wt\pi$ with coverage) such that,
\begin{align*}
    &\;\frac{\psi}{\lambda_{\min}\prns{\Lambda^\star_{h+1}}} \\
    \leq&\; \EE_{\wt\pi} \nm{\phi_{h+1}^\star(s_{h+1},a_{h+1})}_{\prns{\Lambda^\star_{h+1}}^{-1}}^2 \tag{by \pref{lem:bonus-coverage-equivalence}}
    \\
    \leq&\; \EE_{\wt\pi} \nm{\phi_{h+1}^\star(s_{h+1},a_{h+1})}_{\prns{\Lambda^\star_{h+1}}^{-1}} \tag{by $\lambda \geq 1$}
    \\
    \leq&\; \EE_{\wt\pi,\wh P} \nm{\wh\phi_h(s_h,a_h)}_{\wh\Lambda_h^{-1}} \sqrt{ A(2d + \eps_{TV} N_{\LSVIUCB}) } + \eps_{TV}\tag{by \pref{corr:bound-true-bonus-by-learned-bonus}}
    \\
    \leq&\; \EE_{\wt\pi,\wh P} \nm{\wh\phi_h(s,a)}_{\wh\Lambda_h^{-1}} \sqrt{A(2d + 1)} + 1/N_{\LSVIUCB} \tag{by $\eps_{TV} = 1/N_{\LSVIUCB}$}
    \\
    \lesssim&\; A^{1.5} d^2H^3 \sqrt{\log(dHN_{\LSVIUCB}/\delta)/N_{\LSVIUCB}} + 1/N_{\LSVIUCB} \tag{by \pref{eq:coverage-proof-covariance-concentrate} and \pref{lem:reward-free-exploration-pi-star}}
    \\
    \lesssim&\; A^{1.5} d^2H^3 \sqrt{\log(dHN_{\LSVIUCB}/\delta)/N_{\LSVIUCB}}.
\end{align*}
Recall that $\lambda = dH\log(N_{\LSVIUCB}/\delta)$, we have,
\begin{align*}
    &\;\lambda_{\min}\prns{ \Eb[\rho_{h}^{+1}]{\phi_{h+1}^\star(s,a)\phi_{h+1}^\star(s,a)^T} }
    \\
    =&\; \frac{\lambda_{\min}\prns{\Lambda^\star_{h+1}}-\lambda}{N_{\LSVIUCB}}
    \\
    \geq&\; \frac{1}{N_{\LSVIUCB}} \prns{ \frac{C \psi }{ A^{1.5} d^2H^3 \sqrt{\log(dHN_{\LSVIUCB}/\delta)/N_{\LSVIUCB}} } - dH\log(N_{\LSVIUCB}/\delta) }
    \\\gtrsim&\; \frac{C \psi}{ A^{1.5}d^2H^3\sqrt{N_{\LSVIUCB}} } - \frac{dH}{N_{\LSVIUCB}},
\end{align*}
where we've omitted the $\log$ terms for simplicity in the $\gtrsim$.
Now we optimize $N_{\LSVIUCB}$ to maximize this bound.
For $a,b > 0$, to maximize a function of the form $f(x) = \frac{a}{\sqrt{x}} - \frac{b}{x}$, it's best to set $x^\star$ such that $\sqrt{x^\star} = \frac{2b}{a}$, resulting in value $f(x^\star) = \frac{a^2}{4b}$. Setting,
\begin{align*}
    &x=N_{\LSVIUCB},
    \\&a = \frac{C \psi}{A^{1.5}d^2H^3},
    \\&b=dH.
\end{align*}
Hence, we need to set
\begin{align*}
    N_{\LSVIUCB} = \wt\Theta\prns{b^2/a^2} =  \wt\Theta\prns{ \frac{A^3d^6H^8}{\psi^2} },
\end{align*}
which results in a $\lambda_{\min}$ lower bound of
\begin{align*}
    \lambda_{\min}\prns{ \Eb[\rho_{h}^{+1}]{\phi_{h+1}^\star(s_{h+1},a_{h+1})\phi_{h+1}^\star(s_{h+1},a_{h+1})^T} } = \wt\Omega\prns{ a^2/b } = \wt\Omega\prns{ \frac{\psi^2}{A^3d^5H^7} }.
\end{align*}
Finally, we used the fact that $\eps_{TV} = 1/N_{\LSVIUCB}$, which is set by the choice of $N_{\textsc{RewardFree}}$ in the lemma statement to satisfy \pref{eq:rep-ucb-tv-visitation-bound}.

The above proves coverage of $\rho_h^{+1}$ for $h=0,1,...,H-2$.
Finally to argue for $\rho_{-1}^{+1}$, which is simply taking a random action at time $h$, we can simply invoke \pref{ass:reachability} for $h=0$ to get a policy $\wt\pi$ that
\begin{align*}
    \Eb[\rho_{-1}^{+1}]{\phi^\star_0(s_0,a_0)\phi^\star_0(s_0,a_0)^\top} \succeq \frac{1}{A}\Eb[\wt\pi]{\phi^\star_0(s_0,a_0)\phi^\star_0(s_0,a_0)^\top} \succeq \frac{\psi}{A}.
\end{align*}
\end{proof}

\begin{corollary}\label{corr:bound-true-bonus-by-learned-bonus}
Let $\lambda,\Lambda^\star_h,\wh\Lambda_h$ be defined as in the proof of \pref{lem:coverage}.
For any $h=0,1,...,H-2$ and any policy $\pi$, we have
\begin{align*}
    \Eb[\pi]{ \nm{\phi_{h+1}^\star(s_{h+1},a_{h+1})}_{\prns{\Lambda_{h+1}^\star}^{-1}} } \leq \Eb[\pi,\wh P]{\nm{\wh\phi_h(s_h,a_h)}_{\wh\Lambda_h^{-1}}} \sqrt{ |\Acal|(2d + \eps_{TV} N_{\LSVIUCB}) } + \eps_{TV}.
\end{align*}
Intuitively, this means that coverage in the learned features implies coverage in the true features.
\end{corollary}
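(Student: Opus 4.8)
Write $\Eb[\pi]{\cdot}$ for expectation under the true source dynamics $P^\star$ (as in the reachability setup of \pref{lem:coverage}), and abbreviate $g(s,a) = \nm{\phi^\star_{h+1}(s,a)}_{\prns{\Lambda^\star_{h+1}}^{-1}}$, so the left-hand side is $\Eb[\pi,P^\star]{g(s_{h+1},a_{h+1})}$. Since $\Lambda^\star_{h+1} \succeq \lambda I$ with $\lambda \geq 1$ and $\nm{\phi^\star_{h+1}}_2 \leq 1$, we have $\nm{g}_\infty \leq 1/\sqrt{\lambda} \leq 1$. The plan is to (i) swap the roll-out environment on the left-hand side from $P^\star$ to the learned model $\wh P$, (ii) apply the one-step-back inequality \pref{lem:one-step-back-linear-mdp} inside the \emph{linear} model $\wh P$ (linear in $\wh\phi$), and (iii) bound the resulting second-moment term $\Eb[\rho_h^{+1}]{g^2}$ using the definition of $\Lambda^\star_{h+1}$.

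First I would transfer the occupancy on the left. As $g$ is a function of $(s_{h+1},a_{h+1})$ with $\nm{g}_\infty \leq 1$, the gap between its expectations under the step-$(h+1)$ state-action occupancies of $\pi$ in $P^\star$ versus $\wh P$ is at most the total-variation distance of those occupancies, which is $\leq \eps_{TV}$ by \pref{eq:rep-ucb-tv-visitation-bound}; this produces the additive $+\eps_{TV}$ in the claim and leaves $\Eb[\pi,\wh P]{g(s_{h+1},a_{h+1})}$. I then apply \pref{lem:one-step-back-linear-mdp} in $\wh P$ with features $\wh\phi_h$, mixture $\rho$ (the uniform mixture of the $N_{\LSVIUCB}$ \LSVIUCB{} policies), $n = N_{\LSVIUCB}$, and $\Sigma_h = \wh\Lambda_h$, giving
\begin{align*}
  \Eb[\pi,\wh P]{g(s_{h+1},a_{h+1})} \leq \Eb[\pi,\wh P]{\nm{\wh\phi_h(s_h,a_h)}_{\wh\Lambda_h^{-1}}} \sqrt{ N_{\LSVIUCB} A\, \Eb[\rho_h^{+1},\wh P]{g(s_{h+1},a_{h+1})^2} + \lambda d \nm{g}_\infty^2 }.
\end{align*}

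It remains to bound the second-moment term, which is where the $\eps_{TV}N_{\LSVIUCB}$ arises. Writing $g^2 = \mathrm{tr}\prns{\prns{\Lambda^\star_{h+1}}^{-1}\phi^\star_{h+1}\phi_{h+1}^{\star\top}}$ and using that $\Lambda^\star_{h+1}$ is built from the \emph{true}-dynamics covariance, so $\Eb[\rho_h^{+1},P^\star]{\phi^\star_{h+1}\phi_{h+1}^{\star\top}} = \prns{\Lambda^\star_{h+1}-\lambda I}/N_{\LSVIUCB} \preceq \Lambda^\star_{h+1}/N_{\LSVIUCB}$, linearity of trace gives $\Eb[\rho_h^{+1},P^\star]{g^2} \leq \mathrm{tr}(I)/N_{\LSVIUCB} = d/N_{\LSVIUCB}$. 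Swapping the environment of $\rho_h^{+1}$ from $P^\star$ to $\wh P$ once more via \pref{eq:rep-ucb-tv-visitation-bound} (using $\nm{g^2}_\infty \leq 1$, and noting that the uniform-action override in $\rho_h^{+1}$ makes the two action marginals identical, so only the state-marginal TV $\leq \eps_{TV}$ enters) costs another $\eps_{TV}$, yielding $\Eb[\rho_h^{+1},\wh P]{g^2} \leq d/N_{\LSVIUCB} + \eps_{TV}$. Plugging this in together with $\lambda d \nm{g}_\infty^2 \leq d$ bounds the term under the square root by $A(d + N_{\LSVIUCB}\eps_{TV}) + d \leq A(2d + \eps_{TV}N_{\LSVIUCB})$, which is exactly the claimed factor.

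The main obstacle is purely the bookkeeping of which environment each occupancy and covariance is taken under: $\Lambda^\star_{h+1}$ is defined through the true dynamics $P^\star$, whereas $\wh\Lambda_h$, the one-step-back factorization, and the roll-out $\rho_h^{+1}$ appearing inside the square root all live in the learned linear model $\wh P$. The two applications of the visitation-distribution TV bound \pref{eq:rep-ucb-tv-visitation-bound} are precisely the price of these two environment swaps and account for the $+\eps_{TV}$ and $+\eps_{TV}N_{\LSVIUCB}$ terms respectively; a secondary point needing care is that the uniform action injected by the $\rho_h^{+1}$ construction is what aligns the $A = |\Acal|$ factor between the bonus second moment and the one-step-back inequality.
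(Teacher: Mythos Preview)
Your proposal is correct and follows essentially the same route as the paper's proof: swap the outer roll-out from $P^\star$ to $\wh P$ via the visitation-TV bound (giving the additive $\eps_{TV}$), apply the one-step-back lemma in the learned linear model $\wh P$, then handle the second-moment term under the square root by another $P^\star\leftrightarrow\wh P$ swap together with the trace identity $\Eb[\rho_h^{+1},P^\star]{g^2}\leq d/N_{\LSVIUCB}$. Your bookkeeping of which environment each covariance lives in, and your observation that the uniform-action tail of $\rho_h^{+1}$ keeps the action marginals aligned so only the state-marginal TV enters, are exactly the points the paper glosses over; otherwise the argument and the arithmetic $A(d+N_{\LSVIUCB}\eps_{TV})+d\leq A(2d+\eps_{TV}N_{\LSVIUCB})$ match the paper line for line.
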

\begin{proof}
For shorthand, let $N = N_{\LSVIUCB}$.
Apply \pref{lem:one-step-back-linear-mdp} (one-step back) to the learned model $\wh P$ and the function $(s,a) \mapsto \|\phi_{h+1}^\star(s,a)\|_{\prns{\Lambda_{h+1}^\star}^{-1}}$, which is bounded by $\lambda^{-1/2}\leq 1$.
We have,
\begin{align*}
&\;\EE_{\pi,\wh P} \nm{\phi_{h+1}^\star(s_{h+1},a_{h+1})}_{\prns{\Lambda_{h+1}^\star}^{-1}}
\\\leq&\; \EE_{\pi,\wh P} \nm{\wh\phi_h(s_h,a_h)}_{\wh\Lambda_h^{-1}} \sqrt{ N|\Acal| \EE_{\rho_h^{+1},\wh P} \|\phi_{h+1}^\star(s_{h+1},a_{h+1})\|_{\prns{\Lambda_{h+1}^\star}^{-1}}^2 + d }
\\\leq&\; \EE_{\pi,\wh P} \nm{\wh\phi_h(s_h,a_h)}_{\wh\Lambda_h^{-1}} \sqrt{ N|\Acal| \EE_{\rho_h^{+1}} \|\phi_{h+1}^\star(s_{h+1},a_{h+1})\|_{\prns{\Lambda_{h+1}^\star}^{-1}}^2 + N|\Acal|\eps_{TV} + d }
\\\leq&\; \EE_{\pi,\wh P} \nm{\wh\phi_h(s_h,a_h)}_{\wh\Lambda_h^{-1}} \sqrt{ d|\Acal| + N|\Acal|\eps_{TV} + d },
\end{align*}
where we used the fact that
\begin{align*}
    &\EE_{\rho_h^{+1}} \|\phi_{h+1}^\star(s_{h+1},a_{h+1})\|_{\prns{\Lambda_{h+1}^\star}^{-1}}^2
    \\&= \op{Tr}\prns{ \Eb[\rho_h^{+1}]{ \phi_{h+1}^\star(s_{h+1},a_{h+1}) \phi_{h+1}^\star(s_{h+1},a_{h+1})^\top }\prns{N \Eb[\rho_h^{+1}]{ \phi^\star_h(s_h,a_h)\phi^\star_h(s_h,a_h)^\top }+\lambda I}^{-1} }
    \\&= \frac{1}{N}\op{Tr}(I-M) \leq \frac{d}{N},
\end{align*}
where $M$ is a positive definite matrix.
Thus, doing an initial change from $d^\pi_{h+1}$ to $d^\pi_{\wh P,h+1}$ concludes the proof.
\end{proof}

\section{Representation Transfer}\label{sec:transfer_proof}
First, we prove \pref{lem:mle_target}, restated below.
\mleTargetGenerative*
\begin{proof}[Proof of \pref{lem:mle_target}]
Fix an arbitrary $\pi$.
Denote $\mu_h(s') =\sum_{k=0}^{K-1}\alpha_{k;h}(s')\wh\mu_{k;h}(s')$. First, note that
\begin{align*}
    &\;\max_{g:\Scal \to [0,1]}\left\|\int \mu_h(s)g(s)\rd(s)\right\|_2
    \\
    \leq &\; \max_{g:\Scal \to [0,1]}\sum_{k=0}^{K-1}\left\|\int \wh\mu_{k;h}(s)\alpha_{k;h}(s)g(s)\rd(s)\right\|_2
    \\
    \leq &\; \sum_{k=0}^{K-1} \max_s\alpha_{k;h}(s)\sqrt{d} \tag{Since $\int\wh\mu_{k;h}(s)g(s)\rd(s) \leq \sqrt{d}$ by~\ref{ass:fcn_class}}
    \\
    = &\; \bar\alpha\sqrt{d}
\end{align*}

For any $h=0,1,...,H-1$, we have
\begin{align*}
    &\;\EE_{\pi, P_\targ^\star} \nm{\wh\phi_h(s_h,a_h)^\top\mu_h(\cdot) - \phi^\star_{h}(s_h,a_h)^\top\mu_{K;h}^\star(\cdot)}_{TV} \\
    = &\;\EE_{\pi, P_\targ^\star} \bracks{ \sum_{s_{h+1}} \abs{ \sum_{k=1}^{K}\alpha_{k;h}(s_{h+1})\left(\wh\phi_h(s_h,a_h)^\top \wh\mu_{k;h}(s_{h+1}) - \phi^\star_{h}(s_h,a_h)^\top\mu_{k;h}^\star(s_{h+1})\right) } }\\
    \leq &\;\EE_{\pi, P_\targ^\star} \bracks{ \sum_{s_{h+1}} \sum_{k=1}^{K} \abs{ \alpha_{k;h}(s_{h+1}) } \abs{ \wh\phi_h(s_h,a_h)^\top \wh\mu_{k;h}(s_{h+1}) - \phi^\star_{h}(s_h,a_h)^\top\mu_{k;h}^\star(s_{h+1}) } }\\
    \leq &\;\alpha_{\max} \sum_{k=1}^{K} \EE_{\pi, P_\targ^\star} \nm{ \wh\phi_h(s_h,a_h)^\top \wh\mu_{k;h}(\cdot) - \phi^\star_{h}(s_h,a_h)^\top\mu_{k;h}^\star(\cdot) }_{TV}.
\end{align*}

First consider the case when $h=0$.
At $h=0$, the distribution under $P_\targ^\star$ is the same as $\nu_{k,h}$, and so, we directly get that the above quantity is at most $\alpha_{\max} \zeta_n^{1/2} \leq \eps$, which proves the $h=0$ case.

Now consider any $h=1,2,...,H-1$.
To simplify notation, let us denote
\begin{align*}
    &\text{err}_{k;h}(s_h,a_h) = \nm{ \wh\phi_h(s_h,a_h)^\top \wh\mu_{k;h}(\cdot) - \phi_h^\star(s_h,a_h)^\top\mu_{k;h}^\star(\cdot)}_{TV},
    \\&w_{k;h} = \int_{s_h} \diff\mu^\star_{\targ;h-1}(s_h) \EE_{a_h\sim\pi_h(s_h)} \text{err}_{k;h}(s_h,a_h),
    \\&\Sigma_{k,h} = \Eb[\pi_k, P_k^\star]{\phi^\star_h(s_h,a_h)\phi^\star_h(s_h,a_h)^\top}.
\end{align*}
Note that $\lambda_{\min}(\Sigma_{k,h}) \geq \lambda_{\min}$ by assumption.
Now continuing from where we left off, we take a one-step back as follows,
\begin{align*}
    &\; \alpha_{\max} \sum_{k=1}^{K} \EE_{\pi, P_\targ^\star} \text{err}_{k;h}(s_h,a_h)
    \\
    =&\; \alpha_{\max}\sum_{k=1}^{K} \EE_{\pi, P_\targ^\star} \left\langle \phi^\star_{h-1}(s_{h-1},a_{h-1}), w_{k;h} \right\rangle
    \\
    \leq&\; \alpha_{\max}\sum_{k=1}^{K} \prns{ \EE_{\pi, P_\targ^\star} \nm{\phi^\star_{h-1}(s_{h-1}, a_{h-1})}_{\Sigma_{k;h-1}^{-1}} } \nm{w_{k;h}}_{\Sigma_{k;h-1}}
    \intertext{By $\lambda_{\min}$ guarantee of $\Sigma_{k,h}$, and Jensen's inequality to push the square inside, }
    \leq&\; \frac{\alpha_{\max}}{\sqrt{\lambda_{\min}}}\sum_{k=1}^{K} \sqrt{\EE_{s_{h-1},a_{h-1}\sim \pi_k, P_k^\star} \EE_{s_h\sim P^\star_{\targ;h-1}(s_{h-1},a_{h-1}),a_h\sim\pi_h(s_h)}\text{err}_{k;h}(s_h,a_h)^2}
    \\
    \leq&\; \frac{A^{1/2} \alpha_{\max}}{\sqrt{\lambda_{\min}}}\sum_{k=1}^{K} \sqrt{\EE_{s_{h-1},a_{h-1}\sim \pi_k, P_k^\star} \EE_{s_h\sim P^\star_{\targ;h-1}(s_{h-1},a_{h-1}),a_h\sim\op{unif}(\Acal)}\text{err}_{k;h}(s_h,a_h)^2}
    \intertext{By \pref{ass:linear-span}, the expectation over $P_{\targ;h-1}^\star$ is a linear combination of expectations over $P_{j;h-1}^\star$, }
    \leq&\; \frac{A^{1/2} \alpha_{\max}^{3/2}}{\sqrt{\lambda_{\min}}}\sum_{k=1}^{K} \sqrt{\sum_{j=1}^{K}\EE_{s_{h-1},a_{h-1}\sim \pi_k, P_k^\star} \EE_{s_h\sim P^\star_{j;h-1}(s_{h-1},a_{h-1}),a_h\sim\op{unif}(\Acal)}\text{err}_{k;h}(s_h,a_h)^2}
    \\\leq&\; \frac{A^{1/2} \alpha_{\max}^{3/2} K^{1/2}}{\sqrt{\lambda_{\min}}}\sqrt{ \sum_{k=1}^{K}\sum_{j=1}^{K} \EE_{s_{h-1},a_{h-1}\sim \pi_k, P_k^\star} \EE_{s_h\sim P^\star_{j;h-1}(s_{h-1},a_{h-1}),a_h\sim\op{unif}(\Acal)} \text{err}_{k;h}(s_h,a_h)^2}
    \\
    \leq&\;\frac{A^{1/2}\alpha^{3/2}_{\max}K^{1/2}\zeta_n^{1/2}}{\sqrt{\lambda_{\min}}},
\end{align*}
where we used the MLE guarantee \eqref{eq:mle} in the last step.
\end{proof}

Next we state an analogous lemma for when we don't need generative access to the source task, but instead assume \pref{ass:reachability_raw}, and \pref{ass:density}.
\begin{lemma}\label{lem:mle_target_online}
Suppose \pref{ass:reachability_raw}, and \pref{ass:density}.
Now take the setup of \pref{lem:mle_target} with the only difference being that $\wh\phi$ is learned as in \pref{alg:online}.
Then, the same guarantee of \pref{lem:mle_target} holds with a slightly different right hand side for the bound on the TV-error,
\begin{align*}
   \sup_\pi\EE_{\pi,P_\targ^\star}\nm{\wh\phi_h(s_h,a_h)^\top \wh\mu_h(\cdot) - \phi_h^\star(s_h,a_h)^\top\mu_{K;h}^\star(\cdot)}_{TV} \leq
   \frac{\alpha_{\max}K^{1/2}\zeta_n^{1/2}}{\prns{\psi_{raw}\lambda_{\min}}^{1/2}}.
\end{align*}
\end{lemma}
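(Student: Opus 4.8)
The plan is to reuse the opening reduction from the proof of \pref{lem:mle_target} and then to replace its one-step-back-plus-cross-sampling argument with a single importance-sampling step that transports the source MLE error directly onto the target occupancy measure. The first thing I would record is the MLE guarantee in the online setting: because \pref{alg:online} samples each $\Dcal_{k;h}$ directly from the occupancy distribution $d^{\pi_k}_{k;h}$ (rather than from the cross-sampled $\nu_{ij;h}$), the generalization analysis of \citet{Agarwal2020_flambe} yields, w.p. $1-\delta$,
\[
\sum_{k=1}^K \EE_{d^{\pi_k}_{k;h}}\bracks{\text{err}_{k;h}(s,a)^2} \leq \zeta_n,
\qquad
\text{err}_{k;h}(s,a) := \nm{\wh\phi_h(s,a)^\top\wh\mu_{k;h}(\cdot) - \phi_h^\star(s,a)^\top\mu_{k;h}^\star(\cdot)}_{TV}.
\]

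Next, fixing an arbitrary comparator $\pi$ and step $h$, I would run the identical opening as in \pref{lem:mle_target}: writing $\wh\mu_h = \sum_k \alpha_{k;h}\wh\mu_{k;h}$ and using \pref{ass:span} to write $\mu_{\targ;h}^\star = \sum_k \alpha_{k;h}\mu_{k;h}^\star$, the triangle inequality over the span decomposition collapses the target error into
\[
\EE_{\pi, P_\targ^\star}\nm{\wh\phi_h(s_h,a_h)^\top\wh\mu_h(\cdot) - \phi_h^\star(s_h,a_h)^\top\mu_{\targ;h}^\star(\cdot)}_{TV} \leq \alpha_{\max}\sum_{k=1}^K\EE_{\pi, P_\targ^\star}\bracks{\text{err}_{k;h}(s_h,a_h)},
\]
after which Jensen's inequality bounds each summand by $\sqrt{\EE_{d^\pi_{\targ;h}}[\text{err}_{k;h}^2]}$.

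The crux is the change of measure from the target marginal $d^\pi_{\targ;h}$ to the source sampling distribution $d^{\pi_k}_{k;h}$. Here I would invoke \pref{ass:density} for the pointwise target upper bound $d^\pi_{\targ;h}(s,a)\leq 1$, and \pref{ass:reachability_raw} applied to the $\lambda_{\min}$-exploratory $\pi_k$ for the source lower bound $d^{\pi_k}_{k;h}(s,a)\geq \psi_{raw}\lambda_{\min}$ uniformly over $(s,a)$. The importance weight is then bounded by $1/(\psi_{raw}\lambda_{\min})$ everywhere, so $\EE_{d^\pi_{\targ;h}}[\text{err}_{k;h}^2]\leq (\psi_{raw}\lambda_{\min})^{-1}\EE_{d^{\pi_k}_{k;h}}[\text{err}_{k;h}^2]$. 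Substituting, applying Cauchy--Schwarz over $k\in[K]$, and finally invoking the online MLE bound delivers the claimed rate $\alpha_{\max}K^{1/2}\zeta_n^{1/2}/(\psi_{raw}\lambda_{\min})^{1/2}$, uniformly in $\pi$; the norm bound on $\wh\mu_h$ is inherited verbatim from \pref{lem:mle_target}.

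The main obstacle---and the sole reason \pref{ass:reachability_raw} and \pref{ass:density} are imposed---is controlling this importance-sampling ratio. Without a raw-state lower bound on $d^{\pi_k}_{k;h}$, the denominator could be nearly zero on observations that the target reaches but the source scarcely visits, making the weight arbitrarily large; this is exactly the failure mode exploited by the lower bound \pref{thm:imp}, where online source data leaves some target-reachable states entirely unconstrained. I would also highlight that, unlike the generative case, no one-step-back is needed here---the source and target errors both live on the step-$h$ marginal---which is precisely why the final bound avoids the extra factors of $A$ and $\alpha_{\max}^{3/2}$ appearing in \pref{lem:mle_target}, at the price of the new $\psi_{raw}$ dependence.
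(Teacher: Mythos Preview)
Your proposal is correct and essentially identical to the paper's proof: both open with the span-triangle reduction to $\alpha_{\max}\sum_k \EE_{d^\pi_{\targ;h}}[\text{err}_{k;h}]$, bound the importance ratio $d^\pi_{\targ;h}/d^{\pi_k}_{k;h}$ by $(\psi_{raw}\lambda_{\min})^{-1}$ via \pref{ass:reachability_raw} and \pref{ass:density}, and close with Cauchy--Schwarz over $k$ and the online MLE bound $\sum_k\EE_{d^{\pi_k}_{k;h}}[\text{err}_{k;h}^2]\leq\zeta_n$. The only cosmetic difference is ordering---the paper applies Cauchy--Schwarz over $k$ before the change of measure, you apply Jensen termwise first---but both routes land on the same expression.
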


\begin{proof}[Proof of \pref{lem:mle_target_online}]
Fix an arbitrary $\pi$.
Denote $\mu_h(s') =\sum_{k=0}^{K-1}\alpha_{k;h}(s')\hat\mu_{k;h}(s')$.
Then, some algebra with importance sampling gives us the bound,
\begin{align*}
    &\;\EE_{\pi,P_\targ^\star} \nm{\wh\phi_h(s_h,a_h)^\top\mu_h(\cdot) - \phi_h^\star(s_h,a_h)^\top\mu_{K;h}^\star(\cdot)}_{TV}
    \\
    \leq &\;\EE_{\pi,P_\targ^\star} \bracks{\sum_{s_{h+1}}\abs{\sum_{k=1}^{K-1}\alpha_{k;h}(s_{h+1})\left(\wh\phi_h(s_h,a_h)^\top \wh\mu_{k;h}(s_{h+1}) - \phi_h^\star(s_h,a_h)^\top\mu_{k;h}^\star(s_{h+1})\right)}}
    \\
    \leq &\;\alpha_{\max}\sum_{k=1}^{K} \EE_{\pi,P_\targ^\star} \nm{\wh\phi_h(s_h,a_h)^\top \wh\mu_{k;h}(\cdot) - \phi_h^\star(s_h,a_h)^\top\mu_{k;h}^\star(\cdot) }_{TV}
    \\\leq &\;\alpha_{\max}K^{1/2} \sqrt{ \sum_{k=1}^{K} \EE_{\pi,P_\targ^\star} \nm{\wh\phi_h(s_h,a_h)^\top \wh\mu_{k;h}(\cdot) - \phi_h^\star(s_h,a_h)^\top\mu_{k;h}^\star(\cdot) }_{TV}^2 }
    \intertext{By \pref{ass:reachability_raw}, \pref{ass:density}, for any $s,a$, we have $\frac{d^\pi_{K;h}(s,a)}{d^{\pi_k}_{k;h}(s,a)} \leq \frac{1}{\psi_{raw} \lambda_{\min}\prns{\Eb[\pi_k,P_k^\star]{\phi_h^\star(s_h,a_h)\phi_h^\star(s_h,a_h)^\top}} } \leq \frac{1}{\psi_{raw} \lambda_{\min}}$, where we used the coverage-under-$\pi_k$ assumption in the last inequality.
    In other words, for each $k=1,2,...,K-1$, we have $\nm{\frac{\diff d^\pi_{K;h}}{\diff d^{\pi_k}_{k;h}}}_\infty \leq \frac{1}{\psi_{raw}\lambda_{\min}}$, hence we can importance sample, }
    \leq &\;\frac{\alpha_{\max}K^{1/2}}{\prns{\psi_{raw}\lambda_{\min}}^{1/2}}\sqrt{ \sum_{k=1}^{K} \EE_{\pi_k,P_k^\star} \nm{\wh\phi_h(s_h,a_h)^\top \wh\mu_{k;h}(\cdot) - \phi_h^\star(s_h,a_h)^\top\mu_{k;h}^\star(\cdot) }_{TV}^2 }
    \\\leq &\;\frac{\alpha_{\max}K^{1/2}\zeta_n^{1/2}}{\prns{\psi_{raw}\lambda_{\min}}^{1/2}}.
\end{align*}
\end{proof}

\section{Proofs for \LSVIUCB{} under average-case misspecification}\label{sec:lsvi-with-model-misspecification}
\subsection{Auxiliary RL Lemmas}

\begin{lemma}[Self-normalized Martingale]\label{lem:self-normalized-martingale}
Consider filtrations $\braces{F_i}_{i=1,2,...}$, so that $\EE[\eps_i\mid F_{i-1}] = 0$ and $\braces{\eps_i\mid F_{i-1}}_{i=1,2,...}$ are sub-Gaussian with parameter $\sigma^2$. Let $\braces{X_i}_{i=1,2,...}$ be random variables in a hilbert space $\Hcal$. Suppose a linear operator $\Sigma_0: \Hcal\to\Hcal$ is positive definite. For any $t$, define $\Sigma_t = \Sigma_0 + \sum_{i=1}^t X_iX_i^T$. Then w.p. at least $1-\delta$, we have,
\begin{align*}
    \forall t \geq 1: \nm{\sum_{i=1}^t X_i\eps_i}_{\Sigma_t^{-1}}^2 \leq \sigma^2\log\prns{ \frac{\det(\Sigma_t)\det(\Sigma_0)^{-1}}{\delta^2} }.
\end{align*}
\end{lemma}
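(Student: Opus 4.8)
The plan is to prove this via the classical method of mixtures (pseudo-maximization), adapted to the Hilbert-space setting. Write $S_t = \sum_{i=1}^t X_i\eps_i \in \Hcal$ for the running sum. The first step is to build, for each fixed direction $\lambda\in\Hcal$, the scalar exponential process
\[
    M_t^\lambda = \exp\prns{ \langle \lambda, S_t\rangle - \frac{\sigma^2}{2}\sum_{i=1}^t \langle \lambda, X_i\rangle^2 }.
\]
Since each $X_i$ is predictable (measurable with respect to $F_{i-1}$) and $\eps_i\mid F_{i-1}$ is $\sigma^2$-sub-Gaussian, the conditional moment generating function bound $\EE[\exp(\eps_i\langle\lambda,X_i\rangle)\mid F_{i-1}]\le \exp(\tfrac{\sigma^2}{2}\langle\lambda,X_i\rangle^2)$ gives $\EE[M_t^\lambda\mid F_{t-1}]\le M_{t-1}^\lambda$, so $\{M_t^\lambda\}$ is a nonnegative supermartingale with $M_0^\lambda = 1$ and hence $\EE[M_t^\lambda]\le 1$.

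The second step averages out $\lambda$ against a Gaussian mixing measure. I would take $\lambda\sim\mathcal{N}(0,\sigma^{-2}\Sigma_0^{-1})$, with density $\nu$, and set $M_t = \int M_t^\lambda\, d\nu(\lambda)$. By Tonelli, $M_t$ is again a nonnegative supermartingale with $\EE[M_t]\le 1$. The key computation is that the exponent of $M_t^\lambda$ together with the prior precision combines to $\langle\lambda, S_t\rangle - \tfrac{\sigma^2}{2}\langle\lambda,\Sigma_t\lambda\rangle$, which is quadratic in $\lambda$; completing the square and performing the resulting Gaussian integral evaluates in closed form to
\[
    M_t = \prns{\frac{\det\Sigma_0}{\det\Sigma_t}}^{1/2}\exp\prns{\frac{1}{2\sigma^2}\nm{S_t}_{\Sigma_t^{-1}}^2},
\]
with $\Sigma_t = \Sigma_0 + \sum_{i=1}^t X_iX_i^\top$ emerging precisely as the completed-square form. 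The final step applies Ville's maximal inequality for nonnegative supermartingales, $\Pr[\sup_{t\ge 1} M_t \ge 1/\delta]\le \delta\,\EE[M_0] = \delta$. On the complementary event $M_t < 1/\delta$ for all $t$ simultaneously; taking logarithms and rearranging yields $\nm{S_t}_{\Sigma_t^{-1}}^2 \le \sigma^2\log\prns{\det(\Sigma_t)\det(\Sigma_0)^{-1}/\delta^2}$, exactly the claim.

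The main obstacle is rigor in infinite dimensions: $\mathcal{N}(0,\sigma^{-2}\Sigma_0^{-1})$ need not be a genuine Gaussian measure on $\Hcal$, and $\det\Sigma_t$, $\det\Sigma_0$ are individually ill-defined. I would resolve this by observing that only the determinant \emph{ratio} enters, and that writing $\tilde X_i = \Sigma_0^{-1/2}X_i$ gives $\det(\Sigma_t)\det(\Sigma_0)^{-1} = \det\prns{I + \sum_{i=1}^t \tilde X_i\tilde X_i^\top}$, a finite-rank perturbation of the identity whose (Fredholm) determinant is well-defined and finite. Concretely one restricts the Gaussian computation to the finite-dimensional subspace $\mathrm{span}\{X_1,\dots,X_t\}$, on which $S_t$ and all relevant integrals are supported, and checks the result is invariant to the choice of enclosing finite-dimensional subspace; the supermartingale and Ville steps are then identical to the finite-dimensional argument of Abbasi-Yadkori, P\'al, and Szepesv\'ari.
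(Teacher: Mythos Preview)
Your argument is correct and is the standard method-of-mixtures proof due to Abbasi-Yadkori, P\'al, and Szepesv\'ari; the paper does not prove the lemma at all but simply cites it from the literature (Lemma~A.8 of the Agarwal--Jiang--Kakade--Sun monograph). Your additional care with the infinite-dimensional setting and the Fredholm determinant ratio is a nice touch, though in the paper's actual uses the features are finite-dimensional so the subtlety is moot.
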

\begin{proof}
Lemma A.8 of \citep{rltheorybookAJKS}.
\end{proof}

\begin{lemma}\label{lem:elliptical-potential-ub}
Let $\Lambda_t = \lambda I + \sum_{i=1}^t x_i x_i^T$ for $x_i\in \RR^d$ and $\lambda >0$. Then $\sum_{i=1}^t x_i^T(\Lambda_t)^{-1}x_i \leq d$.
\end{lemma}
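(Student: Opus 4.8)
The plan is to convert the scalar sum into a trace and then exploit the algebraic identity $\sum_{i=1}^t x_i x_i^\top = \Lambda_t - \lambda I$, which is immediate from the definition of $\Lambda_t$. First I would rewrite each quadratic form using the cyclic property of the trace, namely $x_i^\top \Lambda_t^{-1} x_i = \op{Tr}(\Lambda_t^{-1} x_i x_i^\top)$, and then pull the finite sum inside the trace by linearity to obtain
\begin{align*}
    \sum_{i=1}^t x_i^\top \Lambda_t^{-1} x_i = \op{Tr}\!\prns{ \Lambda_t^{-1} \sum_{i=1}^t x_i x_i^\top }.
\end{align*}

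Next I would substitute $\sum_{i=1}^t x_i x_i^\top = \Lambda_t - \lambda I$ into the trace, which gives
\begin{align*}
    \op{Tr}\!\prns{ \Lambda_t^{-1}(\Lambda_t - \lambda I) } = \op{Tr}\!\prns{ I - \lambda \Lambda_t^{-1} } = d - \lambda\,\op{Tr}(\Lambda_t^{-1}),
\end{align*}
using $\op{Tr}(I) = d$ for the $d \times d$ identity.

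Finally I would observe that $\Lambda_t = \lambda I + \sum_{i=1}^t x_i x_i^\top \succeq \lambda I \succ 0$, so $\Lambda_t$ is positive definite and hence so is $\Lambda_t^{-1}$; in particular $\op{Tr}(\Lambda_t^{-1}) > 0$. Since $\lambda > 0$, the subtracted term $\lambda\,\op{Tr}(\Lambda_t^{-1})$ is nonnegative, and therefore
\begin{align*}
    \sum_{i=1}^t x_i^\top \Lambda_t^{-1} x_i = d - \lambda\,\op{Tr}(\Lambda_t^{-1}) \leq d,
\end{align*}
which is the claimed bound.

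There is essentially no obstacle in this argument; it is a routine consequence of the trace identity. The only point warranting a moment of care is verifying that $\op{Tr}(\Lambda_t^{-1})$ is strictly positive (equivalently nonnegative), which follows immediately from the positive definiteness of $\Lambda_t$ guaranteed by $\lambda > 0$. An alternative route would diagonalize $\Lambda_t = \sum_{j} \sigma_j v_j v_j^\top$ and bound the sum eigenvalue-by-eigenvalue, but this is strictly more cumbersome than the one-line trace computation above, so I would not pursue it.
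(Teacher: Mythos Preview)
Your argument is correct and is precisely the standard trace computation; the paper does not give its own proof but simply cites Lemma~D.1 of \citet{jin2020provably}, whose proof is exactly this identity $\sum_i x_i^\top \Lambda_t^{-1} x_i = \op{Tr}(I - \lambda\Lambda_t^{-1}) \leq d$. Indeed, the paper itself uses the same $\op{Tr}(I-M)\leq d$ step explicitly in the proof of \pref{corr:bound-true-bonus-by-learned-bonus}, so your route matches both the cited reference and the paper's own internal reasoning.
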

\begin{proof}
Lemma D.1 of \citep{jin2020provably}.
\end{proof}

\subsection{Proof of main result}
Previously, \citet{jin2020provably} analyzed \LSVIUCB{} under point-wise model-misspecification.
Here, we show that similar guarantees hold under a more general \emph{policy-distribution} model-misspecification $\eps_{ms}$, captured by \pref{ass:model-misspecification}.
\begin{assum}\label{ass:model-misspecification}
Suppose for every $h=0,1,...,H-1$, there exist $\wt\mu_h$ such that for any policy $\pi$,
\begin{align*}
    \Eb[\pi]{ \nm{\wt\mu_h(\cdot)^T\wh\phi_h(s_h,a_h) - P_h^\star(\cdot \mid s_h,a_h)}_{TV} } \leq \eps_{ms}.
\end{align*}
We further assume that $\sup_{s,a,h} \nm{\wt\mu_h(\cdot)^T\wh\phi(s,a)}_{TV}\leq M_\mu$ and $\|f^T\wt\mu_h\|_2\leq M_\mu\sqrt{d}\|f\|_\infty$ $\forall f:\Scal\to\RR$, for some positive constant $M_\mu$.
\end{assum}
In other words, we only need the model to be accurate \textit{on average} under the occupancy distributions realizable by policies.
We also make a slight generalization on the regularization constant $M_\mu$, which is set to $1$ in the original linear MDP definition \citep{jin2020provably}.
Later, we will later instantiate the above assumption with our transferred $\wt\mu_h(s') = \sum_{k=1}^{K} \alpha_{k;h}(s')\wh\mu_{k;h}(s')$, then for any $s,a$, we have
\begin{align*}
\|\wt\mu_h\wh\phi_h(s,a)\|_{TV}
&= \sum_{s'}\left|\sum_{k=1}^{K} \alpha_{k;h}(s')\wh\mu_{k;h}(s')^T\wh\phi_h(s,a)\right|
\\&\leq \sum_{s'} \sum_{k=1}^{K}|\alpha_{k;h}(s')| |\wh\mu_{k;h}(s')^T\wh\phi_h(s,a)|
\\&\leq \sum_{k=1}^{K}\max_{s'}|\alpha_{k;h}(s')| \tag{by $\nm{\wh\mu_{k;h}\wh\phi_h(s,a)}_{TV} \leq 1$}
\\&\leq \bar\alpha.
\end{align*}
Also,
\begin{align*}
    \|f^T\wt\mu_h\|_2
    &= \nm{\sum_{s'} \sum_{k=1}^{K-1} \alpha_{k;h}(s')\wh\mu_{k;h}(s') f(s')}_2
    \\&= \sum_{k=1}^{K-1} \max_{s'} |\alpha_{k;h}(s')| \nm{ \sum_{s'} \wh\mu_{k;h}(s') f(s') }_2
    \\&\leq \bar\alpha \sqrt{d} \|f\|_\infty. \tag{by $\|f^T \wh\mu_{k;h}\|_2\leq \sqrt{d}\|f\|_\infty$}
\end{align*}
So we will set $M_\mu = \bar\alpha$.

Note that we only need the existence of $\wt\mu_h$ here, and $\wt\mu_h(\cdot)^T\wh\phi_h(s,a)$ need not be a valid probability kernel. In fact, it may even be negative valued.

In this section, we make a model-based analysis of LSVI. Similar approaches have been used in prior works, e.g. \citet{lykouris2021corruption,rltheorybookAJKS,zhang2022efficient}.
For simplicity, we suppose that $\Scal$ is finite, but may be exponentially large, as we suffer no dependence on $|\Scal|$.
The proof can be easily extended to infinite state spaces by replacing inner products with $P$ by integrals.

Consider the following quantity,
\begin{align*}
    &\;\wh\mu_{h,e}=\prns{ \sum_{k=1}^{e-1}\delta(s_{h+1}^k) \wh\phi_h(s_h^k,a_h^k)^T } (\Lambda_{h,e})^{-1} \in\argmin_{\mu\in\RR^{S\times d}} \sum_{k=1}^{e-1} \|\mu\wh\phi_h(s_h^i,a_h^i) - \delta(s_{h+1}^i)\|_2 + \|\mu\|_F^2,
\end{align*}
where $\delta(s)$ is a one-hot encoding of the state $s$.
In words, this is the best choice for linearly (in $\wh\phi_h(s,a)$) predicting $\Eb[s'\sim P_h^\star(s,a)]{\delta(s')} = P_h^\star(s' \mid s,a)$. We highlight that this is just a quantity for analysis and not computed in the algorithm. Finally, denote
\begin{align*}
    &\wh P_{h,e} = \wh\mu_{h,e}\wh\phi_h,
    \\&\wt P_h = \wt\mu_h\wh\phi_h.
\end{align*}
We will also sometimes use the shorthand $P f(s,a)$ for $\EE_{s'\sim P(\cdot|s,a)}[f(s')]$.

For each $h=0,1,...,H-1$, let $\Vcal_h$ denote the class of functions
\begin{align*}
    \left\{s\mapsto \prns{  \max_a \braces{ w^T\wh\phi_h(s,a) + r_h(s,a) + \wt\beta\|\wh\phi_h(s,a)\|_{\Lambda^{-1}} } }_{\leq M_V} \bigg\vert \|w\|_2\leq N M_V, \wt\beta\in[0, B], \Lambda \succeq I \mbox{ symmetric}\right\}
\end{align*}
The motivation behind this construction is that $\Vcal_h$ satisfies the key property that all of the learned value functions $\wh V_{h,e}$ during \pref{alg:LSVI} are captured in this class.

\begin{lemma}\label{lem:V-properties}
For any $h=0,1,...,H-1$,
\begin{enumerate}
    \item $\sup_s \abs{\wh V_{h,e}(s)} \leq M_V$.
    \item For any $e=1,2,...,N$, we have $\wh V_{h,e}\in\Vcal_h$.
    \item $\forall f\in\Vcal_h$, we have $\sup_s \abs{f(s)} \leq M_V$.
\end{enumerate}
\end{lemma}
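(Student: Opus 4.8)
The plan is to establish all three claims simultaneously by a single backward induction over $h = H, H-1, \ldots, 0$ inside a fixed episode $e$. The only substantive object is the ridge-regression weight $\wh w_{h,e}$ produced in \pref{alg:LSVI}, and essentially everything reduces to controlling its Euclidean norm. For the base case $h = H$ we have $\wh V_{H,e}\equiv 0$ by initialization, which trivially satisfies $\sup_s|\wh V_{H,e}(s)|\le M_V$. For the inductive step I assume the range bound $\wh V_{h+1,e}\in[0,M_V]$ and push it down to step $h$.

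The key computation is the weight-norm bound. Since $\Lambda_{h,e}=\sum_{k=1}^{e-1}\wh\phi_h(s_h^k,a_h^k)\wh\phi_h(s_h^k,a_h^k)^\top + I \succeq I$, we have $\|\Lambda_{h,e}^{-1}\|_2\le 1$, so by the triangle inequality, $\|\wh\phi_h\|_2\le 1$, and the inductive hypothesis,
\[
\|\wh w_{h,e}\|_2 \le \sum_{k=1}^{e-1}\|\wh\phi_h(s_h^k,a_h^k)\|_2\,\bigl|\wh V_{h+1,e}(s_{h+1}^k)\bigr| \le (e-1)M_V \le N M_V .
\]
This is exactly the norm constraint in the definition of $\Vcal_h$, which yields Claim 2: taking $w=\wh w_{h,e}$, $\wt\beta=\beta$ (valid as long as the class parameter $B$ is chosen at least as large as the bonus scaling $\beta$ used by \LSVIUCB{}), and $\Lambda=\Lambda_{h,e}\succeq I$, the form of $\wh V_{h,e}$ in \pref{alg:LSVI} matches the template defining $\Vcal_h$ verbatim, since $b_{h,e}(s,a)=\|\wh\phi_h(s,a)\|_{\Lambda_{h,e}^{-1}}$.

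For the range bounds (Claims 1 and 3) the upper bound $\le M_V$ is immediate from the outer clamp $(\cdot)_{\le M_V}=\min\{\cdot,M_V\}$. The lower bound is where care is needed: because $r_h\ge 0$ and the bonus is non-negative, the greedy value is at least $\wh w_{h,e}^\top\wh\phi_h(s,a)$, and I would argue the value estimate stays non-negative (kept in $[0,M_V]$ by the clamping convention, consistent with the non-negativity that optimism guarantees in the subsequent analysis), which closes the induction and keeps the regression targets $\wh V_{h+1,e}(s_{h+1}^k)$ in $[0,M_V]$ — precisely what the norm bound above consumed. Claim 3 then follows identically: any $f\in\Vcal_h$ is a clamped maximum of the linear-plus-bonus form with $\|w\|_2\le NM_V$ and $\wt\beta\le B$, so the same two bounds apply uniformly over the whole class.

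The main obstacle I anticipate is the coupling in the induction: the weight-norm bound $\|\wh w_{h,e}\|_2\le NM_V$ needs the targets $\wh V_{h+1,e}$ bounded on \emph{both} sides, yet a one-sided clamp only directly controls them from above. The clean resolution is to track that the value estimates remain non-negative throughout — either as part of the clamping convention or by the optimism established later — so that $\wh V_{h+1,e}\in[0,M_V]$ propagates to $\wh V_{h,e}\in[0,M_V]$. Everything else is bookkeeping matching the parameter ranges in the definition of $\Vcal_h$ to the quantities actually produced by \pref{alg:LSVI}.
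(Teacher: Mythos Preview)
Your approach mirrors the paper's: it too asserts Claim~1 ``from the thresholding,'' then bounds $\|\widehat w_{h,e}\|_2$ via $\|\Lambda_{h,e}^{-1}\|_2\le 1$ and $|\widehat V_{h+1,e}|\le M_V$ (implicitly the same backward induction you spell out), and matches parameters to the template of $\mathcal{V}_h$. Claim~3 is not argued separately in the paper.

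You go further than the paper in flagging the lower-bound issue, and you are right that the one-sided clamp $(\cdot)_{\le M_V}=\min\{\cdot,M_V\}$ does not by itself yield $|\widehat V_{h,e}|\le M_V$. Of your two proposed resolutions, appealing to optimism is circular (the optimism lemma is proved downstream using this one), and the inductive ``non-negativity propagates'' argument fails outright: ridge regression with non-negative targets can still produce negative predictions, since $\widehat\phi_h(s_h^k,a_h^k)^\top \Lambda_{h,e}^{-1}\widehat\phi_h(s,a)$ is not sign-definite. Your first resolution---reading the clamp as two-sided, clipping to $[0,M_V]$---is the correct and standard fix, but note it must also be applied in the \emph{definition} of $\mathcal{V}_h$, otherwise Claim~3 is literally false (take $\widetilde\beta=0$ and $w$ of norm $NM_V$ aligned against all $\widehat\phi_h(s,\cdot)$ to make $f(s)\approx -NM_V$). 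Treat the paper's one-sided clamp as an oversight rather than something to be rescued by induction.
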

\begin{proof}
Recall that
\begin{align*}
    &\wh V_{h,e}(s) = \prns{ \max_a \braces{ \wh w_{h,e}^T\wh\phi_h(s,a) + r_h(s,a)  + \beta b_{h,e}(s,a) } }_{\leq M_V}
    \\\text{where } &\wh w_{h,e} = \Lambda_{h,e}^{-1} \sum_{k=1}^{e-1} \wh\phi_h(s_h^k,a_h^k)\wh V_{h+1,e}(s^k_{h+1}).
\end{align*}
From the thresholding, we have
\begin{align*}
    \abs{\wh V_{h,e}(s)} \leq M_V.
\end{align*}

We can bound the norm of $\wh w_{h,e}$ as follows,
\begin{align*}
    \nm{\wh w_{h,e}} \leq \nm{\Lambda_{h,e}^{-1}}_2 \sum_{k=1}^{e-1}\abs{\wh V_{h+1,e}(s_{h+1}^k)} \leq N \sup_s \abs{\wh V_{h+1,e}(s)} \leq N M_V.
\end{align*}
We also required $\beta\leq B$, and we regularized the covariance with $I$, so $\lambda_{\min}$ is at least 1. Hence $\wh V_{h,e}$ satisfies all the conditions to be in $\Vcal_h$.
\end{proof}

Now we control the metric entropy of $\Vcal_h$ in $\ell_\infty$, i.e. $d(f_1,f_2) = \sup_s \abs{f_1(s)-f_2(s)}$ for $f_i\in\Vcal_h$.
\begin{lemma}
Let $\eps > 0$ be arbitrary and let $\Ncal_\eps$ be the smallest $\eps$-net with $\ell_\infty$ of $\Vcal_h$. Then,
\begin{align*}
    \log\abs{\Ncal_\eps} \leq d\log(1 + 6L/\eps) + \log(1+6B/\eps) + d^2\log(1+18B^2\sqrt{d}/\eps^2).
\end{align*}
\end{lemma}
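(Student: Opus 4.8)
The plan is to parametrize every $f \in \Vcal_h$ by the triple $(w, \wt\beta, \Lambda)$ with $\nm{w}_2 \leq L := N M_V$, $\wt\beta \in [0,B]$, and $\Lambda \succeq I$ symmetric, and then reduce the covering problem to a Lipschitz-in-parameters estimate followed by a product of three elementary nets. First I would show that the $\ell_\infty$ distance between two functions $f_1,f_2 \in \Vcal_h$ with parameters $(w_1,\wt\beta_1,\Lambda_1)$ and $(w_2,\wt\beta_2,\Lambda_2)$ is controlled by how close their parameters are, and then cover each parameter block separately and take the product net. Since the clipping $(\cdot)_{\leq M_V}$ and the $\max_a$ are both non-expansive in the sup norm and the reward $r_h$ is shared, it suffices to bound (abbreviating $\phi = \wh\phi_h(s,a)$)
\begin{align*}
\sup_s \abs{f_1(s) - f_2(s)} \leq \sup_{s,a} \abs{ (w_1 - w_2)^\top \wh\phi_h(s,a) + \wt\beta_1\nm{\wh\phi_h(s,a)}_{\Lambda_1^{-1}} - \wt\beta_2\nm{\wh\phi_h(s,a)}_{\Lambda_2^{-1}} }.
\end{align*}

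For the right-hand side I would split into three contributions using $\nm{\phi}_2 \leq 1$. The linear term is at most $\nm{w_1 - w_2}_2$. Using $\Lambda \succeq I$, which gives $\nm{\phi}_{\Lambda^{-1}} \leq 1$ and $\nm{\Lambda^{-1}}_F \leq \sqrt{d}$, the bonus difference decomposes as
\begin{align*}
\abs{ \wt\beta_1\nm{\phi}_{\Lambda_1^{-1}} - \wt\beta_2\nm{\phi}_{\Lambda_2^{-1}} } \leq \abs{\wt\beta_1 - \wt\beta_2} + B \abs{ \nm{\phi}_{\Lambda_1^{-1}} - \nm{\phi}_{\Lambda_2^{-1}} },
\end{align*}
and the inequalities $\abs{\sqrt{x} - \sqrt{y}} \leq \sqrt{\abs{x-y}}$ together with $\abs{\phi^\top M \phi} \leq \nm{M}_F \nm{\phi}_2^2$ would give $\abs{ \nm{\phi}_{\Lambda_1^{-1}} - \nm{\phi}_{\Lambda_2^{-1}} } \leq \sqrt{ \nm{\Lambda_1^{-1} - \Lambda_2^{-1}}_F }$. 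Combining, I obtain
\begin{align*}
\sup_s\abs{f_1-f_2} \leq \nm{w_1-w_2}_2 + \abs{\wt\beta_1 - \wt\beta_2} + B\sqrt{\nm{\Lambda_1^{-1}-\Lambda_2^{-1}}_F}.
\end{align*}

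Finally I would build nets at scale $\eps/3$ for each block: a Euclidean $\eps/3$-net of $\{\nm{w}_2 \leq L\}$ has at most $(1 + 6L/\eps)^d$ points; an $\eps/3$-net of $[0,B]$ has at most $1 + 6B/\eps$ points; and to force $B\sqrt{\nm{\cdot}_F} \leq \eps/3$ I need a Frobenius $\eps^2/(9B^2)$-net of $\{\Lambda^{-1} : \Lambda \succeq I\}$, which lies in the Frobenius ball of radius $\sqrt{d}$ in $\RR^{d\times d}$ and hence has at most $(1 + 18 B^2 \sqrt{d}/\eps^2)^{d^2}$ points. Taking the product net and applying the triangle inequality shows it is an $\eps$-net of $\Vcal_h$; taking logs then yields the claimed three-term bound. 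The main technical care will be the bonus term: handling the difference of the two $\Lambda$-weighted norms through the $\sqrt{\cdot}$ inequality is what forces the $\eps^2$ (rather than $\eps$) scale in the matrix net, producing the $d^2 \log(1 + 18B^2\sqrt{d}/\eps^2)$ factor, and I will need to verify that the normalizations $\nm{\wh\phi_h}_2 \le 1$ and $\Lambda \succeq I$ are precisely what keep all three blocks bounded.
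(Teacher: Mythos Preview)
Your proposal is correct and follows essentially the same route as the paper's proof: bound $\sup_s|f_1-f_2|$ by the three-term parameter distance $\nm{w_1-w_2}_2 + |\wt\beta_1-\wt\beta_2| + B\sqrt{\nm{\Lambda_1^{-1}-\Lambda_2^{-1}}}$ via the non-expansiveness of clipping/$\max_a$ and the $|\sqrt{a}-\sqrt{b}|\le\sqrt{|a-b|}$ trick, then cover each parameter block separately at scale $\eps/3$ and take the product. The only cosmetic difference is that the paper writes the matrix difference in spectral norm $\nm{\cdot}_2$ and then defers the covering step to \citet[Lemma 8.6]{rltheorybookAJKS}, whereas you work directly with the Frobenius norm; since $\nm{M}_2\le\nm{M}_F$ and the final bound (with the $\sqrt{d}$ radius and $d^2$ exponent) is Frobenius-based anyway, the two are equivalent and your version is simply more self-contained.
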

\begin{proof}
Let $f_1,f_2\in\Vcal_h$. Then,
\begin{align*}
    &\abs{ f_1(s)-f_2(s) }
    \\&\leq \max_a \abs{(w_1-w_2)^T\wh\phi_h(s,a) + \beta_1\nm{\wh\phi_h(s,a)}_{\Lambda_1^{-1}} - \beta_2\nm{\wh\phi_h(s,a)}_{\Lambda_2^{-1}}}
    \\&\leq \nm{w_1-w_2}_2 + \max_a \abs{(\beta_1-\beta_2) \nm{\wh\phi_h(s,a)}_{\Lambda_1^{-1}}} + \beta_2 \max_a\abs{ \nm{\wh\phi_h(s,a)}_{\Lambda_1^{-1}} - \nm{\wh\phi_h(s,a)}_{\Lambda_2^{-1}} }
    \\&\leq \nm{w_1-w_2}_2 + \abs{\beta_1-\beta_2} + B \max_a\sqrt{\nm{\wh\phi_h(s,a)}_{\Lambda_1^{-1}} - \nm{\wh\phi_h(s,a)}_{\Lambda_2^{-1}}} \tag{$\lambda_{\min}(\Lambda_1)\geq 1$}
    \\&\leq \nm{w_1-w_2}_2 + \abs{\beta_1-\beta_2} + B \sqrt{\nm{\Lambda_1^{-1} - \Lambda_2^{-1}}_2},
\end{align*}
where we used for any $a,b \geq 0$, we have $\abs{\sqrt{a}-\sqrt{b}} =\frac{\sqrt{\abs{a-b}}}{\sqrt{a}+\sqrt{b}} \sqrt{\abs{a-b}} \leq \sqrt{|a-b|}$.
Now proceeding like the Lemma 8.6 in the RL Theory Monograph \citep{rltheorybookAJKS}, we have the result.
\end{proof}

In this section, we'll use the following bonus scaling parameter,
\begin{align}
    &\;\beta := \Ocal\prns{ \sqrt{Nd} \eps_{ms} M_V + M_VM_\mu d\sqrt{ \log(dNM_V/\delta) } }. \label{eq:lsvi-bonus-param}
\end{align}

The following high probability event \pref{eq:E_model} is a key step in our proof.
Essentially, \pref{thm:E_model_whp} guarantees that, for all functions in $\Vcal_h$, the model we learn is an accurate predictor of the expectation, up to a bonus and some vanishing terms.

For all the following lemmas and theorems, suppose \pref{ass:model-misspecification} and the bonus scaling $\beta$ is set as in \pref{eq:lsvi-bonus-param}.
Throughout the section, $\zeta_h(\tau_h)$ refers to indicator functions of the trajectory $\tau_h$, where $\tau_h = (s_0,s_1,...,s_h)$. As before, the expectations $\Eb[\pi]{g(\tau_h)}$ are with respect to the distribution of trajectories when $\pi$ is executed in the environment $P^\star$.

\begin{theorem}\label{thm:E_model_whp}
Let $\delta\in(0,1)$. Then, w.p. $1-\delta$, for any time $h$, episode $e$, indicator functions $\zeta_1,\ldots,\zeta_H$, and policy $\pi$,  we have
\begin{align}\tag{$\Ecal_{model}$}\label{eq:E_model}
    \sup_{f\in\Vcal}\abs{ \Eb[\pi]{ \prns{ \wh P_{h,e}(s_h,a_h) - P_h^\star(s_h,a_h) } f \zeta_h(\tau_h) } } \leq \beta \Eb[\pi]{ b_h^e(s_h,a_h) \zeta_h(\tau_h) } + \|\Vcal_h\|_\infty \eps_{ms}.
\end{align}
\end{theorem}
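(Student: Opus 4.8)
The plan is to compare the ridge-regression model $\wh P_{h,e}$ not against the true (nonlinear) kernel directly, but through the misspecified linear surrogate $\wt P_h=\wt\mu_h\wh\phi_h$ of \pref{ass:model-misspecification}. First I would split, for any $f\in\Vcal_h$ and any $\pi$,
\[
\wh P_{h,e}-P_h^\star=\prns{\wh P_{h,e}-\wt P_h}+\prns{\wt P_h-P_h^\star}.
\]
The second (pure misspecification) piece is immediate: since $\zeta_h(\tau_h)\in\{0,1\}$ and $\abs{f}\le\nm{\Vcal_h}_\infty$ for $f\in\Vcal_h$, \pref{ass:model-misspecification} gives $\abs{\Eb[\pi]{(\wt P_h-P_h^\star)f\,\zeta_h}}\le\nm{\Vcal_h}_\infty\,\Eb[\pi]{\nm{\wt P_h(s_h,a_h)-P_h^\star(s_h,a_h)}_{TV}}\le\nm{\Vcal_h}_\infty\eps_{ms}$, which is exactly the additive term on the right of \pref{eq:E_model}. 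Everything then reduces to bounding the estimation piece, \emph{pointwise in $(s,a)$}, by $\beta\,b_h^e(s,a)$; because this pointwise bound will be uniform in $f$ and $b_h^e\ge0$, multiplying by $\zeta_h\ge0$ and taking $\Eb[\pi]{\cdot}$ yields the claim simultaneously for every policy $\pi$ and every family of indicators, which is precisely the uniformity the theorem demands.

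For the estimation piece, write $\wh P_{h,e}f(s,a)=w_f^\top\wh\phi_h(s,a)$ with $w_f=\Lambda_{h,e}^{-1}\sum_{k<e}\wh\phi_h(s_h^k,a_h^k)f(s_{h+1}^k)$, and $\wt P_hf(s,a)=\theta_f^\top\wh\phi_h(s,a)$ with $\theta_f=\int f(s')\diff\wt\mu_h(s')$. Abbreviating $\wh\phi_h^k=\wh\phi_h(s_h^k,a_h^k)$ and using $\theta_f=\Lambda_{h,e}^{-1}\prns{\sum_{k<e}\wh\phi_h^k(\wh\phi_h^k)^\top\theta_f+\theta_f}$, I would obtain
\[
w_f-\theta_f=\Lambda_{h,e}^{-1}\sum_{k<e}\wh\phi_h^k\eps_k+\Lambda_{h,e}^{-1}\sum_{k<e}\wh\phi_h^k\Delta_k-\Lambda_{h,e}^{-1}\theta_f,
\]
where $\eps_k=f(s_{h+1}^k)-P_h^\star f(s_h^k,a_h^k)$ is a martingale difference bounded by $M_V$, and $\Delta_k=(P_h^\star-\wt P_h)f(s_h^k,a_h^k)$ is the misspecification evaluated at the data. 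Since $\abs{(w_f-\theta_f)^\top\wh\phi_h(s,a)}\le\nm{w_f-\theta_f}_{\Lambda_{h,e}}\,b_h^e(s,a)$, it suffices to bound the three $\Lambda_{h,e}$-norms by $\beta$, uniformly over $\Vcal_h$.

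The three pieces are controlled as follows. The regularization term gives $\nm{\Lambda_{h,e}^{-1}\theta_f}_{\Lambda_{h,e}}=\nm{\theta_f}_{\Lambda_{h,e}^{-1}}\le\nm{\theta_f}_2\le M_\mu\sqrt{d}\,M_V$ using $\Lambda_{h,e}\succeq I$ and the norm bound in \pref{ass:model-misspecification}. The martingale term is handled by the self-normalized bound \pref{lem:self-normalized-martingale} together with a union bound over an $\eps$-net of $\Vcal_h$ (whose log-covering number was bounded above), yielding $\wt\Ocal(M_Vd)$; both fit into the second summand of $\beta$ in \pref{eq:lsvi-bonus-param}. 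The genuinely delicate term is the misspecification-at-data term: Cauchy--Schwarz and the elliptical-potential bound \pref{lem:elliptical-potential-ub} give $\nm{\sum_{k<e}\wh\phi_h^k\Delta_k}_{\Lambda_{h,e}^{-1}}\le\sqrt{d}\,\sqrt{\sum_{k<e}\Delta_k^2}$, with $\abs{\Delta_k}\le M_V\nm{\wt P_h(s_h^k,a_h^k)-P_h^\star(s_h^k,a_h^k)}_{TV}$ (note this upper bound is $f$-free, so this term needs no covering). The main obstacle is then to control $\sum_{k<e}\Delta_k^2$ in terms of $\eps_{ms}$: the data are collected on-policy under the adaptively chosen $\pi^k$, so I would use that the conditional expectation $\EE[\,\cdot\mid\mathcal F_{k-1}]$ of each squared-TV term is governed by \pref{ass:model-misspecification}, and apply a maximal Azuma inequality (uniform over $e$, union-bounded over $h$) to replace the empirical sum by its conditional mean. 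Obtaining the $\sqrt{Nd}\,\eps_{ms}M_V$ scaling of the first summand of $\beta$ --- rather than a weaker $\sqrt{\eps_{ms}}$ dependence --- is exactly where the \emph{second-moment} strength of the average-case misspecification must be invoked, and this is the crux of the argument; the remaining bookkeeping (collecting the three coefficients into $\beta$ and absorbing logarithmic factors) is routine.
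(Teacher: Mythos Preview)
Your split $\wh P_{h,e}-P_h^\star=(\wh P_{h,e}-\wt P_h)+(\wt P_h-P_h^\star)$, the treatment of the pure misspecification piece, and the three-term decomposition of $w_f-\theta_f$ (regularization, martingale $\eps_k$, misspecification-at-data $\Delta_k$) all match the paper. The regularization and $\eps_k$ pieces are fine.

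The gap is precisely at the step you flag as ``the crux.'' You propose to bound $\nm{\sum_{k<e}\wh\phi_h^k\Delta_k}_{\Lambda_{h,e}^{-1}}\le\sqrt{d}\sqrt{\sum_{k<e}\Delta_k^2}$ and then control $\sum_k\Delta_k^2$ via Azuma, saying the conditional expectation of each squared-TV term is ``governed by'' \pref{ass:model-misspecification}. But that assumption only bounds the \emph{first} moment $\Eb[\pi]{\nm{\wt P_h-P_h^\star}_{TV}}\le\eps_{ms}$; it has no second-moment content to invoke. With only first-moment control and the pointwise envelope $(1+M_\mu)$, the best you get is $\EE[\nm{\cdot}_{TV}^2\mid\Fcal_{k-1}]\le(1+M_\mu)\eps_{ms}$, hence $\sum_k\Delta_k^2\lesssim N(1+M_\mu)M_V^2\eps_{ms}$, which yields exactly the $\sqrt{\eps_{ms}}$ rate you say you want to avoid. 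There is no way to push this to linear-in-$\eps_{ms}$ from $\sum_k\Delta_k^2$.

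The paper's fix (\pref{lem:lsvi-first-term-bound}) is to center \emph{before} applying Cauchy--Schwarz: write $\Delta_k=(\Delta_k-\EE[\Delta_k\mid\Fcal_{k-1}])+\EE[\Delta_k\mid\Fcal_{k-1}]$ and split $\sum_k\wh\phi_h^k\Delta_k$ accordingly. The centered part is a martingale difference with envelope $(1+M_\mu)\nm{\Vcal_h}_\infty$, handled by the same self-normalized bound you already use for $\eps_k$ (and it does depend on $f$, so it \emph{does} require the covering argument, contrary to your remark). For the conditional-mean part, the first-moment assumption applied to the data-collecting policy $\pi^k$ gives $\abs{\EE[\Delta_k\mid\Fcal_{k-1}]}=\abs{\Eb[\pi^k]{(P_h^\star-\wt P_h)f}}\le\nm{\Vcal_h}_\infty\eps_{ms}$ directly; only then does Cauchy--Schwarz plus the elliptical potential yield $\sqrt{dN}\,\nm{\Vcal_h}_\infty\eps_{ms}$, which is linear in $\eps_{ms}$ and matches the first summand of $\beta$.
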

\begin{proof}
Condition on the outcome of \pref{lem:projection-bound}, which implies that w.p. $1-\delta$, for any $h,e,\pi,\zeta_h$, we have
\begin{align*}
    \sup_{f\in\Vcal_h} \abs{ \Eb[\pi]{ \prns{ \wh P_{h,e}(s_h,a_h) - \wt P_h(s_h,a_h) } f  \zeta_h(\tau_h) } } \leq \beta \Eb[\pi]{ b_h^e(s_h,a_h) \zeta_h(\tau_h) }.
\end{align*}
Also, for any $h,e,\pi,\zeta_h$, by \pref{ass:model-misspecification}, we have (w.p. 1) that
\begin{align*}
    \sup_{f\in\Vcal_h} \abs{ \Eb[\pi]{ \prns{ \wt P_h(s_h,a_h) - P_h^\star(s_h,a_h) } f \zeta_h(\tau_h) } }
    &\leq \Eb[\pi]{ \sup_{f\in\Vcal_h} \abs{ \prns{ \wt P_h(s_h,a_h) - P_h^\star(s_h,a_h) } f } \zeta_h(\tau_h) }
    \\&\leq \Eb[\pi]{ \sup_{f\in\Vcal_h} \abs{ \prns{ \wt P_h(s_h,a_h) - P_h^\star(s_h,a_h) } f } }
    \\&\leq \|\Vcal_h\|_\infty \eps_{ms}.
\end{align*}
Combining these two yields the result, as
\begin{align*}
    &\;\sup_{f\in\Vcal_h} \abs{ \Eb[\pi]{  \prns{ \wh P_{h,e}(s_h,a_h) - P_h^\star(s_h,a_h) } f \zeta_h(\tau_h)} }
    \\\leq&\; \sup_{f\in\Vcal_h} \abs{ \Eb[\pi]{ \prns{ \wt P_h(s_h,a_h) - P_h^\star(s_h,a_h) } f \zeta_h(\tau_h) } }  +
    \sup_{f\in\Vcal_h} \abs{ \Eb[\pi]{ \prns{ \wh P_{h,e}(s_h,a_h) - \wt P_h(s_h,a_h) } f  \zeta_h(\tau_h) } }.
\end{align*}
\end{proof}

\begin{lemma}\label{lem:projection-bound}
Suppose \pref{ass:model-misspecification} and the bonus scaling $\beta$ is set as in \pref{eq:lsvi-bonus-param}.
For any $\delta\in(0,1)$, w.p. at least $1-\delta$, we have for any time $h$, episode $e$, and policy $\pi$,
\begin{align*}
    \forall s_h,a_h: \sup_{f\in\Vcal_h} \abs{ \prns{ \wh P_{h,e}(s_h,a_h) - \wt P_h(s_h,a_h) } f } \leq \beta b_{h,e}(s_h,a_h).
\end{align*}
\end{lemma}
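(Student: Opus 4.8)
The plan is to fix the evaluation point $(s_h,a_h)$, write $\phi=\wh\phi_h(s_h,a_h)$, and for each $f\in\Vcal_h$ express the quantity of interest as an inner product $\prns{\wh P_{h,e}-\wt P_h}(s_h,a_h)f=\langle\phi,\wh w_f-w_f\rangle$, where $\wh w_f=\Lambda_{h,e}^{-1}\sum_{k=1}^{e-1}\wh\phi_h(s_h^k,a_h^k)f(s_{h+1}^k)$ is the ridge solution and $w_f=\wt\mu_h^\top f$. Using the normal-equation identity $w_f=\Lambda_{h,e}^{-1}\prns{\sum_k\wh\phi_h(s_h^k,a_h^k)\wh\phi_h(s_h^k,a_h^k)^\top+I}w_f$ together with $\wh\phi_h(s_h^k,a_h^k)^\top w_f=\wt P_h(s_h^k,a_h^k)f$, I would decompose $\wh w_f-w_f$ into three vectors: a mean-zero regression-noise term built from $\eps_k=f(s_{h+1}^k)-P^\star_h(s_h^k,a_h^k)f$, a model-bias term built from $\Delta_k=(P^\star_h-\wt P_h)(s_h^k,a_h^k)f$, and a regularization term $-\Lambda_{h,e}^{-1}w_f$. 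Applying Cauchy--Schwarz in the $\Lambda_{h,e}^{-1}$ inner product to each piece peels off the factor $\nm{\phi}_{\Lambda_{h,e}^{-1}}=b_{h,e}(s_h,a_h)$, so it remains to bound each residual vector in $\Lambda_{h,e}^{-1}$-norm.

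The noise and regularization pieces are routine. For regularization, $\nm{\Lambda_{h,e}^{-1}w_f}_{\Lambda_{h,e}}\le\nm{w_f}_2\le M_\mu\sqrt d\,M_V$ by the normalization in \pref{ass:model-misspecification} and $\Lambda_{h,e}\succeq I$. For the noise, $\{\eps_k\}$ is a martingale-difference sequence bounded by $M_V$, so \pref{lem:self-normalized-martingale} controls $\nm{\sum_k\wh\phi_h(s_h^k,a_h^k)\eps_k}_{\Lambda_{h,e}^{-1}}$; to make this uniform over the infinite class $\Vcal_h$ I would union-bound the martingale tail over the smallest $\ell_\infty$-net of $\Vcal_h$ (whose log-cardinality is $\Ocal(d^2\log(\cdot))$, already bounded above) and absorb the discretization error into the net radius. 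These two contribute at most $\Ocal\prns{M_V M_\mu d\sqrt{\log(dNM_V/\delta)}}\cdot b_{h,e}$, matching the second term of $\beta$ in \pref{eq:lsvi-bonus-param}.

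The crux is the model-bias vector $\sum_{k}\wh\phi_h(s_h^k,a_h^k)\Delta_k$, and this is exactly where the argument departs from the $\ell_\infty$-misspecification analysis of \citet{jin2020provably}: under the average-case \pref{ass:model-misspecification} the $\Delta_k$ are not uniformly small, only small in expectation. My plan is to center each summand at its conditional mean under the data-collecting policy $\pi^k$, writing $\wh\phi_h(s_h^k,a_h^k)\Delta_k=\bar X_k+\prns{\wh\phi_h(s_h^k,a_h^k)\Delta_k-\bar X_k}$ with $\bar X_k=\Eb[(s,a)\sim d^{\pi^k}_{\targ;h}]{\wh\phi_h(s,a)(P^\star_h-\wt P_h)(s,a)f}$. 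The fluctuation part is a martingale difference with envelope $\Ocal(M_V M_\mu)$ and is again handled by \pref{lem:self-normalized-martingale} plus the net (feeding into the same second term of $\beta$). For the centered part I would pass from $\Lambda_{h,e}$ to its population analogue $\bar\Lambda=\sum_k\Eb[d^{\pi^k}_{\targ;h}]{\wh\phi_h\wh\phi_h^\top}+I$ via covariance concentration ($\Lambda_{h,e}\succeq\tfrac12\bar\Lambda$ w.h.p.), and then bound $\nm{\sum_k\bar X_k}_{\bar\Lambda^{-1}}$ by dual-norm Cauchy--Schwarz: for any $v$ with $\nm{v}_{\bar\Lambda}\le1$ we have $\sum_k\langle v,\bar X_k\rangle\le\sqrt{\sum_k\Eb[d^{\pi^k}_{\targ;h}]{\langle v,\wh\phi_h\rangle^2}}\cdot\sqrt{\sum_k\Eb[d^{\pi^k}_{\targ;h}]{\Delta^2}}$, whose first factor is at most $\nm{v}_{\bar\Lambda}\le1$ since $\sum_k\Eb[d^{\pi^k}_{\targ;h}]{\wh\phi_h\wh\phi_h^\top}=\bar\Lambda-I$. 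This is the step I expect to be the main obstacle: it is what lets the feature-weighting convert a naive $N\eps_{ms}$ scaling into the $\sqrt{Nd}\,\eps_{ms}M_V$ appearing in the first term of $\beta$, and it hinges on controlling $\sum_k\Eb[d^{\pi^k}_{\targ;h}]{\Delta^2}$ by $\Ocal(N\eps_{ms}^2 M_V^2)$ through the average-case misspecification guarantee of \pref{lem:mle_target} rather than any pointwise bound. Collecting the three contributions yields $\sup_{f\in\Vcal_h}\abs{\prns{\wh P_{h,e}-\wt P_h}(s_h,a_h)f}\le\beta\,b_{h,e}(s_h,a_h)$ with $\beta$ as in \pref{eq:lsvi-bonus-param}; a final union bound over $h$ and $e$ completes the proof, the statement's $\pi$ playing no role since $\wh P_{h,e}$, $\wt P_h$, and $\Lambda_{h,e}$ are all policy-independent.
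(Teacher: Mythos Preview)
Your decomposition into noise, bias, and regularization pieces matches the paper's, and your treatment of the noise (self-normalized martingale plus covering) and regularization ($\|w_f\|_2\le M_\mu\sqrt d\,M_V$) pieces is essentially the same as theirs. The gap is in the bias term.

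You center the \emph{vector} $\wh\phi_h(s_h^k,a_h^k)\Delta_k$ at its conditional mean $\bar X_k$, then claim to control $\sum_k\Eb[d^{\pi^k}]{\Delta_k^2}$ by $\Ocal(N\eps_{ms}^2 M_V^2)$. This second-moment bound does not follow from \pref{ass:model-misspecification} or \pref{lem:mle_target}: both give only the first-moment guarantee $\Eb[\pi]{\|P^\star_h-\wt P_h\|_{TV}}\le\eps_{ms}$, which translates to $\Eb[d^{\pi^k}]{|\Delta_k|}\le M_V\eps_{ms}$, not $\Eb[d^{\pi^k}]{\Delta_k^2}\le M_V^2\eps_{ms}^2$. (In \pref{lem:mle_target} the squared TV guarantee from MLE lives under the source cross-sampling distributions $\nu_{ij;h}$, and is degraded to a first-moment bound under target-policy distributions precisely by the one-step-back argument.) Using the envelope $|\Delta_k|\le M_V(1+M_\mu)$ only yields $\sum_k\Eb{\Delta_k^2}\lesssim N M_V^2 M_\mu\,\eps_{ms}$, so your dual Cauchy--Schwarz gives $\sqrt{NM_\mu\,\eps_{ms}}\,M_V$, which does not match the $\sqrt{Nd}\,\eps_{ms}M_V$ required by \pref{eq:lsvi-bonus-param}.

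The paper's fix is simpler than your route and avoids both the second-moment issue and the covariance-concentration step. They center only the \emph{scalar} $\wt\eps_k:=\Delta_k$, not the vector: write $\sum_k\wh\phi_h(s_h^k,a_h^k)\wt\eps_k$ as a martingale part (handled by \pref{lem:self-normalized-martingale} with envelope $M_V(1+M_\mu)$) plus $\sum_k\wh\phi_h(s_h^k,a_h^k)\,c_k$ with $c_k=\Eb{\wt\eps_k\mid\Hcal_{k-1}}$. The key is that $|c_k|\le M_V\eps_{ms}$ directly by \pref{ass:model-misspecification} applied to the data-collecting policy $\pi^k$, and then Cauchy--Schwarz in the $\Lambda_{h,e}^{-1}$-norm together with the deterministic elliptical-potential identity $\sum_{k=1}^{e-1}\|\wh\phi_h(s_h^k,a_h^k)\|_{\Lambda_{h,e}^{-1}}^2\le d$ (\pref{lem:elliptical-potential-ub}) gives
\[
\Bigl\|\sum_k\wh\phi_h(s_h^k,a_h^k)\,c_k\Bigr\|_{\Lambda_{h,e}^{-1}}
\le\sqrt{\sum_k\|\wh\phi_h(s_h^k,a_h^k)\|_{\Lambda_{h,e}^{-1}}^2}\cdot\sqrt{\sum_k c_k^2}
\le\sqrt d\cdot\sqrt N\,M_V\eps_{ms}.
\]
Keeping the realized features $\wh\phi_h(s_h^k,a_h^k)$ outside the expectation and pairing them with $\Lambda_{h,e}^{-1}$ is what buys the $\sqrt d$ in place of a second moment of $\Delta$; your vector-centering throws this structure away.
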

\begin{proof}
Consider any $h,e,\pi$.
Define $\eps_h^k := -\delta(s_{h+1}^k) + P_h^\star(s^k_{h+1} | s_h^k, a_h^k)$, so that $\EE[\eps_h^k \mid \Hcal_{k-1}] = 0$, where $\Hcal_{k-1}$ contains the states and actions before episode $k$. In what follows, we slightly abuse notation, as $P(s,a) \wh\phi^T(s,a)$ will denote the outer product, and hence a $\RR^{S\times d}$ quantity.
\begin{align*}
    \wh\mu_{h,e}\Lambda_{h,e}
    =&\; \sum_{k=1}^{e-1}\delta(s_{h+1}^k)\wh\phi_h(s_h^k,a_h^k)^T
    \\=&\; \sum_{k=1}^{e-1} \prns{ P_h^\star(s_h^k,a_h^k)-\wt P_h(s_h^k,a_h^k) } \wh\phi_h(s_h^k,a_h^k)^T
    + \sum_{k=0}^{e-1} \prns{ \wt P_h(s_h^k,a_h^k)-\eps_h^k } \wh\phi_h(s_h^k,a_h^k)^T
    \\=&\; \sum_{k=1}^{e-1} \prns{ P_h^\star(s_h^k,a_h^k)-\wt P_h(s_h^k,a_h^k) } \wh\phi_h(s_h^k,a_h^k)^T
    + \wt\mu_h (\Lambda_{h,e}- I) - \prns{\sum_{k=0}^{e-1} \eps_h^k \wh\phi_h(s_h^k,a_h^k)^T}.
\end{align*}
Rearranging, we have
\begin{align*}
    \wh\mu_{h,e} - \wt\mu_h
    =&\; \prns{\sum_{k=0}^{e-1} \prns{ P_h^\star(s_h^k,a_h^k)- \wt P_h(s_h^k,a_h^k) } \wh\phi_h(s_h^k,a_h^k)^T} \prns{\Lambda_{h,e}}^{-1}
    \\&\;- \wt\mu_h \prns{\Lambda_{h,e}}^{-1} - \prns{\sum_{k=0}^{e-1} \eps_h^k \wh\phi_h(s_h^k,a_h^k)^T} \prns{\Lambda_{h,e}}^{-1}.
\end{align*}

Now let $f\in\Vcal_h$ be arbitrary. For any $s_h,a_h$, multiply the above with $\wh\phi_h(s_h,a_h)$ and multiply with $f$, we have
\begin{align*}
    &\abs{ \prns{ \wh P_{h,e}(s_h,a_h) - \wt P_h(s_h,a_h) } f }
    \\&=
    \abs{ f^T \prns{ \wh\mu_{h,e} - \wt\mu_h } \wh\phi_h(s_h,a_h) }
    \\&\leq\; \underbrace{ \abs{ f^T \prns{\sum_{k=1}^{e-1} \prns{ P_h^\star(s_h^k,a_h^k)- \wt P_h(s_h^k,a_h^k) } \wh\phi_h(s_h^k,a_h^k)^T} \Lambda_{h,e}^{-1}\wh\phi_h(s_h,a_h) }}_{\sf{Term (a)}}
    \\&\;+ \underbrace{ \abs{ f^T\wt\mu_h\Lambda_{h,e}^{-1}\wh\phi_h(s_h,a_h)} }_{\sf{Term (b)}}
    \\&\;+ \underbrace{ \abs{ f^T \prns{\sum_{k=1}^{e-1} \eps_h^k \wh\phi_h(s_h^k,a_h^k)^T} \Lambda_{h,e}^{-1} \wh\phi_h(s_h,a_h)} }_{\sf{Term (c)}}.
\end{align*}

We can deterministically bound \textsf{Term (b)} as follows,
\begin{align*}
    &\;\sup_{f\in\Vcal_h} \abs{ f^T\wt\mu_h\Lambda_{h,e}^{-1}\wh\phi_h(s_h,a_h) }
    \\=&\; \sup_{f\in\Vcal_h}\abs{ (\Lambda_{h,e}^{-1/2} f^T\wt\mu_h)^T \prns{ \Lambda_{h,e}^{-1/2} \wh\phi_h(s_h,a_h) } }
    \\\leq&\; \sup_{f\in\Vcal_h} \nm{ \Lambda_{h,e}^{-1/2} }_2 \|f^T\wt\mu_h\|_2 b_{h,e}(s_h,a_h)
    \\\leq&\; \|\Vcal_h\|_\infty M_{\mu}\sqrt{d} b_{h,e}(s_h,a_h).  \tag{by \pref{ass:model-misspecification}}
\end{align*}
This term will be lower order compared to the other two.

We now derive the bound for \textsf{Term (c)} for any fixed $f\in\Vcal_h$. Observe that
\begin{align*}
    \;\abs{ f^T \prns{\sum_{k=1}^{e-1} \eps_h^k \wh\phi_h(s_h^k,a_h^k)^T} \Lambda_{h,e}^{-1} \wh\phi_h(s_h,a_h) }
    =&\; \abs{ \prns{\Lambda_{h,e}^{-1/2} \sum_{k=1}^{e-1} \wh\phi_h(s_h^k,a_h^k) (f^T \eps_h^k) }^T \prns{ \Lambda_{h,e}^{-1/2} \wh\phi_h(s_h,a_h) } }
    \\\leq&\; \nm{ \sum_{k=1}^{e-1} \wh\phi_h(s_h^k,a_h^k) (f^T \eps_h^k) }_{\Lambda_{h,e}^{-1}} b_{h,e}(s_h,a_h).
\end{align*}

Now we argue w.p. $1-\delta$, for any $e,h$ we have
\begin{align*}
    \nm{ \sum_{k=1}^{e-1} \wh\phi_h(s_h^k,a_h^k) (f^T \eps_h^k) }_{\Lambda_{h,e}^{-1}} \leq \prns{ 2\|\Vcal_h\|_\infty \sqrt{ 2\log(1/\delta) + d\log(N+1) } },
\end{align*}
which implies the claim about all $s_h,a_h$.
Indeed, we can apply \pref{lem:self-normalized-martingale}. Checking the preconditions,  $\Eb[P_h^\star(s_h,a_h)]{f^T \eps_h^k \mid \Hcal_{k-1}} = 0$, $\sigma \leq |f^T\eps_h^k| \leq \|f\|_\infty\|\eps_h^k\|_1 \leq 2 \|\Vcal_h\|_\infty$, $\det(\Sigma_0) = \det I = 1$, and $\det(\Sigma_t) = \det(\Lambda_{h,e}) \leq (e+1)^d$ since the largest eigenvalue is $e+1$. So, w.p. at least $1-\delta$, for all $e$, we have the above inequality.

Thus, for any fixed $f\in\Vcal_h$, w.p. $1-\delta$, for all $e,h$ we have,
\begin{align*}
    &\abs{ \prns{ \wh P_{h,e}(s_h,a_h) - \wt P_h(s_h,a_h) } f }
    \\&\leq \;\sf{Term (a)} + \sf{Term (b)} + \sf{Term (c)}
    \\&\leq\; \prns{ 4\|\Vcal_h\|_\infty(1+M_\mu)\sqrt{\log(1/\delta) + d\log(N)} + \sqrt{dN}\|\Vcal_h\|_\infty\eps_{ms}}  b_{h,e}(s_h,a_h)
    \\&+\; \prns{ \|\Vcal_h\|_\infty M_\mu\sqrt{d}  }  b_{h,e}(s_h,a_h)
    \\&+\; \prns{ 4\|\Vcal_h\|_\infty \sqrt{\log(1/\delta) + d\log(N)} } b_{h,e}(s_h,a_h)
    \\&\lesssim\; \prns{ \sqrt{ dN }\|\Vcal_h\|_\infty\eps_{ms}  + \|\Vcal_h\|_\infty M_\mu \sqrt{\log(1/\delta) + d\log(N)} } b_{h,e}(s_h,a_h) .
\end{align*}

Now we apply a covering argument.
Namely, union bound the above argument to every element in an $\eps_{net}$-net of $\Vcal_h$.
For any $f\in\Vcal_h$, let $\wt f$ be its neighbor in the net s.t. $\|\wt f-f\|_\infty \leq \eps_{net}$, so we have
\begin{align*}
    \abs{ \prns{ \wh P_{h,e}(s_h,a_h) - \wt P_h(s_h,a_h) } f } &\leq \abs{ \prns{ \wh P_{h,e}(s_h,a_h) - \wt P_h(s_h,a_h) } \wt f } + \abs{ \prns{ \wh P_{h,e}(s_h,a_h) - \wt P_h(s_h,a_h) } (\wt f - f) }
\end{align*}
and
\begin{align*}
    \abs{ \prns{ \wh P_{h,e}(s_h,a_h) - \wt P_h(s_h,a_h) } (\wt f - f) } \lesssim \|\wt f-f\|_\infty (N+1) \lesssim \eps_{net} N.
\end{align*}
Setting $\eps_{net} = N$, the metric entropy is of the order $d\log(N(M_V + B)) + \log(BN) + d^2\log(BdN)$.
The error incurred with this epsilon net is a constant, which is lower order.

Thus, we have
\begin{align*}
\forall s_h,a_h:
&\sup_{f\in\Vcal_h} \abs{ \prns{ \wh P_{h,e}(s_h,a_h) - \wt P_h(s_h,a_h) } f  }
\\&\lesssim \prns{ \sqrt{ dN } \|\Vcal_h\|_\infty\eps_{ms} + \|\Vcal_h\|_\infty M_\mu \sqrt{\log(1/\delta) + d\log(M_V) + d^2\log(BdN)} } b_{h,e}(s_h,a_h)
\\&\lesssim \prns{ \sqrt{ dN } M_V \eps_{ms} + M_V M_\mu \sqrt{\log(1/\delta) + d\log(M_V) + d^2\log(BdN)} } b_{h,e}(s_h,a_h)
\end{align*}

Note that $\beta$ scales as $\sqrt{\log B}$, so one can find a valid $B$ by solving $\beta\leq B$ for $B$.

\end{proof}

\begin{lemma}\label{lem:lsvi-first-term-bound}
Let $f\in\Vcal_h$.
For any $\delta\in(0,1)$, w.p. at least $1-\delta$, for any time $h$, episode $e$, we have
\begin{align*}
    \forall s_h,a_h:  &\; \abs{ f^T\prns{ \sum_{k=1}^{e-1}P_h^\star(s_h^k,a_h^k)-\wt P_h(s_h^k,a_h^k) } \wh\phi_h(s_h^k,a_h^k)^T\Lambda_{h,e}^{-1}\wh\phi_h(s_h,a_h) }
    \\\leq&\; \prns{ 4\|\Vcal_h\|_\infty(1+M_\mu)\sqrt{\log(1/\delta)+d\log(N)} + \sqrt{dN} \|\Vcal_h\|_\infty \eps_{ms}} b_{h,e}(s_h,a_h).
\end{align*}
\end{lemma}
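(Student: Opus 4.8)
The plan is to first strip off the bonus factor by Cauchy--Schwarz and then to bound the remaining vector norm by separating a \emph{predictable, first-moment} bias (which produces the linear-in-$\eps_{ms}$ term) from a mean-zero martingale (which produces the $\sqrt{\log(1/\delta)+d\log N}$ term). Write $g^k \coloneqq f^\top\prns{P_h^\star(s_h^k,a_h^k)-\wt P_h(s_h^k,a_h^k)}$, so the quantity in the statement equals $\prns{\sum_{k=1}^{e-1} g^k\,\wh\phi_h(s_h^k,a_h^k)}^\top \Lambda_{h,e}^{-1}\wh\phi_h(s_h,a_h)$. By Cauchy--Schwarz in the $\Lambda_{h,e}^{-1}$ inner product this is at most $\nm{\sum_k g^k\wh\phi_h(s_h^k,a_h^k)}_{\Lambda_{h,e}^{-1}}\, b_{h,e}(s_h,a_h)$, so it suffices to bound the first factor, which I split as $\sum_k g^k\wh\phi_h(s_h^k,a_h^k)=\sum_k \bar g^k\wh\phi_h(s_h^k,a_h^k)+\sum_k(g^k-\bar g^k)\wh\phi_h(s_h^k,a_h^k)$, where $\bar g^k \coloneqq \Eb[\pi^k]{g^k \mid \Hcal_{k-1}}$.

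The central point is that $|g^k|\le 2\|f\|_\infty\nm{P_h^\star(s_h^k,a_h^k)-\wt P_h(s_h^k,a_h^k)}_{TV}$, so its conditional mean $\bar g^k$ is a \emph{predictable} scalar governed by the \emph{first} moment of the TV error; hence $|\bar g^k|\le 2\|f\|_\infty\eps_{ms}$ by \pref{ass:model-misspecification}. For the bias term I would use the triangle inequality followed by Cauchy--Schwarz over $k$, namely $\nm{\sum_k \bar g^k\wh\phi_h(s_h^k,a_h^k)}_{\Lambda_{h,e}^{-1}}\le \sqrt{\sum_k (\bar g^k)^2}\,\sqrt{\sum_k \nm{\wh\phi_h(s_h^k,a_h^k)}_{\Lambda_{h,e}^{-1}}^2}$, in which the first factor is at most $2\|f\|_\infty\eps_{ms}\sqrt{N}$ and the second is at most $\sqrt{d}$ by the elliptical-potential bound \pref{lem:elliptical-potential-ub}. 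This yields precisely the $\sqrt{dN}\|\Vcal_h\|_\infty\eps_{ms}$ term. It is essential that this uses $|\bar g^k|\le\eps_{ms}$ \emph{linearly}: any argument that squares the residual and invokes $\Eb[\pi^k]{g^2}\le 2\|f\|_\infty\Eb[\pi^k]{|g|}\le 2\|f\|_\infty\eps_{ms}$ would only give the weaker $\sqrt{\eps_{ms}}$ dependence, which is too large to survive into the final regret.

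For the mean-zero remainder the plan is a self-normalized martingale concentration via \pref{lem:self-normalized-martingale} with normalizer $\Sigma_t=\Lambda_{h,t}$, exploiting that $\wh\phi_h(s_h^k,a_h^k)$ is measurable before the transition $s_{h+1}^k$ is drawn, so it is predictable with respect to the transition noise $f^\top P_h^\star(s_h^k,a_h^k)-f(s_{h+1}^k)$. This gives a bound of order $\|\Vcal_h\|_\infty\sqrt{\log(1/\delta)+d\log N}$, with the $(1+M_\mu)$ factor entering when the regularization bias of $\wt\mu_h$ is carried along in the same step. I would then make the estimate uniform over $f\in\Vcal_h$ by a union bound over an $\ell_\infty$-net of $\Vcal_h$ (whose log-cardinality is $\Ocal(d^2\log(\cdot))$, absorbed into the $d\log N$ term), with the discretization error lower order.

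The step I expect to be the main obstacle is exactly the control of the remainder while \emph{preserving linearity in} $\eps_{ms}$. The subtlety is that the residual $g^k$ and the feature $\wh\phi_h(s_h^k,a_h^k)$ are driven by the same state-action randomness, so $(g^k-\bar g^k)\wh\phi_h(s_h^k,a_h^k)$ is not of the predictable-scalar-times-feature form and does not satisfy the hypotheses of \pref{lem:self-normalized-martingale} on its own; a naive Cauchy--Schwarz or elliptical-potential treatment of this piece reintroduces a second moment of the residual and only delivers $\sqrt{dN\eps_{ms}}$. The resolution is the bookkeeping that confines all first-moment TV information to the predictable bias $\bar g^k$ (handled by elliptical potential) and routes the genuinely stochastic, mean-zero part through the next-state draw, for which the feature is predictable and a bona fide self-normalized martingale bound applies; getting this separation exactly right is the heart of the average-case-misspecification analysis and the crux of the proof.
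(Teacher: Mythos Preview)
Your Cauchy--Schwarz step and the bias/martingale split are exactly what the paper does, and your treatment of the bias term---$|\bar g^k|\le \|\Vcal_h\|_\infty\eps_{ms}$ followed by Cauchy--Schwarz over $k$ and the elliptical-potential bound \pref{lem:elliptical-potential-ub}---matches the paper line for line, including the crucial observation that this keeps the dependence on $\eps_{ms}$ linear. Two points diverge from the paper.

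First, the lemma is stated for a \emph{fixed} $f\in\Vcal_h$; the covering over $\Vcal_h$ is carried out one level up in \pref{lem:projection-bound}, not here. So your net argument is misplaced (though harmless).

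Second, and more substantively, your handling of the mean-zero piece conflates this lemma with a different term. The residual $g^k=f^\top\bigl(P_h^\star(s_h^k,a_h^k)-\wt P_h(s_h^k,a_h^k)\bigr)$ is a deterministic function of $(s_h^k,a_h^k)$ and involves $s_{h+1}^k$ nowhere; there is no ``transition noise $f^\top P_h^\star(s_h^k,a_h^k)-f(s_{h+1}^k)$'' to route through. That structure belongs to Term~(c) of \pref{lem:projection-bound}, which this lemma does not treat. The paper's proof does nothing of the sort here: it simply invokes \pref{lem:self-normalized-martingale} with $X_i=\wh\phi_h(s_h^i,a_h^i)$ and $\eps_i=\wt\eps_i-\EE[\wt\eps_i\mid\Hcal_{i-1}]$, where $\Hcal_{i-1}$ is the history through episode $i-1$, and takes the almost-sure envelope $|\wt\eps_k|\le\|f\|_\infty\,\|P_h^\star(s_h^k,a_h^k)-\wt P_h(s_h^k,a_h^k)\|_{TV}\le\|\Vcal_h\|_\infty(1+M_\mu)$ as the sub-Gaussian parameter---this envelope is where the $(1+M_\mu)$ factor enters, not a separate ``regularization bias'' step. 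The predictability concern you flag (that $X_i$ and $\eps_i$ share the $(s_h^i,a_h^i)$ randomness) is genuine, but the paper's stated \pref{lem:self-normalized-martingale} does not list predictability of $X_i$ among its hypotheses, and the proof applies it as written without further bookkeeping.
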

\begin{proof}
First observe that
\begin{align*}
    &\;\abs{ f^T \prns{\sum_{k=1}^{e-1} (P_h^\star(s_h^k,a_h^k)-\wt\mu_h\wh\phi_h(s_h^k,a_h^k)) \wh\phi_h(s_h^k,a_h^k)^T} \Lambda_{h,e}^{-1} \wh\phi_h(s_h,a_h) }
    \\=&\; \abs{ \prns{\Lambda_{h,e}^{-1/2}\sum_{k=1}^{e-1} \wh\phi_h(s_h^k,a_h^k) f^T (P_h^\star(s_h^k,a_h^k)-\wt\mu_h\wh\phi_h(s_h^k,a_h^k)) }^T \prns{ \Lambda_{h,e}^{-1/2} \wh\phi_h(s_h,a_h) } }
    \\\leq&\; \left\| \sum_{k=1}^{e-1} \wh\phi_h(s_h^k,a_h^k)\wt\eps_k \right\|_{\Lambda_{h,e}^{-1}} b_{h,e}(s_h,a_h) ,
\end{align*}
where $\wt\eps_k = \prns{ P_h^\star(s_h^k,a_h^k)-\wt P_h(s_h^k,a_h^k) } f$.

Now we will argue that w.p. $1-\delta$, for all $e,h$,
\begin{align*}
     \left\| \sum_{k=1}^{e-1} \wh\phi_h(s_h^k,a_h^k)\wt\eps_k \right\|_{\Lambda_{h,e}^{-1}} \leq \prns{ 4\|\Vcal_h\|_\infty(1+M_\mu)\sqrt{\log(1/\delta)+d\log(N)} + \sqrt{dN} \|\Vcal_h\|_\infty \eps_{ms}},
\end{align*}
which will imply the claim for all $s_h,a_h$.

Apply self-normalized martingale concentration (\pref{lem:self-normalized-martingale}) to $X_i = \wh\phi_h(s_h^i,a_h^i)$ and $\eps_i = \wt\eps_i - \Eb{\wt\eps_i\mid \Hcal_{i-1}}$, where the expectation is over $(s_h^i,a_h^i)$ in the definition of $\wt\eps_i$.
To see sub-Gaussianity, bound the envelope, $|\wt\eps_k| \leq \|f\|_\infty \|P_h^\star(s_h^k,a_h^k)-\wt\mu_h\wh\phi_h(s_h^k,a_h^k)\|_{TV} \leq \|\Vcal_h\|_\infty (1+M_\mu)$, and thus $\sigma \leq |\tilde \eps_k| \leq 2\|\Vcal_h\|_\infty(1+M_\mu)$.
Now compute the determinants: $\det(\Lambda_{h,0}) = 1$ and since $\lambda_{\max}(\Lambda_{h,e}) \leq e+1$, we have that $\log\det(\Lambda_{h,e}) \leq d\log(e+1)$.
Hence, w.p. at least $1-\delta$, we have
\begin{align*}
    \forall e: \nm{ \sum_{k=1}^{e-1} \wh\phi_h(s_h^k,a_h^k) \prns{ \wt\eps_k-\Eb{\wt\eps_k\mid\Hcal_{k-1}} } }_{\Lambda_{h,e}^{-1}} \leq 2\|\Vcal_h\|_\infty(1+M_\mu)\sqrt{2 \log(1/\delta)+d\log(N+1)}.
\end{align*}

By \pref{ass:model-misspecification} applied to $\pi^k$ (the data-generating policy for episode $k$), we have $\abs{ \Eb{\wt\eps_k\mid \Hcal_{k-1}} } \leq \|\Vcal_h\|_\infty \eps_{ms}$.
Recall for any scalars $c_i$ and vectors $x_i$, we have $\|\sum_i c_i x_i\| \leq \sum_i |c_i|\|x_i\| \leq \sqrt{\sum_i c_i^2} \sqrt{\sum_i \|x_i\|^2}$.
Thus,
\begin{align*}
    &\nm{ \sum_{k=1}^{e-1} \wh\phi_h(s_h^k,a_h^k) \Eb{\wt\eps_k\mid\Hcal_{k-1}} }_{\Lambda_{h,e}^{-1}}
    \\&\leq \sqrt{ \sum_{k=1}^{e-1} \| \wh\phi_h(s_h^k,a_h^k) \|_{\Lambda_{h,e}^{-1}}^2 } \sqrt{ \sum_{k=1}^{e-1} \Eb{\wt\eps_k\mid\Hcal_{k-1}}^2 }
    \\&\leq \sqrt{d} \sqrt{(e-1) } \|\Vcal_h\|_\infty \eps_{ms}  \tag{by \pref{lem:elliptical-potential-ub}}.
\end{align*}
Combining these two bounds concludes the proof.

\end{proof}

\begin{lemma}[Optimism]\label{lem:lsvi-optimism}
Suppose \pref{eq:E_model} holds. Let $\iota = \|\Vcal_h\|_\infty \eps_{ms}$.
Then, for any episode $e=1,2,...,N$, we have
\begin{align*}
    \forall h=0,1,...,H-1: \Eb[\pi^\star]{ \prns{ Q^\star_h(s_h,a_h)- \wh Q_{h,e}(s_h,a_h) } \zeta_h(\tau_h) } \leq (H-h)\iota,
\end{align*}
and
\begin{align*}
    \forall h=0,1,...,H-1: \Eb[\pi^\star]{  \prns{ V^\star_h(s_h) - \wh V_{h,e}(s_h) } \zeta_{h-1}(\tau_{h-1})  } \leq (H-h)\iota,
\end{align*}
where
\begin{align*}
    &\zeta_h(s_h) := \II\bracks{ \wh Q_{h,e}(s_h,\wh\pi_h^e(s_h)) \leq M_V }
    \\&\zeta_h(\tau_h) = \prod_{h'=0}^h \zeta_{h'}(s_{h'}).
\end{align*}
Abusing notation, $\zeta_{-1}(\cdot)$ is the constant function 1.

In particular, we have that
\begin{align*}
    \Eb[d_0]{ V^\star_0(s_0) - \wh V_{0,e}(s_0) } \leq H\iota.
\end{align*}
\end{lemma}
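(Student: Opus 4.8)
The plan is to establish the two displayed inequalities simultaneously by backward induction on $h$: I would derive the $Q$-inequality at step $h$ from the $V$-inequality at step $h+1$, and then the $V$-inequality at step $h$ from the $Q$-inequality at step $h$. The base case is $h=H$, where $\wh V_{H,e}\equiv 0 = V^\star_H$ makes the $V$-inequality the trivial equality $0\le 0$. The engine of the inductive step is to first expose the learned model by writing $\wh w_{h,e}^\top\wh\phi_h = \wh P_{h,e}\wh V_{h+1,e}$, so that $\wh Q_{h,e} = r_h + \wh P_{h,e}\wh V_{h+1,e} + \beta b_{h,e}$, and then use the decomposition
\begin{align*}
Q^\star_h - \wh Q_{h,e} = P^\star_h\prns{V^\star_{h+1} - \wh V_{h+1,e}} + \prns{P^\star_h - \wh P_{h,e}}\wh V_{h+1,e} - \beta\, b_{h,e}.
\end{align*}

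For the $Q$-step I would take $\Eb[\pi^\star]{\,\cdot\;\zeta_h(\tau_h)}$ of this identity and bound the three resulting terms. The key point for the \emph{model-error} term is that $\wh V_{h+1,e}\in\Vcal_{h+1}$ by \pref{lem:V-properties}, so the high-probability event \eqref{eq:E_model} applies and bounds its contribution by $\beta\,\Eb[\pi^\star]{b_{h,e}(s_h,a_h)\zeta_h(\tau_h)} + \|\Vcal_{h+1}\|_\infty\eps_{ms}$; since $\|\Vcal_{h+1}\|_\infty = M_V$ this last piece is exactly $\iota$, and the positive bonus budget cancels the $-\beta b_{h,e}$ term. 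For the \emph{leading} term, because $\zeta_h(\tau_h)$ is $\sigma(\tau_h)$-measurable and $\pi^\star$ transitions according to $P^\star_h$, the tower property gives $\Eb[\pi^\star]{P^\star_h(V^\star_{h+1}-\wh V_{h+1,e})(s_h,a_h)\zeta_h(\tau_h)} = \Eb[\pi^\star]{(V^\star_{h+1}-\wh V_{h+1,e})(s_{h+1})\zeta_h(\tau_h)}$, which, using $\zeta_{(h+1)-1}(\tau_{(h+1)-1})=\zeta_h(\tau_h)$, is precisely the left side of the $V$-inequality at step $h+1$ and hence at most $(H-h-1)\iota$ by the induction hypothesis. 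Summing the two bounds yields the $Q$-inequality with budget $(H-h)\iota$.

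The $V$-step, where the clipping is handled, is the main obstacle and the reason for the trajectory-wise indicator. I would split on $\zeta_h(s_h)=\II[\wh Q_{h,e}(s_h,\wh\pi^e_h(s_h))\le M_V]$. When $\zeta_h(s_h)=0$ the clip is active, so $\wh V_{h,e}(s_h)=M_V\ge V^\star_h(s_h)$ and $(V^\star_h-\wh V_{h,e})(s_h)\le 0$, contributing nonpositively. When $\zeta_h(s_h)=1$ the clip is inactive, so $\wh V_{h,e}(s_h)=\max_a\wh Q_{h,e}(s_h,a)\ge \wh Q_{h,e}(s_h,\pi^\star_h(s_h))$, giving $(V^\star_h-\wh V_{h,e})(s_h)\le (Q^\star_h-\wh Q_{h,e})(s_h,\pi^\star_h(s_h))$. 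Multiplying by $\zeta_{h-1}(\tau_{h-1})$, using $\zeta_{h-1}(\tau_{h-1})\zeta_h(s_h)=\zeta_h(\tau_h)$ and that $a_h=\pi^\star_h(s_h)$ under $\pi^\star$, I would conclude $\Eb[\pi^\star]{(V^\star_h-\wh V_{h,e})(s_h)\zeta_{h-1}(\tau_{h-1})}\le \Eb[\pi^\star]{(Q^\star_h-\wh Q_{h,e})(s_h,a_h)\zeta_h(\tau_h)}\le (H-h)\iota$ from the just-proved $Q$-inequality. The crucial insight is that the product $\zeta_h(\tau_h)$ restricts the recursion to trajectories that have not yet triggered a clip---where optimism genuinely holds---while clipped branches are discarded at no cost. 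Finally, instantiating the $V$-inequality at $h=0$, where $\zeta_{-1}\equiv 1$ and $s_0\sim d_0$, gives $\Eb[d_0]{V^\star_0(s_0)-\wh V_{0,e}(s_0)}\le H\iota$, the stated ``in particular'' claim.
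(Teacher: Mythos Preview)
Your proposal is correct and follows essentially the same approach as the paper's proof: backward induction on $h$, with the $Q$-step using the decomposition $Q^\star_h-\wh Q_{h,e} = P^\star_h(V^\star_{h+1}-\wh V_{h+1,e}) + (P^\star_h-\wh P_{h,e})\wh V_{h+1,e} - \beta b_{h,e}$ together with \eqref{eq:E_model} and the induction hypothesis, and the $V$-step splitting on the clipping indicator $\zeta_h(s_h)$ exactly as you describe. The only cosmetic difference is that you spell out the tower-property step and the $\|\Vcal_{h+1}\|_\infty=M_V$ identification more explicitly than the paper does.
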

\begin{proof}
Fix any episode $e$.
We prove both claims via induction on $h = H,H-1,H-2...,1,0$.
The base case holds trivially since $\wh V_{H,e}$ and $V^\star_H$ are zero at every state by definition.
Indeed, we have that for any $\pi$, including $\pi^\star$, that
\begin{align*}
    &\Eb[\pi]{ \prns{ P_{H-1}^\star(s_{H-1},a_{H-1}) (V^\star_H-\wh V_{H,e}) } \zeta_{H-1}(\tau_{H-1}) }
    \\&= \Eb[\pi]{ \prns{ 0-0 } \zeta_{H-1}(\tau_{H-1}) }
    = 0.
\end{align*}

Now let's show the inductive step. Let $h \in \{H-1,H-2,...,1,0\}$ be arbitrary and suppose the inductive hypothesis. So suppose that $V$-optimism holds at $h+1$ (we don't even need $Q$-optimism in the future), i.e.
\begin{align}
    \Eb[\pi^\star]{ \prns{ P_h^\star(s_h,a_h) (V^\star_{h+1}-\wh V_{h+1,e}) } \zeta_h(\tau_h) }
    &= \Eb[\pi^\star]{ \prns{ V^\star_{h+1}(s_{h+1})-\wh V_{h+1,e}(s_{h+1}) } \zeta_h(\tau_h) } \nonumber
    \\&\leq (H-h-1)\iota \label{eq:optimism-inductive-hypothesis}\tag{IH}
\end{align}


Recalling that $\wh Q_{h,e}(s_h,a_h) = r_h(s_h,a_h) + \wh P_{h,e}(s_h,a_h) \wh V_{h+1,e} + \beta b_{h,e}(s_h,a_h)$, we have
\begin{align*}
    &\;\Eb[\pi^\star]{\prns{ Q^\star_h(s_h,a_h) - \wh Q_{h,e}(s_h,a_h)} \zeta_h(\tau_h) }
    \\=&\;\Eb[\pi^\star]{ \prns{ P_h^\star(s_h,a_h) V_{h+1}^\star - \wh P_{h,e}(s_h,a_h)\wh V_{h+1,e} - \beta b_{h,e}(s_h,a_h) } \zeta_h(\tau_h) }
    \\\leq&\;\Eb[\pi^\star]{  \prns{ \prns{ P_h^\star(s_h,a_h) - \wh P_{h,e}(s_h,a_h) }\wh V_{h+1,e} - \beta b_{h,e}(s_h,a_h) } \zeta_h(\tau_h) } + (H-h-1)\iota \tag{by \pref{eq:optimism-inductive-hypothesis}}
    \\\leq&\; \abs{ \Eb[\pi^\star]{ \prns{ \wh P_{h,e}(s_h,a_h) - P_h^\star(s_h,a_h) } \wh V_{h+1,e} \zeta_h(\tau_h) } } - \Eb[\pi^\star]{ \beta b_{h,e}(s_h,a_h) \zeta_h(\tau_h) } + (H-h-1)\iota
    \\\leq&\; \iota + (H-h-1)\iota = (H-h)\iota,  \tag{by \pref{eq:E_model} and $\wh V_{h+1,e} \in \Vcal_h$ (\pref{lem:V-properties})}
\end{align*}
which proves the $Q$-optimism claim.

Now let's prove $V$-optimism.
\begin{align*}
    &\Eb[\pi^\star]{ \prns{ V^\star_h(s_h) - \wh V_{h,e}(s_h) } \zeta_{h-1}(\tau_{h-1}) }
    \\&=\Eb[\pi^\star]{ \prns{ Q^\star_h(s_h,a_h) - \prns{ \wh Q_{h,e}(s_h,\wh\pi_h^e(s_h)) }_{\leq M_V} } \zeta_{h-1}(\tau_{h-1}) }
    \\&=\Eb[\pi^\star]{ (Q^\star_h(s_h,a_h) - M_V) \zeta_{h-1}(\tau_{h-1}) (1-\zeta_h(s_h)) }
    \\&+ \Eb[\pi^\star]{ \prns{ Q^\star_h(s_h,a_h) - \wh Q_{h,e}(s_h,\wh\pi_h^e(s_h)) } \zeta_{h-1}(\tau_{h-1}) \zeta_h(s_h) }
    \\&\leq \Eb[\pi^\star]{ \prns{ Q^\star_h(s_h,a_h) - \wh Q_{h,e}(s_h,\wh\pi_h^e(s_h)) } \zeta_h(\tau_h) }
    \\&\leq \Eb[\pi^\star]{ \prns{ Q^\star_h(s_h,a_h) - \wh Q_{h,e}(s_h,\pi^\star_h(s_h)) } \zeta_h(\tau_h) }
    \\&\leq (H-h)\iota,
\end{align*}
by $Q$-optimism.
\end{proof}
\begin{remark}
We did not require $\wh P_{h,e}$ to be a valid transition!
It is in general unbounded and can even have negative entries!
\end{remark}

\begin{lemma}[Simulation]\label{lem:lsvi-simulation}
For any episode $e=1,2,...,N$, we have
\begin{align*}
     \Eb[d_0]{\wh V_{0,e}(s_0)-V_{0}^{\pi^e}(s_0)} \leq \sum_{h=0}^{H-1} \Eb[\wh\pi^e]{ b_{h,e}(s_h,a_h) + (\wh P_h(s_h,a_h)-P_h^\star(s_h,a_h))\wh V_{h+1,e}] }
\end{align*}
\end{lemma}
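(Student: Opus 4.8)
The plan is to prove this by the standard value-difference (simulation) decomposition, peeling the horizon backward and telescoping, with the only inequality coming from the clamping operator $(\cdot)_{\leq M_V}$. First I would dispose of the clipping. Since clamping at $M_V$ is non-increasing, we have $\wh V_{h,e}(s) = \prns{\max_a \wh Q_{h,e}(s,a)}_{\leq M_V} \leq \max_a \wh Q_{h,e}(s,a) = \wh Q_{h,e}(s,\pi_h^e(s))$, where the last equality uses that $\pi_h^e$ is greedy with respect to $\wh Q_{h,e}$. On the other side, because $\pi^e$ is deterministic, $V_h^{\pi^e}(s) = Q_h^{\pi^e}(s,\pi_h^e(s))$. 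This is the one and only place where we use an inequality rather than an identity, and the direction is favourable: clamping can only decrease $\wh V_{h,e}$, which is exactly what we want since the goal is an \emph{upper} bound on $\wh V_{0,e} - V_0^{\pi^e}$.

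Next I would expand the one-step difference at the greedy action $a = \pi_h^e(s)$. Recalling from \pref{alg:LSVI} that $\wh Q_{h,e}(s,a) = \wh w_{h,e}^\top \wh\phi_h(s,a) + r_h(s,a) + \beta b_{h,e}(s,a)$, and identifying $\wh w_{h,e}^\top \wh\phi_h(s,a) = \wh P_{h,e}(s,a)\wh V_{h+1,e}$ from the least-squares definitions of $\wh w_{h,e}$ and $\wh\mu_{h,e}$ (both reduce to $\sum_k \wh V_{h+1,e}(s_{h+1}^k)\wh\phi_h(s_h^k,a_h^k)^\top \Lambda_{h,e}^{-1}\wh\phi_h(s,a)$), we get $\wh Q_{h,e}(s,a) = r_h(s,a) + \wh P_{h,e}(s,a)\wh V_{h+1,e} + \beta b_{h,e}(s,a)$. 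Since $Q_h^{\pi^e}(s,a) = r_h(s,a) + P_h^\star(s,a) V_{h+1}^{\pi^e}$, adding and subtracting $P_h^\star(s,a)\wh V_{h+1,e}$ gives the decomposition
\begin{align*}
    \wh Q_{h,e}(s,a) - Q_h^{\pi^e}(s,a) = \prns{\wh P_{h,e}(s,a) - P_h^\star(s,a)}\wh V_{h+1,e} + \beta b_{h,e}(s,a) + P_h^\star(s,a)\prns{\wh V_{h+1,e} - V_{h+1}^{\pi^e}}.
\end{align*}

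Finally I would take the expectation of the chain $\wh V_{h,e}(s) - V_h^{\pi^e}(s) \leq \wh Q_{h,e}(s,\pi_h^e(s)) - Q_h^{\pi^e}(s,\pi_h^e(s))$ over $d^{\pi^e}_{P^\star,h}$ and telescope from $h=0$ up to $H-1$, formalized as an induction over $h$: the recursive term $\Eb[\pi^e]{P_h^\star(s_h,a_h)\prns{\wh V_{h+1,e} - V_{h+1}^{\pi^e}}} = \Eb[\pi^e]{\wh V_{h+1,e}(s_{h+1}) - V_{h+1}^{\pi^e}(s_{h+1})}$ is exactly the left-hand quantity one step deeper, so the sum collapses into $\sum_{h=0}^{H-1}\Eb[\pi^e]{\beta b_{h,e}(s_h,a_h) + \prns{\wh P_h(s_h,a_h) - P_h^\star(s_h,a_h)}\wh V_{h+1,e}}$, with the base case $\wh V_{H,e} = V_H^{\pi^e} = 0$ terminating the recursion (here $\wh P_h$ abbreviates $\wh P_{h,e}$). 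There is no serious obstacle: the content is purely a telescoping identity, and the only things requiring care are maintaining the clipping inequality in the correct direction at every level of the induction and bookkeeping that the bonus appears with its scaling factor $\beta$ inherited from the definition of $\wh Q_{h,e}$ (so the $b_{h,e}$ in the statement is read as the scaled bonus).
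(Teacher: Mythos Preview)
Your proposal is correct and follows essentially the same route as the paper: both use the clipping inequality $\wh V_{h,e}(s)\leq \wh Q_{h,e}(s,\pi_h^e(s))$, expand $\wh Q_{h,e}-Q_h^{\pi^e}$ into bonus $+$ model-error $+$ recursive term via $\wh w_{h,e}^\top\wh\phi_h=\wh P_{h,e}\wh V_{h+1,e}$, and then unravel/telescope under $P^\star$ down to $h=H$. Your handling of the $\beta$ bookkeeping is in fact more careful than the paper's statement, which drops the $\beta$ in the lemma but restores it when the lemma is invoked downstream.
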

\begin{proof}
We progressively unravel the left hand side.
For any $s_0$,
\begin{align*}
    &\; \wh V_{0,e}(s_0) - V_0^{\pi^e}(s_0)
    \\\leq&\; \wh Q_{0,e}(s_0,\pi_0^e(s_0)) - Q_0^{\pi^e}(s_0,\pi_0^e(s_0))
    \\=&\; b_{0,e}(s_0,\pi_0^e(s)) + \prns{ \wh P_{0,e}(s_0,\pi_0^e(s)) - P_0^\star(s_0,\pi_0^e(s_0)) } \wh V_{1,e} + P_0^\star(s_0,\pi_0^e(s_0)) \prns{ \wh V_{1,e} - V_1^{\pi^e} },
\end{align*}
where the inequality is due to the thresholding on the value function.
Now, perform this recursively on the $P_0^\star(s_0,\pi_0^e(s_0)) \prns{ \wh V_{1,e} - V_1^{\pi^e} }$ term.
Doing this unravelling $h$ times gives the result.
\end{proof}

\begin{theorem}\label{thm:deployment_regret}
Suppose \pref{ass:model-misspecification}.
Then, for any $\delta\in(0,1)$, w.p. at least $1-\delta$,
we have that \LSVIUCB{} (with $\beta$ set to \pref{eq:lsvi-bonus-param}) has regret at most,
\begin{align*}
    &NV^\star - \sum_{e=0}^{N-1} V^{\wh\pi^e} \leq \wt\Ocal\prns{ dH N M_V \sqrt{\log(HN/\delta)}\eps_{ms} + d^{1.5} H \sqrt{N} M_V M_\mu \log(dHN/\delta)  }
\end{align*}
where $\wt\Ocal$ hides $\log$ dependence.

\end{theorem}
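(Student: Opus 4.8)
The plan is to bound the per-episode regret by the standard optimism-plus-simulation decomposition, $V^\star - V^{\wh\pi^e} = \Eb[d_0]{V_0^\star - \wh V_{0,e}} + \Eb[d_0]{\wh V_{0,e} - V_0^{\wh\pi^e}}$, and then sum over episodes $e=1,\dots,N$. For the first (optimism) term I would invoke \pref{lem:lsvi-optimism} directly, which gives $\Eb[d_0]{V_0^\star - \wh V_{0,e}} \le H\iota$ with $\iota = \|\Vcal_h\|_\infty \eps_{ms} \le M_V\eps_{ms}$; summed over episodes this contributes $NHM_V\eps_{ms}$, which turns out to be lower order. Crucially, this step already absorbs the clipping of the value estimates through the trajectory-wise indicators $\zeta_h(\tau_h)$ built into the optimism lemma, so I only need to verify that its hypothesis $\Ecal_{model}$ (\pref{eq:E_model}) holds on the good event.

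For the second term I would apply the simulation lemma \pref{lem:lsvi-simulation}, which yields $\Eb[d_0]{\wh V_{0,e} - V_0^{\wh\pi^e}} \le \sum_{h} \Eb[\wh\pi^e]{ b_{h,e}(s_h,a_h) + (\wh P_{h,e}(s_h,a_h) - P_h^\star(s_h,a_h))\wh V_{h+1,e} }$. The key point is that $\wh P_{h,e}$ is not a valid kernel, so the model-error term cannot be telescoped naively; instead I would control it using the event $\Ecal_{model}$ from \pref{thm:E_model_whp} with the trivial indicator $\zeta_h\equiv 1$ and $f = \wh V_{h+1,e}$, which lies in $\Vcal_{h+1}$ by \pref{lem:V-properties}. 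This bounds each model-error term by $\beta\,\Eb[\wh\pi^e]{b_{h,e}(s_h,a_h)} + M_V\eps_{ms}$, collapsing the second term into $\sum_h (1+\beta)\,\Eb[\wh\pi^e]{b_{h,e}} + HM_V\eps_{ms}$.

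It then remains to control $\sum_{e=1}^N \sum_h \Eb[\wh\pi^e]{b_{h,e}(s_h,a_h)}$. I would replace the expectation by the realized bonus $b_{h,e}(s_h^e,a_h^e)$ via an Azuma martingale bound (envelope $\Ocal(1)$, cost $\Ocal(\sqrt{N\log(H/\delta)})$ per $h$), and then apply the elliptical potential bound \pref{lem:elliptical-potential-ub} together with Cauchy--Schwarz to get $\sum_e b_{h,e}(s_h^e,a_h^e) \le \sqrt{N}\sqrt{\sum_e b_{h,e}^2} \lesssim \sqrt{dN\log N}$. Summing over $h$ gives $\sum_{e,h}\Eb[\wh\pi^e]{b_{h,e}} \lesssim H\sqrt{dN\log(NH/\delta)}$. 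Assembling the pieces and substituting $\beta = \Ocal(\sqrt{Nd}\,\eps_{ms}M_V + M_VM_\mu d\sqrt{\log(dNM_V/\delta)})$ from \pref{eq:lsvi-bonus-param}, the dominant contribution is $(1+\beta)\cdot H\sqrt{dN\log(NH/\delta)}$: the $\sqrt{Nd}\,\eps_{ms}M_V$ piece of $\beta$ produces the $dHNM_V\sqrt{\log(HN/\delta)}\,\eps_{ms}$ term, while the $M_VM_\mu d\sqrt{\log}$ piece produces the $d^{1.5}H\sqrt{N}M_VM_\mu\log(dHN/\delta)$ term, and the additive $\eps_{ms}$ slacks ($NHM_V\eps_{ms}$ and $HM_V\eps_{ms}$ per episode) are absorbed into the first term.

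The main obstacle I anticipate is bookkeeping the interaction between the average-case misspecification $\eps_{ms}$ and the optimism argument: ensuring that the additive $\|\Vcal_h\|_\infty\eps_{ms}$ slack appearing in both $\Ecal_{model}$ and \pref{lem:lsvi-optimism} accumulates only linearly in $H$ rather than compounding across steps, and that $\Ecal_{model}$ can be applied uniformly over all $f\in\Vcal_h$ via the metric-entropy/covering argument underlying \pref{lem:projection-bound}. A union bound over the $\Ecal_{model}$ event and the Azuma bonus-concentration event then controls the total failure probability $\delta$.
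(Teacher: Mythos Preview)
Your proposal is correct and follows essentially the same route as the paper: condition on $\Ecal_{model}$ (\pref{thm:E_model_whp}), combine optimism (\pref{lem:lsvi-optimism}) with the simulation lemma (\pref{lem:lsvi-simulation}), apply $\Ecal_{model}$ again with trivial indicators to turn the model-error into a bonus term, then use Azuma plus the elliptical potential bound on $\sum_{e,h} b_{h,e}(s_h^e,a_h^e)$ and substitute $\beta$. The only cosmetic difference is that the paper writes the combined bonus factor as $2\beta$ (since the simulation lemma already carries a $\beta b_{h,e}$ from the definition of $\wh Q_{h,e}$) rather than your $(1+\beta)$, which is immaterial.
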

\begin{proof}
We first condition on the high-probability event \pref{eq:E_model}, which occurs w.p. at least $1-\delta$.
Fix any arbitrary episode $e$.
By optimism \pref{lem:lsvi-optimism} and the simulation lemma \pref{lem:lsvi-simulation},
\begin{align*}
    \Eb[d_0]{V_0^\star(s_0) - V_0^{\wh\pi^e}(s_0)}
    \leq&\; \Eb[d_0]{ \wh V_0^e(s_0) - V_0^{\wh\pi^e}(s_0) } + H\iota
    \\\leq&\; \sum_{h=0}^{H-1} \Eb[\wh\pi^e]{ \beta b_h^e(s_h,a_h) + \prns{\wh P_{h,e}(s_h,a_h)-P_h^\star(s_h,a_h)}\wh V_{h+1,e} } + H\iota
    \intertext{Applying \pref{eq:E_model} with no indicators, i.e. $\zeta_h(\tau_h) = 1$ always, gives, }
    \leq&\; \sum_{h=0}^{H-1} \Eb[\wh\pi^e]{ 2\beta b_{h,e}(s_h,a_h) } + 2H\iota.
\end{align*}
Now, summing over $e=1,2,...,N$, we have
\begin{align*}
    &\sum_{e=1}^N \Eb[d_0]{V_0^\star(s_0) - V_0^{\wh\pi^e}(s_0)}
    \\&\leq 2HN\iota + 2\beta \sum_{h=0}^{H-1} \sum_{e=1}^N \Eb[\wh\pi^e]{ b_{h,e}(s_h,a_h) }
    \intertext{By Azuma's inequality applied to the martingale difference $\Delta_e = \Eb[\wh\pi^e]{ b_{h,e}(s_h,a_h) } - b_{h,e}(s_h^e,a_h^e)$, which has envelope bounded by $2$, implies w.p. $1-\delta$, }
    &\leq 2HN\iota + 2\beta \sum_{h=0}^{H-1} \sum_{e=1}^N b_{h,e}(s_h^e,a_h^e) + 4\beta \sqrt{ N\log(HN/\delta) }.
\end{align*}
It remains to bound the sum of expected bonuses.
By \pref{lem:elliptical-potential-ub}, we know that almost surely,
\begin{align*}
    \sum_{h=0}^{H-1} \sum_{e=1}^N b_{h,e}(s_h^e,a_h^e) \leq H\sqrt{dN\log(N)}.
\end{align*}
So, putting everything together,
\begin{align*}
    &\sum_{e=1}^N \Eb[d_0]{V_0^\star(s_0) - V_0^{\wh\pi^e}(s_0)}
    \\&\lesssim HN\iota + \beta H\sqrt{dN\log(N)} + \beta \sqrt{ N\log(HN/\delta) }
    \\&\lesssim HN\iota + \prns{ \sqrt{dN} M_V \eps_{ms} + M_V M_\mu d \sqrt{ \log(dHN/\delta) } } \cdot H\sqrt{dN\log(HN/\delta)}
    \\&= HN\iota + dH N M_V \sqrt{\log(HN/\delta)}\eps_{ms} + d^{1.5} H \sqrt{N} M_V M_\mu \log(dHN/\delta).
\end{align*}
Note that $H\iota = H \|\Vcal\|_\infty \eps_{ms} = HM_V\eps_{ms}$ is of lower order (with respect to $N$), we can simply drop it. This concludes the proof.
\end{proof}

\begin{corollary}
By setting $\delta=1/N$, we have that expected regret also has the same rate as above.
\end{corollary}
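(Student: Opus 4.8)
The plan is to convert the high-probability regret bound of \pref{thm:deployment_regret} into a bound on the \emph{expected} regret through a standard good-event/bad-event decomposition. First I would observe that the instantaneous regret is deterministically bounded: since rewards lie in $[0,1]$ and the value estimates are clipped at $M_V$ in \pref{alg:LSVI}, we have $0 \le V^\star - V^{\wh\pi^e} \le M_V$ for every episode, so the cumulative regret $\op{Regret} := N V^\star - \sum_{e=0}^{N-1} V^{\wh\pi^e}$ satisfies $\op{Regret} \le N M_V$ almost surely, regardless of whether the favorable event holds.

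Next, let $\Ecal$ denote the $(1-\delta)$-probability event of \pref{thm:deployment_regret}, on which $\op{Regret} \le R(\delta)$ with $R(\delta) = \wt\Ocal\prns{ dHNM_V\sqrt{\log(HN/\delta)}\,\eps_{ms} + d^{1.5}H\sqrt{N}\,M_V M_\mu \log(dHN/\delta) }$. Decomposing the expectation over $\Ecal$ and its complement and using the crude almost-sure envelope on the bad event gives $\EE[\op{Regret}] \le R(\delta) + \delta \cdot N M_V$. This is essentially the only step, and it is not a genuine obstacle; the single delicate point is merely to justify the deterministic bound $N M_V$, which is immediate from the clipping operator and the boundedness of rewards.

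Finally I would set $\delta = 1/N$. The additive penalty then collapses to $\tfrac{1}{N}\cdot N M_V = M_V$, which is constant in $N$ and hence lower order than both the leading $\sqrt{N}$ term and the linear-in-$N$ misspecification term, so it is absorbed. At the same time, the substitution affects $R$ only through the logarithmic factor $\log(HN/\delta) = \log(HN^2)$, which remains $\wt\Ocal(\op{polylog})$ and is hidden inside the $\wt\Ocal$. Hence $\EE[\op{Regret}]$ inherits exactly the rate of \pref{thm:deployment_regret}, which completes the argument.
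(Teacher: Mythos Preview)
Your proposal is correct and follows essentially the same approach as the paper: a good-event/bad-event decomposition with the crude envelope $N M_V$ (the paper writes $NH$, which is the same since $M_V=H$ in the deployment phase), then the choice $\delta=1/N$ makes the bad-event contribution a constant, which is lower order. One minor point: the deterministic bound $V^\star - V^{\wh\pi^e}\le M_V$ follows from $V^\star\le M_V$ and $V^{\wh\pi^e}\ge 0$ (nonnegative rewards), not from the clipping of $\wh V_{h,e}$ in \pref{alg:LSVI}; but this does not affect the argument.
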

\begin{proof}
The expected regret by law of total probability, since regret is at most $NH$,
\begin{align*}
    \Eb{\op{Reg}_N}
    \leq&\; \Eb{\op{Reg}_N \mid \pref{eq:E_model}} + NH (1-\PP(\pref{eq:E_model}))
    \\\leq&\; \Eb{\op{Reg}_N \mid \pref{eq:E_model}} + H.
\end{align*}
Since $H$ is lower-order, we have the same rate.
\end{proof}


\section{Proof of Main Theorems}
First we prove \pref{thm:main} and \pref{thm:generative-end-to-end}.
\generativeThm*
\generativeCor*

\begin{proof}[Proof of \pref{thm:main} and \pref{thm:generative-end-to-end}]
For the regret bound, set $M_V=H$ and $M_\mu=\bar\alpha$ and apply \pref{thm:deployment_regret}. This choice of $M_\mu$ is valid by the argument following \pref{ass:model-misspecification}. This gives us a regret bound of
\begin{align*}
    \wt\Ocal\prns{ dH^2T\eps_{ms} + \bar\alpha d^{1.5}H^2\sqrt{T}\log(1/\delta) },
\end{align*}
where $\eps_{ms}$ can be made smaller than $1/\sqrt{T}$, in which the second term dominates.

Now, we calculate the pre-training phase sample complexity in a source task.

First, let's calculate the reward-free model learning sample complexity, i.e. this is the number of samples required for learning $\wh P_k$. Recall that we need this to be sufficiently large such that $\eps_{TV} = 1/N_{\LSVIUCB}$.
As required by \pref{lem:coverage}, we need,
\begin{align*}
    N_{\textsc{RewardFree}} =&\; \wt\Ocal\prns{A^3d^4H^6\log\prns{|\Phi||\Upsilon|/\delta} N_{\LSVIUCB}^{2}}\\
    =&\; \wt\Ocal\prns{A^3d^4H^6\log\prns{|\Phi||\Upsilon|/\delta} \prns{ A^3d^6H^8 \psi^{-2}  }^2 } \\
    =&\; \wt\Ocal\prns{ A^9d^{16}H^{22}\psi^{-4} \log\prns{|\Phi||\Upsilon|/\delta} }.
\end{align*}

Second, we calculate the cross-sampling sample complexity. Recall that $n$ is the number of samples in each pairwise dataset. In order to reduce $\epsilon_{ms}$ to $1/\sqrt{T}$, by \pref{lem:mle_target}, we need
\begin{align*}
    1/\sqrt{T} \leq \eps_{ms} \leq &\; \prns{A\alpha_{\max}^3 K/\lambda_{\min}}^{1/2}\sqrt{\zeta_n}
    \\
     \leq&\; \prns{ A\alpha_{\max}^3 K/\lambda_{\min} }^{1/2}\sqrt{\frac{1}{n}\left(\log\frac{|\Phi|}{\delta}+K\log|\Upsilon|\right)}\tag{by \pref{eq:mle}}
\end{align*}
which implies that we need
\begin{align*}
    n \leq \lambda_{\min}^{-1} A\alpha_{\max}^3 K T \prns{ \log\frac{|\Phi|}{\delta}+K\log|\Upsilon| }.
\end{align*}
Incorporating the coverage result from \pref{lem:coverage} gives,
\begin{align*}
    n \leq A^4 \alpha_{\max}^3 d^5 H^7 K T \psi^{-2} \prns{ \log\frac{|\Phi|}{\delta}+K\log|\Upsilon| }.
\end{align*}

Since each task is in at most $K-2$ pairwise datasets, each of size $n$,
the total pre-training sample complexity per task is at most,
\begin{align*}
    &N_{\textsc{RewardFree}}+(K-2) \cdot n
    \\&= \wt\Ocal\prns{ A^9d^{16}H^{22}\psi^{-4} \log\prns{|\Phi||\Upsilon|/\delta} + A^4\alpha_{\max}^3d^5 H^7 K^2 T \psi^{-2} \prns{ \log\frac{|\Phi|}{\delta}+K\log|\Upsilon| }  }.
\end{align*}
\end{proof}

Now we prove \pref{thm:online}, restated below.
\onlineThm*
\begin{proof}[Proof of \pref{thm:online}]
We follows the same format as the proof of \pref{thm:main}.
The regret bound is identical.

Now let's compute the pre-training sample complexity.
The regret bound requires us to set $\eps_{ms} \leq 1/\sqrt{T}$. Here, our $\eps_{ms}$ comes from \pref{lem:mle_target_online}, so
\begin{align*}
    1/\sqrt{T} \leq \frac{\alpha_{\max}K^{1/2}}{\prns{\psi_{raw}\lambda_{\min}}^{1/2}} \sqrt{ \frac{1}{n}\prns{ \log\frac{|\Phi|}{\delta}+K\log|\Upsilon| }},
\end{align*}
which implies we need
\begin{align*}
    n \leq \frac{\alpha_{\max}^2K}{\psi_{raw}\lambda_{\min}} \prns{ \log\frac{|\Phi|}{\delta}+K\log|\Upsilon| }.
\end{align*}
Plugging in the coverage of \pref{lem:coverage},
\begin{align*}
    n
    &\leq \frac{\alpha_{\max}^2K T}{\psi_{raw}} \prns{ \log\frac{|\Phi|}{\delta}+K\log|\Upsilon| } \prns{ A^{-3}d^{-5}H^{-7}\psi^2 }^{-1}
    \\&\leq \frac{A^3\alpha_{\max}^2d^{5}H^{7}K T}{\psi_{raw} \psi^2} \prns{ \log\frac{|\Phi|}{\delta}+K\log|\Upsilon| }.
\end{align*}

Here, we only collect one dataset, so the total pre-training sample complexity is
\begin{align*}
    &N_{\textsc{RewardFree}} + n
    \\&= \wt\Ocal\prns{ A^9d^{16}H^{22}\psi^{-4} \log\prns{|\Phi||\Upsilon|/\delta} + A^3\alpha_{\max}^2d^{5} H^{7} K T \psi_{raw}^{-1}\psi^{-2} \prns{ \log\frac{|\Phi|}{\delta}+K\log|\Upsilon| }  }.
\end{align*}

\end{proof}

\section{Experiment Details}\label{app:sec:exp}

\subsection{Construction of Comblock}\label{app:comblock}
In this section we first introduce the vanilla Combination lock (Comblock) environment that is widely used as the benchmark for algorithms for Block MDPs. We provide a visualization of the comblock environment in Fig.~\ref{exp:comblock_and_table}(a). Concretely, the environment has a horizon $H$, and 3 latent states $z_{i;h}, i \in \{0,1,2\}$ for each timestep $h$ and 10 actions. Among the three latent states, we denote $z_0,z_1$ as the good states which leads to the final reward and $z_2$ as the bad states. At the beginning of the task, the environment will uniformly and independent sample 1 out of the 10 actions for each good state $z_{0;h}$ and $z_{1;h}$ for each timestep $h$, and we denote these actions $a_{0;h}, a_{1;h}$ as the optimal actions (corresponding to each latent state). These optimal actions, along with the task itself, determines the dynamics of the environment. At each good latent state $s_{0;h}$ or $s_{1;h}$, if the agent takes the correct action, the environment transits to the either good state at the next timestep (i.e., $s_{0;h+1}$, $s_{1;h+1}$) with equal probability. Otherwise, if the agent takes any 9 of the bad actions, the environment will transition to the bad state $s_{2;h+1}$ deterministicly, and the bad states transit to only bad states at the next timestep deterministicly. There are two situations where the agent receives a reward: one is uponing arriving the good states at the last timestep, the agent receives a reward of 1. The other is upon the first ever transition into the bad state, the agent receives an ``anti-shaped'' reward of 0.1 with probability 0.5. Such design makes greedy algorithms without strategic exploration such as policy optimization methods easily fail. For the initial state distribution, the environment starts in $s_{0;0}$ or $s_{1;0}$ with equal probability. The dimension of the observation is $2^{\ceil{\log(H+|\mathcal{S}|+1)}}$. For the emission distribution, given a latent state $s_{i;h}$, the observation is generated by first concatenate the one hot vectors of the state and the horizon, adding i.i.d. $\Ncal$(0,0.1) noise at each entry, appending 0 at the end if necessary. Then finally we apply a linear transfermation on the observation with a Hadamard matrix. Note that without a good feature or strategic exploration, it takes $10^H$ actions to reach the final goal with random actions.
\subsection{Construction of transfer setup in the observational coverage setting}
In this section we introduce the detailed construction of our first experiment. For the source environment, we simply generate 5 random vanilla comblock environment described in Section.\ref{app:comblock}. Note that in this way we ensure that the emission distribution shares across the sources, but the latent dynamics are different because the optimal actions are independently randomly selected. For the target environment, for each timestep $h$, we randomly acquire the optimal actions at $h$ from one of the sources and set it to be the optimal action of the target environment at timestep $h$, if the selected optimal actions are different for the two good states. Otherwise we keep sampling until they are different. Note that under such construction, since we fix the emission distribution, \pref{ass:linear-span} is satisfied if we set $\alpha=1$ for the source environment where we select the optimal action and $\alpha = 0$ for the other sources, at each timestep. To see how \pref{ass:reachability_raw} is satisfied, recall that comblock environment naturally satisfies \pref{ass:reachability}, and identical emission implies that the conditional ratio of all observations between source and target is 1.

\subsection{Construction of transfer setup in feature coverage setting}
Now we introduce the construction of the Comblock with Partitioned Observation (Comblock-PO) environment, which we use in our second experiment. Comparing with the vanilla comblock environment, the major difference is in the observation space. In this setting, the size of the observation depends on the number of source environments $K$. Let the size of the original observation space be $O = |\mathcal{O}|$, the size of the observation for comblock-PO is $KO$. For the $k$-th source environment, where $k \in [K]$, the environment first generates the $O$-dimensional observation vector as in the original comblock, and then embed it to the $(k-1)O$-th to $kO$-th entries of the $KO$-dimensional observation vector, where it is 0 everywhere else. Thus we can see that the observation space for each source environment is disjoint (and thus the name partitioned observations). For the target enviornment, since the latent dynamcis are the same, we only need to design the emission distribution: for each latent state $s_{i;h}$, we assign the emission distribution uniformly at random from one of the sources.

\subsection{Implementation details}
Our implementation builds on \algname \citep{zhang2022efficient}.
In the Multi-task \textsc{RepLearn} stage, we requires our learned feature to predict the Bellman backup of all the sources simultaneously. Therefore, in each iteration we have $k$ discriminators and $k$ sets of linear weights (instead of 1 in \algname), where $k$ is the number of source environments. For the deployment stage we implement LSVI following \pref{alg:LSVI}.

To create the training dataset for Multi-task \textsc{RepLearn}, for each $(i,j)$ environment pairs where $i\neq j$, we collect $500$ samples for each timestep $h$. For each $(i,i)$ environment pairs, we collect $500 \times (k-1) \times k$ samples for each timestep $h$, where $k$ denotes the number of sources. Thus we ensure that the number of samples from cross transition of different environments is the same as the number of samples from cross transition of the same environment. For the online setting, we simply sample $1000 \times (k-1) \times k$ samples for each $(i,i)$ cross transition to ensure that the total number of samples is the same for \ouralgg and \ouralgo.

To sample the initial state action pair (i.e., $(\tilde s, \tilde a)$ pair as in \pref{eq:data_sample}), for $90\%$ of the samples, we follow the final policy from each source environment trained using \algname. For the remaining $10\%$, we follow the same policy to state $\tilde s$, and then take a uniform random policy to get $\tilde a$. With this sampling scheme we ensure that \pref{ass:reachability} is satisfied. In the setting of Section.~\ref{exp:sec:partition}, we follow a more simple procedure to ensure that the samples are more balanced among the three states: we skip the first sampling step from environment $i$ (i.e., sample $s$ given $(\tilde s, \tilde a)$), and simply reset environment $i$ to $s$, where $s$ is one of the three states with equal probability, and generate the observation accordingly. Note that such visitation distribution is also possible in the online setting with a more nuanced sampling procedure, and in the experiment we use the same sampling procedure for both \ouralgg and \ouralgo for a fair comparison.

\subsection{Hyperparameters}
In this section, we record the hyperparameters we try and the final hyperparameter we use for each baselines. The hyperparameters for \ouralg in the first experiment is in Table.~\ref{app:table:hyperparam:our:comblock}. The hyperparameters for \ouralg in the second experiment is in Table.~\ref{app:table:hyperparam:our:comblock-po}. The hyperparameters for \algname is in Table.~\ref{app:table:hyperparam:briee}. We use the same set of hyperparameters for \ouralgg and \ouralgo.

\begin{table}[h]
\caption{Hyperparameters for \ouralg in Comblock.}
\label{app:table:hyperparam:our:comblock}
\centering
\begin{tabular}{ccc}
\toprule
                                                &\; Value Considered          &\; Final Value  \\
\hline
Decoder $\phi$ learning rate                    &\; \{1e-2\}                  &\; 1e-2         \\
Discriminator $f$ learning rate                 &\; \{1e-2\}                  &\; 1e-2         \\
Discriminator $f$ hidden layer size             &\; \{256\}                   &\; 256          \\
RepLearn Iteration $T$                          &\; \{30\}                    &\; 30           \\
Decoder $\phi$ number of gradient steps         &\; \{64\}                    &\; 64           \\
Discriminator $f$ number of gradient steps      &\; \{64\}                    &\; 64          \\
Decoder $\phi$ batch size                       &\; \{256\}                   &\; 256          \\
Discriminator $f$ batch size                    &\; \{512\}                   &\; 512          \\
RepLearn regularization coefficient $\lambda$   &\; \{0.01\}                  &\; 0.01         \\
Decoder $\phi$ softmax temperature              &\; \{1\}                     &\; 1            \\
Decoder $\phi_0$ softmax temperature            &\; \{0.1\}                   &\; 0.1          \\
LSVI bonus coefficient $\beta$                  &\; \{1,$\frac{H}{5}$\}       &\; 1            \\
LSVI regularization coefficient $\lambda$       &\; \{1\}                     &\; 1            \\
Buffer size                                     &\; \{1e5\}                   &\; 1e5          \\
Update frequency                                &\; \{50\}                    &\; 50           \\
Optimizer                                       &\; \{SGD\}                   &\; SGD          \\
\toprule
\end{tabular}
\end{table}

\begin{table}[h]
\caption{Hyperparameters for \ouralg in Comblock-PO.}
\centering
\begin{tabular}{ccc}
\toprule
                                                &\; Value Considered          &\; Final Value  \\
\hline
Decoder $\phi$ learning rate                    &\; \{1e-2\}                  &\; 1e-2         \\
Discriminator $f$ learning rate                 &\; \{1e-2\}                  &\; 1e-2         \\
Discriminator $f$ hidden layer size             &\; \{256,512\}               &\; 256          \\
Discriminator $f$ hidden layer number           &\; \{2,3\}                   &\; 3            \\
RepLearn Iteration $T$                          &\; \{30,40,50,100,150\}      &\; 50           \\
Decoder $\phi$ number of gradient steps         &\; \{64,80,128,256\}         &\; 64           \\
Discriminator $f$ number of gradient steps      &\; \{64,80,128,256\}         &\; 64           \\
Decoder $\phi$ batch size                       &\; \{256,512\}               &\; 512          \\
Discriminator $f$ batch size                    &\; \{256,512\}               &\; 512          \\
RepLearn regularization coefficient $\lambda$   &\; \{0.01\}                  &\; 0.01         \\
Decoder $\phi$ softmax temperature              &\; \{1\}                     &\; 1            \\
Decoder $\phi_0$ softmax temperature            &\; \{0.1,1\}                 &\; 1            \\
LSVI bonus coefficient $\beta$                  &\; \{1,$\frac{H}{5}$\}       &\; 1            \\
LSVI regularization coefficient $\lambda$       &\; \{1\}                     &\; 1            \\
Buffer size                                     &\; \{1e5\}                   &\; 1e5          \\
Update frequency                                &\; \{50\}                    &\; 50           \\
Optimizer                                       &\; \{SGD, Adam\}             &\; Adam         \\
\toprule
\end{tabular}
\label{app:table:hyperparam:our:comblock-po}
\end{table}

\begin{table}[h]
\caption{Hyperparameters for \algname in Comblock and Comblock-PO.}
\centering
\begin{tabular}{ccc}
\toprule
                                                &\; Value Considered          &\; Final Value  \\
\hline
Decoder $\phi$ learning rate                    &\; \{1e-2\}                  &\; 1e-2         \\
Discriminator $f$ learning rate                 &\; \{1e-2\}                  &\; 1e-2         \\
Discriminator $f$ hidden layer size             &\; \{256\}                   &\; 256          \\
RepLearn Iteration $T$                          &\; \{30\}                    &\; 30           \\
Decoder $\phi$ number of gradient steps         &\; \{64\}                    &\; 64           \\
Discriminator $f$ number of gradient steps      &\; \{64\}                    &\; 64          \\
Decoder $\phi$ batch size                       &\; \{512\}                   &\; 512          \\
Discriminator $f$ batch size                    &\; \{512\}                   &\; 512          \\
RepLearn regularization coefficient $\lambda$   &\; \{0.01\}                  &\; 0.01         \\
Decoder $\phi$ softmax temperature              &\; \{1\}                     &\; 1            \\
Decoder $\phi_0$ softmax temperature            &\; \{0.1\}                   &\; 0.1          \\
LSVI bonus coefficient $\beta$                  &\; \{$\frac{H}{5}$\}         &\; $\frac{H}{5}$\\
LSVI regularization coefficient $\lambda$       &\; \{1\}                     &\; 1            \\
Buffer size                                     &\; \{1e5\}                   &\; 1e5          \\
Update frequency                                &\; \{50\}                    &\; 50           \\
\toprule
\end{tabular}
\label{app:table:hyperparam:briee}
\end{table}

\newpage
\subsection{Visualizations}\label{app:vis}
In this section we provide a comprehensive visualization of the decoders for all baselines in the target environment. We observe that the behaviors of all baselines are similar across the 5 random seeds. Thus to avoid redundancy, we only show the visualization from 1 random seed. We provide an example in Fig.~\ref{fig:comb:source1} on how to interpret the visualization: let the emission function of the target environment be $o$, and let the decoder that we are evaluating be $\phi$, and to generate the blue block in Fig.~\ref{fig:comb:source1}, we sample 30 observations $\{s_n\}_{n=1}^{30}$from the target environment at $z_{1,13}$, the latent state 1 (the title of the subplot) from timestep $13$ (the x-axis). Concretely, $\{s_n\}_{n=1}^{30} \sim o(\cdot\mid z_{1,13})$. The blue block denotes the three-dimensional decoded latent states $\hat z$ from these 30 observations: $\hat z = \frac{1}{30} \sum_{n=1}^{30} \phi(s_n)$.

In Figure.~\ref{fig:vis}, we provide a runnning example that explains the results showed in Figure.~\pref{exp:comblock_and_table} (b). We then follow the detailed visualizations in the following sections.
\begin{figure}
\begin{minipage}{1\textwidth}\label{fig:source_v_trans}
    \centering
    \includegraphics[width=0.4\linewidth]{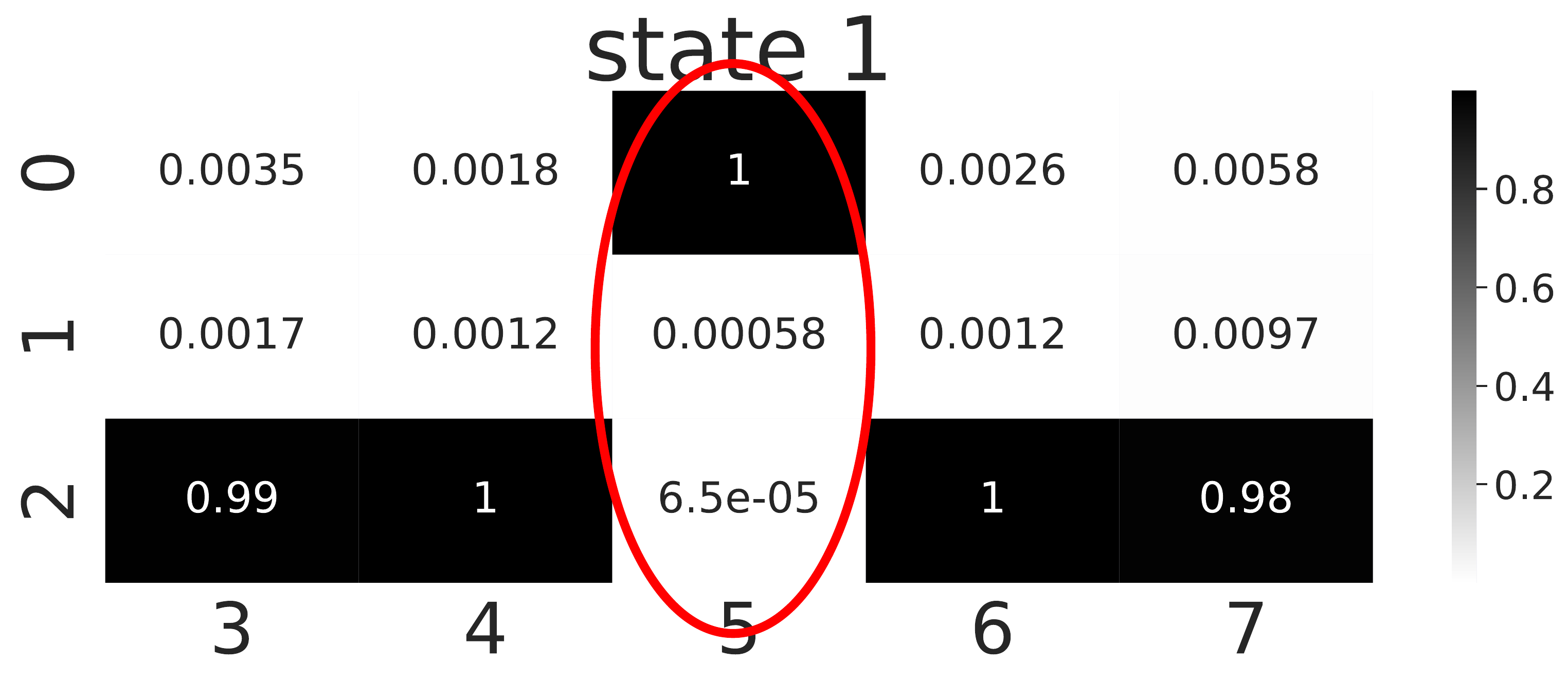}
    \includegraphics[width=0.4\linewidth]{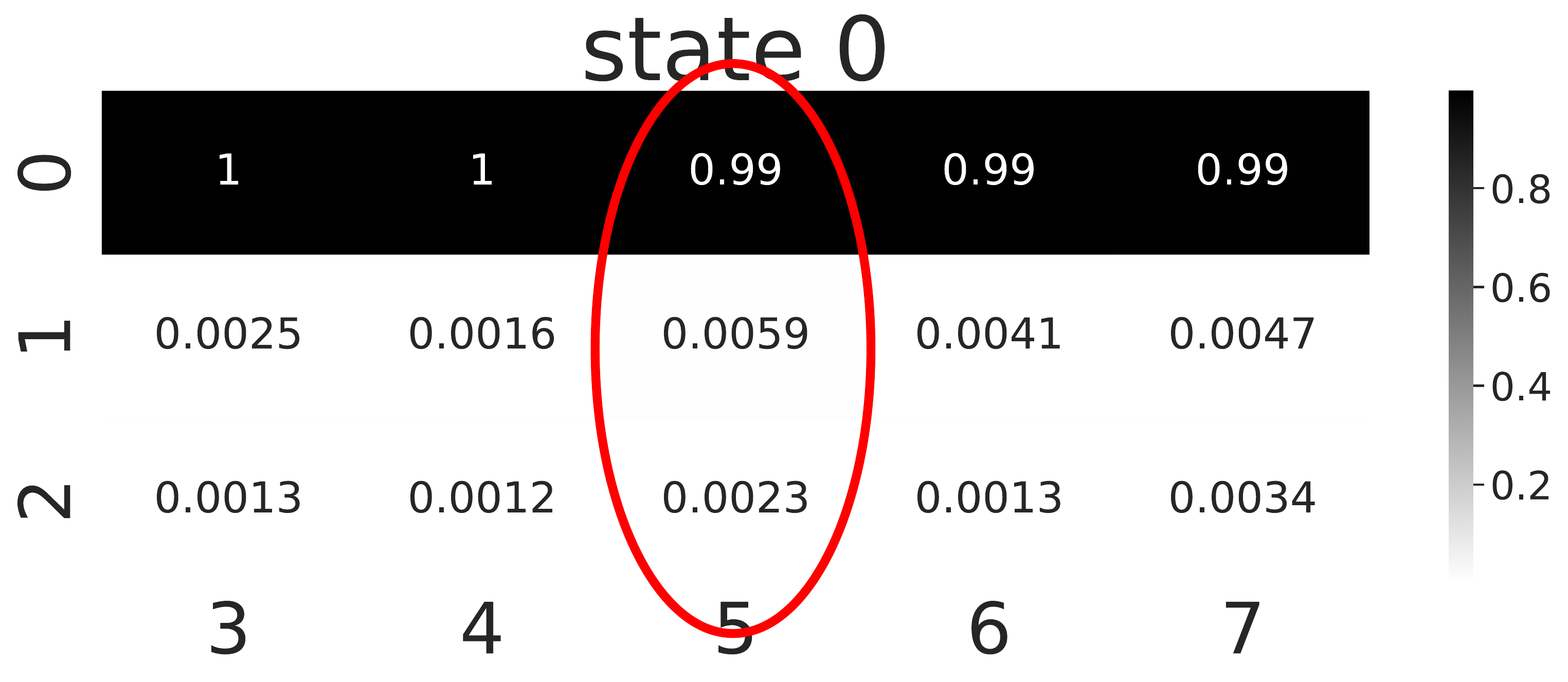}\\
    \includegraphics[width=0.4\linewidth]{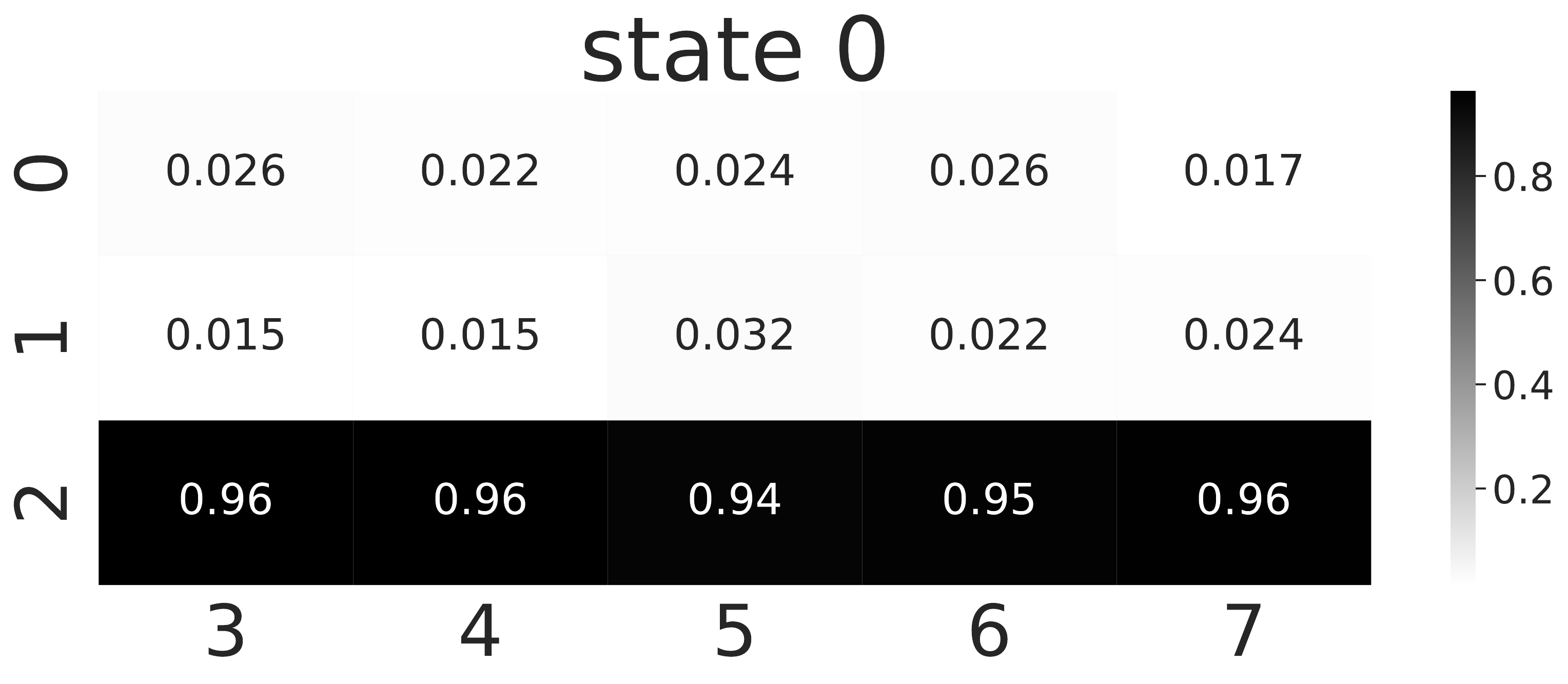}
    \includegraphics[width=0.4\linewidth]{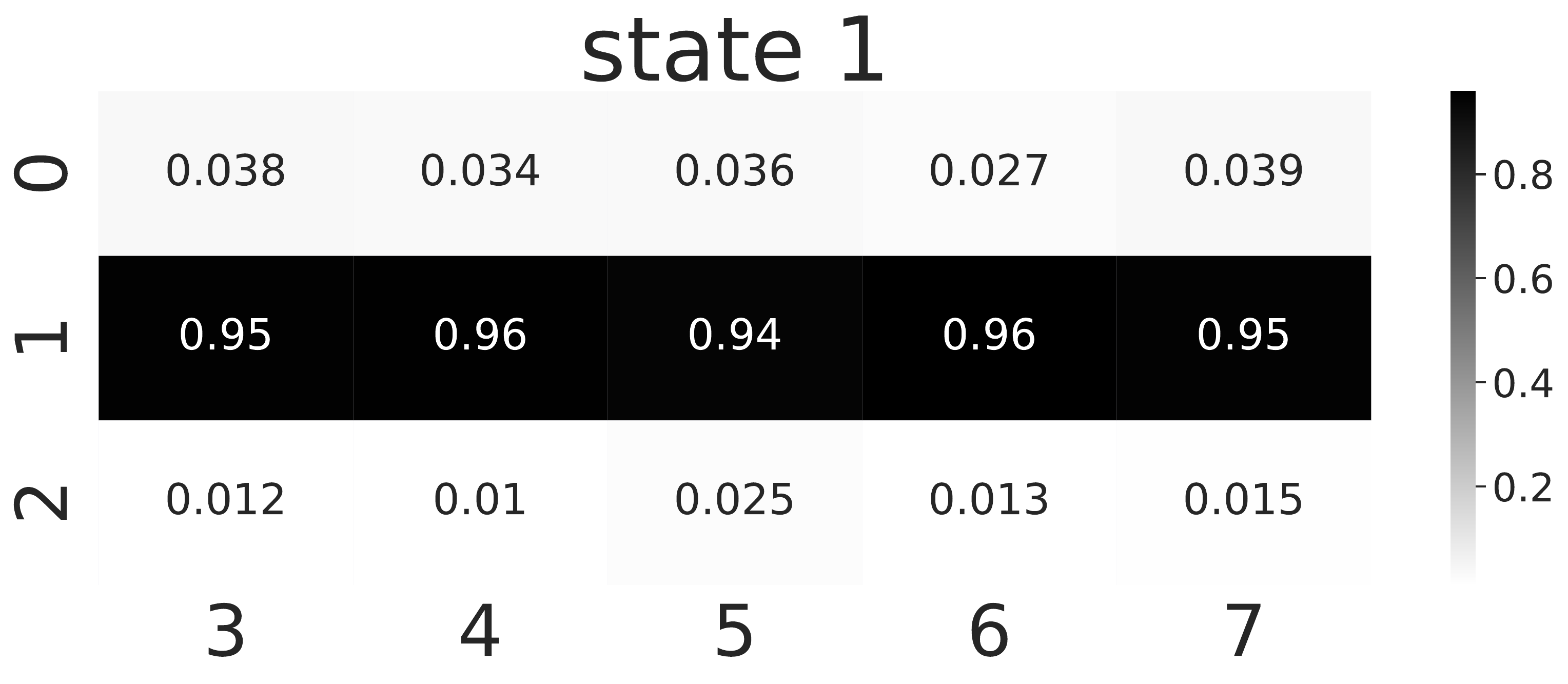}\\
    (a)
\end{minipage}
\begin{minipage}{1\textwidth}
    \centering
    \includegraphics[width=0.4\linewidth]{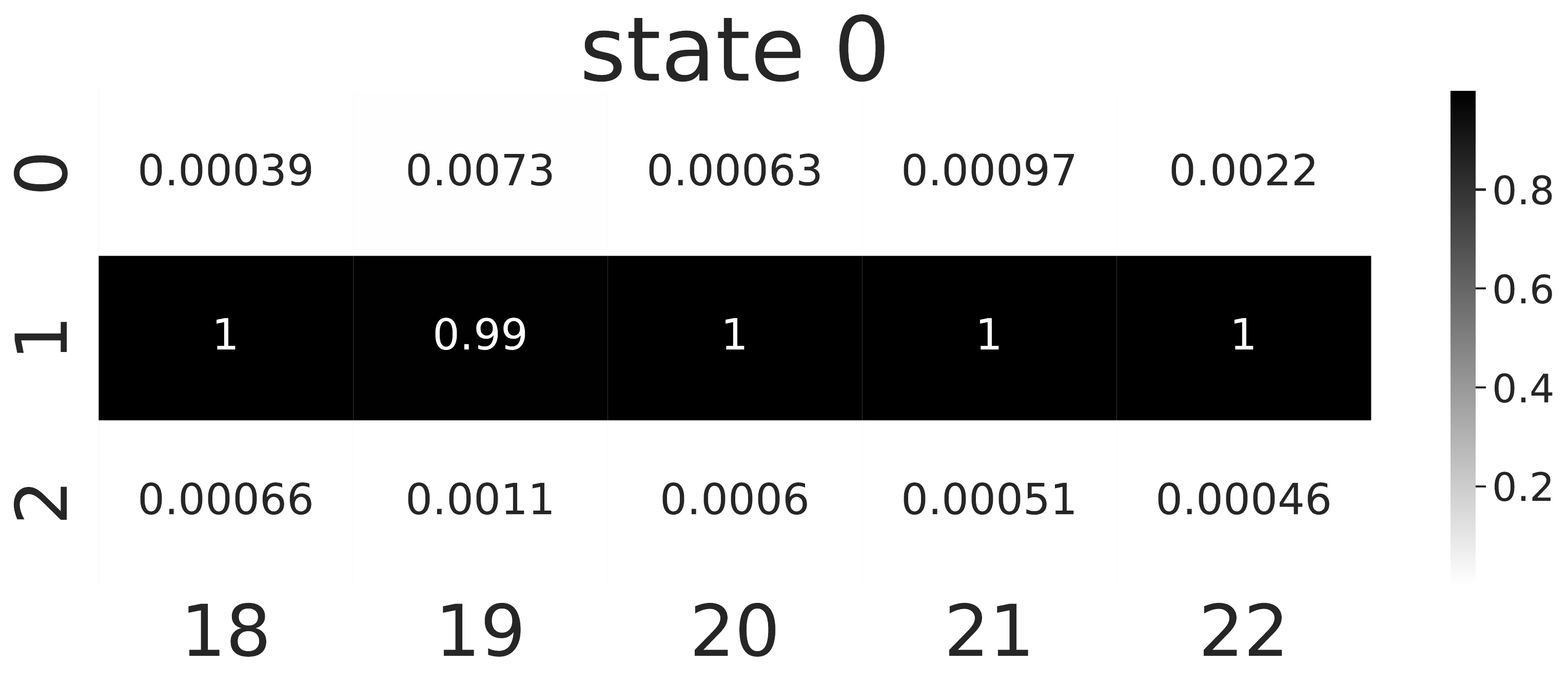}
    \includegraphics[width=0.4\linewidth]{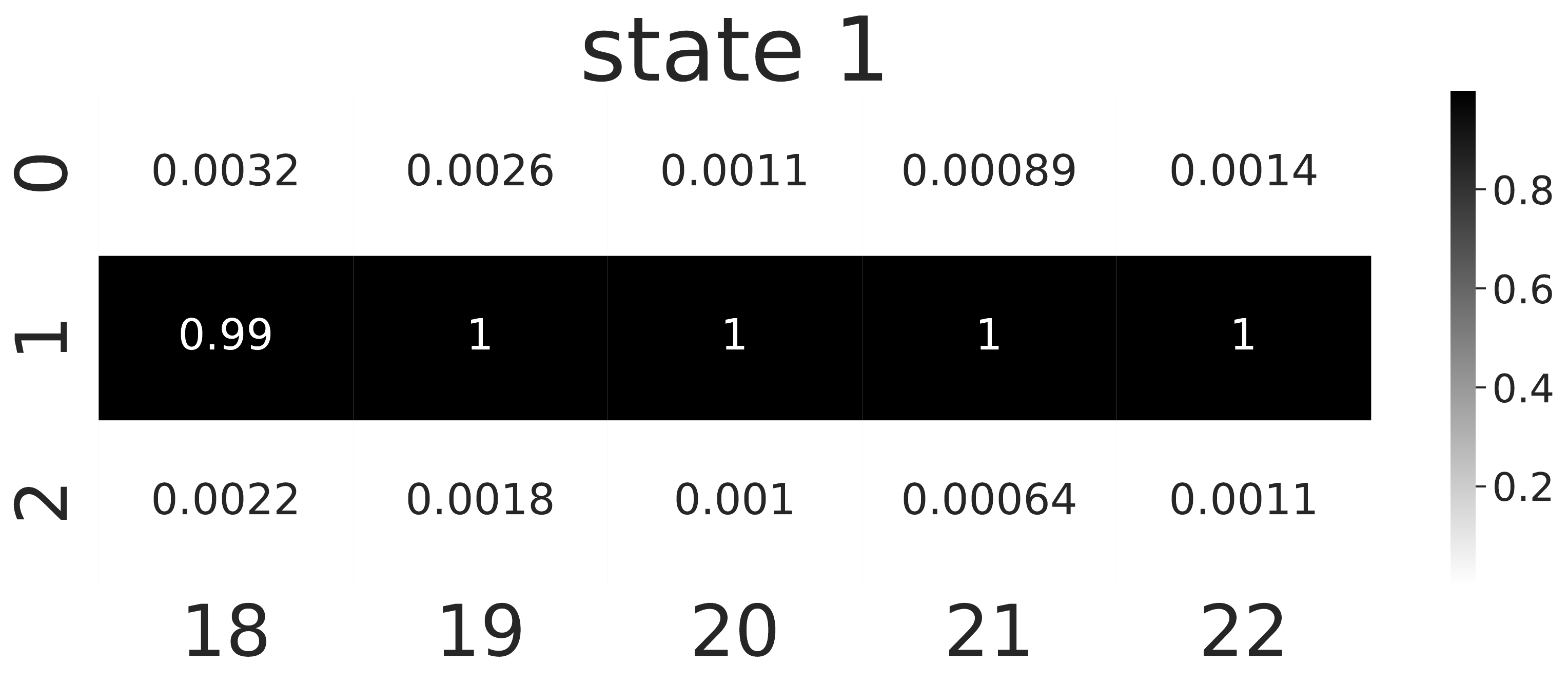}\\
    \includegraphics[width=0.4\linewidth]{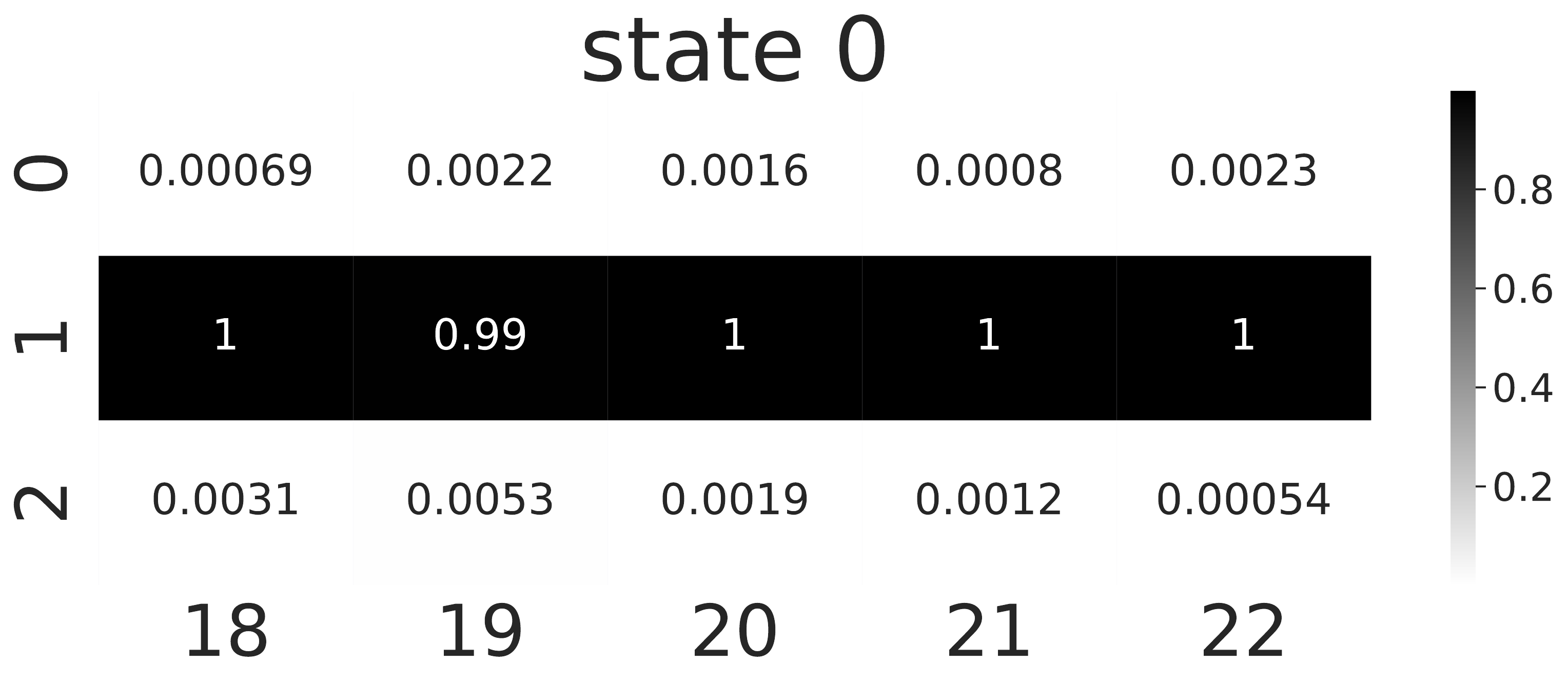}
    \includegraphics[width=0.4\linewidth]{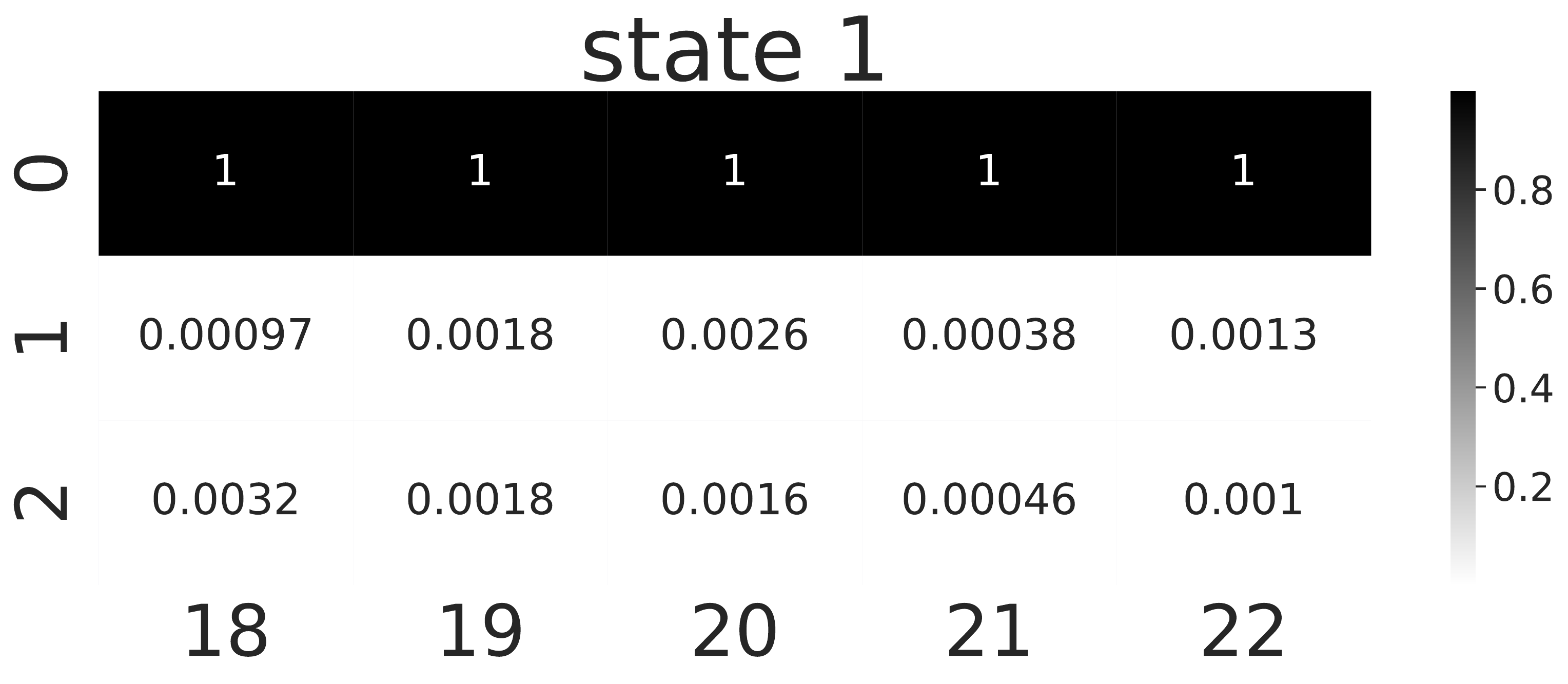}\\
    (b)
\end{minipage}
\caption{ \textbf{(a):} Visualization of the decoder source (top) and \ouralgg (bottom). \textbf{(b):} Visualization of the decoder \ouralgo (top) and \ouralgg (bottom). For each baseline, The $h$-th column in the $i$-th image denotes the averaged decoded states from the 30 observations generated by latent state $z_{i,h}$, for $i \in \{0, 1, 2\}$ and $h \in [25]$, from the corresponding \textit{target} environment. The optimal decoder should recover the latent states up to a permutation. In Fig a (top), note that the learned features in source task fail to solve the target because of the collapse at timestep 5: both observations from state 0 and 1 are mapped to state 0. Note in the source task where this feature is trained, such collapse can happen when state $0$ and $1$ have identical latent transition (for detailed discussion we refer to \cite{misra2020kinematic}). In Fig b (top), \ouralg with only online access learns an incorrect decoder when the source tasks' observation spaces are disjoint. This is because the learned feature can decode each source task with a different permutation.} 
\label{fig:vis}
\end{figure}

\subsubsection{Visualizations from the observational coverage experiment}
We record the visualization of the 5 sources from Fig.~\ref{fig:comb:source1} to Fig.~\ref{fig:comb:source5}; \ouralgo in Fig.\ref{fig:comb:online}; \ouralgg in Fig.~\ref{fig:comb:generative}; running \algname on target in Fig.~\ref{fig:comb:briee}.
\begin{figure}[h]
    \centering
    \includegraphics[width=0.6\linewidth]{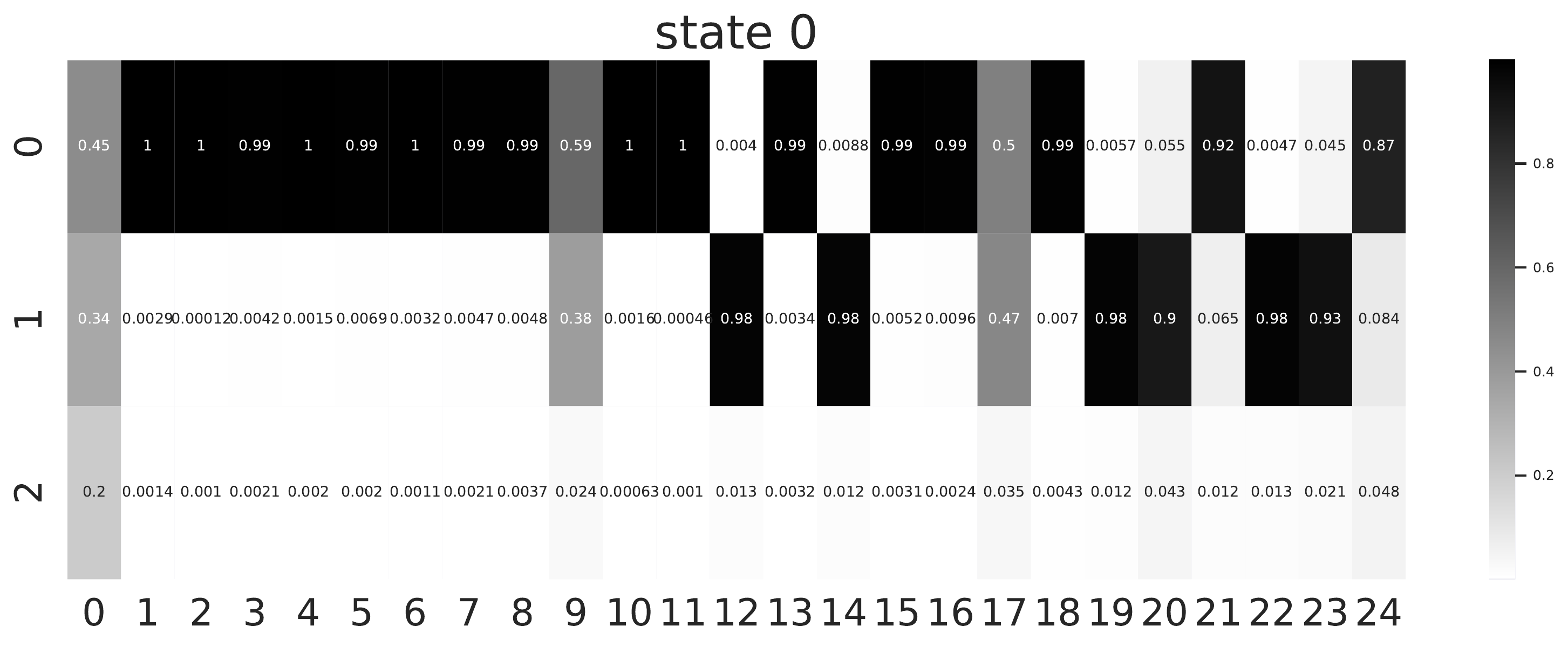}
    \includegraphics[width=0.6\linewidth]{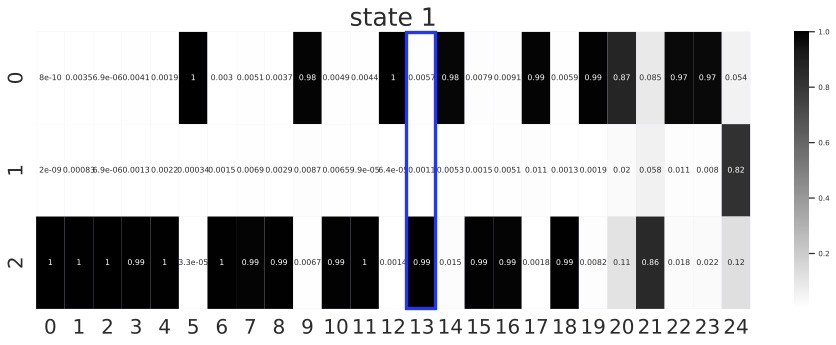}
    \includegraphics[width=0.6\linewidth]{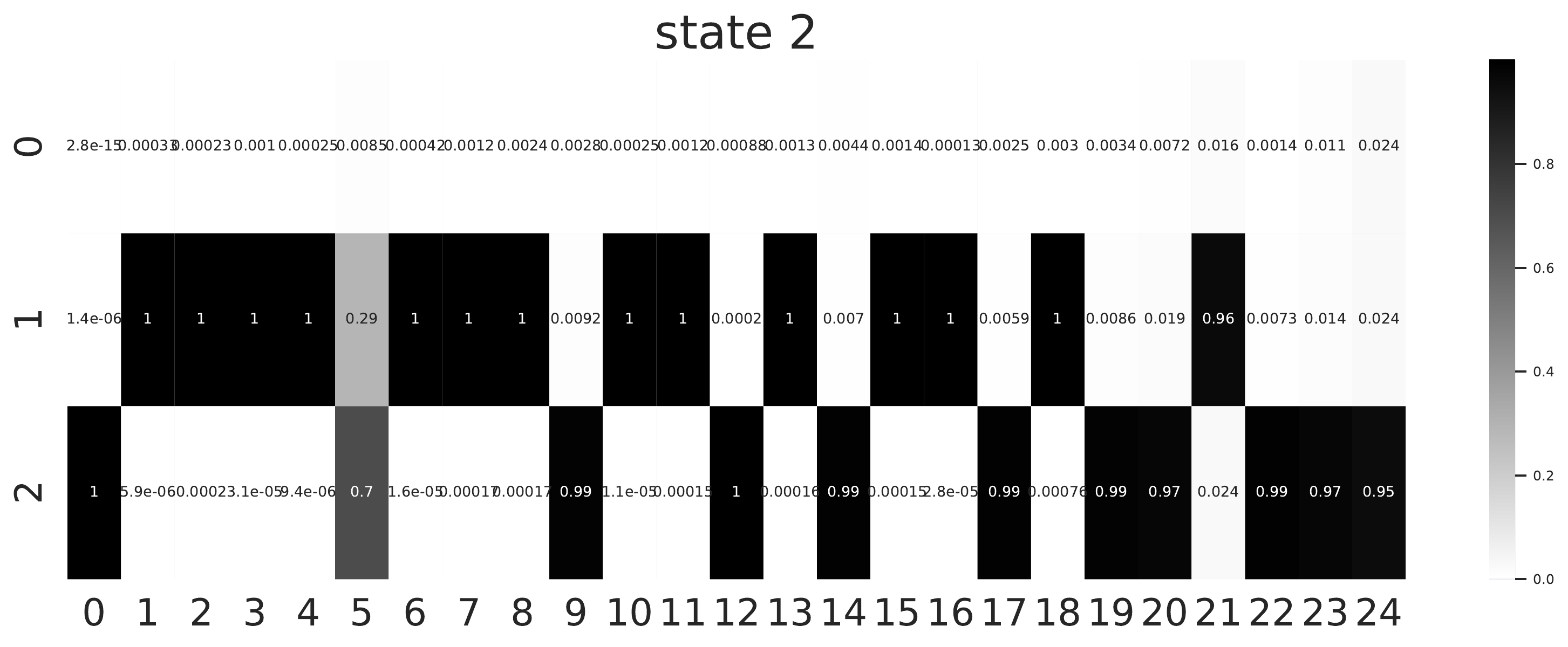}
    \caption{Visualization of decoders from source 1. Note the collapse happens at timestep 5, 9 and 17.}
    \label{fig:comb:source1}
\end{figure}

\begin{figure}[h]
    \centering
    \includegraphics[width=0.6\linewidth]{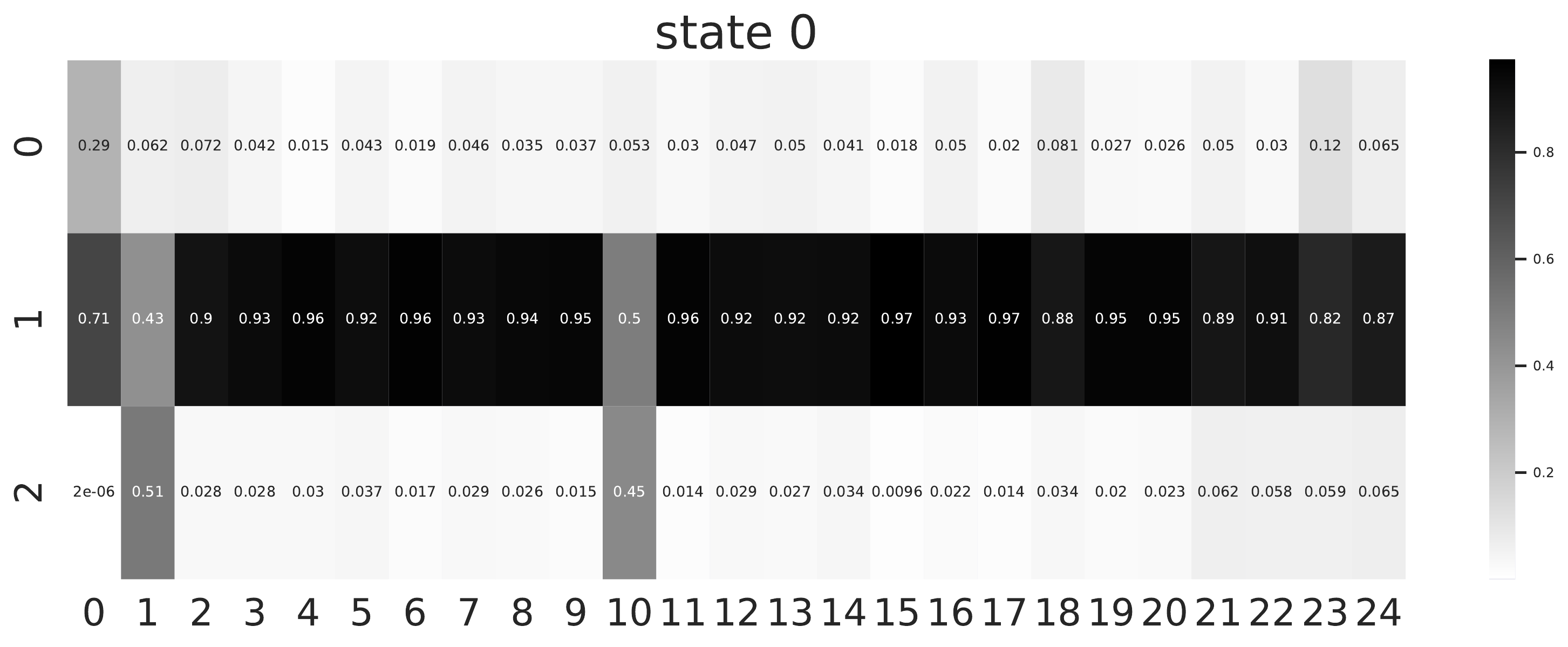}
    \includegraphics[width=0.6\linewidth]{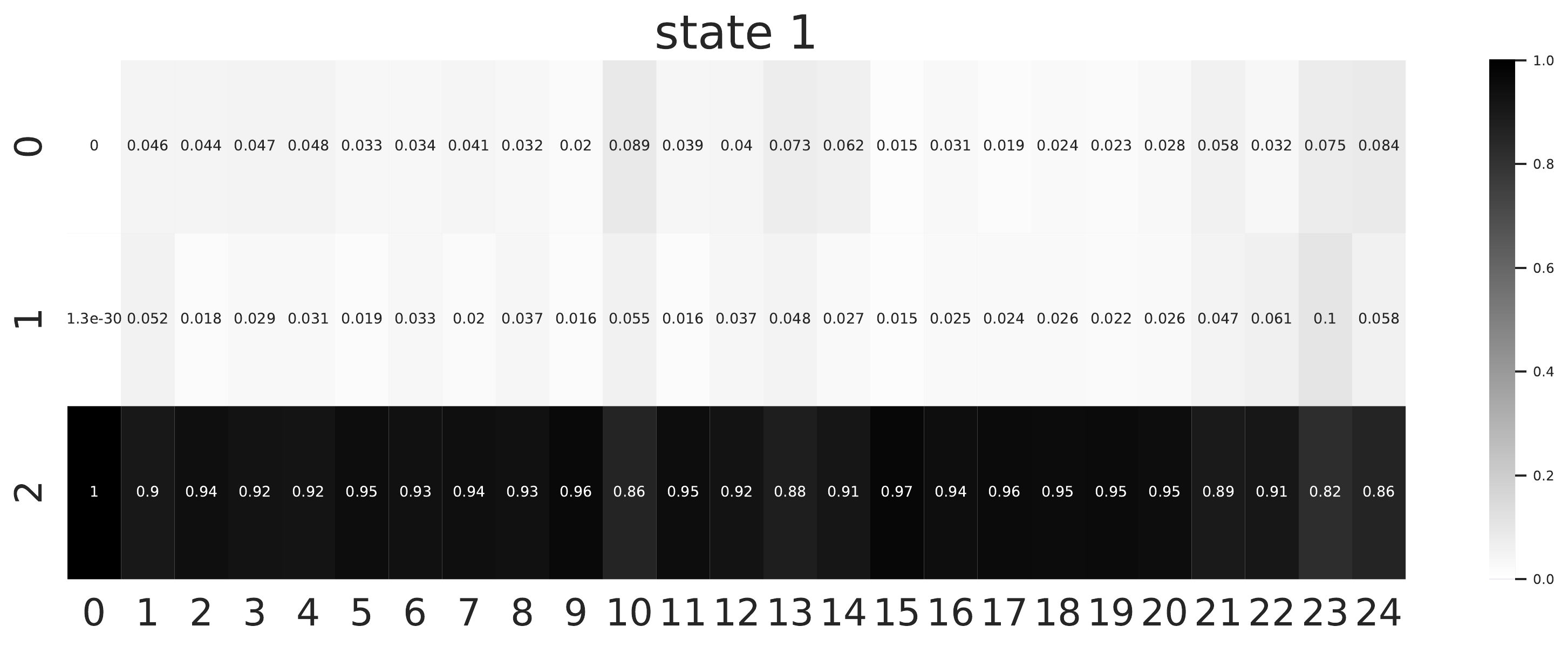}
    \includegraphics[width=0.6\linewidth]{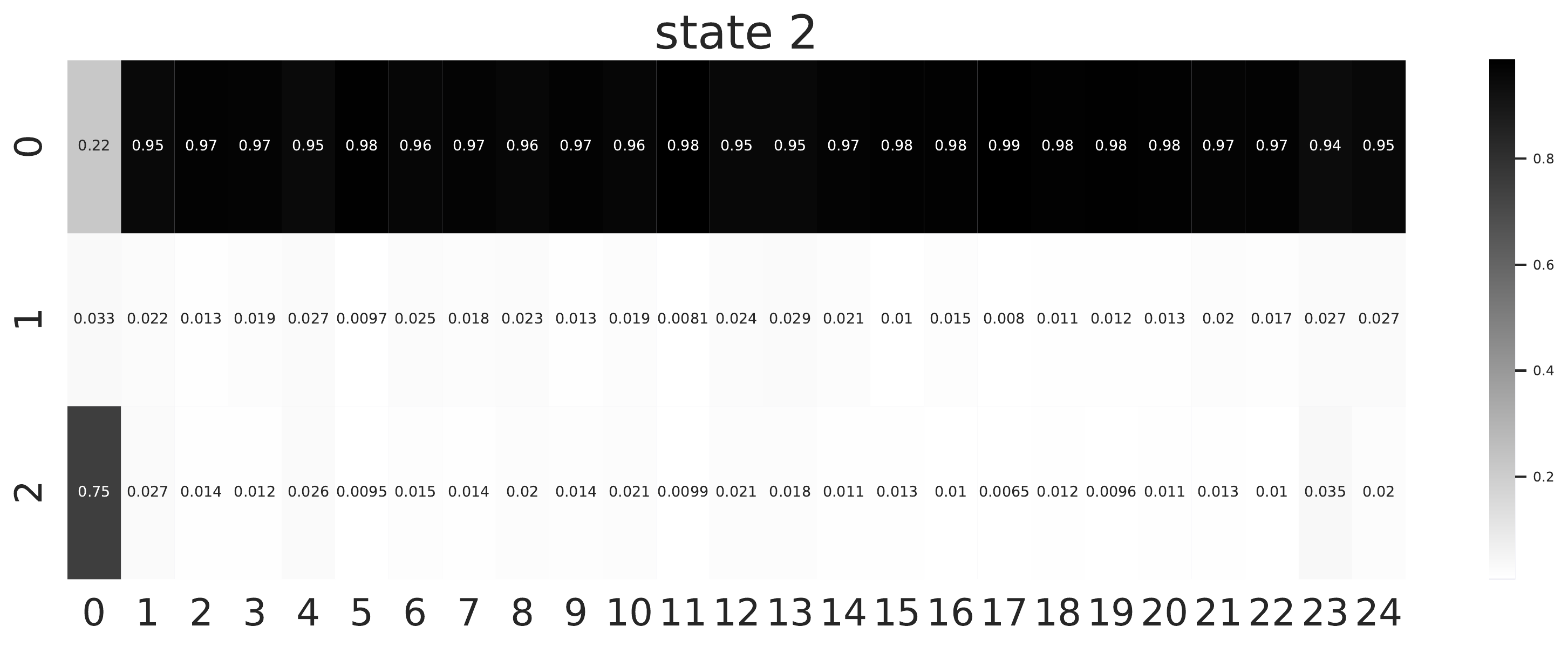}
    \caption{Visualization of decoders from source 2. Note the collapse happens at timestep 1 and 10.}
    \label{fig:comb:source2}
\end{figure}

\begin{figure}[h]
    \centering
    \includegraphics[width=0.6\linewidth]{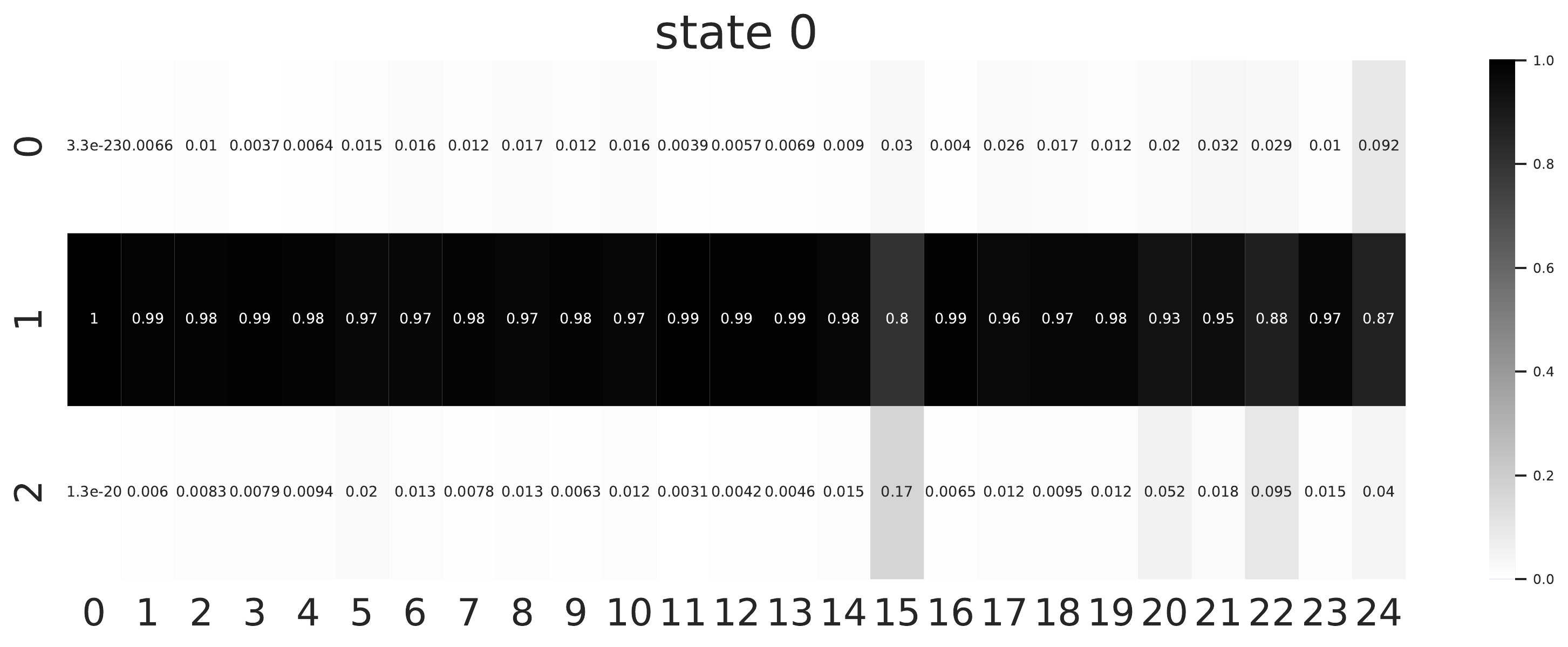}
    \includegraphics[width=0.6\linewidth]{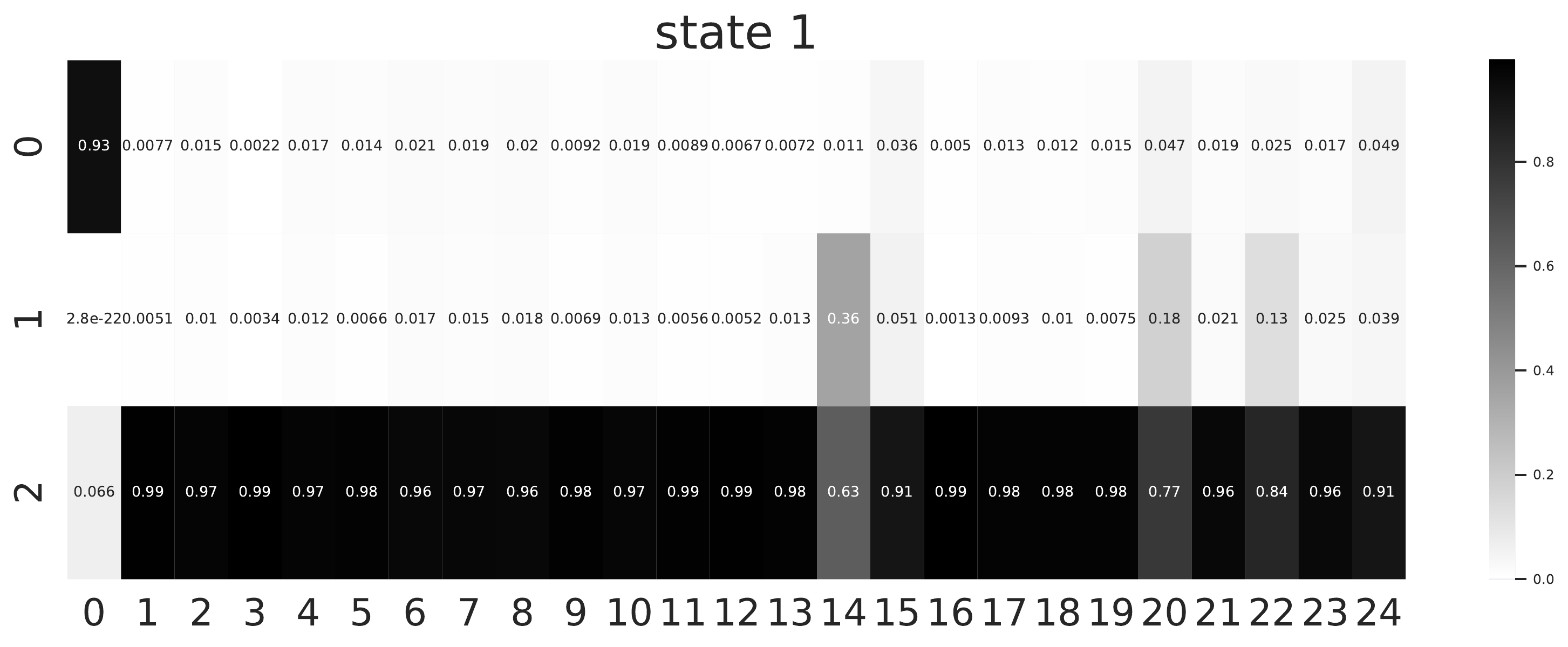}
    \includegraphics[width=0.6\linewidth]{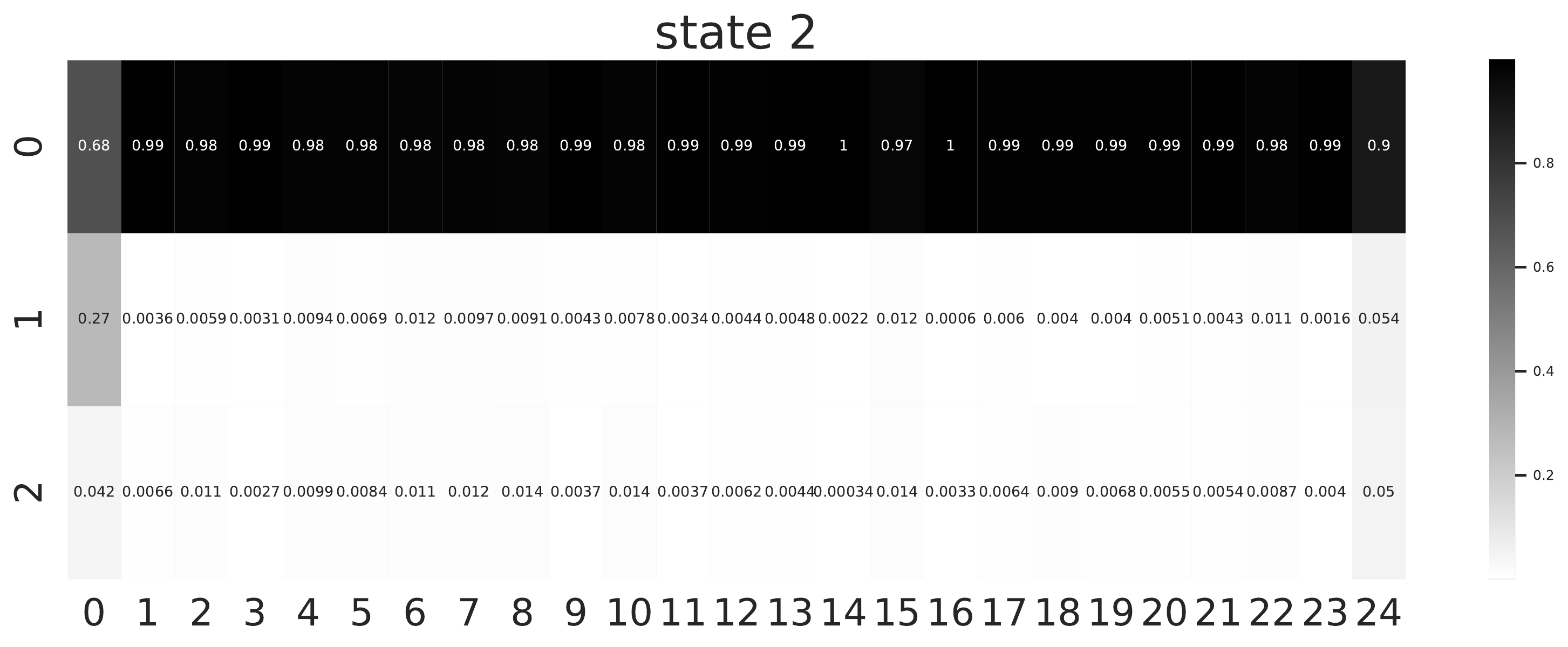}
    \caption{Visualization of decoders from source 3. Note the collapse happens at timestep 14 and 15.}
    \label{fig:comb:source3}
\end{figure}

\begin{figure}[h]
    \centering
    \includegraphics[width=0.6\linewidth]{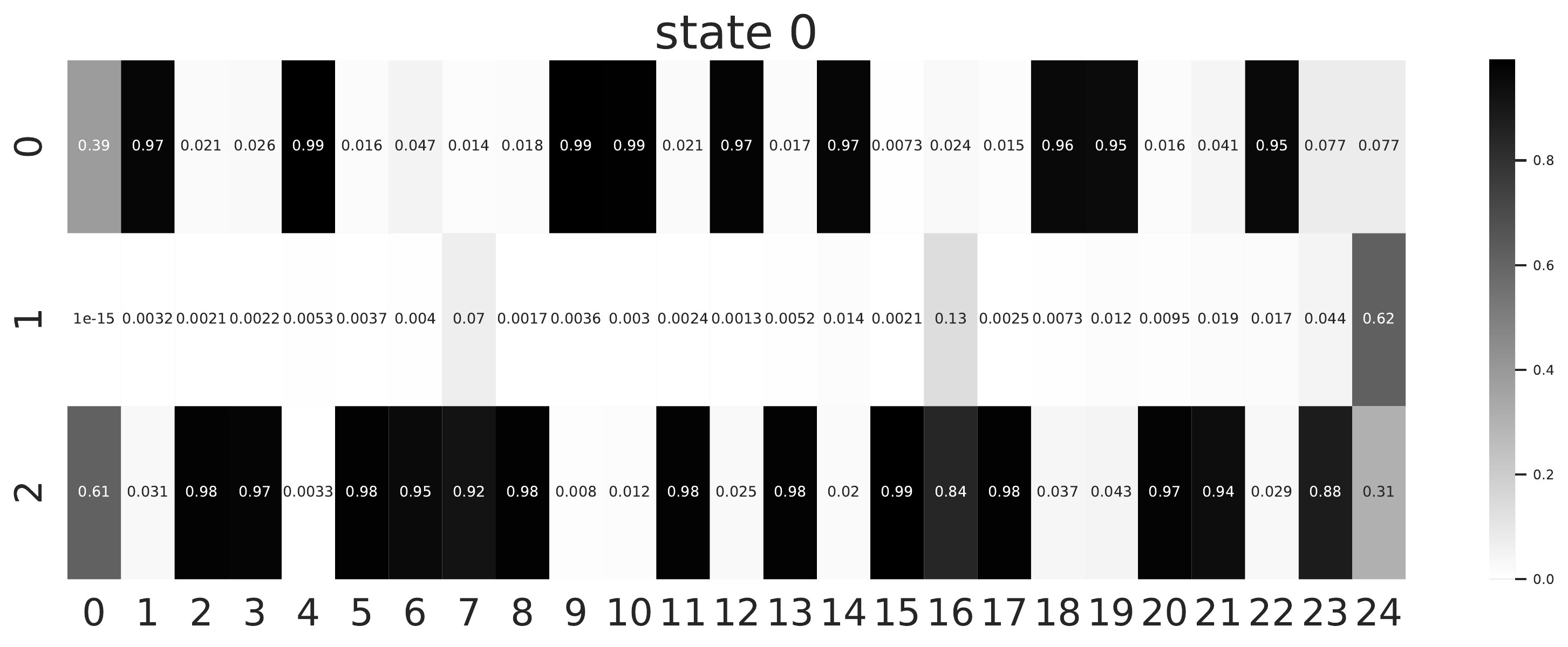}
    \includegraphics[width=0.6\linewidth]{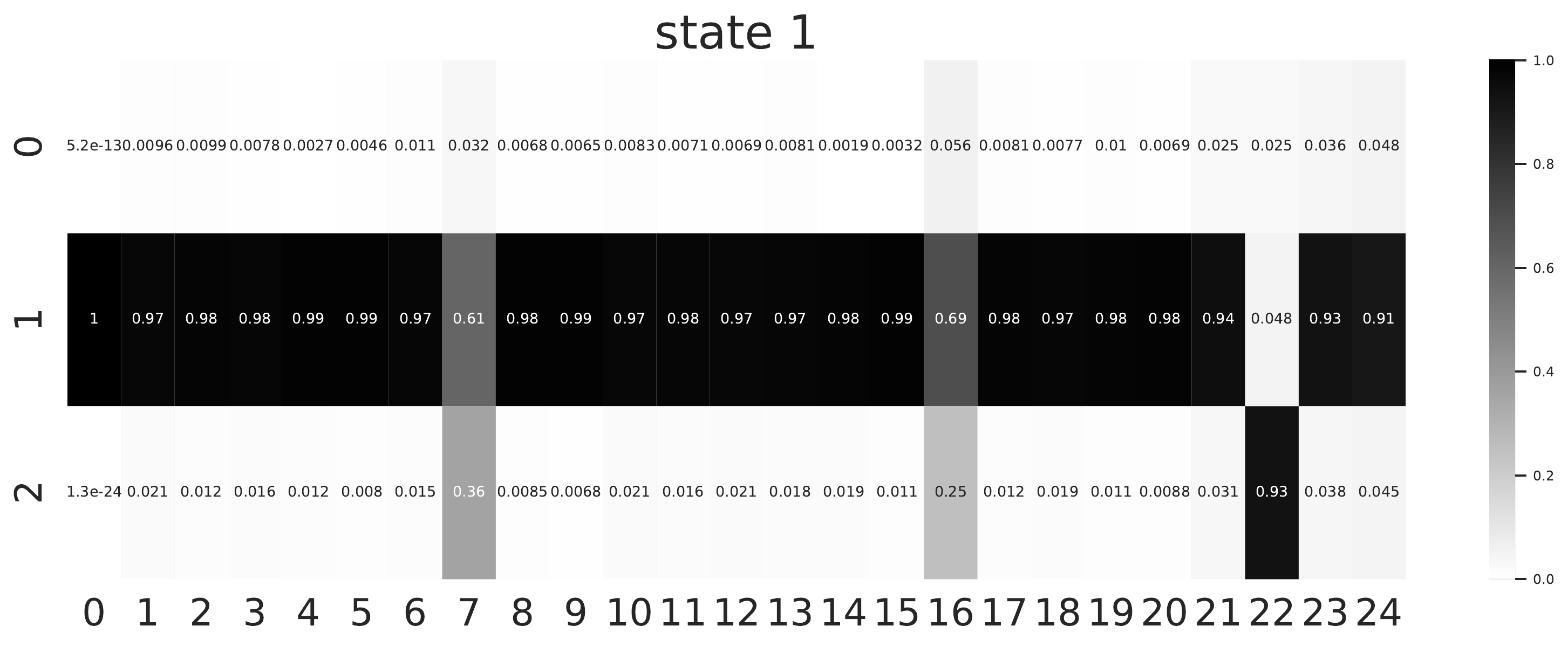}
    \includegraphics[width=0.6\linewidth]{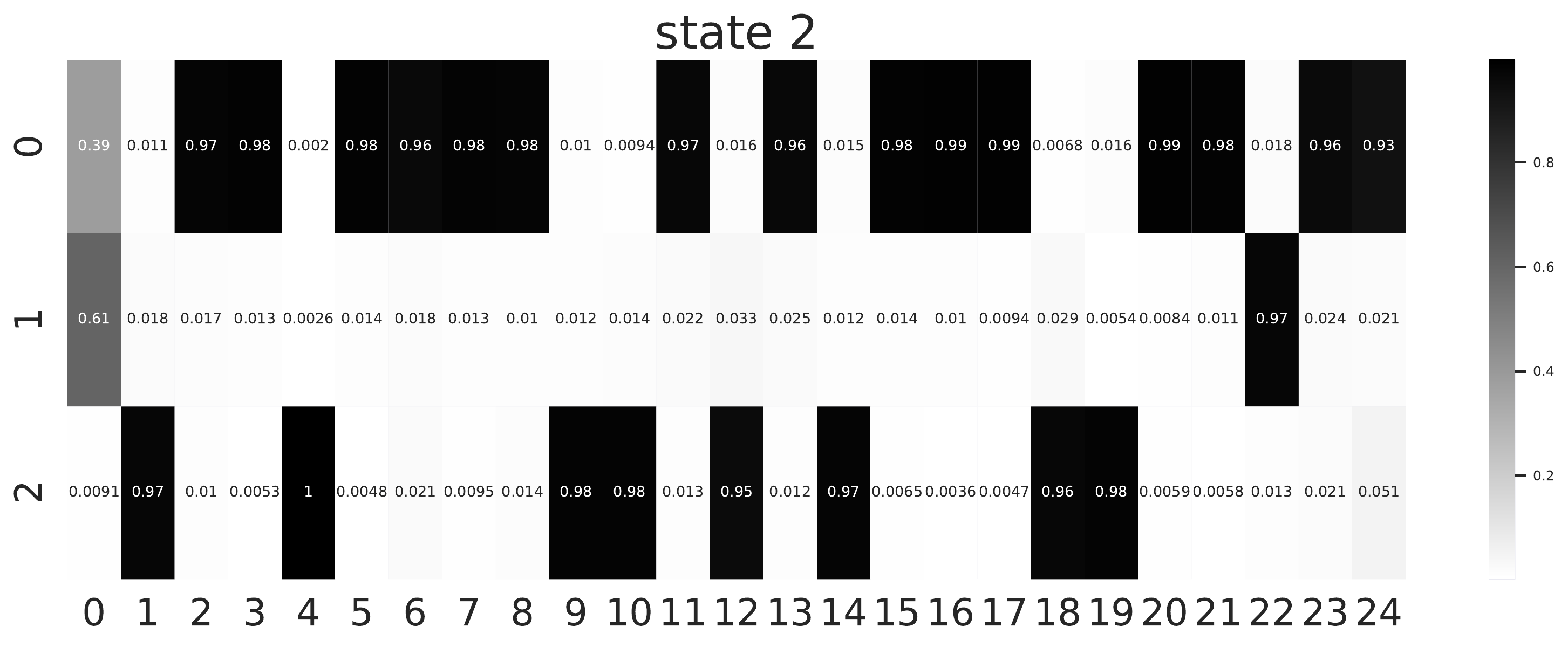}
    \caption{Visualization of decoders from source 4. Note the collapse happens at timestep 7, 16, 24.}
    \label{fig:comb:source4}
\end{figure}

\begin{figure}[h]
    \centering
    \includegraphics[width=0.6\linewidth]{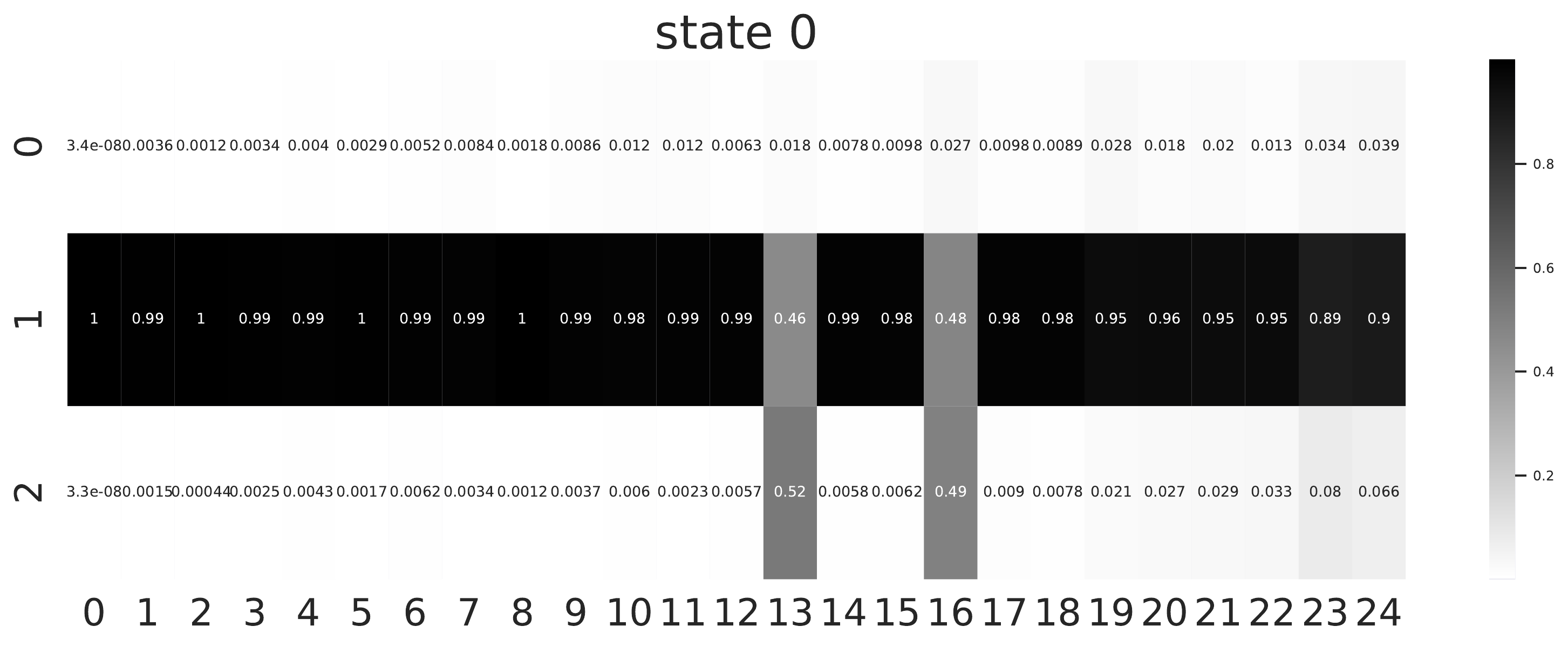}
    \includegraphics[width=0.6\linewidth]{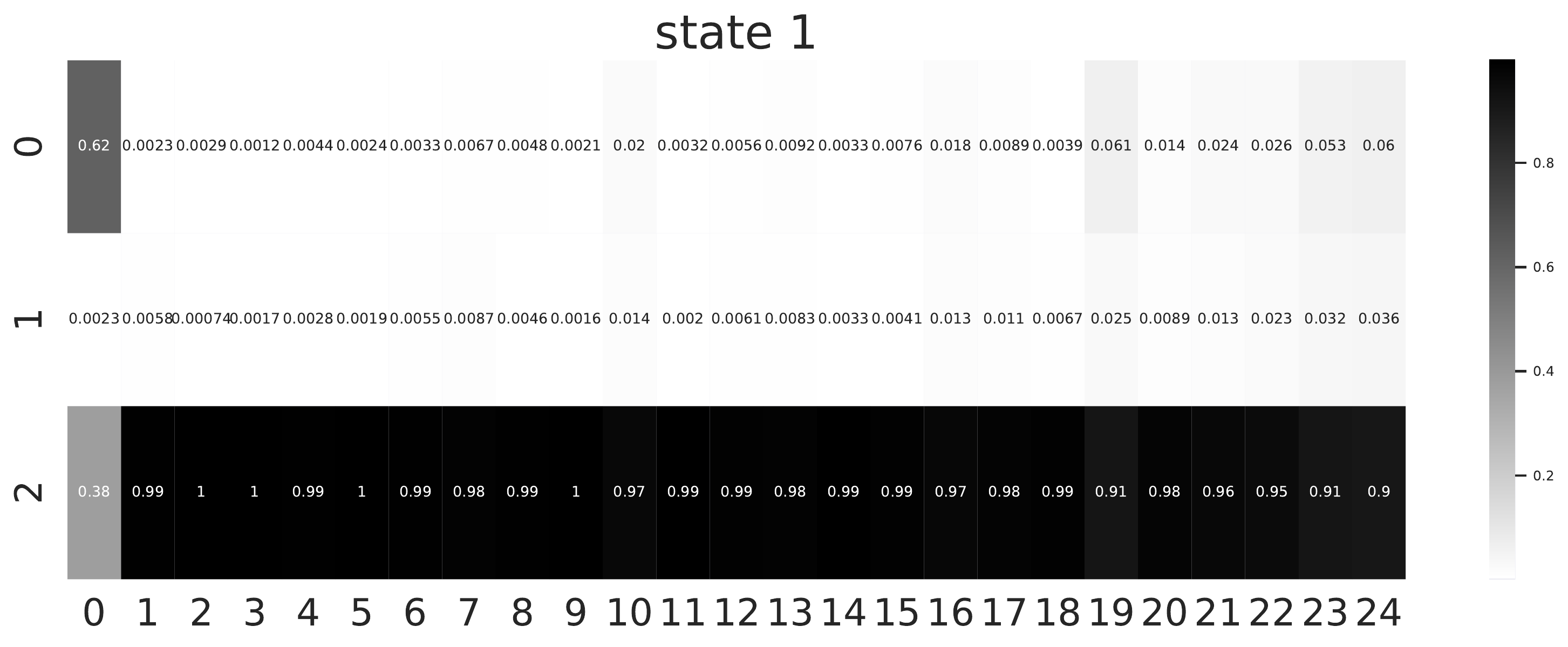}
    \includegraphics[width=0.6\linewidth]{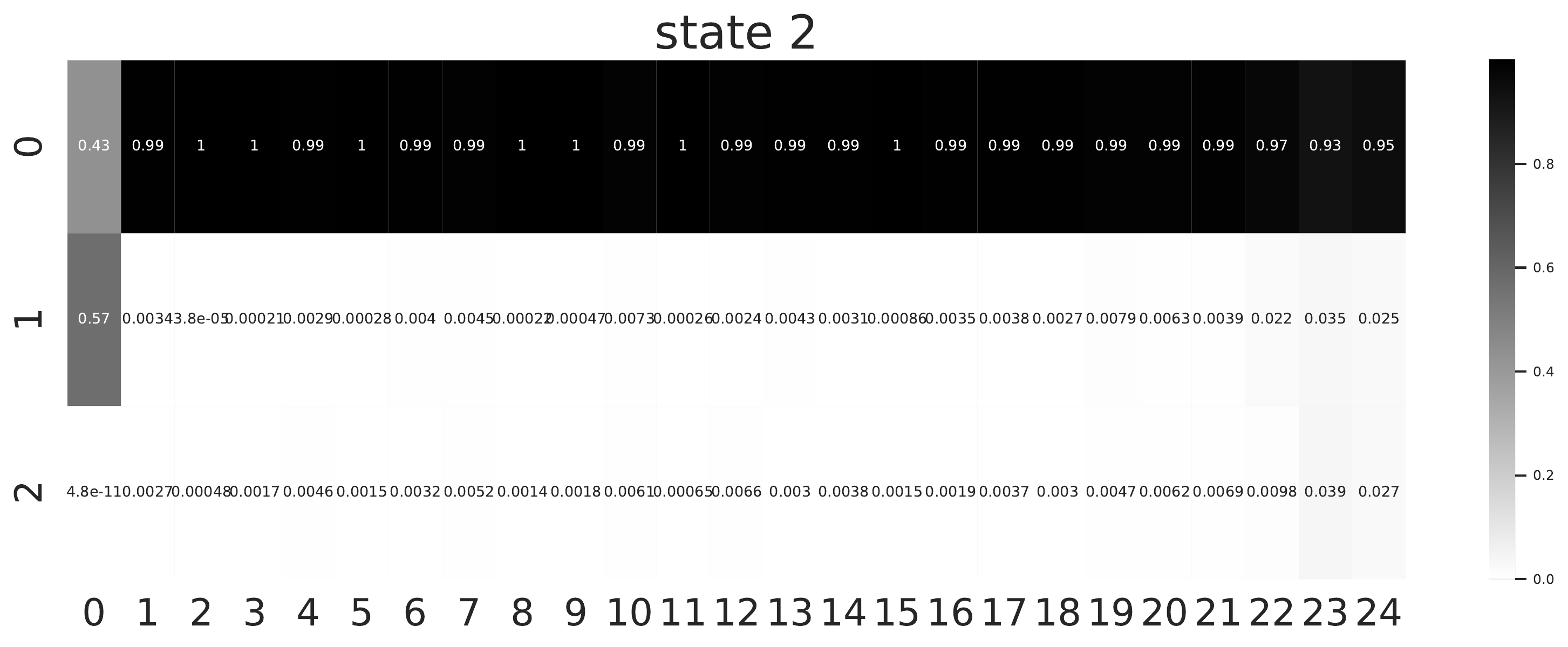}
    \caption{Visualization of decoders from source 3. Note the collapse happens at timestep 13 and 16.}
    \label{fig:comb:source5}
\end{figure}

\begin{figure}[h]
    \centering
    \includegraphics[width=0.6\linewidth]{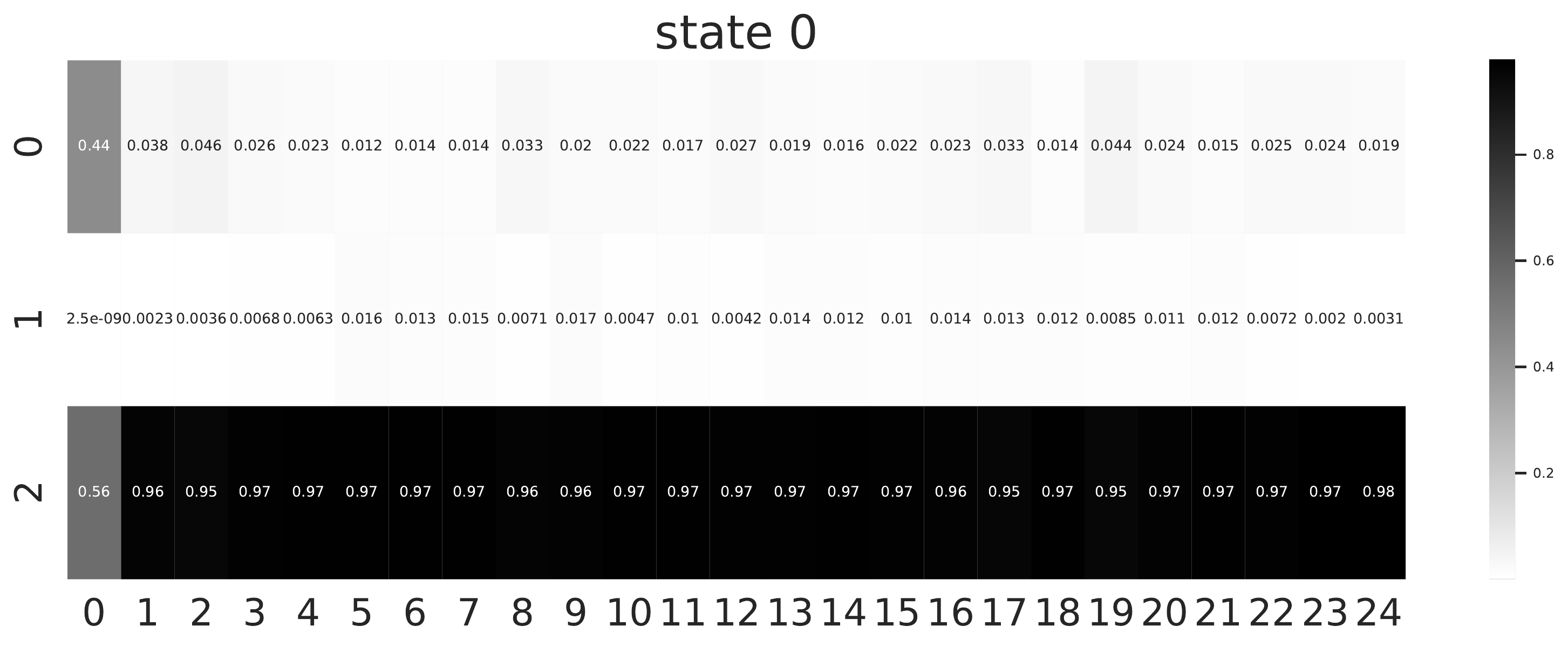}
    \includegraphics[width=0.6\linewidth]{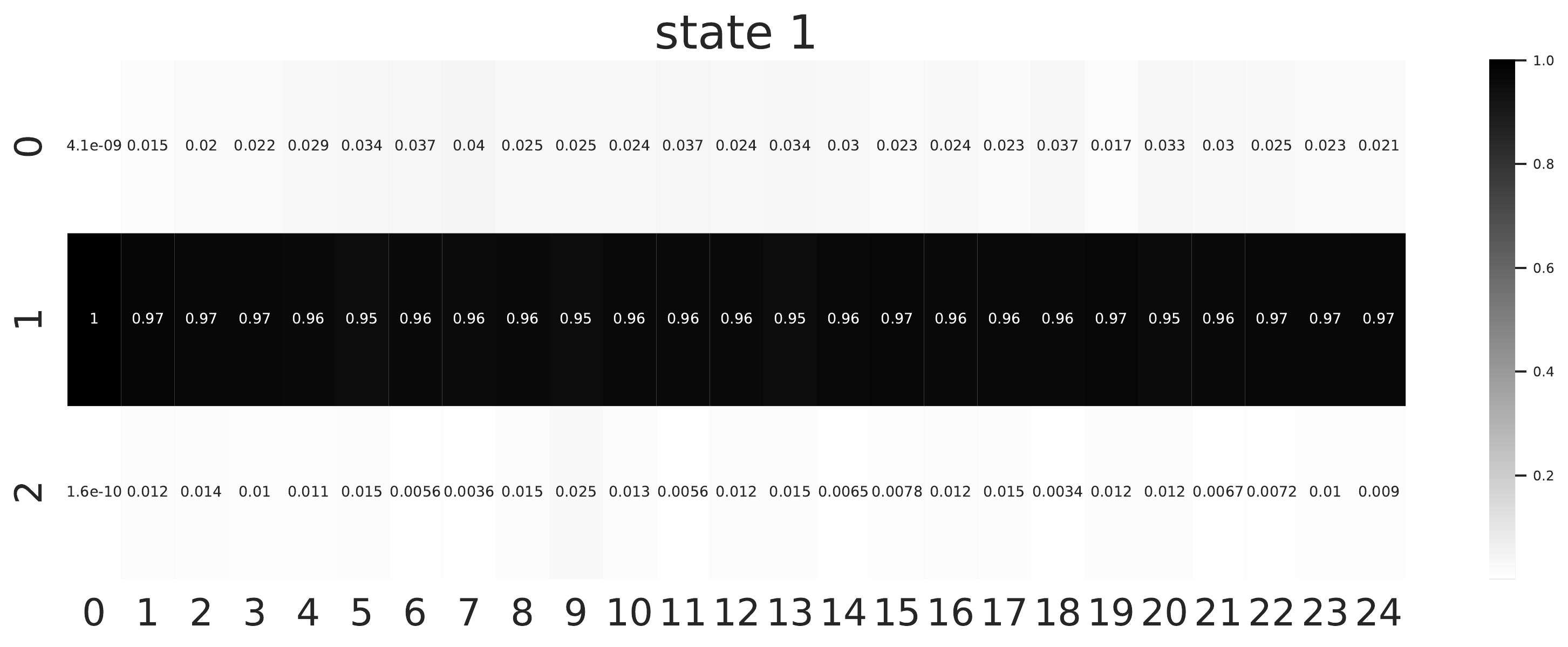}
    \includegraphics[width=0.6\linewidth]{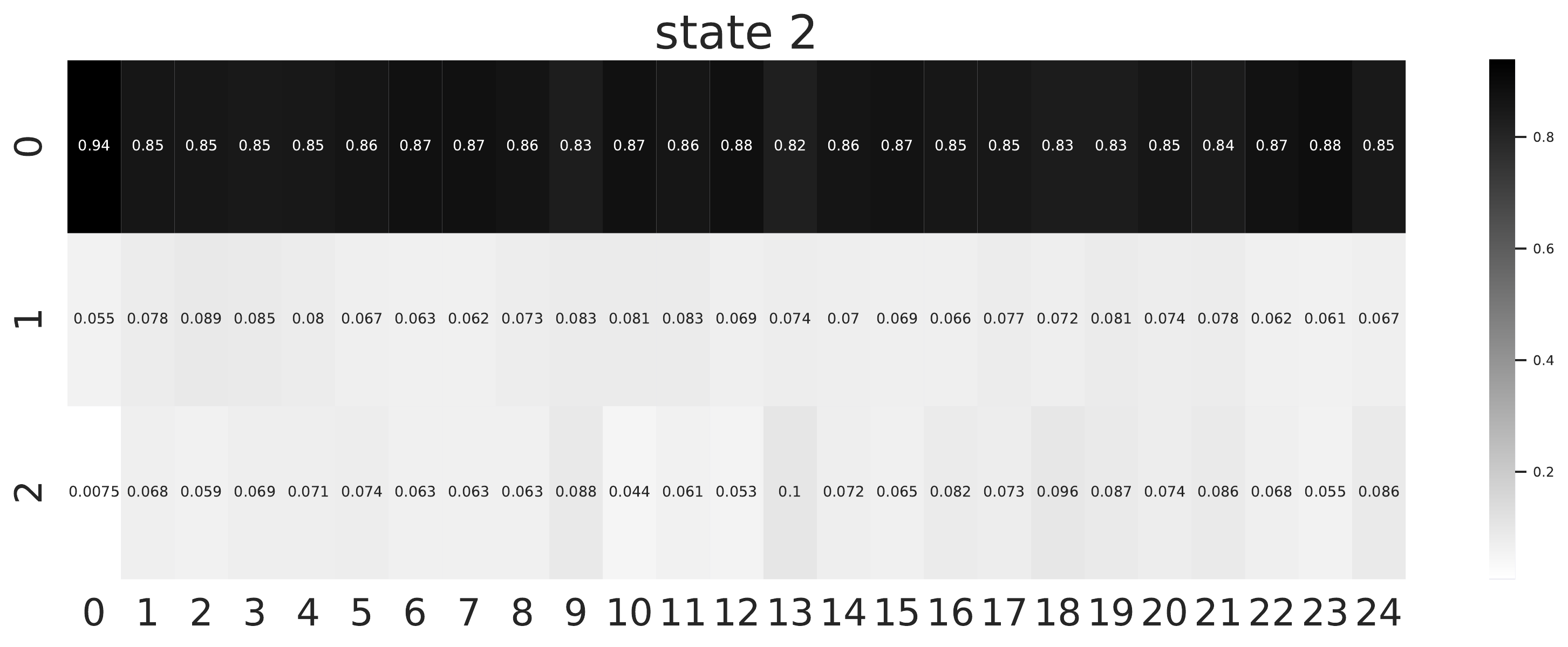}
    \caption{Visualization of decoders from \ouralgo}
    \label{fig:comb:online}
\end{figure}

\begin{figure}[h]
    \centering
    \includegraphics[width=0.6\linewidth]{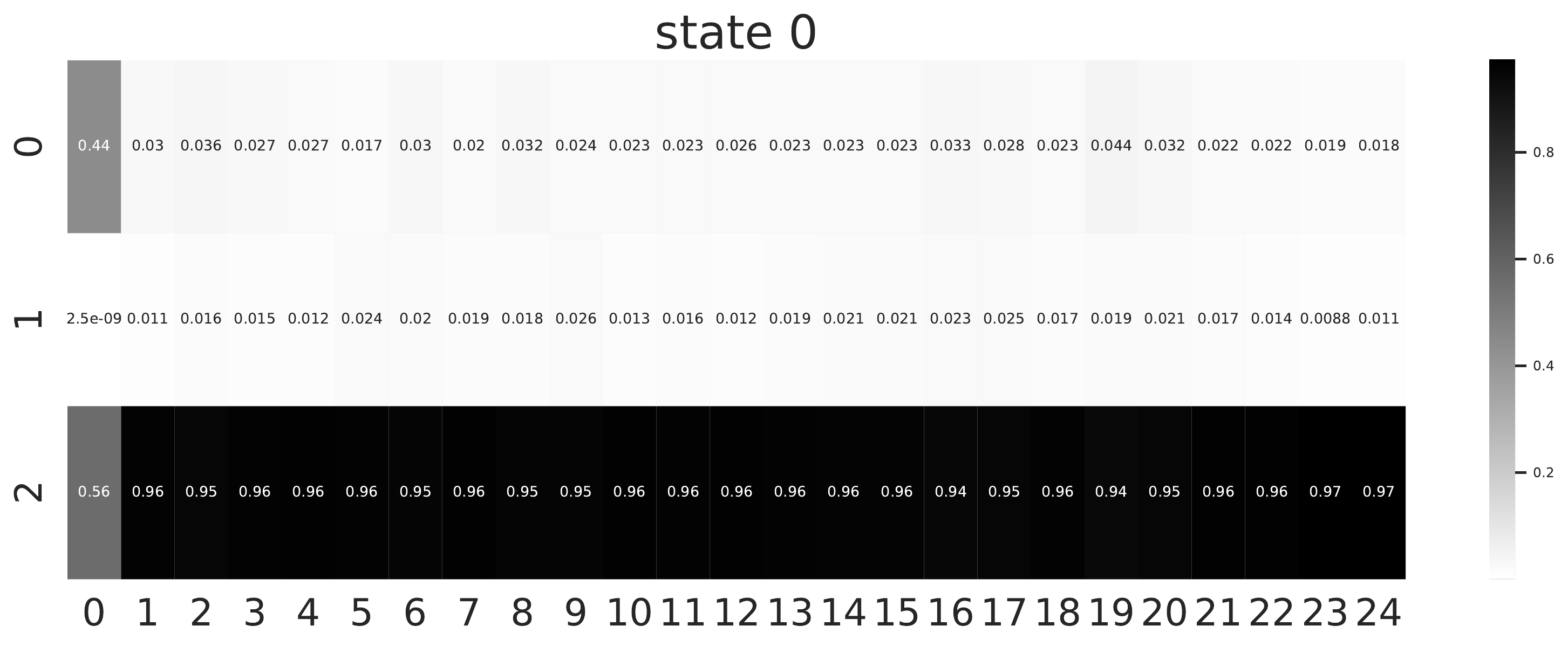}
    \includegraphics[width=0.6\linewidth]{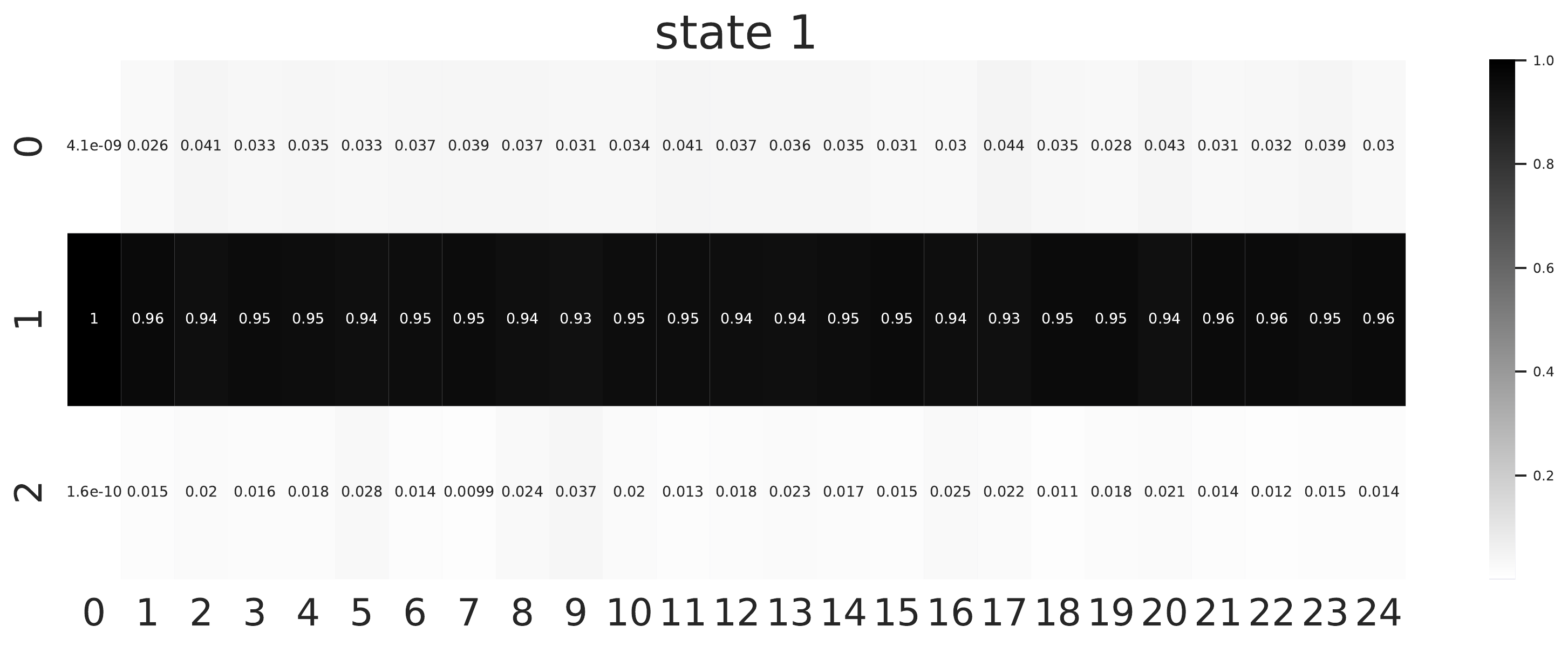}
    \includegraphics[width=0.6\linewidth]{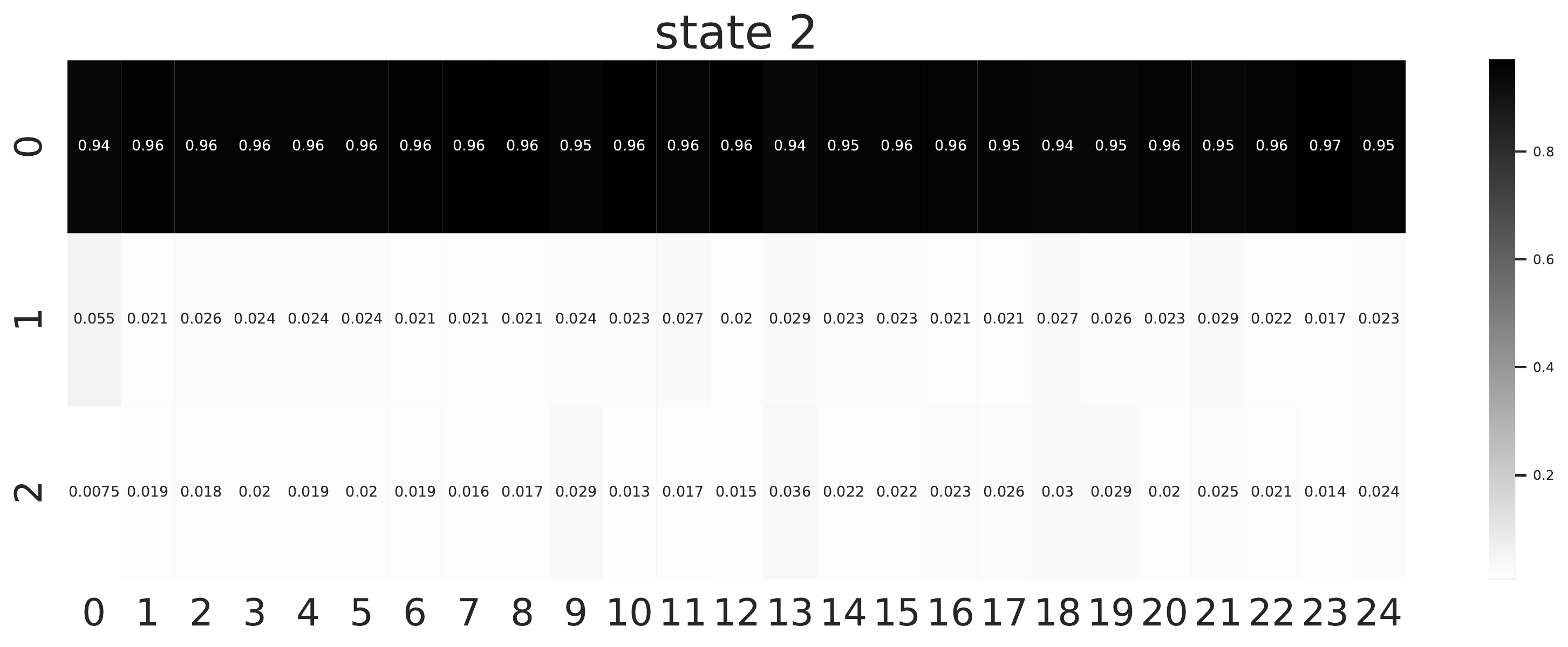}
    \caption{Visualization of decoders from \ouralgg}
    \label{fig:comb:generative}
\end{figure}

\begin{figure}[h]
    \centering
    \includegraphics[width=0.6\linewidth]{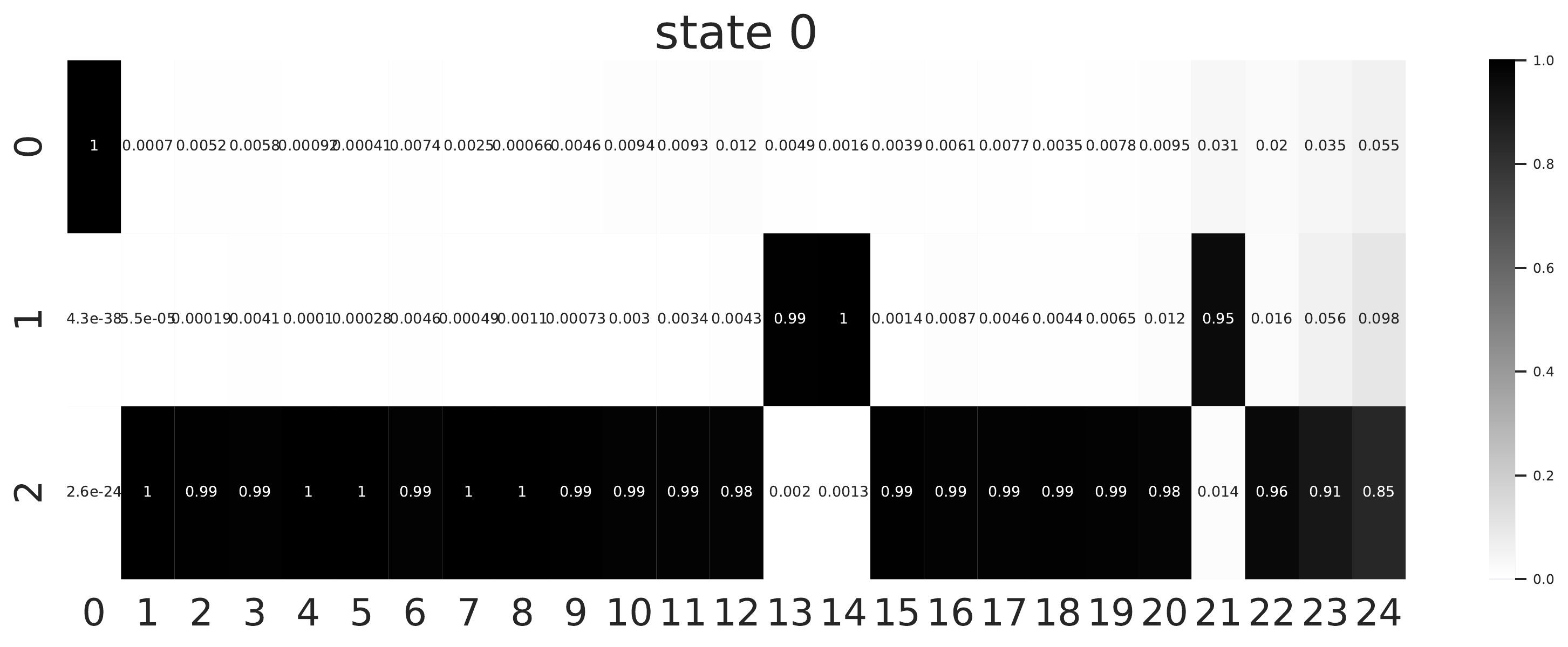}
    \includegraphics[width=0.6\linewidth]{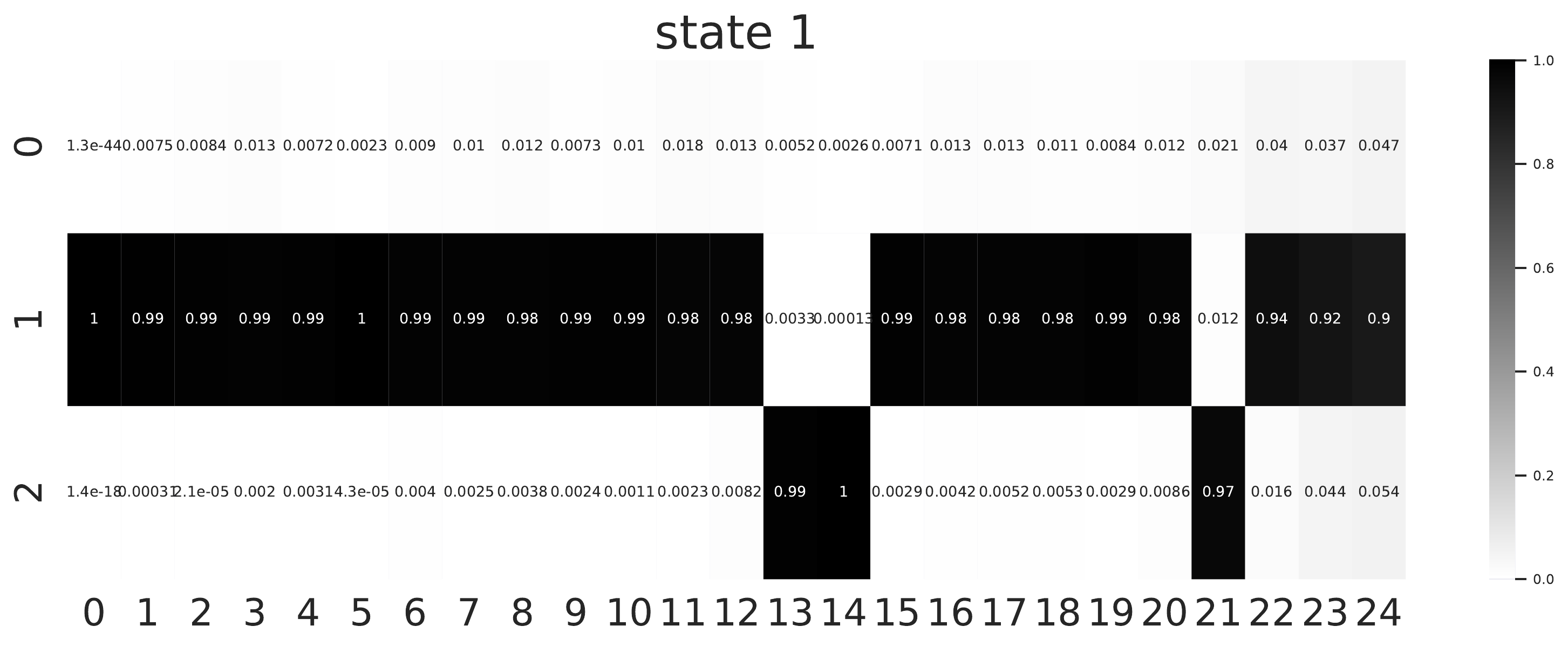}
    \includegraphics[width=0.6\linewidth]{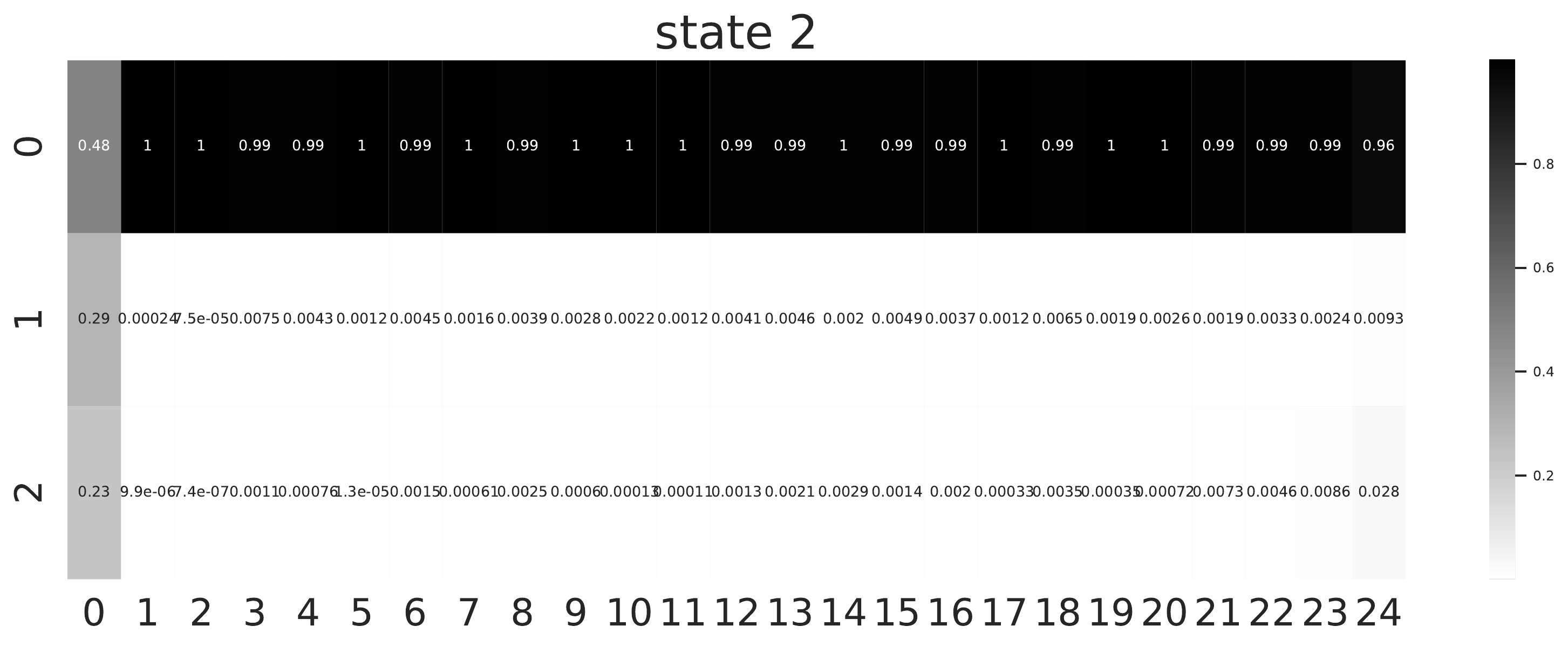}
    \caption{Visualization of decoders from running \algname on target.}
    \label{fig:comb:briee}
\end{figure}

\subsubsection{Visualizations from the feature coverage experiment}
We record the visualization of the 2 sources from Fig.~\ref{fig:part:source1} and Fig.~\ref{fig:part:source2}; \ouralgo in Fig.\ref{fig:part:online}; \ouralgg in Fig.~\ref{fig:part:generative}; running \algname on target in Fig.~\ref{fig:part:briee}. Note that the features collapse at some timesteps in Fig.~\ref{fig:part:generative} and Fig.~\ref{fig:part:briee}, but this is acceptable because the optimal actions at those timesteps are the same for the collapsed states.

\begin{figure}[h]
    \centering
    \includegraphics[width=0.6\linewidth]{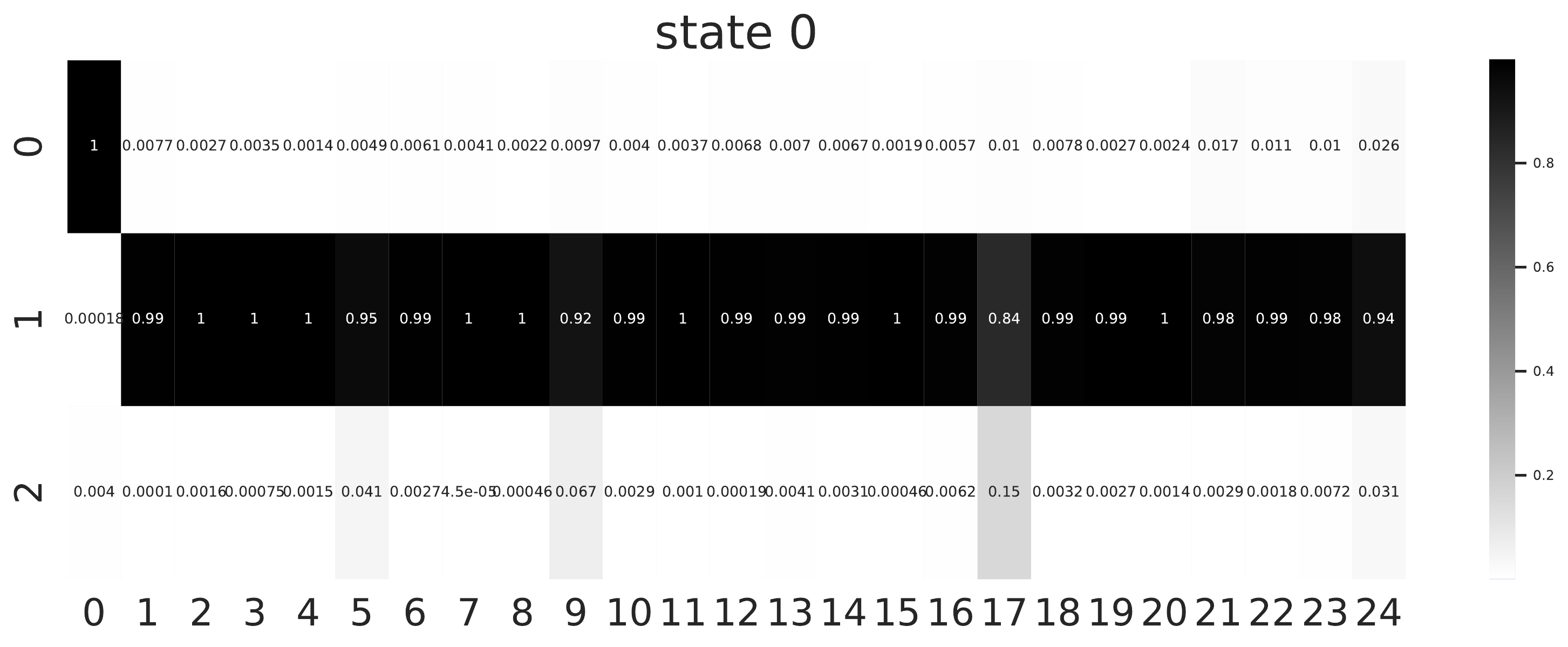}
    \includegraphics[width=0.6\linewidth]{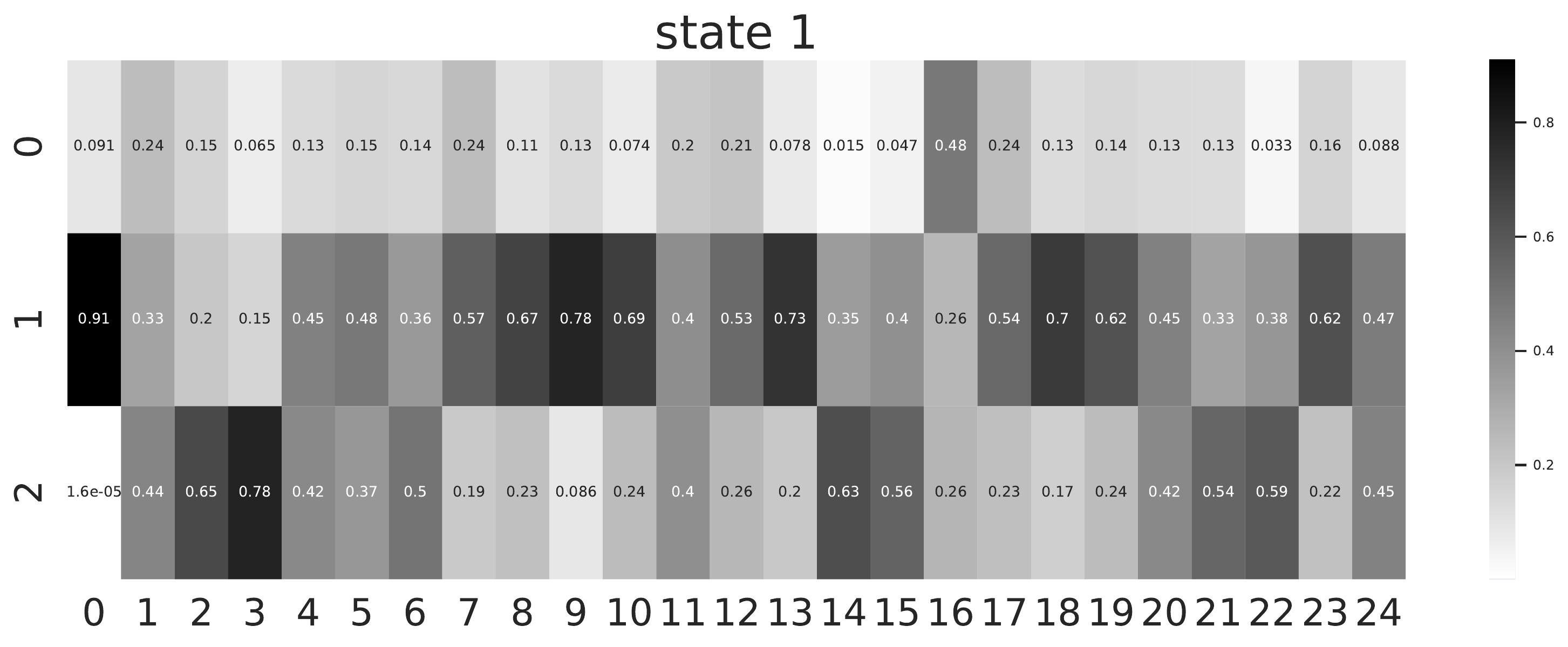}
    \includegraphics[width=0.6\linewidth]{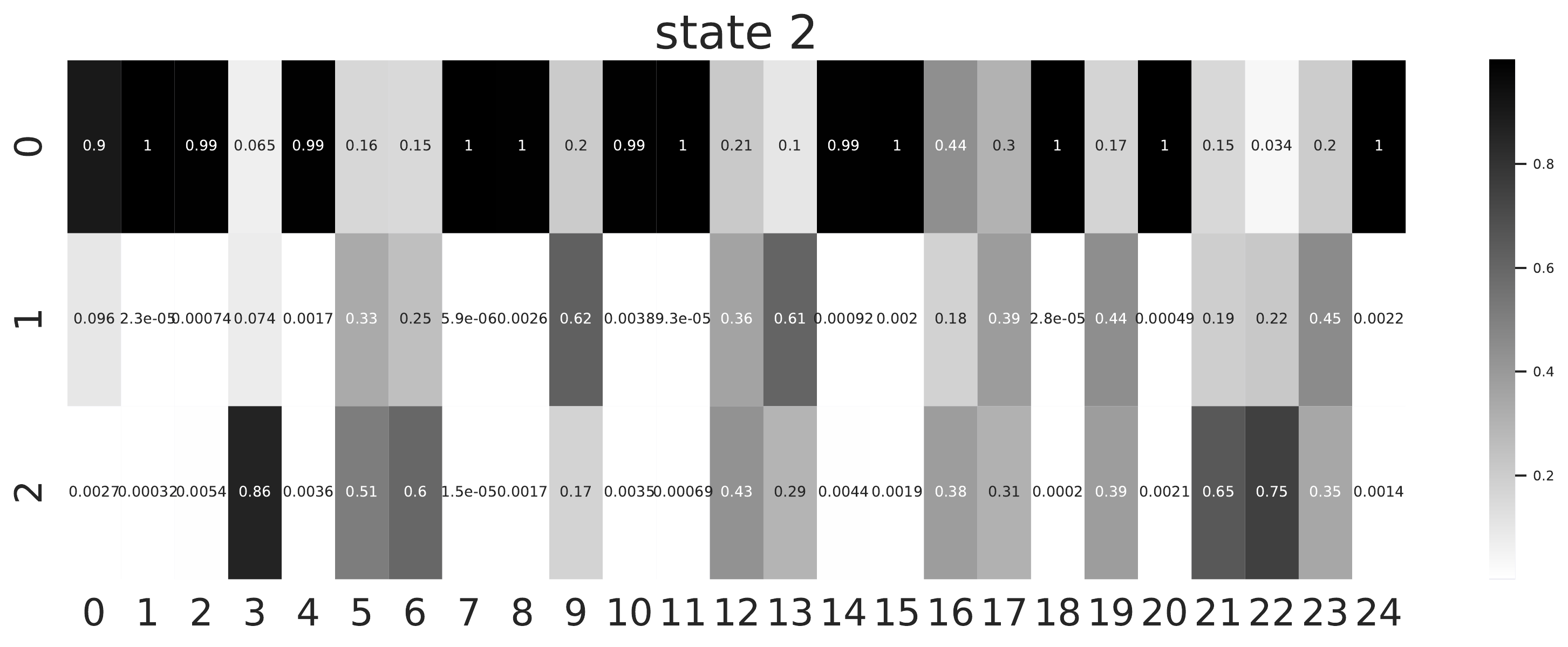}
    \caption{Visualization of decoders from source environment 1.}
    \label{fig:part:source1}
\end{figure}

\begin{figure}[h]
    \centering
    \includegraphics[width=0.6\linewidth]{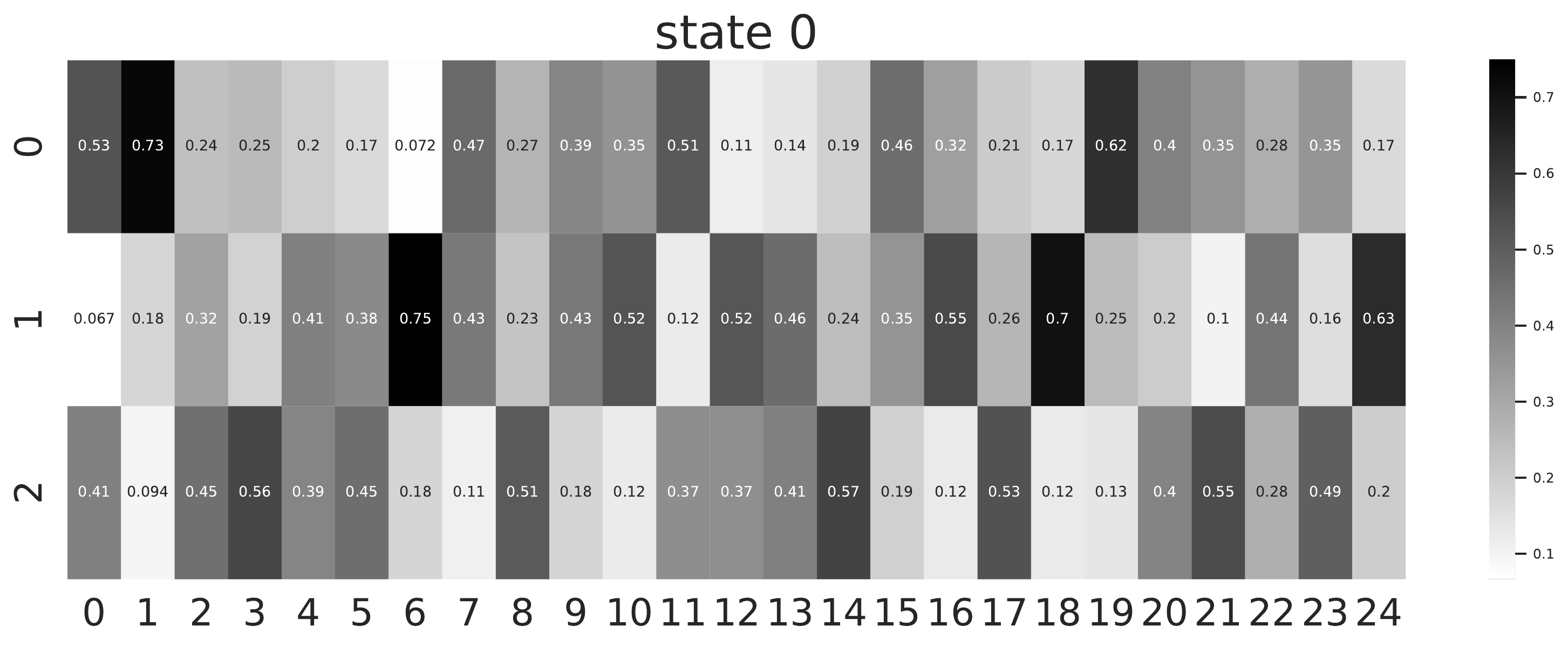}
    \includegraphics[width=0.6\linewidth]{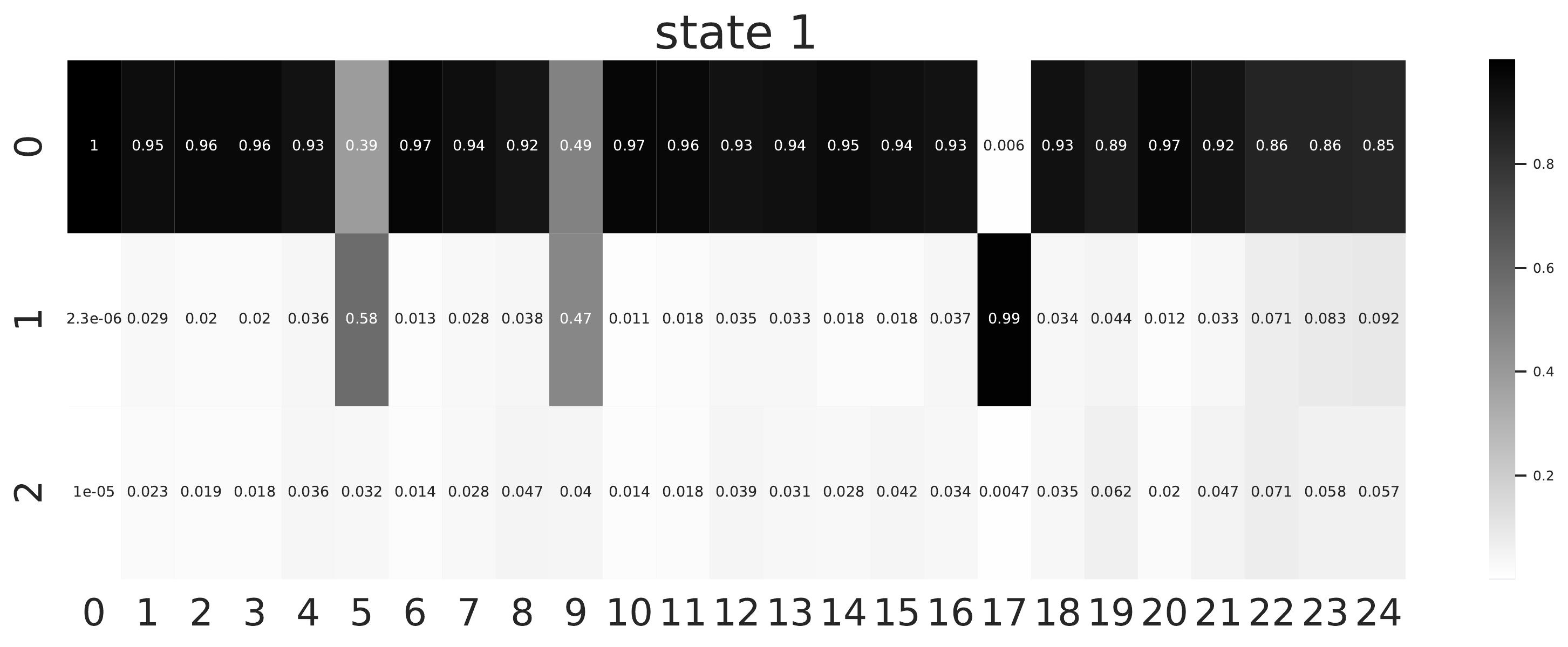}
    \includegraphics[width=0.6\linewidth]{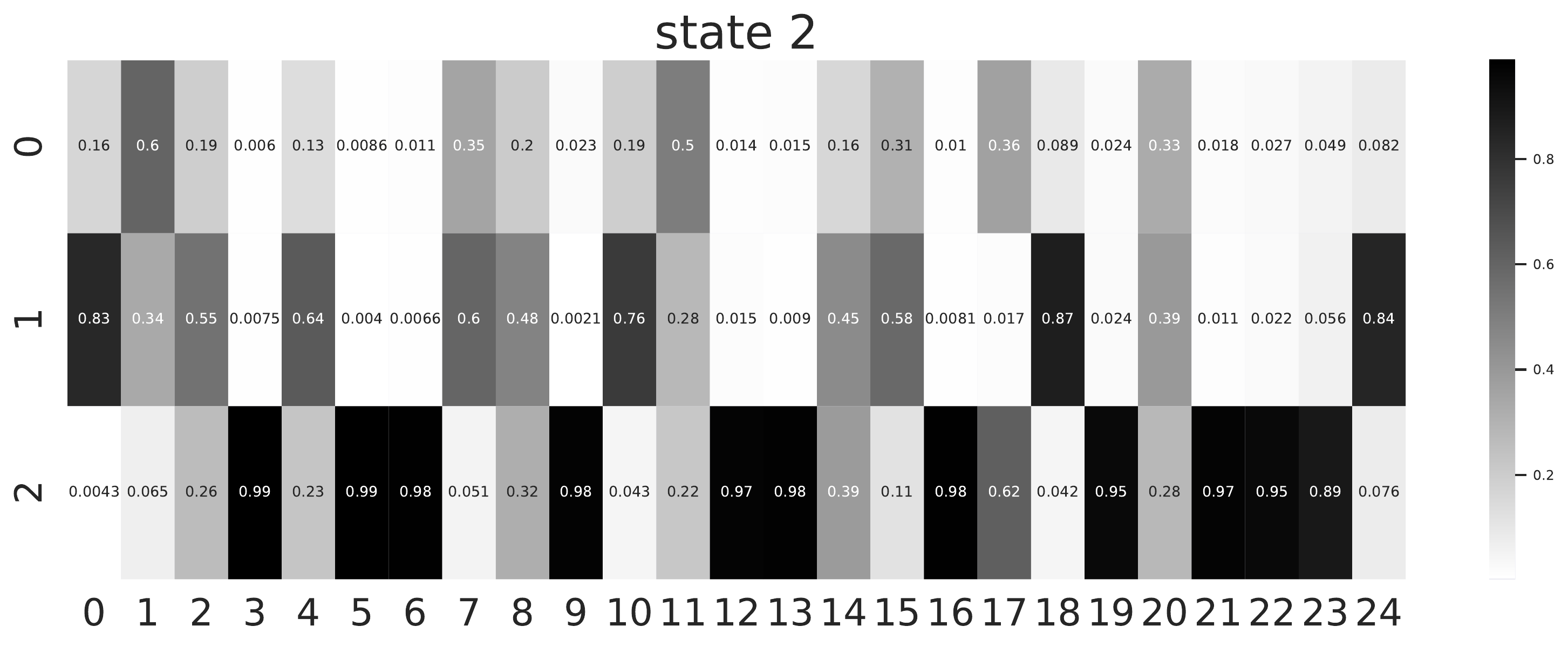}
    \caption{Visualization of decoders from source environment 2.}
    \label{fig:part:source2}
\end{figure}

\begin{figure}[h]
    \centering
    \includegraphics[width=0.6\linewidth]{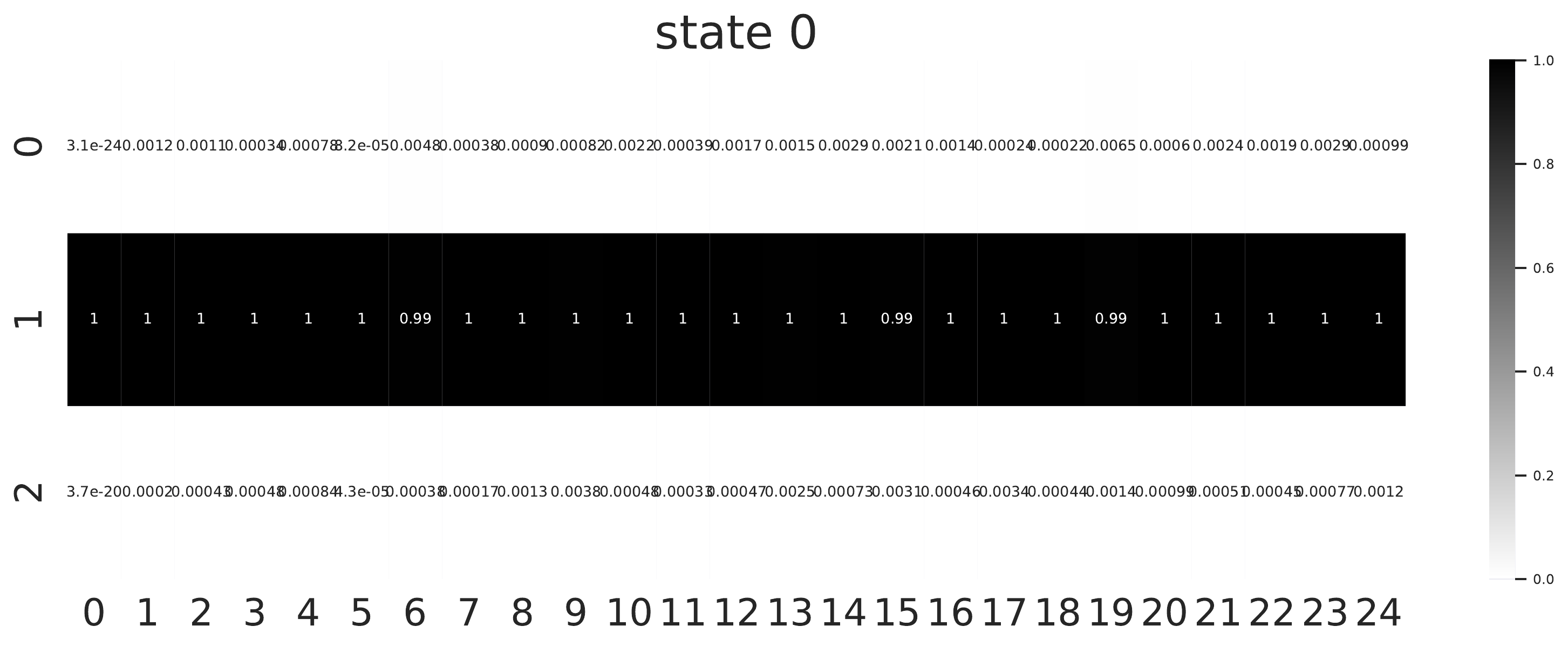}
    \includegraphics[width=0.6\linewidth]{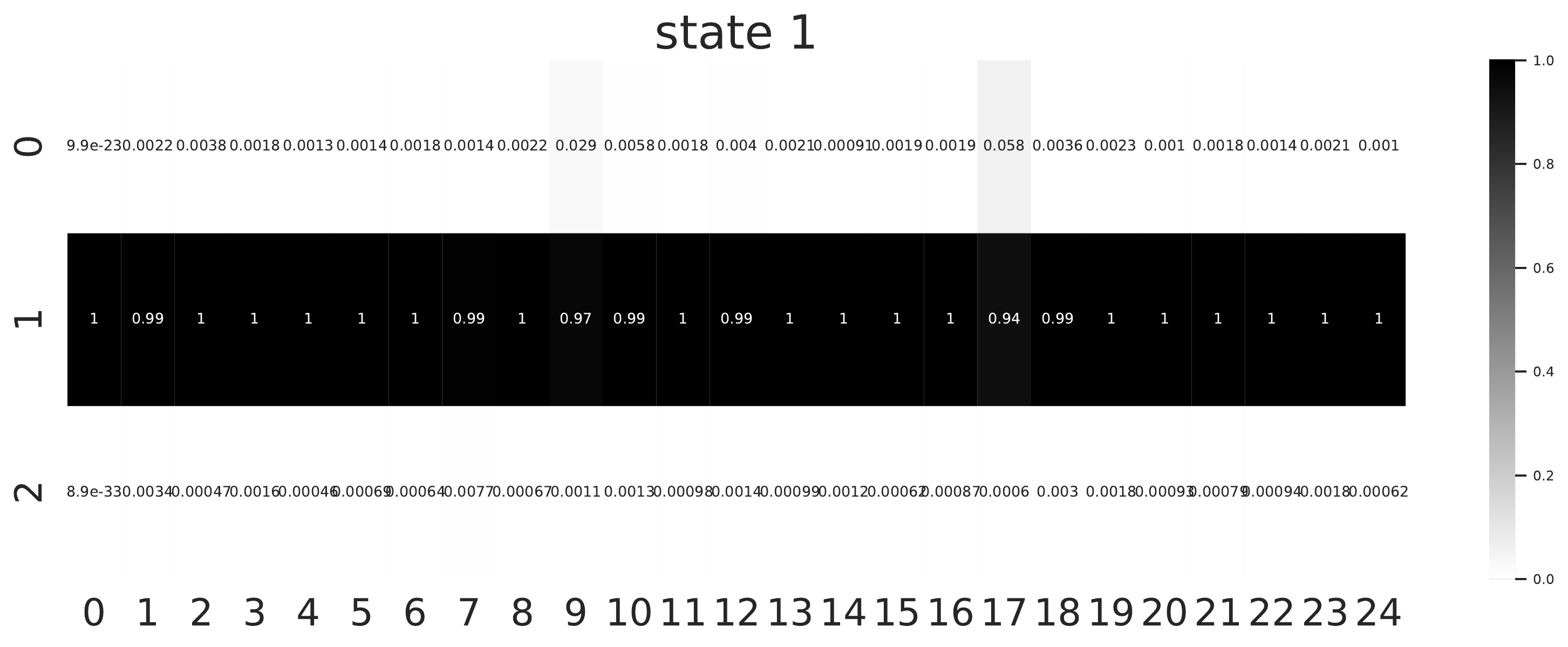}
    \includegraphics[width=0.6\linewidth]{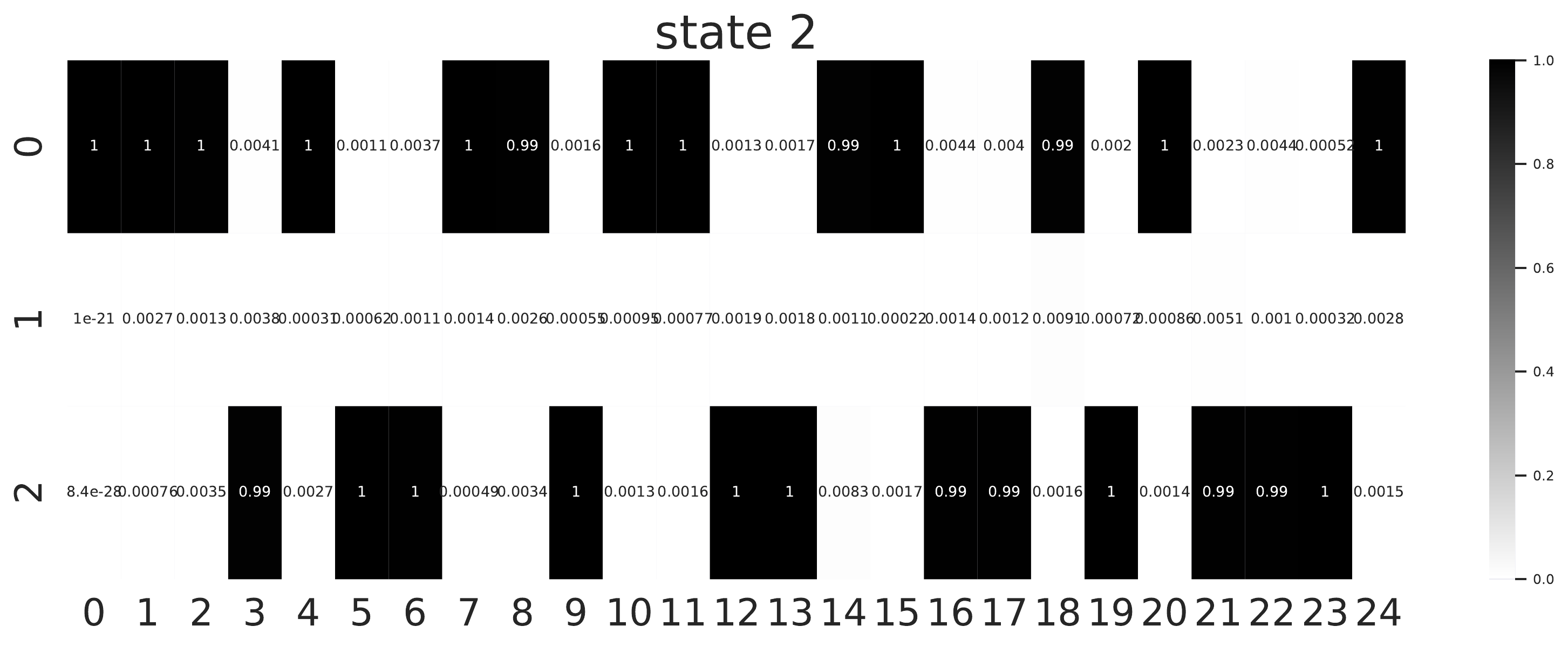}
    \caption{Visualization of decoders from \ouralgo.}
    \label{fig:part:online}
\end{figure}

\begin{figure}[h]
    \centering
    \includegraphics[width=0.6\linewidth]{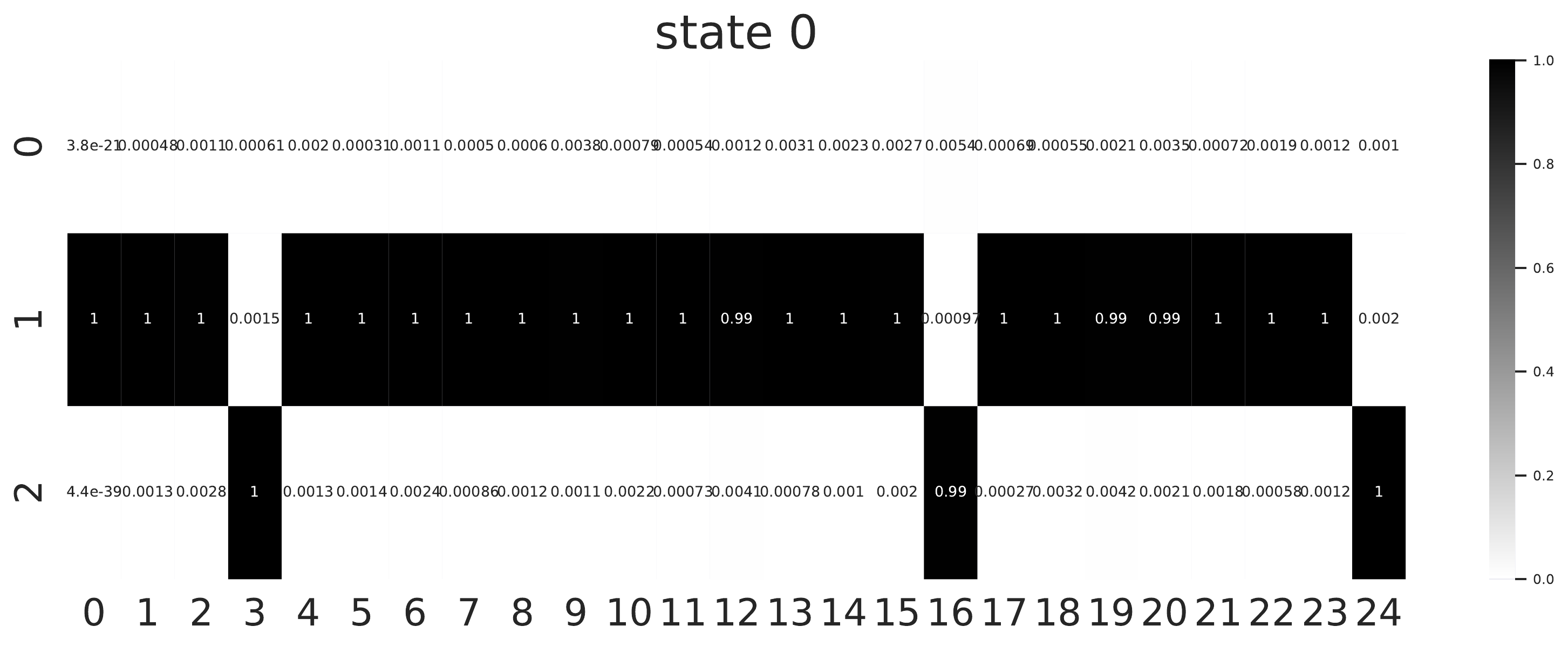}
    \includegraphics[width=0.6\linewidth]{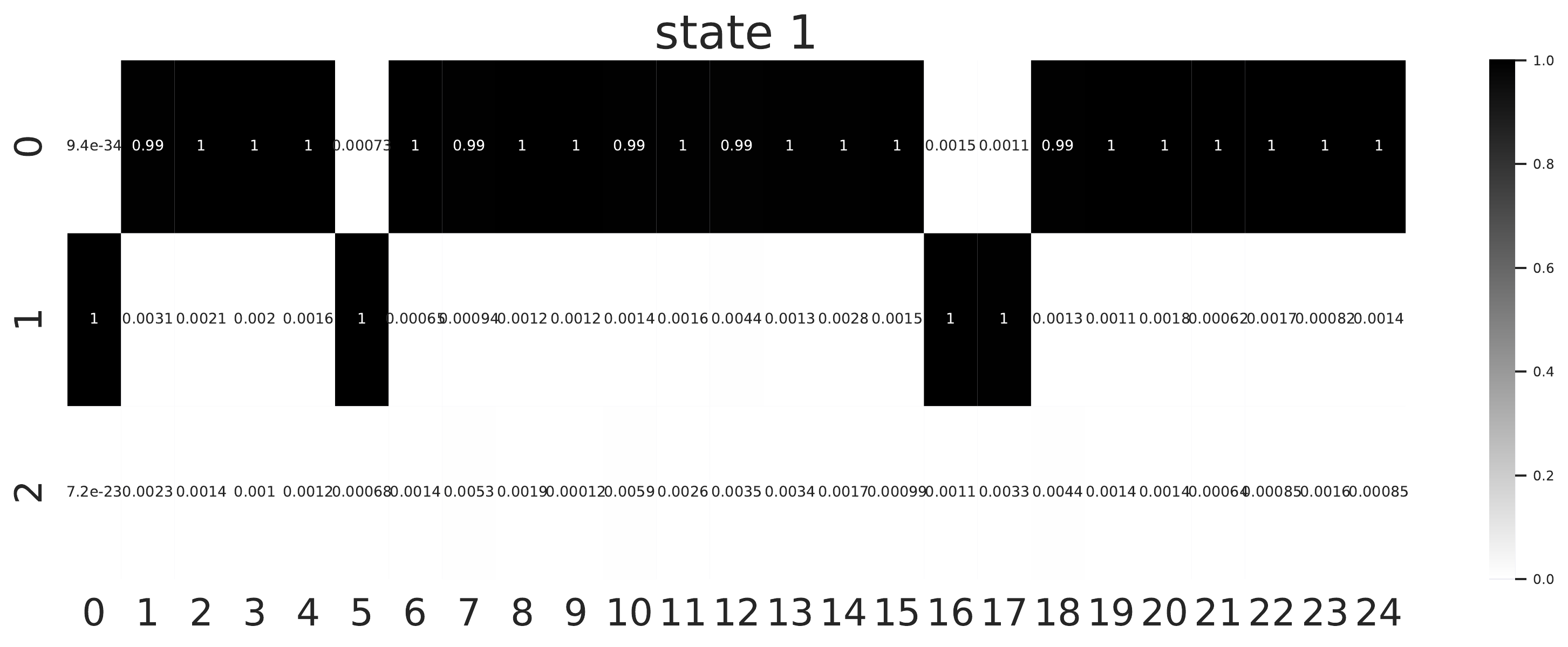}
    \includegraphics[width=0.6\linewidth]{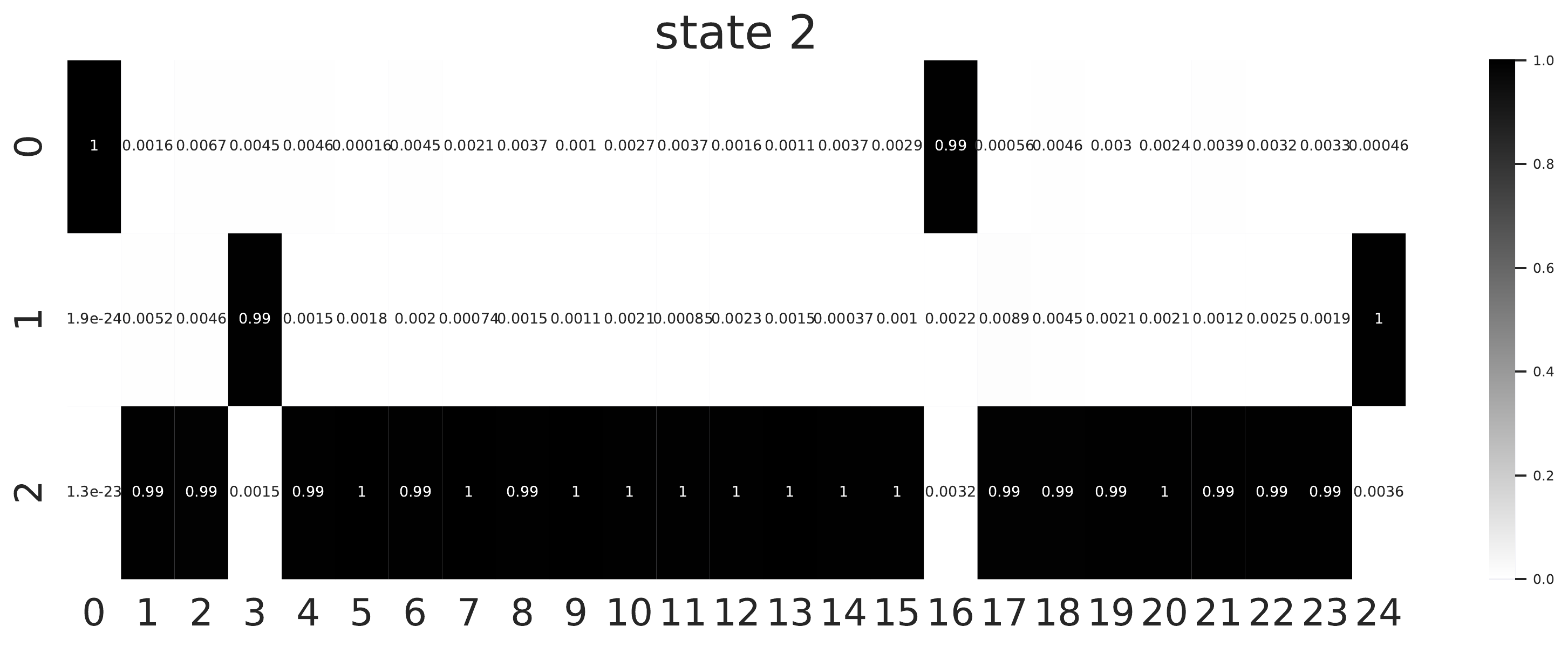}
    \caption{Visualization of decoders from \ouralgg.}
    \label{fig:part:generative}
\end{figure}

\begin{figure}[h]
    \centering
    \includegraphics[width=0.6\linewidth]{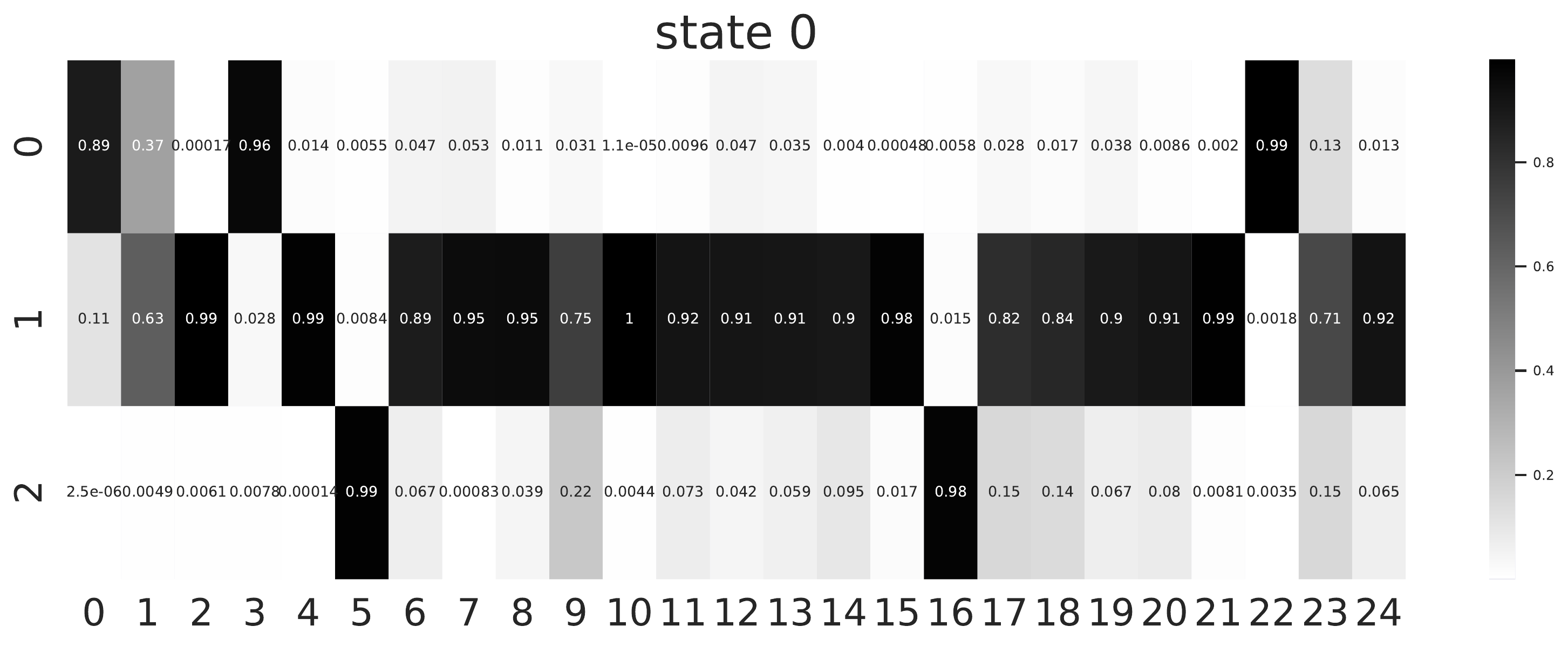}
    \includegraphics[width=0.6\linewidth]{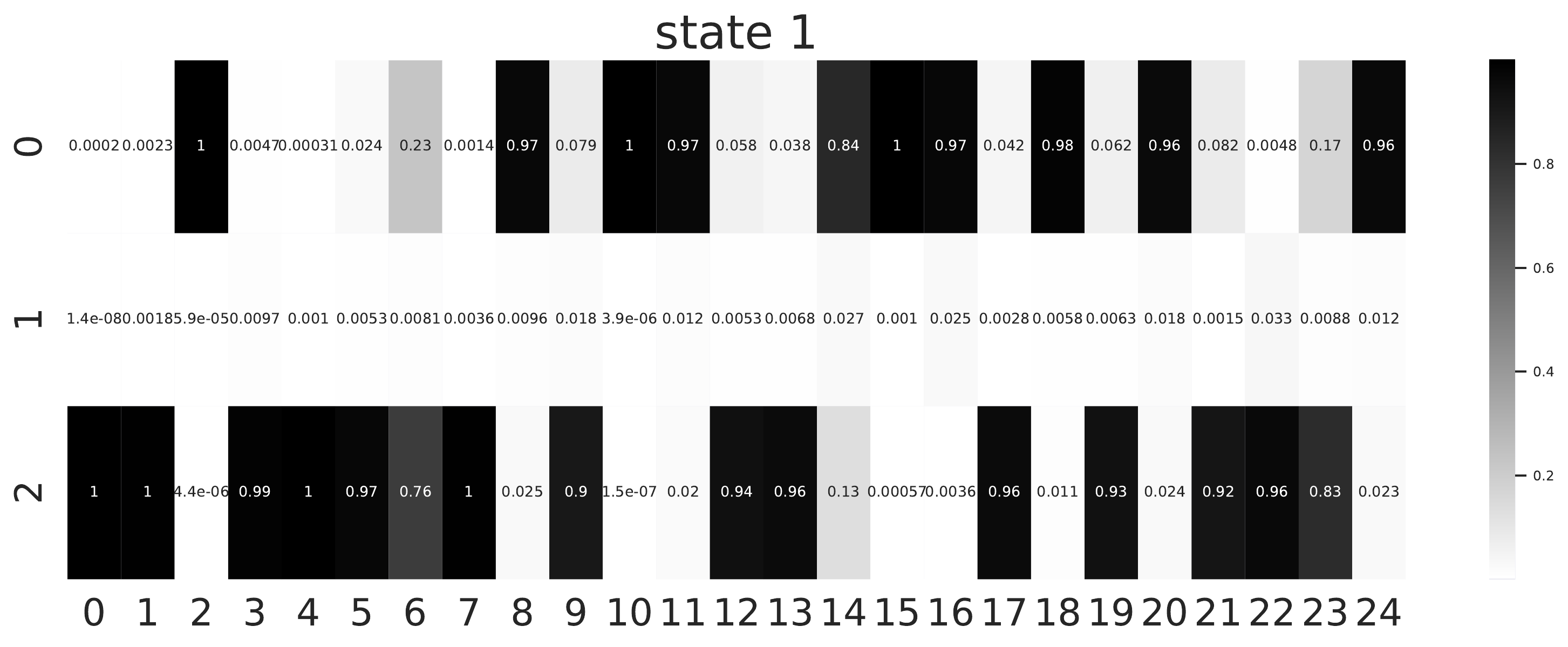}
    \includegraphics[width=0.6\linewidth]{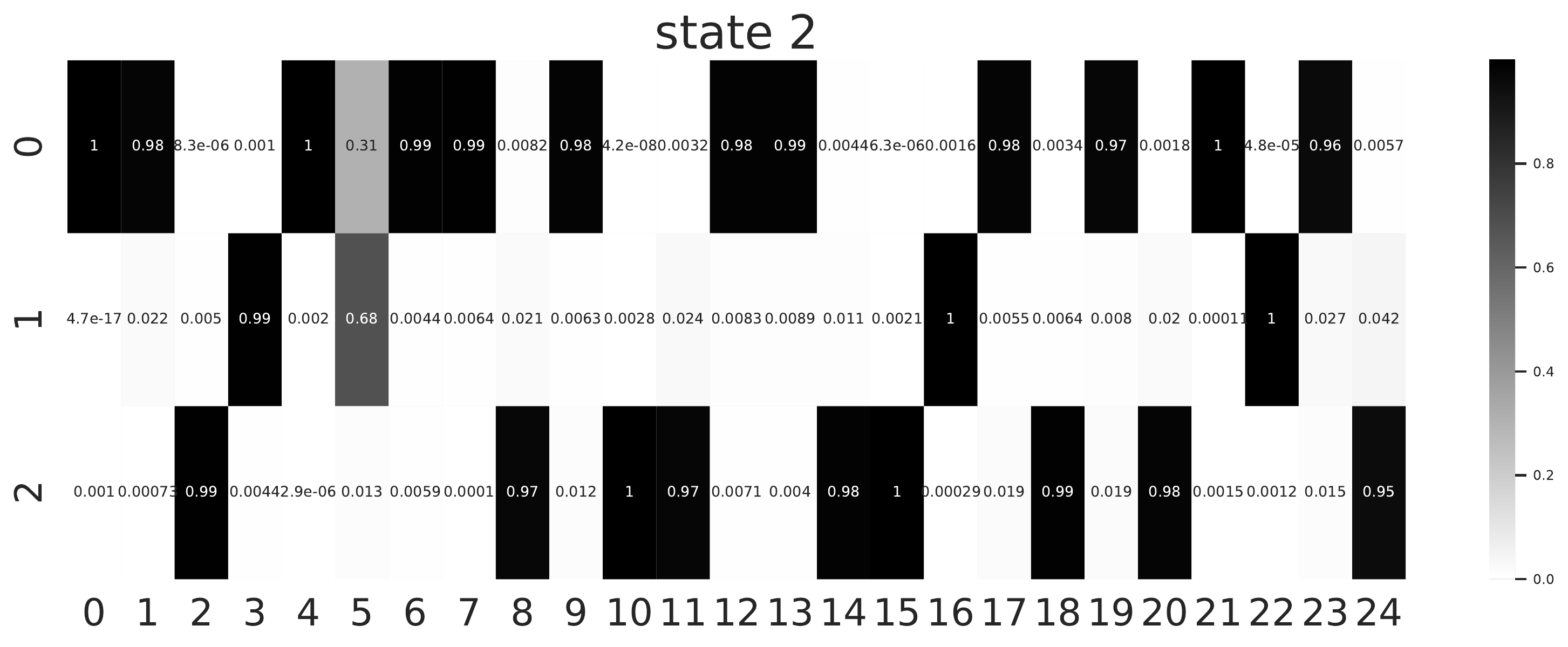}
    \caption{Visualization of decoders running \algname in the target.}
    \label{fig:part:briee}
\end{figure}

\end{document}